\documentclass[11pt]{article}

\usepackage{amsmath,amssymb,amsthm,mathtools}
\usepackage[round]{natbib}
\usepackage[margin=1in]{geometry}
\usepackage{thmtools}

\numberwithin{equation}{section}
\usepackage{todonotes}
\usepackage{hyperref}
\hypersetup{
    colorlinks,
    citecolor=blue,
    filecolor=blue,
    linkcolor=blue,
    urlcolor=blue
}
\allowdisplaybreaks
\newtheorem{theorem}{Theorem}[section]
\newtheorem{proposition}{Proposition}

\newtheorem{definition}{Definition}[section]
\newtheorem{lemma}{Lemma}[section]
\newtheorem{corollary}{Corollary}[section]
\newtheorem{remark}{Remark}[section]
\newcommand{\ip}[2]{\left\langle #1,#2\right\rangle}
\newcommand{\KL}{\mathrm{KL}}
\newcommand{\E}{\mathbb{E}}
\newcommand{\R}{\mathbb{R}}
\newcommand{\Var}{\textsf{Var}}

\renewcommand{\Pr}{\mathsf{Pr}}
\usepackage{pifont}
\newcommand{\cmark}{\ding{51}}
\newcommand{\xmark}{\ding{55}}

\newcommand{\norm}[1]{\left\lVert #1\right\rVert}
\newcommand{\mnorm}[2]{\left\lVert #1\right\rVert_{#2}}

\newcommand{\gauss}[3]{\varphi_{#3}\!\left(#1;#2\right)} % \gauss{y}{\mu}{\Sigma} = phi_\Sigma(y;\mu)

\title{A Quantitative Characterization of Forgetting in Post-Training}
\author{Krishnakumar Balasubramanian$^{1,2}$, Shiva Prasad Kasiviswanathan$^{2}$\\
$^1$Department of Statistics, University of California, Davis\\
$^2$Amazon}
\date{\today}

\begin{document}
\maketitle

\begin{abstract}

Continual post-training of generative models is widely used, yet a principled understanding of \emph{when} and \emph{why} forgetting occurs remains limited. We develop theoretical results under a two-mode mixture abstraction (representing old and new tasks), proposed by~\cite{chen2025retaining}, and formalize forgetting in two forms: (i) \emph{mass forgetting}, where the old mixture weight collapses to zero, and (ii) \emph{old-component drift}, where an already-correct old component shifts during training. For equal-covariance Gaussian modes, we prove that forward-KL objectives trained on data from the new distribution drive the old weight to zero, while reverse-KL objectives converge to the true target (thereby avoiding mass forgetting) and perturb the old mean only through overlap-gated misassignment probabilities controlled by the Bhattacharyya coefficient, yielding drift that decays exponentially with mode separation and a locally well-conditioned geometry with exponential convergence.  We further quantify how replay interacts with these objectives. For forward-KL, replay must modify the training distribution to change the population optimum; for reverse-KL, replay leaves the population objective unchanged but prevents finite-batch “old-mode starvation” through bounded importance weighting. Finally, we analyze three recently proposed near–on-policy post-training methods, SDFT~\citep{shenfeld2026self}, TTT-Discover~\citep{yuksekgonul2026learning}, and OAPL~\citep{ritter2026llms},  via the same lens and derive explicit conditions under which each retains old mass and exhibits overlap-controlled drift. Overall, our results show that forgetting can by precisely quantified based on the interaction between divergence direction, geometric behavioral overlap, sampling regime, and the visibility of past behavior during training.  
\end{abstract}

\section{Introduction}

Continual learning investigates how sequentially trained models can acquire new capabilities without erasing old ones, a process fundamentally challenged by catastrophic forgetting, where performance on earlier tasks rapidly degrades. While the literature contains many algorithmic responses (see Section~\ref{relatedworks}), the mechanisms behind forgetting are less unified, especially for post-training pipelines in modern generative models whose ``behavior'' is best represented as a probability distribution over outputs. In this paper, by viewing training procedures as a divergence-minimization or distribution-matching step, we ask a basic question: 
\begin{center}
\emph{Can we precisely quantify when a post-training procedure \\induces forgetting and when it does not?}
\end{center}

We aim to answer this question by studying the two-mode mixture model proposed by~\cite{chen2025retaining} that abstracts a continual-learning step into ``old'' and ``new'' distributions. Let $p_\mathrm{o}$ and $p_\mathrm{n}$ denote the old and new data-generating distributions over an output space $\mathcal{Y}$ (where the subscript $\mathrm{o}$ and $\mathrm{n}$
denotes \emph{old} and \emph{new} respectively). We define a \emph{true} target mixture
\[
p_\alpha(y) \;=\; \alpha p_\mathrm{o}(y) + (1-\alpha)p_\mathrm{n}(y), \qquad \alpha\in [0,1],
\]
which represents the ideal outcome of learning the new behavior while retaining an $\alpha$ fraction of the old behavior. We consider an \emph{learner} model family that explicitly contains two components (one intended for the old mode and one for the new mode),
\[
q_{\beta}(y) \;=\; \beta q_\mathrm{o}(y) + (1-\beta)q_\mathrm{n}(y), \qquad \beta\in [0,1].
\]
In this formulation, the parameters to be learned consist of the mixture weight ($\beta$) and the parameters governing the component distributions ($q_\mathrm{o}$ and $q_\mathrm{n}$) of the model. 
Throughout this paper, we assume that the component $q_\mathrm{o}$ has already been trained to approximate the old distribution $p_\mathrm{o}$. 
Continual learning in this setup (and more broadly) typically refers to the process of learning the new distribution $p_\mathrm{n}$ while preserving the previously learned behavior encoded by $p_\mathrm{o}$. In the mixture formulation above, this corresponds to updating the parameters of the component $q_\mathrm{n}$ and the mixture weight $\beta$ so that the learned model $q_\beta$ approximates the target mixture $p_\alpha$, while ensuring that the component $q_\mathrm{o}$ continues to represent the previously learned distribution $p_\mathrm{o}$.

% Several approaches have been proposed to achieve this. One approach assumes access to a sampling oracle that can generate on-policy data from the current model~\citep{}. In such on-policy methods, the training process continually samples from the current model so that earlier behaviors remain represented in the evolving training distribution. Another approach relies on an explicit memory oracle that stores and replays past data~\citep{}. In these replay-based methods, samples from previous tasks are stored and incorporated into the training process for subsequent tasks, ensuring that previously learned behaviors continue to influence learning.

% Continual learning in this setup (and in general) can be characterized to rely on some mechanism for maintaining exposure to previously learned behaviors (i.e., information about $p_\mathrm{o}(y)$) during training. In practice, this is done by either the availability of a \emph{sampling oracle} capable of generating on-policy data from the current model, or an explicit \emph{memory oracle} that stores and replays past data. On-policy approaches work by continually sampling from the current model itself so that earlier behaviors remain represented in the current training distribution. Replay-based methods assume access to a memory oracle to store samples from previous tasks to be used in the current training process. 

We distinguish two distinct forms of forgetting in this continual learning setup. Mass forgetting corresponds to the collapse of mixture mass on the old mode, whereas old-component drift occurs when the model retains nonzero mass on the old mode but its parameters move away from the true old distribution:
\begin{itemize}
    \item[(i)] \emph{Mass Forgetting (Mass Collapse)}: This occurs when the optimal mixture weight satisfies 
$\beta^\star = 0$, meaning that the learned model places zero mass on the old mode. 
Equivalently, the mixture weight on the old component undergoes \emph{mass collapse}. 
We show that this can arise even when $q_\mathrm{o}(y)=p_\mathrm{o}(y)$ and 
$q_\mathrm{n}(y)=p_\mathrm{n}(y)$, i.e., when the learner is given the correct 
forms of both the old and new distributions. In this setting the only learnable 
parameter is the mixing proportion $\beta$. Thus $\beta^\star=0$ (instead of the 
desired $\beta^\star=\alpha$) represents a strong form of forgetting in which 
the learned model discards the old behavior despite having access to its exact distribution.
    
   \item[(ii)] \emph{Old-Component Drift}: 
This occurs when, during continual training, the 
parameters of the learned old component $q_\mathrm{o}$ drift away from the true 
old distribution $p_\mathrm{o}$. In this case the model may still allocate 
nontrivial mass to the old mode (i.e., $\beta$ need not collapse to zero), but the 
parameters governing the old component shift so that the learned distribution no 
longer faithfully represents the original behavior. For example, in a location 
family this corresponds to the mean parameter of the old component drifting away 
from the true old mean.
   \end{itemize}
This setting is intentionally minimal\footnote{Appendix~\ref{app:finitek} shows that similar conclusions hold for finite-mixture models.}: there are only two modes, the model family is expressive enough to represent both, and yet the aforementioned forms of forgetting can still occur. The benefit of this simplicity is that it enables exact decompositions and sharp theorems that cleanly separate objective-driven forgetting from representational limits.

A central theme in this work, motivated by modern post-training pipelines, is the contrast between \emph{forward} and \emph{reverse} KL divergence based training objectives given respectively by 
\[
\min_\theta \mathrm{KL}(p \,\|\, q_\theta), \qquad\text{and}\qquad \min_\theta \mathrm{KL}(q_\theta \,\|\, p).
\]
The forward-KL is the population analogue of maximum likelihood on a ``data'' distribution $p$. In the context of the model above, forward-KL correspond to SFT-based training with only new data (i.e., $p=p_\mathsf{n}$). The reverse-KL objective, is the population analogue of matching the model to a target distribution $p$ under on-policy sampling from $q_\theta$ (a common lens for KL-regularized policy improvement and RL-style updates). 

We consider the case where $p_\mathrm{o} = \mathcal{N}(\mu_\mathrm{o},\Sigma)$ and 
$p_\mathrm{n} = \mathcal{N}(\mu_\mathrm{n},\Sigma)$ with separation 
$\delta := \|\mu_\mathrm{n}-\mu_\mathrm{o}\|_{\Sigma^{-1}}$. The learner model $q$ is parameterized as a two-component mixture with weight $\beta$ and component means $(m_\mathrm{o},m_\mathrm{n})$, with both components sharing covariance $\Sigma$. In this setting we can explicitly analyze the dynamics of forgetting under the different training objectives described above. To isolate the effect of learning the mixture weight, we further set $m_\mathrm{o}=\mu_\mathrm{o}$ and $m_\mathrm{n}=\mu_\mathrm{n}$ in the learner model and optimize only over $\beta$. Our first main result shows that, in this setting, the new-data-only forward-KL-based SFT training objective
\[
L_{\mathrm{SFT}}(\beta) := \mathrm{KL}(p_\mathrm{n} \,\|\, q_\beta)
\]
is strictly increasing in $\beta$ (even when the component shapes are already correct), so $\beta^\star=0$ is the unique population minimizer. 
Moreover, under logit-parameterized gradient flow, the trajectory $\beta(t)$ (corresponding to the population training objective) decreases monotonically to $0$. The analysis reveals an intuitive mechanism: the gradient is given by the difference between the current old mass $\beta$ and the \emph{expected old responsibility} (i.e., average posterior probability, under a given sampling distribution, that an observation is assigned to the old component) under the new data distribution. When the modes are well separated, this responsibility is exponentially small, so the update effectively reduces to repeatedly shrinking $\beta$ until the old mode vanishes.

Our second main result considers reinforcement learning with a reverse-KL objective
\[
L_{\mathrm{RL}}(\beta,m_\mathrm{o},m_\mathrm{n}) := \mathrm{KL}(q_{\beta,m_\mathrm{o},m_\mathrm{n}} \,\|\, p_\alpha),
\]
which corresponds to KL-regularized on-policy RL updates toward a target distribution that explicitly retains the old behavior\footnote{See Appendix~\ref{app:rltokl} for details.}. When the old component is already correct (e.g., $m_\mathrm{o}=\mu_\mathrm{o}$), the gradient with respect to the old parameters admits an \emph{exact decomposition}: the only terms capable of moving the old mode arise from \emph{misassignment probabilities}, i.e., responsibilities that incorrectly attribute an old-mode sample to the new component and vice versa under the target.

These misassignment probabilities are controlled by an overlap quantity (the Bhattacharyya coefficient), yielding bounds that decay exponentially with the squared Mahalanobis distance between the means for equal-covariance Gaussians. Consequently, in the well-separated regime, reverse-KL updates can meaningfully adjust the new mode while perturbing the old mode only through exponentially small overlap effects. Finally, a local Polyak--\L{}ojasiewicz (PL) analysis shows that, in sufficiently separated regimes, the reverse-KL objective exhibits a favorable local geometry that implies exponential convergence under gradient flow.

\begin{table}[t]
\hspace*{-3ex}
\centering
\def\arraystretch{1.25}
\begin{tabular}{|c|c|c|}
\hline
Method & Prevents Mass Forgetting? & Controls Old-Component Drift? \\
\hline \hline
Forward-KL (SFT) & \xmark & \cmark  \\ 
 & (Theorem~\ref{thm:kl_main})& (but unimportant  \\ 
  & & as mass collapses)  \\ \hline
Reverse-KL (RL) & \cmark & \cmark\ Exponentially small in $\delta$ \\ 
 & (Theorem~\ref{thm:kl_stationary_beta_mn}) & (Theorem~\ref{lem:fwdKL_oldmean_drift}) \\ \hline
SDFT & \cmark\ If demonstrator strength is $>0$ & \cmark\ Finite total drift$^\star$ \\
 \citep{shenfeld2026self} & (Theorem~\ref{thm:sdft_demo_ema}(A)) & (Theorem~\ref{thm:sdft_demo_ema}(B)) \\ \hline
TTT-Discover & \cmark\ If anchor sufficiently strong; & \cmark\ Exponentially small in $\delta$ \\
\citep{yuksekgonul2026learning} & Collapses if anchor too weak &(Theorem~\ref{thm:ttt_gaussian}(B)) \\
& (Theorem~\ref{thm:ttt_gaussian}(A)) &  \\ \hline
OAPL & Partial: bounded by old-mode weight & \cmark\ Exponentially small in $\delta$ \\
\citep{ritter2026llms}& of the frozen reference policy & (Theorem~\ref{thm:OAPL_gaussian}(B))\\
\citep{brantley2025accelerating}& (Theorem~\ref{thm:OAPL_gaussian}(A)) &  \\
\hline
\end{tabular}
\caption{Summary of forgetting behavior across training objectives. 
``Prevents Mass Forgetting'' means the population optimum satisfies $\beta^\star>0$. 
``Controls Old-Component Drift'' means the gradient $\|\nabla_{m_o}L\|$ is provably small 
when $m_o=\mu_o$. Mode separation $\delta=\|\mu_n-\mu_o\|_{\Sigma^{-1}}$. $\star$: We show it is exponentially small under additional assumptions; see Remark~\ref{rem:sdft_exp_separation}.}
\label{tab:forgetting_summary}
\end{table}

\paragraph{Effect of Replay on SFT and RL.} We also examine the effect of replay and quantify how it interacts with forward- and reverse-KL objectives in fundamentally different ways. For forward-KL (SFT), replay prevents mass forgetting only when it enters the training distribution (i.e., the numerator of the objective): mixing a $\lambda$ fraction of old samples into the data shifts the population optimum to retain $\beta^\star=\lambda$. In contrast, mixing old samples only on the model side leaves the learned parameter collapsing to $\beta^\star=0$ and merely imposes an external retention floor.  For reverse-KL (KL-regularized RL), replay does not alter the population objective but instead addresses a finite-batch failure mode. By ensuring that old-mode samples appear in minibatches with high probability and using bounded importance weights, replay preserves the same reverse-KL gradient in expectation while preventing stochastic ``old-mode starvation'' that can otherwise mimic new-only updates. Together, these results show that replay plays fundamentally different roles in the two settings: for SFT it modifies the \emph{population objective}, whereas for reverse-KL methods it improves the \emph{stochastic optimization dynamics}.

\paragraph{Near-on-policy Methods.} We next consider three recent near-on-policy post-training methods, namely SDFT~\cite{shenfeld2026self}, TTT-discover~\cite{yuksekgonul2026learning}  and OAPL~\cite{ritter2026llms,brantley2025accelerating}. Our mixture-model analysis reveals a sharp difference between the different algorithms. SDFT behaves like a reverse-KL update toward an evolving teacher distribution generated from the model itself based on a demonstrator. It avoids mass forgetting if the demonstrator is strong enough, while avoiding the old-component drift. TTT-Discover’s entropic objective is intrinsically mode-seeking: without a sufficiently strong KL anchor it can still collapse mass onto the higher-reward mode, although the drift of an already-correct old mode remains overlap-gated and decays exponentially with separation. OAPL behaves differently because its target is an exponential tilt of a frozen reference policy: it can only preserve or reweight modes already present in that reference, but its parametric updates are likewise geometrically local, with cross-mode influence controlled by exponentially small overlap terms.  Together, these results show that these three methods inherit the stability of on-policy learning, but forgetting and retention is governed by different mechanisms. A summary of our results and conclusions is provided in Table~\ref{tab:forgetting_summary}.

%Across all of our results, the core message is that forgetting in continual post-training can be quantified, and in many cases predicted, by a small set of interacting factors: (i) which divergence direction the update implicitly minimizes (forward vs.\ reverse), (ii) whether the training signal is off-policy (data-to-model) or on-policy (model-to-target), (iii) the geometric overlap between old and new behaviors, and (iv) whether past behavior remains visible at finite batch sizes during optimization.  

%\paragraph{Takeaway.}
%Across all of these results, the core message is that forgetting in continual learning can be quantified---and in some cases predicted---from the interaction between (i) the direction of divergence being minimized (forward vs.\ reverse), (ii) whether training is off-policy or on-policy, (iii) the overlap between old and new behaviors, and (iv) the finite-sample visibility of old behavior during optimization. 

\subsection{Intuition via Disjoint-support Case}
As a prelude to our results that follow in the rest of this draft, in this section, we study the limiting case where each ``mode'' lives on a separate region of the sample space. The core intuition behind our general results are well-captured by this simplified setup.

\begin{definition}\label{def:disjointsupp}
    Let $(\mathcal{Y},\mathcal{F},\mu)$ be a measurable space with reference measure $\mu$.
Assume there exist measurable sets $A_{\mathrm{o}},A_{\mathrm{n}}\subseteq\mathcal{Y}$ forming a partition:
$A_{\mathrm{o}}\cap A_{\mathrm{n}}=\emptyset$, $A_{\mathrm{o}}\cup A_{\mathrm{n}}=\mathcal{Y}$.
Assume the component densities satisfy
\[
p_{\mathrm{o}}(y)=0=q_{\mathrm{o}}(y)\ \ \text{for }y\notin A_{\mathrm{o}},
\qquad
p_{\mathrm{n}}(y)=0=q_{\mathrm{n}}(y)\ \ \text{for }y\notin A_{\mathrm{n}},
\]
where $q_{\mathrm{o}}(\cdot)$ and $q_{\mathrm{n}}(\cdot)$ are the (model) component densities.
Define the mixtures
\[
p_\alpha(y)=\alpha p_{\mathrm{o}}(y)+(1-\alpha)p_{\mathrm{n}}(y),
\qquad
q_\beta(y)=\beta q_{\mathrm{o}}(y)+(1-\beta)q_{\mathrm{n}}(y),
\]
and note that on $A_{\mathrm{o}}$ we have $p_\alpha=\alpha p_{\mathrm{o}}$, $q_\beta=\beta q_{\mathrm{o}}$, and similarly on $A_{\mathrm{n}}$.
\end{definition}

In this disjoint-support limit, both forward and reverse-KL admit exact decompositions into a \emph{mixture-weight term} and within-mode terms. This makes the key point transparent: if training uses \emph{new-only} data ($p=p_\mathrm{n}$), then the forward-KL objective contains an explicit penalty that is strictly increasing in the old-mode weight $\beta$, and thus the unique optimizer is $\beta^\star=0$. Crucially, this collapse occurs \emph{regardless of how well the old component is modeled}, because the new-only objective has no incentive to allocate probability mass to a region it never observes. This provides a clean caricature of catastrophic forgetting as mass collapse driven by off-policy training.

\begin{lemma}[Exact KL decompositions under disjoint supports]\label{lem:disjoint-kl}
Under the disjoint-support assumption,
\begin{align*}
\KL(p_\alpha\|q_\beta)
&=
\alpha\log\frac{\alpha}{\beta} + (1-\alpha)\log\frac{1-\alpha}{1-\beta}
\;+\;
\alpha\,\KL(p_{\mathrm{o}}\|q_{\mathrm{o}})
\;+\;
(1-\alpha)\,\KL(p_{\mathrm{n}}\|q_{\mathrm{n}}),
\\
\KL(q_\beta\|p_\alpha)
&=
\beta\log\frac{\beta}{\alpha} + (1-\beta)\log\frac{1-\beta}{1-\alpha}
\;+\;
\beta\,\KL(q_{\mathrm{o}}\|p_{\mathrm{o}})
\;+\;
(1-\beta)\,\KL(q_{\mathrm{n}}\|p_{\mathrm{n}}).
\end{align*}
\end{lemma}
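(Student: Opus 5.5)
The plan is to split the integral defining each KL divergence over the partition $\mathcal{Y}=A_{\mathrm{o}}\cup A_{\mathrm{n}}$. On each piece both mixtures reduce to a single scaled component, so the log-ratio separates into a constant weight term plus a within-component log-ratio. For the forward divergence, write
\[
\KL(p_\alpha\|q_\beta)=\int_{A_{\mathrm{o}}} p_\alpha\log\frac{p_\alpha}{q_\beta}\,d\mu+\int_{A_{\mathrm{n}}} p_\alpha\log\frac{p_\alpha}{q_\beta}\,d\mu .
\]
On $A_{\mathrm{o}}$, Definition~\ref{def:disjointsupp} gives $p_\alpha=\alpha p_{\mathrm{o}}$ and $q_\beta=\beta q_{\mathrm{o}}$. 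Hence the integrand equals $\alpha p_{\mathrm{o}}\bigl(\log\frac{\alpha}{\beta}+\log\frac{p_{\mathrm{o}}}{q_{\mathrm{o}}}\bigr)$. Integrating over $A_{\mathrm{o}}$ uses two facts. First, $\int_{A_{\mathrm{o}}}p_{\mathrm{o}}\,d\mu=1$, because $p_{\mathrm{o}}$ vanishes off $A_{\mathrm{o}}$. Second, $\int_{A_{\mathrm{o}}}p_{\mathrm{o}}\log(p_{\mathrm{o}}/q_{\mathrm{o}})\,d\mu=\KL(p_{\mathrm{o}}\|q_{\mathrm{o}})$ for the same reason. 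Together these give $\alpha\log\frac{\alpha}{\beta}+\alpha\KL(p_{\mathrm{o}}\|q_{\mathrm{o}})$. The $A_{\mathrm{n}}$ piece is identical with $(\alpha,\beta,p_{\mathrm{o}},q_{\mathrm{o}})$ replaced by $(1-\alpha,1-\beta,p_{\mathrm{n}},q_{\mathrm{n}})$, and summing yields the first identity. The reverse identity follows from the same computation with the roles of $p$ and $q$ swapped. The weight factor $\beta q_{\mathrm{o}}$ now sits in front, giving $\beta\log\frac{\beta}{\alpha}+\beta\KL(q_{\mathrm{o}}\|p_{\mathrm{o}})$ on $A_{\mathrm{o}}$, and analogously on $A_{\mathrm{n}}$.

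The only real work is the measure-theoretic bookkeeping at the boundary cases, and I expect that to be the main (if mild) obstacle. Zeros of the densities inside the sets, and weights $\alpha$ or $\beta$ in $\{0,1\}$, have to be handled without illegitimate manipulations like $\log(0/0)$ splitting. I will adopt the standard conventions $0\log\frac{0}{x}=0$ and $a\log\frac{a}{0}=+\infty$ for $a>0$. With these, the pointwise identity $a\,u\log\frac{a u}{b v}=a u\log\frac{a}{b}+a u\log\frac{u}{v}$ holds wherever $au>0$. Points where $au=0$ contribute zero on both sides.

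When $\alpha>0=\beta$, both sides are $+\infty$: the left because $p_\alpha$ puts mass on $A_{\mathrm{o}}$ where $q_\beta\equiv0$, the right because of the $\alpha\log\frac{\alpha}{0}$ term. Similarly, if $p_{\mathrm{o}}\not\ll q_{\mathrm{o}}$, both sides are infinite. So I will treat the identities as equalities in $[0,\infty]$.

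To justify splitting the integral of the sum into the sum of integrals, I will use that each piece has negative part bounded in an integrable way. This is the usual $x\log x\ge x-1$ argument, under which each within-component KL is well defined in $[0,\infty]$. So no $\infty-\infty$ can arise, and additivity of the integral over the partition completes the argument.
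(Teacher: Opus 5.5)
Your proposal is correct and follows the paper's own route. Like the paper, you split the integral over $A_{\mathrm{o}}\cup A_{\mathrm{n}}$, factor each log-ratio into a weight term plus a within-component term, and use $\int_{A_{\mathrm{o}}}p_{\mathrm{o}}\,d\mu=\int_{A_{\mathrm{n}}}p_{\mathrm{n}}\,d\mu=1$. Your additional bookkeeping is a harmless refinement that the paper's short proof leaves implicit: the $0\log 0$ conventions, treating the identities as equalities in $[0,\infty]$, and the lower bound $p\log(p/q)\ge p-q$ to rule out $\infty-\infty$.
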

\begin{proof}[Proof of Lemma~\ref{lem:disjoint-kl}]
We prove the first identity; the second is analogous.
Split the integral over $A_{\mathrm{o}}\cup A_{\mathrm{n}}$:
\[
\KL(p_\alpha\|q_\beta)=\int p_\alpha\log\frac{p_\alpha}{q_\beta}\,d\mu
= \int_{A_{\mathrm{o}}} \alpha p_{\mathrm{o}}\log\frac{\alpha p_{\mathrm{o}}}{\beta q_{\mathrm{o}}}\,d\mu
+ \int_{A_{\mathrm{n}}} (1-\alpha) p_{\mathrm{n}}\log\frac{(1-\alpha)p_{\mathrm{n}}}{(1-\beta)q_{\mathrm{n}}}\,d\mu.
\]
Inside each region, expand the log:
$\log\frac{\alpha p_{\mathrm{o}}}{\beta q_{\mathrm{o}}}=\log\frac{\alpha}{\beta}+\log\frac{p_{\mathrm{o}}}{q_{\mathrm{o}}}$, and similarly for the new mode.
Using $\int_{A_{\mathrm{o}}}p_{\mathrm{o}}\,d\mu=\int_{A_{\mathrm{n}}}p_{\mathrm{n}}\,d\mu=1$ yields the stated decomposition.
\end{proof}

\begin{remark}[Exact mode locality for \emph{shape} parameters]\label{rmk:drift}
If $q_{\mathrm{o}}(\cdot)$ and $q_{\mathrm{n}}(\cdot)$ have separate parameter vectors (say $\theta_{\mathrm{o}}$ and $\theta_{\mathrm{n}}$),
Lemma~\ref{lem:disjoint-kl} implies that, \emph{holding mixture weights fixed}, minimizing either KL w.r.t.\ $\theta_{\mathrm{o}}$ depends only on the old-mode divergence, and similarly for $\theta_{\mathrm{n}}$.
In particular, if $q_{\mathrm{o}}=p_{\mathrm{o}}$, then $\nabla_{\theta_{\mathrm{o}}}\KL(q_\beta\|p_\alpha)=0$ and $\nabla_{\theta_{\mathrm{o}}}\KL(p_\alpha\|q_\beta)=0$:
the old mode is exactly stationary while the new mode can be updated.
\end{remark}

\begin{remark}[Why \emph{Weights} Can Still Collapse Under forward-KL]
Even in the disjoint-support case, if the training distribution is new-only, i.e.\ $p=p_{\mathrm{n}}$,
then for the forward-KL objective $\KL(p_{\mathrm{n}}\|q_\beta)$ the decomposition reduces to
\[
\KL(p_{\mathrm{n}}\|q_\beta) = \log\frac{1}{1-\beta} + \KL(p_{\mathrm{n}}\|q_{\mathrm{n}}),
\]
which is strictly increasing in $\beta$.
Thus, optimizing forward-KL on new-only data drives $\beta\to 0$ (all mass on the new mode), regardless of how well the old mode was modeled.
This is a clean caricature of catastrophic forgetting by mass collapse.
By contrast, for reverse-KL, the weight term
$\beta\log(\beta/\alpha)+(1-\beta)\log((1-\beta)/(1-\alpha))$ penalizes moving $\beta$ far from $\alpha$.
\end{remark}

\subsection{Related Works}\label{relatedworks}

%Catostriphic forgetting in continual learning has it origins from the study of biological systems~\citep{mccloskey1989catastrophic,mcclelland1995there}. In the context of machine learning, classical works on continual learning and forgetting include~\cite{thrun1995learning} and~\cite{french1999catastrophic}. 

From a methodological perspective, continual learning can be broadly categorized to rely on some mechanism for maintaining exposure to previously learned behaviors (i.e., information about $p_\mathrm{o}(y)$ in our setup) during training. In practice, this is done by either the availability of an explicit \emph{memory oracle} that stores and replays past data or a \emph{sampling oracle} capable of generating on-policy data from the current model. We refer to~\cite{wang2024comprehensive} and \cite{shi2025continual} for recent surveys. Below we provide a admittedly incomplete overview of a few related works on continual learning, with a focus on methods for generative models and general theoretical results.

Replay-based methods~\citep{schaul2015prioritized,lopez2017gradient} assume access to a memory oracle to store samples from previous tasks to be used in the current training process. A special case of replay-based methods are regularization-based method where the model parameter is stored (instead of the training data) and used as a regularizer when training for new tasks~\citep{kirkpatrick2017overcoming,li2017learning, schwarz2018progress}. Theoretical lower bounds on the amount of memory required for continual learning was established by~\cite{chen2022memory}. In the context of large generative models, replay-based typically involved re-training a pre-trained model and hence maybe inefficient (or even in-feasible if the pre-trained model is closed-weights). Nevertheless, several works have proposed algorithms to efficiently use replay methods in the context of such large models~\citep{shin2017continual}.

In the context of large pre-trained generative models, on-policy approaches that continually sample from the current model and train on the resulting data are widely used. Such methods appear in both reinforcement learning and supervised fine-tuning settings, for example in on-policy distillation and policy-improvement style updates~\citep{tajwar2024preference,lu2025onpolicydistillation,zhao2026policy,chen2025retaining,shenfeld2025rl}. Related ideas also arise in mid-training procedures that bridge pre-training and post-training distributions~\citep{liu2025midtraining}, as well as in self-distillation methods that iteratively train a model on samples generated by its own policy~\citep{shenfeld2026self,zhao2026self,hubotter2026reinforcement,penaloza2026privileged}. Earlier work explored connections between reinforcement learning and distribution matching for language-model fine-tuning~\citep{korbak2022reinforcement}, and recent methods such as OAPL construct improvement targets relative to a lagged reference policy~\citep{ritter2026llms,brantley2025accelerating}.

Theoretical results on forgetting under (overparameterized) linear models have been studied recently by many authors, including~\cite{evron2022catastrophic,lin2023theory,li2025theory,ding2024understanding,deng2025unlocking,banayeeanzade2025theoretical,karpel2026optimal}. The linear classification setting was further analyzed by~\cite{evron2023continual}. PAC-Bayes bounds for continual learning were established recently by~\cite{friedman2026pacbayes}, and gradient descent dynamics in continual learning problems was studied by~\cite{bennani2021generalisation, doan2021theoretical, karakida2022learning, cai2025last, taheri2025theory,graldi2025the}. In contrast to these works, our results aim to provide a principled understanding of practical post-training methods used for generative models, in particular foward and backward KL based fine-tuning. Perhaps the closest to our work is~\cite{chan2022greedification}, which studies the role of forward and reverse-KL divergences in approximate policy iteration and analyzes their policy-improvement properties in reinforcement learning. In contrast, our work focuses on continual learning and forgetting in generative-model post-training, where the forward and reverse-KL objectives arise from SFT and RL-style updates, and we quantify how these objectives induce or prevent forgetting through a distributional mixture model.

\section{Forgetting in Forward and Reverse-KL Objectives}\label{sec:main_results}

A key aspect of our analysis is the difference between forward- and reverse-KL objectives and how they affect continual learning dynamics. We emphasize many of the forthcoming result assume mixture-of-two Gaussians for simplicity of exposition; extensions to finite-mixture and strongly log-concave densities are provided in Appendices~\ref{app:finitek} and~\ref{app:logconcaveext} respectively. Similarly, extensions to a class of $f$-divergence is provided in Appendix~\ref{app:f_extension}. All proofs are presented in Appendix~\ref{app:proofs}.

\subsection{Forgetting in Two-component Mixture Model}\label{sec:mog}
In this section, we describe the minimalist mixture model, also considered by~\cite{chen2025retaining} for their empirical observations. Fix $d\in\mathbb{N}$ and a positive definite covariance matrix $\Sigma\in\R^{d\times d}$.
For $\mu\in\R^d$, denote the Gaussian density by
\[
\gauss{y}{\mu}{\Sigma}
:= (2\pi)^{-d/2}|\Sigma|^{-1/2}\exp\!\Big(-\tfrac12(y-\mu)^\top\Sigma^{-1}(y-\mu)\Big).
\]
We use a shared covariance $\Sigma$ (with bounded spectrum) for both modes so that separation and overlap are controlled purely by the means and the mixture weights. Let $p_{\mathrm{o}}(y):=\gauss{y}{\mu_{\mathrm{o}}}{\Sigma},$ and $p_{\mathrm{n}}(y):=\gauss{y}{\mu_{\mathrm{n}}}{\Sigma}$ with $\mu_{\mathrm{o}}\neq \mu_{\mathrm{n}}$. These densities represent the pre-existing (old) behavior distribution and the newly learned (new) behavior distribution, respectively. Define the Mahalanobis separation as
\[
\delta := \mnorm{\mu_{\mathrm{n}}-\mu_{\mathrm{o}}}{\Sigma^{-1}}
= \sqrt{(\mu_{\mathrm{n}}-\mu_{\mathrm{o}})^\top\Sigma^{-1}(\mu_{\mathrm{n}}-\mu_{\mathrm{o}})}.
\]
The scalar $\delta$ is dimensionless and will quantitatively govern overlap quantities (and hence misassignment and forgetting rates) through exponential-in-$\delta^2$ bounds. Fix a target mixture weight $\alpha\in(0,1)$ and define
\begin{align}\label{eq:truemodel}
p_{\alpha}(y) := \alpha\,p_{\mathrm{o}}(y) + (1-\alpha)\,p_{\mathrm{n}}(y),
\end{align}
with target responsibilities $s_{\mathrm{o}}(y):=\frac{\alpha\,p_{\mathrm{o}}(y)}{p_\alpha(y)}$ and $s_{\mathrm{n}}(y):=1-s_{\mathrm{o}}(y)$. The mixture $p_\alpha$ formalizes the desired post-training outcome: retain an $\alpha$-fraction of the old distribution while incorporating a $(1-\alpha)$-fraction of the new distribution.

We now introduce the learner model family used in post-training. For parameters $\beta\in(0,1)$ and means $m_{\mathrm{o}},m_{\mathrm{n}}\in\R^d$, define
\begin{align}\label{eq:estmodel}
q_{\beta,m_{\mathrm{o}},m_{\mathrm{n}}}(y)
:= \beta\,\gauss{y}{m_{\mathrm{o}}}{\Sigma} + (1-\beta)\,\gauss{y}{m_{\mathrm{n}}}{\Sigma}.
\end{align}
Here $\beta$ encodes how much probability mass the model allocates to the old mode, while $m_{\mathrm{o}}$ and $m_{\mathrm{n}}$ control the within-mode locations. Define the \emph{model responsibilities} (posterior component probabilities under $q$):
\begin{align}\label{eq:modelrespon}
r_{\mathrm{o}}(y)
:= \frac{\beta\,\gauss{y}{m_{\mathrm{o}}}{\Sigma}}{q_{\beta,m_{\mathrm{o}},m_{\mathrm{n}}}(y)},
\qquad r_{\mathrm{n}}(y):=1-r_{\mathrm{o}}(y).
\end{align}
The responsibilities act as soft assignments of a sample $y$ to the old versus new component and will serve as the gate through which overlap induces cross-mode gradient effects. We now introduce two notions of forgetting: (a) \emph{mass forgetting} (mass collapse of the old mixture weight) and (b) \emph{old-component drift} (distortion of the old component itself).

\begin{definition}[Mass Forgetting]\label{def:strong_forgetting}
Assume that the ``old'' mean $\mu_{\mathrm{o}}$ and the ``new'' mean $\mu_{\mathrm{n}}$ are available to the model, i.e.,
\[
q_{\beta}(y) := \beta\,p_{\mathrm{o}}(y) + (1-\beta)\,p_{\mathrm{n}}(y), \qquad \beta \in [0,1],
\]
where $\beta$ is the learnable parameter. We say that a training objective exhibits \emph{mass forgetting} if its optimal solution satisfies $\beta^\star = 0$. Minimizing such an objective therefore leads the learned model to assign zero mixture mass to the old component, even in this favorable setting. 
\end{definition}
Equivalently, the learned model reduces to $q_{\beta^\star}(y) = p_{\mathrm{n}}(y)$, so the model no longer represents the old distribution $p_{\mathrm{o}}(y)$ despite it being available in the model class. 

\begin{definition}[$\varepsilon$-Bounded Drift of the Old Component]\label{def:weak_forgetting}
Suppose that the old mean is set correctly (i.e., $m_{\mathrm{o}}=\mu_{\mathrm{o}}$). 
We say that a training objective $L(\beta,m_{\mathrm{o}},m_{\mathrm{n}})$ exhibits 
$\varepsilon$-bounded drift of the old component if
\[
\|\nabla_{m_{\mathrm{o}}}L(\beta,m_{\mathrm{o}}=\mu_{\mathrm{o}},m_{\mathrm{n}})\|\le \varepsilon,
\]
for some problem-dependent quantity $\varepsilon$ that tends to $0$ in the regime of interest.
This certifies that, at the correct old mean, the objective exerts only a small update signal on the old component, so gradient-based optimization can induce at most $\varepsilon$-scale drift of the old distribution while learning the remaining parameters.
\end{definition}
This notion captures a form of ``retention'', as opposed to forgetting captured by Definition~\ref{def:strong_forgetting}: if this definition is violated, although the mixture weight on the old component may remain nonzero, updates induced by the objective can gradually shift the parameters of the old distribution away from the true old behavior.

A desirable training objective for continual learning should therefore avoid mass forgetting in the sense of Definition~\ref{def:strong_forgetting} and, at the same time, induce only vanishingly small drift of the old distribution in the sense of Definition~\ref{def:weak_forgetting}.

\paragraph{Bhattacharyya Overlap and a Responsibility Bound.}

The Bhattacharyya coefficient~\citep{bhattacharyya1943measure} between densities $f,g$ is defined as $\mathrm{BC}(f,g) := \int_{\R^d}\sqrt{f(y)\,g(y)}\,dy\in(0,1].$

\begin{restatable}[Posterior Leakage bound via Bhattacharyya Coefficient]{lemma}{PLBC}
\label{lem:leakage}
Let $f,g$ be densities and let $w\in(0,1)$.
Define the mixture $h(y):=w f(y)+(1-w)g(y)$ and the responsibility
$r_f(y):=\frac{w f(y)}{h(y)}$.
Then
\begin{align*}
\E_{Y\sim g}\big[r_f(Y)\big]
\le \frac12\sqrt{\frac{w}{1-w}}\;\mathrm{BC}(f,g),\qquad
\E_{Y\sim f}\big[1-r_f(Y)\big]
\le \frac12\sqrt{\frac{1-w}{w}}\;\mathrm{BC}(f,g).
\end{align*}
\end{restatable}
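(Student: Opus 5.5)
The plan is to write each expectation as an integral and bound the integrand pointwise by a multiple of $\sqrt{f g}$, then integrate to obtain $\mathrm{BC}(f,g)$. For the first inequality,
\[
\E_{Y\sim g}[r_f(Y)] = \int g(y)\,\frac{w f(y)}{w f(y)+(1-w)g(y)}\,dy .
\]
The key pointwise tool is AM--GM in the form $a+b\ge 2\sqrt{ab}$ for $a,b\ge 0$. With $a=wf(y)$ and $b=(1-w)g(y)$ this gives $h(y)\ge 2\sqrt{w(1-w)}\sqrt{f(y)g(y)}$. Wherever $f(y)g(y)>0$ we then get
\[
\frac{w f(y) g(y)}{h(y)} \le \frac{w f(y) g(y)}{2\sqrt{w(1-w)}\sqrt{f(y)g(y)}} = \frac12\sqrt{\frac{w}{1-w}}\sqrt{f(y)g(y)} .
\]
Where $f(y)g(y)=0$ the integrand $wfg/h$ is zero (with the convention $0/0=0$ when $h=0$), so the bound holds trivially there. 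Integrating over $\R^d$ yields $\E_{Y\sim g}[r_f(Y)]\le \tfrac12\sqrt{w/(1-w)}\,\mathrm{BC}(f,g)$.

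The second inequality follows by the symmetric computation. We have $1-r_f(y)=(1-w)g(y)/h(y)$, so
\[
\E_{Y\sim f}[1-r_f(Y)]=\int \frac{(1-w) f(y) g(y)}{h(y)}\,dy .
\]
The same AM--GM lower bound on $h$ bounds the integrand by $\tfrac12\sqrt{(1-w)/w}\,\sqrt{fg}$. Alternatively, this is the first inequality with the roles $(f,w)\leftrightarrow(g,1-w)$ swapped, since $\mathrm{BC}$ is symmetric in its arguments.

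There is no real obstacle. The only care needed is measure-theoretic: handling points where $h=0$ or where one density vanishes, which the convention above covers. We should also note that the integrals are finite because $0\le r_f\le 1$.
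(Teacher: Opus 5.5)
Your proposal is correct and follows essentially the same route as the paper. Both use the AM--GM lower bound $h\ge 2\sqrt{w(1-w)fg}$ to bound the integrand pointwise by a multiple of $\sqrt{fg}$, integrate to obtain $\mathrm{BC}(f,g)$, and get the second inequality by symmetry. Your handling of the points where $fg=0$ or $h=0$ is a small refinement that the paper leaves implicit.
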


\begin{remark}[Bhattacharyya Coefficient for Equal-covariance Gaussians]\label{lem:bc-gauss}
Let $f(y)=\gauss{y}{\mu_1}{\Sigma}$ and $g(y)=\gauss{y}{\mu_2}{\Sigma}$ with $\Sigma\succ 0$.
Then $\mathrm{BC}(f,g) = \exp\!\left(-\frac18 \mnorm{\mu_1-\mu_2}{\Sigma^{-1}}^{2}\right).$ To see this, first note that a completion-of-squares computation yields
\[
\sqrt{f(y)g(y)} = \gauss{y}{(\mu_1+\mu_2)/2}{\Sigma}\,
\exp\!\left(-\frac18 \mnorm{\mu_1-\mu_2}{\Sigma^{-1}}^{2}\right).
\]
Integrating over $y$ gives the claim.
\end{remark}

The Bhattacharyya coefficient $\mathrm{BC}(f,g)$ provides a symmetric proxy for mode overlap and converts geometric separation into quantitative bounds on posterior mis-assignment. Lemma~\ref{lem:leakage} shows that the leakage probabilities $\E_{Y\sim g}[r_f(Y)]$ and $\E_{Y\sim f}[1-r_f(Y)]$—the chances that samples from one mode receive posterior responsibility from the other in the mixture—are bounded by a simple prefactor times $\mathrm{BC}(f,g)$. In our setting, instances such as $(f,g)=(p_{\mathrm{o}},p_{\mathrm{n}})$ with $w=\beta$ control $\E_{p_{\mathrm{n}}}[r_{\mathrm{o}}(Y)]$ (new samples incorrectly assigned to the old component), while $w=\alpha$ controls $\E_{p_{\mathrm{o}}}[1-s_{\mathrm{o}}(Y)]$ (old samples attributed to the new component under the true target). These leakage terms act as the gates through which forgetting occurs. Under forward-KL training on new-only data, the logit gradient has the form $\beta-\E_{p_{\mathrm{n}}}[r_{\mathrm{o}}(Y)]$, so the exponentially small leakage term leaves a net push toward $\beta\downarrow 0$. Under reverse-KL training to a true target, old-parameter updates are proportional to misassignment probabilities such as $\E_{p_{\mathrm{o}}}[1-r_{\mathrm{o}}(Y)]$ and $\E_{p_{\mathrm{o}}}[1-s_{\mathrm{o}}(Y)]$, so drift is suppressed when overlap is small. Finally, Lemma~\ref{lem:bc-gauss} makes this explicit for equal-covariance Gaussians, giving $\mathrm{BC}(p_{\mathrm{o}},p_{\mathrm{n}})=\exp(-\delta^2/8)$ and hence exponential decay of cross-mode influence with Mahalanobis separation $\delta$.

\subsection{Forward-KL SFT Exhibits Mass Forgetting}\label{sec:kl_main}
We now analyze the behavior of the forward-KL objective in the two-mode mixture model when training is performed using only new data. The following theorem shows that, in this setting, the objective drives the mixture weight on the old mode to collapse, resulting in strong forgetting.

\begin{restatable}[Mass Forgetting in Forward-KL SFT]{theorem}{fKL}

% \begin{theorem}
\label{thm:kl_main}
Consider the target model in~\eqref{eq:truemodel} with $\mu_{\mathrm{o}}\neq\mu_{\mathrm{n}}$. Fix the learner model (see \eqref{eq:estmodel}) means at the true old/new means and define $q_\beta(y):=q_{\beta,\mu_{\mathrm{o}},\mu_{\mathrm{n}}}(y)=\beta\,p_{\mathrm{o}}(y)+(1-\beta)\,p_{\mathrm{n}}(y)
$, $\beta\in[0,1].$ Consider $L_{\mathrm{SFT}}(\beta):=\KL(p_{\mathrm{n}}\|q_\beta).$ Then:
\begin{enumerate}
\item $L_{\mathrm{SFT}}(0)=0$ and $L_{\mathrm{SFT}}(\beta)>0$ for every $\beta\in(0,1]$.
Hence $\beta=0$ is the unique global minimizer over $[0,1]$.
\item $L_{\mathrm{SFT}}$ is strictly increasing on $[0,1]$.
\item Let $\phi\in\R$ be a logit with $\beta=\sigma(\phi)$ and consider gradient flow
\[
\dot\phi(t)=-\frac{d}{d\phi}L_{\mathrm{SFT}}(\sigma(\phi(t))).
\]
Then $\phi(t)$ is strictly decreasing and $\beta(t)=\sigma(\phi(t))$ satisfies $\beta(t)\downarrow 0$ as $t\to\infty$.
Moreover, recalling the model responsibility in~\eqref{eq:modelrespon}, 
%\[
%r_{\mathrm{o}}(y):=\frac{\beta\,p_{\mathrm{o}}(y)}{q_\beta(y)},
%\]
the logit gradient has the explicit form
\begin{equation}\label{eq:kl_sft_logit_grad}
\frac{d}{d\phi}L_{\mathrm{SFT}}(\sigma(\phi))=\beta-\E_{Y\sim p_{\mathrm{n}}}\big[r_{\mathrm{o}}(Y)\big],
\qquad\text{so}\qquad
\dot\phi(t)=\E_{p_{\mathrm{n}}}[r_{\mathrm{o}}(Y)]-\beta(t)<0.
\end{equation}
Finally, the leakage bound (Lemma~\ref{lem:leakage} + Remark~\ref{lem:bc-gauss}) yields
\begin{equation}\label{eq:kl_sft_leak_bound}
\E_{Y\sim p_{\mathrm{n}}}\big[r_{\mathrm{o}}(Y)\big]
\le \frac12\sqrt{\frac{\beta}{1-\beta}}\exp\!\left(-\frac{\delta^2}{8}\right),
\end{equation}
and hence $\dot\phi(t)\le \tfrac12\sqrt{\frac{\beta(t)}{1-\beta(t)}}e^{-\delta^2/8}-\beta(t)$.
\end{enumerate}
\end{restatable}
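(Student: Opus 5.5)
The argument rests on differentiating $L_{\mathrm{SFT}}(\beta)=\E_{p_{\mathrm{n}}}[\log p_{\mathrm{n}}(Y)]-\E_{p_{\mathrm{n}}}[\log q_\beta(Y)]$ under the integral. Since $q_\beta=p_{\mathrm{n}}+\beta(p_{\mathrm{o}}-p_{\mathrm{n}})$ is affine in $\beta$, we have
\[
L_{\mathrm{SFT}}'(\beta)=-\E_{p_{\mathrm{n}}}\Big[\frac{p_{\mathrm{o}}(Y)-p_{\mathrm{n}}(Y)}{q_\beta(Y)}\Big].
\]
Using $p_{\mathrm{n}}=(q_\beta-\beta p_{\mathrm{o}})/(1-\beta)$, the expectation $\E_{p_{\mathrm{n}}}[p_{\mathrm{o}}/q_\beta]$ equals $\E_{p_{\mathrm{n}}}[r_{\mathrm{o}}]/\beta$. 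Likewise, $\E_{p_{\mathrm{n}}}[p_{\mathrm{n}}/q_\beta]=(1-\E_{p_{\mathrm{n}}}[r_{\mathrm{o}}])/(1-\beta)$, because $p_{\mathrm{n}}/q_\beta=r_{\mathrm{n}}/(1-\beta)$. Substituting gives
\[
L_{\mathrm{SFT}}'(\beta)=\frac{\beta-\E_{p_{\mathrm{n}}}[r_{\mathrm{o}}]}{\beta(1-\beta)}.
\]
Multiplying by $d\beta/d\phi=\beta(1-\beta)$ yields the logit formula \eqref{eq:kl_sft_logit_grad}. Differentiation under the integral is justified by dominated convergence on compact subintervals of $(0,1)$. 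For $\beta\le 1-\epsilon$ the integrand is bounded by $p_{\mathrm{o}}/q_\beta+p_{\mathrm{n}}/q_\beta\le p_{\mathrm{o}}/((1-\beta)p_{\mathrm{n}})+1/(1-\beta)$, and $\E_{p_{\mathrm{n}}}[p_{\mathrm{o}}/p_{\mathrm{n}}]=1$. The endpoints are handled by continuity.

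The key step, and the main obstacle, is the sign: we need $\E_{p_{\mathrm{n}}}[r_{\mathrm{o}}]<\beta$ for all $\beta\in(0,1)$, without any separation assumption. My plan is a convexity argument. The function $g(\beta):=\E_{p_{\mathrm{n}}}[\log q_\beta]$ is strictly concave in $\beta$, since $\log$ of an affine function is concave, and strictly so because $p_{\mathrm{o}}\neq p_{\mathrm{n}}$ on a set of positive $p_{\mathrm{n}}$-measure when $\mu_{\mathrm{o}}\neq\mu_{\mathrm{n}}$. Hence $L_{\mathrm{SFT}}=-g+\text{const}$ is strictly convex. At $\beta=0$ the right-derivative is $-\E_{p_{\mathrm{n}}}[p_{\mathrm{o}}/p_{\mathrm{n}}-1]=-(1-1)=0$. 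Strict convexity then forces $L_{\mathrm{SFT}}'(\beta)>0$ on $(0,1)$, which is part 2, and the sign of the factor $\beta(1-\beta)>0$ gives $\E_{p_{\mathrm{n}}}[r_{\mathrm{o}}]<\beta$. Part 1 follows immediately: $L_{\mathrm{SFT}}(0)=\KL(p_{\mathrm{n}}\|p_{\mathrm{n}})=0$, and strict increase gives positivity on $(0,1]$. Alternatively, positivity follows from $q_\beta\neq p_{\mathrm{n}}$ together with Gibbs' inequality. At $\beta=1$ the value $\KL(p_{\mathrm{n}}\|p_{\mathrm{o}})=\delta^2/2$ is finite and is reached continuously.

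For part 3, the ODE $\dot\phi=\E_{p_{\mathrm{n}}}[r_{\mathrm{o}}]-\beta<0$ has a locally Lipschitz right-hand side, since it is a smooth function of $\phi$ by the same dominated-convergence argument. So $\phi(t)$ is strictly decreasing and $\beta(t)$ is decreasing and bounded below by $0$, with limit $\beta_\infty\ge 0$. Suppose $\beta_\infty>0$. Then $\beta(t)$ stays in the compact set $[\beta_\infty,\beta(0)]\subset(0,1)$, on which the continuous function $\beta-\E_{p_{\mathrm{n}}}[r_{\mathrm{o}}]$ has a positive minimum $c$. This gives $\dot\phi\le -c$, so $\phi\to-\infty$ and $\beta\to 0$, a contradiction. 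Hence $\beta(t)\downarrow 0$.

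Finally, \eqref{eq:kl_sft_leak_bound} is a direct application of Lemma~\ref{lem:leakage} with $f=p_{\mathrm{o}}$, $g=p_{\mathrm{n}}$, $w=\beta$, combined with Remark~\ref{lem:bc-gauss}, which gives $\mathrm{BC}(p_{\mathrm{o}},p_{\mathrm{n}})=e^{-\delta^2/8}$. Plugging this into the formula for $\dot\phi$ yields the stated inequality.
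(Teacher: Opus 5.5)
Your proof is correct and follows essentially the same route as the paper: strict convexity of $L_{\mathrm{SFT}}$ (the paper computes $L_{\mathrm{SFT}}''>0$ explicitly via the likelihood ratio $X=p_{\mathrm{o}}/p_{\mathrm{n}}$) together with $L_{\mathrm{SFT}}'(0)=0$ gives strict monotonicity, the logit flow plus a contradiction argument on the limit gives $\beta(t)\downarrow 0$, and Lemma~\ref{lem:leakage} with Remark~\ref{lem:bc-gauss} gives the leakage bound. The differences are cosmetic: the paper gets positivity in part 1 directly from strict Jensen with $\E_{p_{\mathrm{n}}}[X]=1$, and it derives the logit formula from the score identity $\partial_\phi\log q_\beta=r_{\mathrm{o}}-\beta$ rather than by rewriting $L_{\mathrm{SFT}}'(\beta)$ in terms of responsibilities.
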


\begin{remark}
We emphasize that except for the explicit bound in~\eqref{eq:kl_sft_leak_bound}, the above result does not rely on the Gaussian assumption; it holds for any pair of distinct densities $p_{\mathrm{o}}$ and $p_{\mathrm{n}}$.
\end{remark}

The above result shows a strong form of forgetting: when the forward-KL objective is optimized on new-only data, the optimal mixture weight places zero mass on the old mode. The mechanism is transparent in the logit-gradient expression \eqref{eq:kl_sft_logit_grad}: the update compares the current mass $\beta$ on the old mode with the probability that new data are assigned to that mode, and since this assignment probability is exponentially small in the separation $\delta$, the update consistently pushes $\beta$ downward. Thus even though the model family explicitly contains the correct old component, the forward-KL objective provides no incentive to retain it once the training distribution contains only new data, illustrating a strong form of forgetting.

\subsubsection{When Does Replay Prevent Forgetting under Forward-KL SFT?}\label{subsec:replay_forwardKL_consolidated}

Theorem~\ref{thm:kl_main} established a \emph{population-level} strong-forgetting phenomenon for forward-KL SFT:
when the training distribution is new-only ($p_{\mathrm{n}}$), the forward-KL objective
$L_{\mathrm{SFT}}(\beta)=\KL(p_{\mathrm{n}}\|q_\beta)$ uniquely minimizes at $\beta^\star=0$ even though the model class contains the correct
old component $p_{\mathrm{o}}$.
A natural next question is whether replay-style modifications can mitigate this collapse.

The key structural point is that forward-KL has the form $\KL(\text{data}\,\|\,\text{model})$, so its population optimizer is determined by the
\emph{data distribution}.
Consequently, replay can only prevent strong forgetting at the population level if it changes the effective training distribution.
We therefore distinguish two canonical interventions:
(i) \emph{denominator replay}, which mixes the old mode into the \emph{model side} while keeping the data distribution new-only, and
(ii) \emph{numerator replay}, which mixes old samples into the \emph{data side}.
Only (ii) changes the population optimum.

%In this subsection we fix both component densities and optimize only the mixture mass:
%\[
%q_\beta(y):=\beta\,p_{\mathrm{o}}(y)+(1-\beta)\,p_{\mathrm{n}}(y),\qquad \beta\in[0,1].
%\]

\begin{restatable}{lemma}{ReplayFKL}
\label{lem:replay_forwardKL_consolidated}
Fix $\lambda\in(0,1)$.

\medskip
\noindent\textbf{(A)}% Denominator (model-side) replay does not prevent $\beta$-collapse.}
Define the replay-mixed learner \emph{model} as $\tilde q_{\beta,\lambda}(y):=(1-\lambda)\,q_\beta(y)+\lambda\,p_{\mathrm{o}}(y).$ Then $\tilde q_{\beta,\lambda}=q_{\tilde\beta}$ with $\tilde\beta=\lambda+(1-\lambda)\beta\in[\lambda,1],$ and the optimization problem $\min_{\beta\in[0,1]}\KL\!\big(p_{\mathrm{n}}\,\|\,\tilde q_{\beta,\lambda}\big)$ has the unique minimizer $\beta^\star=0$ (equivalently $\tilde\beta^\star=\lambda$).

\medskip
\noindent\textbf{(B)}
Define the replay-mixed \emph{data} distribution $\tilde p_\lambda(y):=(1-\lambda)\,p_{\mathrm{n}}(y)+\lambda\,p_{\mathrm{o}}(y).$ Then $\min_{\beta\in[0,1]}\KL\!\big(\tilde p_\lambda\,\|\,q_\beta\big)$ has the unique minimizer $\beta^\star=\lambda$, and the minimum value is $0$.
\end{restatable}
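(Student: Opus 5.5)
Both parts reduce to facts already established about the one-parameter family $q_\beta=\beta p_{\mathrm{o}}+(1-\beta)p_{\mathrm{n}}$. In each case I first rewrite the replayed object as a member of this family, and then invoke Theorem~\ref{thm:kl_main} or the basic properties of KL.

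For \textbf{(A)}, I expand
\[
\tilde q_{\beta,\lambda}=(1-\lambda)\beta\,p_{\mathrm{o}}+(1-\lambda)(1-\beta)\,p_{\mathrm{n}}+\lambda p_{\mathrm{o}}.
\]
This gives weight $\tilde\beta=\lambda+(1-\lambda)\beta$ on $p_{\mathrm{o}}$ and $(1-\lambda)(1-\beta)=1-\tilde\beta$ on $p_{\mathrm{n}}$, so $\tilde q_{\beta,\lambda}=q_{\tilde\beta}$. The map $\beta\mapsto\tilde\beta$ is a strictly increasing affine bijection from $[0,1]$ onto $[\lambda,1]$. Hence $\KL(p_{\mathrm{n}}\|\tilde q_{\beta,\lambda})=L_{\mathrm{SFT}}(\tilde\beta)$. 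By part~2 of Theorem~\ref{thm:kl_main}, $L_{\mathrm{SFT}}$ is strictly increasing on $[0,1]$, so the composition is strictly increasing in $\beta$. Its unique minimizer is therefore $\beta^\star=0$, i.e.\ $\tilde\beta^\star=\lambda$. I will add a remark that the minimum value is $L_{\mathrm{SFT}}(\lambda)>0$ by part~1, so the model never fits the data.

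For \textbf{(B)}, I observe that $\tilde p_\lambda=q_\lambda$ exactly. Then $\KL(\tilde p_\lambda\|q_\beta)\ge 0$, with value $0$ at $\beta=\lambda$, so the minimum value is $0$. For uniqueness, $\KL(\tilde p_\lambda\|q_\beta)=0$ forces $q_\beta=q_\lambda$ a.e. This says $(\beta-\lambda)(p_{\mathrm{o}}-p_{\mathrm{n}})=0$ a.e., and since $p_{\mathrm{o}}\neq p_{\mathrm{n}}$ (because $\mu_{\mathrm{o}}\neq\mu_{\mathrm{n}}$) we get $\beta=\lambda$. I will also check the endpoints: $\beta=0$ or $1$ give strictly positive KL, with $+\infty$ allowed, which is consistent with the argument above.

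The only mildly delicate point is this identifiability step in (B): that distinct $\beta$ give distinct $q_\beta$. It holds because the two Gaussian components differ on a set of positive measure. The rest is bookkeeping. Alternatively, one could differentiate and use the logit-gradient identity analogous to \eqref{eq:kl_sft_logit_grad}, namely $\beta-\E_{\tilde p_\lambda}[r_{\mathrm{o}}]$, but the nonnegativity argument is cleaner and I will use it.
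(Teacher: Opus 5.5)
Your proposal is correct and matches the paper's proof essentially step for step. In (A) the paper also rewrites $\tilde q_{\beta,\lambda}=q_{\tilde\beta}$ and uses strict monotonicity of $\gamma\mapsto\KL(p_{\mathrm{n}}\|q_\gamma)$; the only difference is that it re-derives this monotonicity via $F''>0$ and $F'(0)=0$ instead of citing Theorem~\ref{thm:kl_main}. In (B) it uses exactly your observation $\tilde p_\lambda=q_\lambda$ together with the identifiability step $(\beta-\lambda)(p_{\mathrm{o}}-p_{\mathrm{n}})=0$ a.e.
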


%\begin{remark}[Forgetting takeaway for replay under forward-KL]\label{rem:replay_forwardKL_takeaway}
%Denominator replay (model-side mixing) does not change the forward-KL population incentive: as long as the data distribution is new-only($p_{\mathrm{n}}$), the optimal \emph{trainable} parameter still collapses to $\beta^\star=0$. Any apparent ``retention'' is externally imposed by the forced mixture $\tilde q_{\beta,\lambda}$, not learned.In contrast, numerator replay (data-side mixing) changes the population objective by changing the training distribution: forward-KL then recovers the replay fraction exactly ($\beta^\star=\lambda$), giving the minimal rehearsal-based mechanism by which SFT avoids forgetting in this toy model.
%\end{remark}

%\paragraph{Explanation of Part (A): why denominator replay cannot change the learned optimum.}
Part (A) shows that denominator replay does not change the \emph{directional preference} of forward-KL; it only restricts the model family.
Indeed, $\tilde q_{\beta,\lambda}$ is not a new family of distributions: it is exactly the original mixture family with an \emph{affine reparameterization}
$\tilde\beta=\lambda+(1-\lambda)\beta$ and hence a constraint $\tilde\beta\in[\lambda,1]$.
Therefore the optimization reduces to minimizing $\KL(p_{\mathrm{n}}\|q_{\tilde\beta})$ over $\tilde\beta\in[\lambda,1]$.
Since the map $\tilde\beta\mapsto \KL(p_{\mathrm{n}}\|q_{\tilde\beta})$ is strictly increasing (Theorem~\ref{thm:kl_main}),
the optimizer chooses the smallest admissible old mass, namely $\tilde\beta^\star=\lambda$, which corresponds to $\beta^\star=0$.
Operationally, any nonzero old mass present in the deployed distribution $\tilde q_{\beta^\star,\lambda}=q_\lambda$ is therefore a \emph{hard-coded floor}
imposed by the replay mixing, rather than something learned from new-only data.

%\paragraph{Explanation of Part (B): why numerator replay changes the population target.}
Part (B) is qualitatively different because numerator replay changes the training distribution itself.
When the data distribution is $\tilde p_\lambda=(1-\lambda)p_{\mathrm{n}}+\lambda p_{\mathrm{o}}$, the model class contains a member that matches it
\emph{exactly}, namely $q_{\beta}$ with $\beta=\lambda$.
Thus the forward-KL can attain its global minimum value $0$ at $\beta^\star=\lambda$, and uniqueness follows because two distinct mixture weights
cannot represent the same density when $p_{\mathrm{o}}\not\equiv p_{\mathrm{n}}$.
In other words, forward-KL performs a mode-covering projection of the \emph{data} mixture onto the model family, and it necessarily retains
whatever fraction of old data is present in the training distribution.

\subsection{Reverse-KL RL Avoids Mass Forgetting and Controls Old-Component Drift}\label{sec:reverseKL} 

We now verify that the reverse-KL objective is correctly aligned with the intended true target at the parameter level. %The first result computes the gradients of the objective with respect to the mixture weight and the new-mode mean, and shows that when these parameters match the target mixture, both gradients vanish. Consequently, the parameter pair corresponding to the target distribution is not only stationary but also a global minimizer, achieving zero divergence.
We first show that the reverse-KL objective is \emph{consistent} with respect to the target distribution. In particular, when the learner parameters match the target mixture, i.e., when the mixture weight equals the target weight and the new-mode mean equals the true new mean, $(\beta,m_{\mathrm{n}})=(\alpha,\mu_{\mathrm{n}})$, then the model distribution $q_{\beta,m_{\mathrm{n}}}$ coincides exactly with the target $p_\alpha$. At this point the reverse-KL divergence vanishes and the gradients with respect to both parameters are zero, so the point is stationary. Moreover, since KL divergence is nonnegative and equals zero only when the two distributions coincide, this parameter choice is in fact a \emph{global minimizer} of the objective. Thus reverse-KL RL is aligned with the target at the population level: the optimal solution preserves the correct mixture mass on the old mode and therefore avoids mass forgetting.

\begin{restatable}[Consistency of Reverse-KL RL]{theorem}{RKLN}
\label{thm:kl_stationary_beta_mn} 
Consider the target model in~\eqref{eq:truemodel} and the learner  model family in~\eqref{eq:estmodel} with $m_{\mathrm{o}}=\mu_{\mathrm{o}}$.
%Fix $\Sigma\succ 0$, $\alpha\in(0,1)$, and means $\mu_{\mathrm{o}},\mu_{\mathrm{n}}\in\mathbb{R}^d$. Set  and define, for $\beta\in(0,1)$ and $m_{\mathrm{n}}\in\mathbb{R}^d$, 
%\[ 
%q_{\beta,m_{\mathrm{n}}}(y) := q_{\beta,\mu_{\mathrm{o}},m_{\mathrm{n}}}(y) = \beta\,\gauss{y}{\mu_{\mathrm{o}}}{\Sigma}+(1-\beta)\,\gauss{y}{m_{\mathrm{n}}}{\Sigma}, \qquad p_\alpha(y) := \alpha\,\gauss{y}{\mu_{\mathrm{o}}}{\Sigma}+(1-\alpha)\,\gauss{y}{\mu_{\mathrm{n}}}{\Sigma}.
%\] 
Let 
\[ 
L(\beta,m_{\mathrm{n}}) := \KL\!\big(q_{\beta,m_{\mathrm{n}}}\,\|\,p_\alpha\big) = \int_{\mathbb{R}^d} q_{\beta,m_{\mathrm{n}}}(y)\,\log\frac{q_{\beta,m_{\mathrm{n}}}(y)}{p_\alpha(y)}\,dy. 
\] 
Then $L$ is continuously differentiable on $(0,1)\times\mathbb{R}^d$, and its gradients are 
\begin{align} 
\frac{\partial}{\partial\beta}L(\beta,m_{\mathrm{n}}) &= \int_{\mathbb{R}^d}\Big(\gauss{y}{\mu_{\mathrm{o}}}{\Sigma}-\gauss{y}{m_{\mathrm{n}}}{\Sigma}\Big)\, \log\frac{q_{\beta,m_{\mathrm{n}}}(y)}{p_\alpha(y)}\,dy,\label{eq:beta-grad-kl}\\[3pt] \nabla_{m_{\mathrm{n}}}L(\beta,m_{\mathrm{n}}) &= (1-\beta)\int_{\mathbb{R}^d}\gauss{y}{m_{\mathrm{n}}}{\Sigma}\,\Sigma^{-1}(y-m_{\mathrm{n}})\, \log\frac{q_{\beta,m_{\mathrm{n}}}(y)}{p_\alpha(y)}\,dy.\label{eq:mn-grad-kl} 
\end{align} 
In particular, the point $(\beta,m_{\mathrm{n}})=(\alpha,\mu_{\mathrm{n}})$ is stationary: 
\[ 
\left.\frac{\partial}{\partial\beta}L(\beta,m_{\mathrm{n}})\right|_{(\beta,m_{\mathrm{n}})=(\alpha,\mu_{\mathrm{n}})}=0, \qquad \left.\nabla_{m_{\mathrm{n}}}L(\beta,m_{\mathrm{n}})\right|_{(\beta,m_{\mathrm{n}})=(\alpha,\mu_{\mathrm{n}})}=0. 
\] 
Moreover, since $\KL(\cdot\|\cdot)\ge 0$ with equality iff the two densities coincide a.e., $(\alpha,\mu_{\mathrm{n}})$ is a global minimizer of $L$ and achieves $L(\alpha,\mu_{\mathrm{n}})=0$. 
\end{restatable}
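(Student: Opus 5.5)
The plan is to differentiate under the integral sign in the reverse-KL functional $L(\beta,m_{\mathrm{n}})=\int q\log q-\int q\log p_\alpha$, where $q=q_{\beta,m_{\mathrm{n}}}$. Write $\theta$ for either parameter. The generic formula is
\[
\partial_\theta L=\int \partial_\theta q\,\log\frac{q}{p_\alpha}\,dy+\int \partial_\theta q\,dy,
\]
and the last term equals $\partial_\theta\int q\,dy=\partial_\theta 1=0$. For $\beta$ we have $\partial_\beta q=\gauss{y}{\mu_{\mathrm{o}}}{\Sigma}-\gauss{y}{m_{\mathrm{n}}}{\Sigma}$, which gives \eqref{eq:beta-grad-kl}. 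For $m_{\mathrm{n}}$ we have $\nabla_{m_{\mathrm{n}}}q=(1-\beta)\gauss{y}{m_{\mathrm{n}}}{\Sigma}\,\Sigma^{-1}(y-m_{\mathrm{n}})$, which gives \eqref{eq:mn-grad-kl}.

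The main technical step, and the only real obstacle, is justifying the interchange of derivative and integral and the continuity of the resulting gradients. For this I would fix a compact set $K\subset(0,1)\times\R^d$ with $\beta\in[\beta_-,\beta_+]$ and $\|m_{\mathrm{n}}\|\le R$ on $K$, and build integrable dominating functions uniform over $K$.

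1. Both $q$ and $p_\alpha$ are mixtures of Gaussians with the same covariance and with weights bounded away from $0$ on $K$. Hence $\log q(y)$ and $\log p_\alpha(y)$ are each bounded above by a constant and below by $-C(1+\|y\|^2)$, with $C$ uniform on $K$. It follows that $|\log(q/p_\alpha)|\le C'(1+\|y\|^2)$.
2. The derivative $|\partial_\theta q|$ is bounded by $C''(1+\|y\|)\sup_{\|m\|\le R}\gauss{y}{m}{\Sigma}$. This envelope has Gaussian tails, e.g.\ it is at most $c\,e^{-\|y\|^2/(4\lambda_{\max}(\Sigma))}$ for large $\|y\|$.
3. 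The product of the bounds in steps 1 and 2 is a polynomial times a Gaussian tail, hence integrable. It dominates the integrand $\partial_\theta\big(q\log(q/p_\alpha)\big)=\partial_\theta q\,(\log(q/p_\alpha)+1)$.

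Dominated convergence then gives both differentiation under the integral and continuity of the gradient expressions, since the integrands are continuous in the parameters. So $L\in C^1$.

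For stationarity, substitute $(\beta,m_{\mathrm{n}})=(\alpha,\mu_{\mathrm{n}})$. Then $q_{\alpha,\mu_{\mathrm{n}}}=\alpha p_{\mathrm{o}}+(1-\alpha)p_{\mathrm{n}}=p_\alpha$ pointwise, so $\log(q/p_\alpha)\equiv 0$ and both integrals vanish identically. Finally, $L\ge0$ by Gibbs' inequality and $L(\alpha,\mu_{\mathrm{n}})=\KL(p_\alpha\|p_\alpha)=0$, so this point is a global minimizer.
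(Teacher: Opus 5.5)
Your proposal is correct and matches the paper's proof: the identity $\partial_\theta L=\int \partial_\theta q\,\log(q/p_\alpha)\,dy$ (because $\int \partial_\theta q\,dy=0$), justified by dominated convergence using a quadratic bound on $\log(q/p_\alpha)$ and polynomial-times-Gaussian bounds on $\partial_\theta q$, followed by $q_{\alpha,\mu_{\mathrm{n}}}\equiv p_\alpha$ and nonnegativity of KL. Your compact-set envelopes make the domination step more explicit than the paper does; just take $R\ge\|\mu_{\mathrm{o}}\|$ so that your supremum over means also bounds the $\beta$-derivative.
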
 

We next bound the drift of the old distribution's parameters when learning the new distribution.

\begin{restatable}[Bounding Drift of the Old Distribution in Reverse-KL RL]{theorem}{RKL}
\label{lem:fwdKL_oldmean_drift}
Consider the learner model family in~\eqref{eq:estmodel} and the reverse-KL objective $
L_{\mathrm{RL}}(\beta,m_{\mathrm{o}},m_{\mathrm{n}}):=\KL\!\big(q_{\beta,m_{\mathrm{o}},m_{\mathrm{n}}}\,\|\,p_\alpha\big)$. Assume the old mean is already correct: $m_{\mathrm{o}}=\mu_{\mathrm{o}}$ (with $m_{\mathrm{n}}$ arbitrary).
Recalling~\eqref{eq:modelrespon}, let $r_{\mathrm{o}}(y)$ be the learner responsibility under $q_{\beta,\mu_{\mathrm{o}},m_{\mathrm{n}}}$ and let $s_{\mathrm{o}}(y)$ be the target responsibility under $p_\alpha$. Then the gradient w.r.t.\ the old mean admits the \emph{exact} decomposition
\begin{equation}\label{eq:kl_oldmean_exact}
\nabla_{m_{\mathrm{o}}}L_{\mathrm{RL}}(\beta,\mu_{\mathrm{o}},m_{\mathrm{n}})
=
\beta\,\Sigma^{-1}\Big(
\varepsilon_q(\beta,m_{\mathrm{n}})\,(m_{\mathrm{n}}-\mu_{\mathrm{o}})
-
\varepsilon_p(\alpha)\,(\mu_{\mathrm{n}}-\mu_{\mathrm{o}})
\Big),
\end{equation}
where the misassignment probabilities are $\varepsilon_q(\beta,m_{\mathrm{n}}):=\E_{Y\sim p_{\mathrm{o}}}\big[1-r_{\mathrm{o}}(Y)\big],$ and $\varepsilon_p(\alpha):=\E_{Y\sim p_{\mathrm{o}}}\big[1-s_{\mathrm{o}}(Y)\big].$ Moreover, these satisfy the explicit Gaussian overlap bounds
\begin{equation}\label{eq:kl_eps_bounds}
\varepsilon_q(\beta,m_{\mathrm{n}})
\le \frac12\sqrt{\frac{1-\beta}{\beta}}
\exp\!\left(-\frac18\mnorm{m_{\mathrm{n}}-\mu_{\mathrm{o}}}{\Sigma^{-1}}^{2}\right),
\qquad
\varepsilon_p(\alpha)
\le \frac12\sqrt{\frac{1-\alpha}{\alpha}}
\exp\!\left(-\frac{\delta^{2}}{8}\right),
\end{equation}
and hence (with $\|\cdot\|_2$ denoting the operator norm), we have
\[
\norm{\nabla_{m_{\mathrm{o}}}L_{\mathrm{RL}}(\beta,\mu_{\mathrm{o}},m_{\mathrm{n}})}
\le
\beta\,\norm{\Sigma^{-1}}_{2}\left(
\varepsilon_q(\beta,m_{\mathrm{n}})\,\norm{m_{\mathrm{n}}-\mu_{\mathrm{o}}}
+
\varepsilon_p(\alpha)\,\norm{\mu_{\mathrm{n}}-\mu_{\mathrm{o}}}
\right).
\]
\end{restatable}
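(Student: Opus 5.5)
I will differentiate $L_{\mathrm{RL}}$ directly in $m_{\mathrm{o}}$. Write $q=q_{\beta,m_{\mathrm{o}},m_{\mathrm{n}}}$. Since $\int \nabla_{m_{\mathrm{o}}} q\,dy=0$, we have $\nabla_{m_{\mathrm{o}}}\KL(q\|p_\alpha)=\int \nabla_{m_{\mathrm{o}}}q(y)\,\log\frac{q(y)}{p_\alpha(y)}\,dy$. Here $\nabla_{m_{\mathrm{o}}}q(y)=\beta\,\gauss{y}{m_{\mathrm{o}}}{\Sigma}\,\Sigma^{-1}(y-m_{\mathrm{o}})$. Differentiation under the integral is justified exactly as in Theorem~\ref{thm:kl_stationary_beta_mn}, by Gaussian tails and the at-most-quadratic growth of $\log(q/p_\alpha)$. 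Setting $m_{\mathrm{o}}=\mu_{\mathrm{o}}$ gives
\[
\nabla_{m_{\mathrm{o}}}L_{\mathrm{RL}}=\beta\,\Sigma^{-1}\,\E_{Y\sim p_{\mathrm{o}}}\Big[(Y-\mu_{\mathrm{o}})\log\tfrac{q(Y)}{p_\alpha(Y)}\Big].
\]

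The key step rewrites the log-ratio through responsibilities. Since $r_{\mathrm{o}}=\beta p_{\mathrm{o}}/q$ and $s_{\mathrm{o}}=\alpha p_{\mathrm{o}}/p_\alpha$, we get $\log\frac{q}{p_\alpha}=\log\frac{\beta}{\alpha}-\log r_{\mathrm{o}}+\log s_{\mathrm{o}}$. The constant term vanishes because $\E_{p_{\mathrm{o}}}[Y-\mu_{\mathrm{o}}]=0$. For the remaining terms I use Stein's identity $\E_{p_{\mathrm{o}}}[(Y-\mu_{\mathrm{o}})h(Y)]=\Sigma\,\E_{p_{\mathrm{o}}}[\nabla h(Y)]$. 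This requires checking integrability of $\nabla\log r_{\mathrm{o}}$ and $\nabla\log s_{\mathrm{o}}$, which are bounded by affine functions of $y$.

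Now compute the gradients. Writing $\log r_{\mathrm{o}}=\log\beta+\log p_{\mathrm{o}}-\log q$, we have $\nabla\log p_{\mathrm{o}}=-\Sigma^{-1}(y-\mu_{\mathrm{o}})$ and $\nabla\log q=-\Sigma^{-1}\big(y-r_{\mathrm{o}}\mu_{\mathrm{o}}-(1-r_{\mathrm{o}})m_{\mathrm{n}}\big)$. Hence
\[
\nabla\log r_{\mathrm{o}}=(1-r_{\mathrm{o}})\,\Sigma^{-1}(\mu_{\mathrm{o}}-m_{\mathrm{n}}),
\]
and likewise $\nabla\log s_{\mathrm{o}}=(1-s_{\mathrm{o}})\,\Sigma^{-1}(\mu_{\mathrm{o}}-\mu_{\mathrm{n}})$. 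Substituting, the factor $\Sigma\Sigma^{-1}$ cancels:
\[
\E_{p_{\mathrm{o}}}\Big[(Y-\mu_{\mathrm{o}})\log\tfrac{q}{p_\alpha}\Big]=\varepsilon_q\,(m_{\mathrm{n}}-\mu_{\mathrm{o}})-\varepsilon_p\,(\mu_{\mathrm{n}}-\mu_{\mathrm{o}}).
\]
This is exactly \eqref{eq:kl_oldmean_exact}. Getting the signs right here is the main bookkeeping risk, so I would double-check it by noting that both terms vanish when $m_{\mathrm{n}}=\mu_{\mathrm{n}}$ and $\beta=\alpha$, consistent with Theorem~\ref{thm:kl_stationary_beta_mn}.

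For \eqref{eq:kl_eps_bounds}, I apply Lemma~\ref{lem:leakage}'s second inequality twice. The first application uses $f=p_{\mathrm{o}}$, $g=\gauss{\cdot}{m_{\mathrm{n}}}{\Sigma}$ and $w=\beta$, which bounds $\varepsilon_q$. The second uses $f=p_{\mathrm{o}}$, $g=p_{\mathrm{n}}$ and $w=\alpha$, which bounds $\varepsilon_p$. Remark~\ref{lem:bc-gauss} then supplies the Bhattacharyya coefficients $\exp(-\frac18\|m_{\mathrm{n}}-\mu_{\mathrm{o}}\|_{\Sigma^{-1}}^2)$ and $e^{-\delta^2/8}$. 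The final norm bound follows from the triangle inequality and $\|\Sigma^{-1}v\|\le\|\Sigma^{-1}\|_2\|v\|$, using $\varepsilon_q,\varepsilon_p\ge0$. The only genuine obstacle is the regularity justification for Stein's identity and for differentiating under the integral; both follow from Gaussian moment bounds.
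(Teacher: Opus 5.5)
Your proposal is correct and follows the paper's proof. Both differentiate under the integral, apply the Gaussian Stein identity to $\log(q/p_\alpha)$ under $p_{\mathrm{o}}$, evaluate the score difference through responsibility-weighted mixture scores, and then use Lemma~\ref{lem:leakage} with Remark~\ref{lem:bc-gauss} for the overlap bounds; your rewriting of the log-ratio as $\log(\beta/\alpha)-\log r_{\mathrm{o}}+\log s_{\mathrm{o}}$ is only a cosmetic variant, since $-\nabla\log r_{\mathrm{o}}+\nabla\log s_{\mathrm{o}}=\nabla\log q-\nabla\log p_\alpha$ once the $\nabla\log p_{\mathrm{o}}$ terms cancel.
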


The above result shows that when optimizing reverse-KL toward a target that explicitly retains the old mode, the learning signal acting on the old parameters is tightly controlled. In particular, when the old mean is already correct, the gradient admits an \emph{exact decomposition} in which the only terms capable of moving the old mode arise from misassignment probabilities $(1-r_{\mathrm{o}})$ and $(1-s_{\mathrm{o}})$ under old-mode samples. The theorem further shows that these misassignment probabilities are governed by Gaussian overlap quantities (Bhattacharyya coefficients) and decay exponentially with the squared Mahalanobis separation, scaling as $\exp(-\delta^{2}/8)$ in the equal-covariance case. Consequently, in well-separated regimes the reverse-KL objective exerts only an exponentially small update pressure on an already-correct old component, so optimization can adjust the new mode while perturbing the old mode only through negligible overlap effects.

Theorems~\ref{thm:kl_stationary_beta_mn} and~\ref{lem:fwdKL_oldmean_drift}  together quantify precisely that the reverse-KL objective is naturally aligned with the desired continual-learning outcome: it favors a solution that simultaneously preserves the old mode and correctly represents the new one and the mixing proportions.

\subsubsection{A Local Exponential Rate for Reverse-KL Minimization} 

In this subsection we prove a \emph{local} (near-optimum) exponential convergence rate for the reverse-KL minimization when $m_{\mathrm{o}}$ is fixed at $\mu_{\mathrm{o}}$ and we optimize over $(\beta,m_{\mathrm{n}})$ by gradient flow. Because $\beta$ is constrained to $[0,1]$, we parameterize $\beta=\sigma(\phi)$ using the logit $\phi\in\mathbb{R}$. 

%\todo[inline]{Proof of this Theorem in Appendix has  statements not in this theorem}

\begin{restatable}[Explicit Local PL bound and Exponential Convergence for Reverse-KL Gradient Flow]{theorem}{PL}\label{thm:kl_RL_local_rate}
Consider the reverse-KL objective $L(\phi,m):=\KL(q_{\phi,m}\|p_\alpha),$ and where
\[
q_{\phi,m}(y):=\beta(\phi)\,\phi_\Sigma(y;\mu_{\mathrm{o}})+(1-\beta(\phi))\,\phi_\Sigma(y;m),
\qquad
\beta(\phi)=\frac{1}{1+e^{-\phi}},
\]
with \(m_{\mathrm{o}}=\mu_{\mathrm{o}}\) fixed, and let
\[
\theta:=(\phi,m)\in\R^{d+1},
\qquad
\theta^\star:=(\phi^\star,m^\star),
\qquad
\phi^\star=\log\frac{\alpha}{1-\alpha},
\qquad
m^\star=\mu_{\mathrm{n}}.
\]
Then $H_\star:=\nabla^2L(\theta^\star)\succ 0$ and $\mu_\star:=\lambda_{\min}(H_\star)>0.$ Fix \(r_0>0\) and let \(K=B_{r_0}(\theta^\star)\).
Let \(L_H:=L_H(K)\) be the Hessian-Lipschitz constant given by Lemma~\ref{lem:hessian_lipschitz_reverseKL}.
Define
\[
\rho:=\min\!\left\{r_0,\frac{\mu_\star}{2L_H}\right\},
\qquad
\varepsilon_{\mathrm{loc}}:=\frac{\mu_\star}{8}\rho^2.
\]
Then:
\begin{enumerate}
\item For every \(\theta\in B_\rho(\theta^\star)\),
\begin{equation}\label{eq:local_hessian_lower}
\nabla^2L(\theta)\succeq \frac{\mu_\star}{2}I.
\end{equation}
Consequently, for every \(\theta\in B_\rho(\theta^\star)\),
\begin{equation}\label{eq:local_PL_final}
\|\nabla L(\theta)\|^2\ge \mu_\star\,L(\theta),
\qquad
L(\theta)\ge \frac{\mu_\star}{4}\|\theta-\theta^\star\|^2.
\end{equation}

\item Let \(\theta(t)\) solve the Euclidean gradient flow $\dot\theta(t)=-\nabla L(\theta(t)).$ If $\theta(0)\in B_\rho(\theta^\star)$ and $L(\theta(0))\le \varepsilon_{\mathrm{loc}},
$, then \(\theta(t)\in B_\rho(\theta^\star)\) for all \(t\ge 0\), and
\begin{equation}\label{eq:exp_loss_final}
L(\theta(t))\le L(\theta(0))\,e^{-\mu_\star t}
\qquad
\forall t\ge 0.
\end{equation}
Moreover,
\begin{equation}\label{eq:exp_param_final}
\|\theta(t)-\theta^\star\|
\le
\frac{2}{\sqrt{\mu_\star}}\sqrt{L(\theta(0))}\,e^{-\mu_\star t/2}
\qquad
\forall t\ge 0.
\end{equation}
\end{enumerate}
\end{restatable}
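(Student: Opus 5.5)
The proof has three ingredients: positive definiteness of $H_\star$, a Hessian perturbation bound on the small ball, and the standard gradient-flow argument under a PL inequality.

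\textbf{Step 1: $H_\star\succ 0$.} Since $L\ge 0$ and $L(\theta^\star)=0$, the Hessian at $\theta^\star$ is the Fisher information of the model $q_\theta$ at $\theta^\star$:
\[
H_\star=\int q_{\theta^\star}\,\nabla\log q_{\theta^\star}\,\nabla\log q_{\theta^\star}^\top\,dy .
\]
This holds because the second-order term $\int \nabla^2 q_\theta\,\log(q_\theta/p_\alpha)$ vanishes when $q_{\theta^\star}=p_\alpha$, and $\int\nabla^2 q_\theta=0$. So $H_\star\succeq 0$, and strict positivity reduces to linear independence of the score functions. The scores are
\[
\partial_\phi q=\beta(1-\beta)\big(\varphi_\Sigma(y;\mu_{\mathrm o})-\varphi_\Sigma(y;\mu_{\mathrm n})\big),
\qquad
\nabla_m q=(1-\beta)\,\varphi_\Sigma(y;\mu_{\mathrm n})\,\Sigma^{-1}(y-\mu_{\mathrm n}).
\]
Suppose $v=(a,b)$ with $a\,\partial_\phi q+b^\top\nabla_m q\equiv 0$. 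Divide by $\varphi_\Sigma(y;\mu_{\mathrm n})$. Because $\mu_{\mathrm o}\neq\mu_{\mathrm n}$, the ratio $\varphi_\Sigma(y;\mu_{\mathrm o})/\varphi_\Sigma(y;\mu_{\mathrm n})$ is the exponential of a nonconstant affine function. An exponential-affine function cannot equal an affine function of $y$, so $a=0$, and then $b=0$.

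\textbf{Step 2: local Hessian bound.} Here I invoke Lemma~\ref{lem:hessian_lipschitz_reverseKL}: $\|\nabla^2L(\theta)-H_\star\|\le L_H\|\theta-\theta^\star\|\le L_H\rho\le\mu_\star/2$ on $B_\rho(\theta^\star)\subseteq K$, which gives \eqref{eq:local_hessian_lower}. The ball is convex, $\nabla L(\theta^\star)=0$ by Theorem~\ref{thm:kl_stationary_beta_mn}, and $L$ is $\tfrac{\mu_\star}{2}$-strongly convex on it. Taylor expansion around $\theta^\star$ gives $L(\theta)\ge\tfrac{\mu_\star}{4}\|\theta-\theta^\star\|^2$. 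Strong convexity on the convex set, applied between $\theta$ and the minimizer $\theta^\star$, gives the PL inequality
\[
L(\theta)-L(\theta^\star)\le \frac{1}{2(\mu_\star/2)}\|\nabla L(\theta)\|^2,
\]
that is, $\|\nabla L\|^2\ge\mu_\star L$. The usual derivation minimizes the strong-convexity lower bound over all of $\R^{d+1}$. To avoid needing convexity off the ball, use
\[
0=L(\theta^\star)\ge L(\theta)+\langle\nabla L(\theta),\theta^\star-\theta\rangle+\tfrac{\mu_\star}{4}\|\theta-\theta^\star\|^2 ,
\]
which involves only points of the ball, and then apply Cauchy--Schwarz together with $ab\le a^2/\mu_\star+\mu_\star b^2/4$.

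\textbf{Step 3: invariance and rates.} I expect the trapping argument to be the main obstacle. Along the flow, $\frac{d}{dt}L=-\|\nabla L\|^2\le 0$, so $L(\theta(t))\le\varepsilon_{\mathrm{loc}}=\mu_\star\rho^2/8$. Let $\tau$ be the first exit time from $B_\rho$. At $\tau$, the quadratic growth bound (valid on the closed ball by continuity) gives
\[
\tfrac{\mu_\star}{4}\rho^2\le L(\theta(\tau))\le \mu_\star\rho^2/8,
\]
a contradiction, so $\theta(t)$ stays in the ball for all $t\ge0$. PL then gives $\dot L\le-\mu_\star L$, and Grönwall yields \eqref{eq:exp_loss_final}. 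Combining with quadratic growth,
\[
\|\theta-\theta^\star\|\le\tfrac{2}{\sqrt{\mu_\star}}\sqrt{L(\theta(t))}\le\tfrac{2}{\sqrt{\mu_\star}}\sqrt{L(\theta(0))}\,e^{-\mu_\star t/2},
\]
which is \eqref{eq:exp_param_final}. Global existence of the flow follows from local Lipschitzness of $\nabla L$, which holds by Lemma~\ref{lem:hessian_lipschitz_reverseKL}, together with confinement to the compact ball.
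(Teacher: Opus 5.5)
Your proposal is correct and follows essentially the same route as the paper. The steps match: the Fisher representation of $H_\star$ at $\theta^\star$; Hessian-Lipschitzness plus Weyl to get $\nabla^2L\succeq\frac{\mu_\star}{2}I$ on $B_\rho(\theta^\star)$; Taylor's theorem for quadratic growth; the strong-convexity inequality between $\theta$ and $\theta^\star$ with Cauchy--Schwarz for the PL bound; and the sublevel-set trapping argument followed by Gr\"onwall. The only deviation is minor and occurs in showing $H_\star\succ 0$. You divide by $\varphi_\Sigma(y;\mu_{\mathrm n})$ and use that the exponential of a nonconstant affine function cannot be affine. The paper instead evaluates the null-direction identity at $y=\mu_{\mathrm n}$, using $r_{\mathrm o}^\star(\mu_{\mathrm n})<\alpha$, and then at $y=\mu_{\mathrm n}+\Sigma a$. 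Both arguments are valid.
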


Theorem~\ref{thm:kl_RL_local_rate} says that once the reverse-KL dynamics enter a neighborhood in which the local curvature stays uniformly positive, the loss behaves like a strongly convex function and gradient flow converges exponentially fast to the optimum. The locality is quantified explicitly by \(\rho\) and \(\varepsilon_{\mathrm{loc}}\): the radius is determined by the curvature at the optimum \(\mu_\star\) and by the local Hessian-Lipschitz constant \(L_H\), while the admissible sublevel threshold is \(\varepsilon_{\mathrm{loc}}=\mu_\star\rho^2/8\). Thus the same Hessian curvature quantity that measures local identifiability also determines the exponential convergence rate and the size of the certified local basin.

\subsection{Replay Improves Reverse-KL Methods in Practice}\label{subsec:replay_reverseKL}

Our earlier population-level reverse-KL analysis suggests that KL-regularized on-policy updates are naturally aligned with retention: when the true target $p_\alpha$ explicitly includes the old mode, reverse-KL gradients that would move an already-correct old component are overlap-gated and become exponentially small as the separation between modes increases. However, this population-level picture does \emph{not} by itself rule out a purely \emph{stochastic} failure mode. In particular, when the current old-mode weight $\beta$ is small, a minibatch drawn from the current model $q_{\beta,m_{\mathrm{n}}}$ may contain no old-mode samples, causing the stochastic update in that iteration to behave effectively as a ``new-only'' update.

A simple semi-on-policy remedy is to inject a small replay fraction of old-mode samples into the behavior distribution used to construct minibatches. Importantly, this can be done while still estimating the \emph{same} reverse-KL gradient in expectation via \emph{bounded} importance weighting. This intervention simultaneously (i) guarantees that old-mode samples appear in minibatches regardless of the current value of $\beta$, and (ii) avoids the high-variance pathologies typically associated with general off-policy importance sampling, since the resulting importance ratios remain uniformly bounded.

\paragraph{Replay-Mixed Sampling with Bounded Importance Weights.}
Fix a replay rate $\lambda\in(0,1)$ and define
\begin{equation}\label{eq:replay_behavior_def_consolidated}
b_{\lambda,\beta,m_{\mathrm{n}}}(y)
:=(1-\lambda)\,q_{\beta,m_{\mathrm{n}}}(y)+\lambda\,p_{\mathrm{o}}(y),
\qquad
w_\lambda(y)
:=\frac{q_{\beta,m_{\mathrm{n}}}(y)}{b_{\lambda,\beta,m_{\mathrm{n}}}(y)}.
\end{equation}
A minibatch is formed by drawing $Y_1,\dots,Y_N\overset{i.i.d.}{\sim}b_{\lambda,\beta,m_{\mathrm{n}}}$.

\begin{restatable}{lemma}{ReplayRKL}
\label{lem:replay_reverseKL_consolidated}
Fix $\lambda\in(0,1)$, $\beta\in(0,1)$, and $m_{\mathrm{n}}\in\R^d$.  Let $b_{\lambda,\beta,m_{\mathrm{n}}}$ and $w_\lambda$ be defined in
\eqref{eq:replay_behavior_def_consolidated}, and let $Y_1,\dots,Y_N\overset{i.i.d.}{\sim}b_{\lambda,\beta,m_{\mathrm{n}}}$.

\medskip
\noindent\textbf{(A)} There exists $\tilde\beta\in(0,1)$ such that
\[
b_{\lambda,\beta,m_{\mathrm{n}}}(y)=q_{\tilde\beta,m_{\mathrm{n}}}(y),
\qquad
\tilde\beta=\lambda+(1-\lambda)\beta\ \ge\ \lambda.
\]
Moreover, the importance ratio is uniformly bounded,
\[
0\le w_\lambda(y)\le \frac{1}{1-\lambda}\qquad\forall y\in\R^d.
\]
Consequently, for any integrable $h:\R^d\to\R^k$,
\begin{equation}\label{eq:unbiased_general_consolidated}
\E_{Y\sim b_{\lambda,\beta,m_{\mathrm{n}}}}\big[w_\lambda(Y)\,h(Y)\big]
=\E_{Y\sim q_{\beta,m_{\mathrm{n}}}}\big[h(Y)\big],
\end{equation}
and if additionally $\E_{b_{\lambda,\beta,m_{\mathrm{n}}}}[\|h(Y)\|^2]<\infty$, then
\begin{equation}\label{eq:second_moment_bound_consolidated}
\E_{Y\sim b_{\lambda,\beta,m_{\mathrm{n}}}}\big[\|w_\lambda(Y)\,h(Y)\|^2\big]
\le \frac{1}{(1-\lambda)^2}\,\E_{Y\sim b_{\lambda,\beta,m_{\mathrm{n}}}}\big[\|h(Y)\|^2\big].
\end{equation}

\medskip
\noindent\textbf{(B)} Under the mixture generative model of $b_{\lambda,\beta,m_{\mathrm{n}}}=q_{\tilde\beta,m_{\mathrm{n}}}$, let $Z_i\in\{0,1\}$ be the latent indicator
that $Y_i$ came from the old component.  Then $Z_i\sim\mathrm{Bernoulli}(\tilde\beta)$ with $\tilde\beta\ge\lambda$, hence
\begin{equation}\label{eq:no_old_prob_consolidated}
\Pr\Big(\sum_{i=1}^N Z_i=0\Big)=(1-\tilde\beta)^N=((1-\lambda)(1-\beta))^N\le (1-\lambda)^N,
\end{equation}
and a multiplicative Chernoff bound implies
\begin{equation}\label{eq:chernoff_oldcount_consolidated}
\Pr\Big(\sum_{i=1}^N Z_i \le \tfrac{\lambda}{2}N\Big)\ \le\ \exp\!\Big(-\tfrac{\lambda}{8}N\Big).
\end{equation}
\end{restatable}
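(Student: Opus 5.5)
The plan is to handle part (A) by a one-line algebraic identity followed by pointwise and moment manipulations, and part (B) by viewing the latent indicators as i.i.d.\ Bernoulli variables and applying a standard lower-tail Chernoff bound.

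\textbf{Part (A).} Expand $q_{\beta,m_{\mathrm{n}}}=\beta p_{\mathrm{o}}+(1-\beta)\varphi_\Sigma(\cdot;m_{\mathrm{n}})$ inside $b_{\lambda,\beta,m_{\mathrm{n}}}$. Collecting the coefficients of $p_{\mathrm{o}}$ gives $\lambda+(1-\lambda)\beta=:\tilde\beta$. The coefficient of the new component is $(1-\lambda)(1-\beta)=1-\tilde\beta$. This proves $b=q_{\tilde\beta,m_{\mathrm{n}}}$ with $\tilde\beta\in[\lambda,1)$; the same affine reparameterization already appeared in Lemma~\ref{lem:replay_forwardKL_consolidated}(A).

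For the weight bound:
\begin{itemize}
\item Since $\lambda p_{\mathrm{o}}\ge 0$, we have $b\ge(1-\lambda)q$ pointwise.
\item Hence $0\le w_\lambda=q/b\le 1/(1-\lambda)$. Here $b>0$ everywhere because it is a mixture of Gaussians, so $w_\lambda$ is well defined.
\end{itemize}

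Unbiasedness is the change-of-measure identity
\[
\int b\,(q/b)\,h=\int q\,h .
\]
It is justified because $q\ll b$ (as $b\ge(1-\lambda)q$) and $h$ is $q$-integrable. The second-moment bound follows by applying $w_\lambda^2\le(1-\lambda)^{-2}$ pointwise inside the expectation under $b$.

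\textbf{Part (B).} Using the representation $b=q_{\tilde\beta,m_{\mathrm{n}}}$, each draw $Y_i$ is generated by first sampling $Z_i\sim\mathrm{Bernoulli}(\tilde\beta)$ independently, then sampling from the corresponding component. Independence of the $Y_i$ gives independence of the $Z_i$.

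The no-old-sample probability is then computed directly:
\[
\Pr\Big(\textstyle\sum_i Z_i=0\Big)=(1-\tilde\beta)^N=\big((1-\lambda)(1-\beta)\big)^N\le(1-\lambda)^N .
\]

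For the Chernoff step, let $S=\sum_i Z_i$, so $\E S=N\tilde\beta\ge N\lambda$. We need to control $\Pr(S\le\lambda N/2)$.
\begin{itemize}
\item Since $\lambda N/2\le \tfrac12 N\tilde\beta$, this event is contained in $\{S\le(1-\tfrac12)\E S\}$.
\item The multiplicative lower-tail Chernoff bound $\Pr(S\le(1-\eta)\E S)\le\exp(-\eta^2\E S/2)$ with $\eta=1/2$ gives $\exp(-N\tilde\beta/8)$.
\item Finally, $\tilde\beta\ge\lambda$ gives $\exp(-N\tilde\beta/8)\le\exp(-\lambda N/8)$.
\end{itemize}

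The only delicate point is this reduction from the fixed threshold $\lambda N/2$ to a multiplicative deviation below the true mean $N\tilde\beta$. The event inclusion above handles it, and taking $\eta=1/2$ produces the stated constant $1/8$.
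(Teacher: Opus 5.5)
Your proposal is correct and follows essentially the same route as the paper's proof. Both arguments use the affine reparameterization $\tilde\beta=\lambda+(1-\lambda)\beta$ and the pointwise domination $b_{\lambda,\beta,m_{\mathrm{n}}}\ge(1-\lambda)q_{\beta,m_{\mathrm{n}}}$ to get the weight bound, the unbiasedness identity, and the second-moment control. Part (B) then uses the multiplicative Chernoff bound with deviation $1/2$ together with the event inclusion $\{S\le\lambda N/2\}\subseteq\{S\le\tilde\beta N/2\}$.
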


Part (A) formalizes that replay-mixing does \emph{not} change the reverse-KL population objective: any quantity that can be expressed as an
expectation under the current model $q_{\beta,m_{\mathrm{n}}}$ (in particular, standard score-form expressions for
$\nabla_\theta\KL(q_{\beta,m_{\mathrm{n}}}\|p_\alpha)$) can be estimated \emph{unbiasedly} from replay-mixed samples by weighting with $w_\lambda$.
At the same time, the correction is benign: the pointwise dominance
$b_{\lambda,\beta,m_{\mathrm{n}}}\ge (1-\lambda)q_{\beta,m_{\mathrm{n}}}$ forces a uniform weight bound $w_\lambda\le (1-\lambda)^{-1}$,
which immediately yields the second-moment control \eqref{eq:second_moment_bound_consolidated} and prevents the high-variance failure modes of
general importance sampling.

Part (B) addresses the stochastic ``old-mode starvation'' pathology directly.
Even if $\beta$ is tiny, replay ensures the effective old-component probability under the behavior distribution satisfies
$\tilde\beta\ge\lambda$, so the probability that a minibatch contains \emph{no} old samples decays as $(1-\lambda)^N$, independent of $\beta$.
Thus replay-mixing decouples \emph{old-mode visibility} in stochastic gradients from the current policy's old weight.
Combined with our earlier overlap-gated reverse-KL gradient identities (which show that, at the population level, the old mode is only weakly
perturbed when it is already correct), this yields a minimalistic mechanism by which a method can be ``not exactly on-policy'' yet still exhibit
on-policy-like resistance to catastrophic forgetting in finite-batch optimization.

Recent work of KL-regularized on-policy RL~\citep{shah2025comedy,tang2025few} emphasize that \emph{how} the KL term is estimated and differentiated matters: common
surrogate/stop-gradient constructions can yield biased gradients or even optimize a qualitatively different objective than intended.
Lemma~\ref{lem:replay_reverseKL_consolidated} is complementary to these estimator-correctness issues.  Even if one uses a \emph{correct}
(on-policy) gradient expression of the form $\nabla_\theta \mathcal{J}(\theta)=\E_{Y\sim q_\theta}[h_\theta(Y)]$, minibatches drawn purely from
$q_\theta$ can still suffer a coverage problem: low-probability regions (e.g.\ previously learned behavior/modes) may be absent, making the step behave
effectively like ``new-only'' optimization.  Replay-mixing fixes this by sampling from
$b_{\lambda,\theta}=(1-\lambda)q_\theta+\lambda p_{\mathrm{o}}$, and using the importance ratio
$w_\lambda=q_\theta/b_{\lambda,\theta}$ to recover unbiased $q_\theta$-expectations, while keeping $w_\lambda\le (1-\lambda)^{-1}$ uniformly.
Thus, replay-mixing provides a principled way to reuse replay/stale samples and stabilize gradient estimation \emph{without} the unbounded importance weights
that typically make off-policy correction high-variance. It can alleviate coverage/variance pathologies, while the KL-estimator bias mechanisms studied in
\cite{shah2025comedy,tang2025few} must still be addressed by choosing an appropriate KL-gradient estimator.

\subsection{Summarizing Consequences for SFT and RL Post-Training}

In our two-mode Gaussian mixture model, SFT corresponds to minimizing a forward-KL objective of the form $\KL(\text{data}\,\|\,\text{model})$. When the training distribution is new-only ($p_{\mathrm{n}}$), this objective exhibits a population-level pathology: although the model family explicitly contains the correct old component $p_{\mathrm{o}}$, the unique minimizer of $L_{\mathrm{SFT}}(\beta)=\KL(p_{\mathrm{n}}\|q_\beta)$ collapses the old mixture weight to $\beta^\star=0$. The logit-gradient formula $\dot\phi=\E_{p_{\mathrm{n}}}[r_{\mathrm{o}}(Y)]-\beta$ makes the mechanism transparent: updates compare the current old mass $\beta$ with the frequency with which new data are (mis)assigned to the old component, and this leakage probability is exponentially small in the separation $\delta$. Replay interacts with SFT in an asymmetric manner. Mixing old samples only on the model side (denominator replay) does not alter the population objective and therefore still drives the trainable parameter to $\beta^\star=0$, with any apparent ``retention'' arising solely from the externally imposed floor $\tilde\beta\ge\lambda$. In contrast, incorporating replay into the data distribution (numerator replay) genuinely changes the forward-KL objective: the population optimum then recovers the replay fraction exactly ($\beta^\star=\lambda$). In this sense, numerator replay provides the minimal rehearsal mechanism by which SFT can avoid strong forgetting in this toy setting.

By contrast, RL-style post-training with KL regularization can be naturally expressed as minimizing a reverse-KL objective $\KL(q\,\|\,p_\alpha)$ toward a true target
$p_\alpha=\alpha p_{\mathrm{o}}+(1-\alpha)p_{\mathrm{n}}$ that explicitly preserves the old behavior. In this setting the learned parameters are aligned with retention: the matching parameters $(\beta,m_{\mathrm{n}})=(\alpha,\mu_{\mathrm{n}})$ are stationary and globally minimize the reverse-KL, while the learning signal for the old mean at $m_{\mathrm{o}}=\mu_{\mathrm{o}}$ is provably gated by misassignment probabilities that decay exponentially with mode separation. This formalizes the mode-locality intuition: reverse-KL updates can adapt the new mode while perturbing an already-correct old mode only through exponentially small overlap regions. In practice, however, purely on-policy stochastic optimization can suffer a finite-batch starvation failure when $\beta$ becomes small, since minibatches sampled from the current model may contain no old-mode samples. A semi-on-policy replay mixture resolves this issue without altering the population objective. Specifically, mixing a small fraction $\lambda$ of true old samples into the behavior distribution and applying bounded importance weights yields an unbiased estimator of the same reverse-KL gradient, while guaranteeing $\Omega(\lambda N)$ old-mode samples per minibatch with high probability.

\section{Forgetting in Near-on-policy Algorithms}\label{sec:near_on_policy_alternatives}

Based on the quantification established in Section~\ref{sec:main_results}, we now study forgetting in three recently proposed algorithms: Self-Distillation Fine-Tuning (SDFT)~\citep{shenfeld2026self}, TTT-Discover~\citep{yuksekgonul2026learning}, and OAPL~\citep{ritter2026llms,brantley2025accelerating}. All three are \emph{near on-policy} in the sense that their update targets are constructed from distributions closely tied to the model’s own behavior, but they do so in different ways: SDFT distills the current policy toward an evolving teacher induced by demonstrations, TTT-Discover reweights samples drawn from the current policy using an entropic reward objective together with a KL anchor, and OAPL samples from a lagged reference policy and defines both its improvement target and regression objective relative to that same frozen reference. In this sense, demonstrations (via distillation) and rewards (via RL-style tilting) are two practical mechanisms for constructing on-policy update targets that keep training supported on the model’s current distribution. The following subsections show that these design choices lead to distinct forgetting profiles in our mixture model: SDFT behaves most like a reverse-KL update with an evolving teacher, TTT-Discover balances reward-driven discovery against KL-based retention, and OAPL preserves or reweights only the modes already present in its frozen reference while remaining geometrically local.

%Based on the quantification established in Section~\ref{sec:main_results}, we now study forgetting in two recently proposed algorithms, namely, TTT-Discover~\citep{yuksekgonul2026learning} and OAPL~\citep{ritter2026llms,brantley2025accelerating}. Both methods are \emph{near on-policy} in the sense that their update targets are constructed from distributions closely tied to the model’s own behavior, but they achieve this in different ways: TTT-Discover reweights samples drawn from the current policy using an entropic reward objective together with a KL anchor, whereas OAPL samples from a lagged reference policy and defines both its improvement target and regression objective relative to that same frozen reference. The following subsections below show that these design choices lead to distinct forgetting profiles in our mixture model.

\subsection{Mixture-Model Analysis of SDFT with Demonstrations and EMA Teachers}\label{sec:sdft_demo_ema_mixture}

Self-Distillation Fine-Tuning (SDFT)~\citep{shenfeld2026self} updates a \emph{student} policy on-policy by sampling from the student and minimizing a reverse-KL objective to a \emph{demonstration-conditioned} teacher. In the notation of~\cite{shenfeld2026self}, the student is \(\pi_{\theta_t}(\cdot\mid x)\), while the teacher is the same model additionally conditioned on an expert demonstration \(c\), yielding \(\pi_{\bar\theta_t}(\cdot\mid x,c)\). Thus each step minimizes
\[
D_{\mathrm{KL}}\!\big(\pi_{\theta_t}(\cdot\mid x)\,\|\,\pi_{\bar\theta_t}(\cdot\mid x,c)\big),
\]
taking gradients only with respect to the student parameters \(\theta_t\) while treating the teacher as fixed within that step. The key role of the demonstration \(c\) is to \emph{bias the teacher distribution toward a desirable behavior}: conditioning on \(c\) changes how much probability mass the teacher assigns to ``old'' behavior versus ``new'' behavior, and it shifts the teacher's preferred new behavior in a direction suggested by the demonstration. Across steps, \cite{shenfeld2026self} proposes an \emph{EMA teacher} (updating \(\bar\theta_t\) by exponential moving average of \(\theta_t\)) to stabilize the on-policy feedback loop and avoid chasing a rapidly moving target.

\paragraph{Mixture Abstraction of Demonstration-Guided SDFT.}
We model the SDFT process in our two-mode Gaussian mixture abstraction from Section~\ref{sec:mog}. As before, the \emph{old} behavior is represented by the fixed Gaussian $p_{\mathrm{o}}(y):=\gauss{y}{\mu_{\mathrm{o}}}{\Sigma}$. At each iteration \(t\), the teacher distribution is summarized by a \emph{teacher state}
$
\widetilde{\nu}_t = (\alpha_t,\nu_t)\in(0,1)\times\R^d$,
where \(\alpha_t\in(0,1)\)  denotes the amount of old-mode mass retained by the teacher and 
\(\nu_t\) is the current location of the teacher's new mode. 
Similarly, the student distribution is summarized by the \emph{student state}
$\widetilde{m}_t = (\beta_t,m_t)\in(0,1)\times\R^d$,
where \(\beta_t\) is the student's old-mode mass and \(m_t\) is the student's new-mode location. 
The per-step SDFT objective is then reverse-KL from student to the current teacher mixture.

To model the fact that the demonstration \(c\) provides a \emph{directional signal} (a ``vector closer to truth''), we introduce a fixed
\emph{demonstration anchor state}
\[
\widetilde{\nu}(c):=(\alpha_c,\nu_c)\in(0,1)\times\R^d,
\]
which should be interpreted as the mixture summary of the demonstration-conditioned teacher preference induced by \(c\).
We allow the teacher to be updated both by EMA smoothing toward the student and by an explicit pull toward \(\widetilde{\nu}(c)\).
A single scalar \(\lambda\in[0,1]\) controls the \emph{demonstrator strength}: \(\lambda=0\) means the teacher is purely an EMA of the student,
while larger \(\lambda\) makes the teacher track the demonstration anchor more strongly; see also Remark~\ref{rem:sdftassumption}.

\begin{definition}[EMA+demonstrator SDFT dynamics in the mixture model]\label{def:sdft_demo_ema}
Fix \(\Sigma\succ 0\) and \(p_{\mathrm{o}}(y)=\gauss{y}{\mu_{\mathrm{o}}}{\Sigma}\).
For teacher parameters \((\alpha,\nu)\in(0,1)\times\R^d\) define
\[
p_{\alpha,\nu}(y):=\alpha\,p_{\mathrm{o}}(y)+(1-\alpha)\,\gauss{y}{\nu}{\Sigma},
\]
and for student parameters \((\beta,m)\in(0,1)\times\R^d\) define
\[
q_{\beta,m}(y):=\beta\,p_{\mathrm{o}}(y)+(1-\beta)\,\gauss{y}{m}{\Sigma}.
\]
Define the phasewise reverse-KL loss
\[
L(\beta,m;\alpha,\nu):=\KL\!\big(q_{\beta,m}\,\|\,p_{\alpha,\nu}\big).
\]

Let \(\widetilde{\nu}(c)=(\alpha_c,\nu_c)\in(0,1)\times\R^d\) denote a fixed demonstration anchor.
We write the teacher and student states as
\[
\widetilde{\nu}_t:=(\alpha_t,\nu_t)\in(0,1)\times\R^d,
\qquad
\widetilde{m}_t:=(\beta_t,m_t)\in(0,1)\times\R^d.
\]
Given parameters \(\gamma>0\) (student step size), \(\zeta\in(0,1]\) (EMA rate), and \(\lambda\in[0,1]\) (demonstrator strength),
we say that \((\widetilde{\nu}_t,\widetilde{m}_t)_{t\ge 0}\) follow \emph{EMA+demonstrator SDFT dynamics} if, for all \(t\ge 0\),
\begin{align}
\widetilde{m}_{t+1}
&=
\widetilde{m}_t-\gamma\,\nabla_{(\beta,m)}L(\beta_t,m_t;\alpha_t,\nu_t),\label{eq:sdft_student_update}\\
\widetilde{\nu}_{t+1}
&=
(1-\zeta)\,\widetilde{\nu}_t
+\zeta\Big((1-\lambda)\,\widetilde{m}_{t+1}+\lambda\,\widetilde{\nu}(c)\Big).\label{eq:sdft_teacher_update}
\end{align}
The update \eqref{eq:sdft_student_update} is a reverse-KL tracking step of the student toward the current teacher,
and \eqref{eq:sdft_teacher_update} is an EMA teacher update that interpolates between the updated student and the demonstration anchor.
\end{definition}

\begin{theorem}\label{thm:sdft_demo_ema}
Assume the teacher iterates remain in the compact set
\[
K
:=
\Big\{(\alpha,\nu)\in(0,1)\times\R^d:\ 
\alpha\in[\underline{\alpha},\overline{\alpha}],\ 
\|\nu-\mu_{\mathrm{o}}\|\le R_\nu,\ 
\mnorm{\nu-\mu_{\mathrm{o}}}{\Sigma^{-1}}\ge \underline{\delta}
\Big\},
\]
for some \(0<\underline{\alpha}\le \overline{\alpha}<1\), \(R_\nu<\infty\), and \(\underline{\delta}>0\),
and assume the demonstration anchor \(\widetilde{\nu}(c)=(\alpha_c,\nu_c)\in K\).
For each \(y=(\alpha,\nu)\in K\) define
\[
F_y(x):=\KL\!\big(q_x\,\|\,p_y\big),
\qquad
x=(\beta,m),\quad q_x:=q_{\beta,m},\quad p_y:=p_{\alpha,\nu}.
\]
Then \(F_y\) is minimized at \(x=y\) and \(\nabla^2 F_y(y)\succ 0\).
Consequently, by continuity and compactness, there exist constants \(\mu>0\), \(L_H<\infty\), and \(r_0>0\) such that
\[
\lambda_{\min}\!\big(\nabla^2F_y(y)\big)\ge \mu\qquad\forall y\in K,
\]
and \(\nabla^2F_y(\cdot)\) is \(L_H\)-Lipschitz on \(B_{r_0}(y)\) uniformly in \(y\in K\).
Define
\[
\rho:=\min\!\left\{r_0,\frac{\mu}{2L_H}\right\},
\qquad
M:=\sup_{\substack{y\in K\\ \|x-y\|\le \rho}}\lambda_{\max}\!\big(\nabla^2F_y(x)\big)<\infty.
\]
Assume \(\|\widetilde{m}_0-\widetilde{\nu}_0\|\le \rho\) and \(\|\widetilde{\nu}_0-\widetilde{\nu}(c)\|\le \rho\), and run the
EMA+demonstrator SDFT dynamics of Definition~\ref{def:sdft_demo_ema} with step size \(0<\gamma\le 1/M\).
Then the following hold.

\medskip
\noindent\textbf{(A) Stability and geometric tracking.}
Let
\[
q:=1-\frac{\gamma\mu}{2}\in(0,1).
\]
Then, for all \(t\ge 0\),
\begin{equation}\label{eq:sdft_contract_to_current_teacher}
\|\widetilde{m}_{t+1}-\widetilde{\nu}_t\|\le q\,\|\widetilde{m}_t-\widetilde{\nu}_t\|.
\end{equation}
Moreover, the teacher error to the demonstration anchor satisfies
\begin{equation}\label{eq:sdft_teacher_to_demo_recursion}
\|\widetilde{\nu}_{t+1}-\widetilde{\nu}(c)\|
\le
(1-\zeta\lambda)\,\|\widetilde{\nu}_t-\widetilde{\nu}(c)\|
+
\zeta(1-\lambda)\,\|\widetilde{m}_{t+1}-\widetilde{\nu}_t\|.
\end{equation}
In particular, if \(\lambda>0\), then \(\widetilde{\nu}_t\to \widetilde{\nu}(c)\) and \(\widetilde{m}_t\to \widetilde{\nu}(c)\), and both sequences remain in the
\(\rho\)-neighborhood where the above curvature bounds apply.

\medskip
\noindent\textbf{(B) Accumlated Old-component drift.}
Let
\[
\widetilde L(\beta,m_{\mathrm{o}},m_{\mathrm{n}};\alpha,\nu)
:=
\KL\!\big(q_{\beta,m_{\mathrm{o}},m_{\mathrm{n}}}\,\|\,p_{\alpha,\nu}\big),
\qquad
q_{\beta,m_{\mathrm{o}},m_{\mathrm{n}}}(y)
:=
\beta\,\gauss{y}{m_{\mathrm{o}}}{\Sigma}
+
(1-\beta)\,\gauss{y}{m_{\mathrm{n}}}{\Sigma}.
\]
There exists a finite constant \(L_{\mathrm{old}}\), depending only on \(K,\Sigma,\rho\), such that along the trajectory,
\begin{equation}\label{eq:sdft_old_grad_lipschitz_track}
\big\|\nabla_{m_{\mathrm{o}}}\widetilde L(\beta_t,\mu_{\mathrm{o}},m_t;\alpha_t,\nu_t)\big\|
\le
L_{\mathrm{old}}\Big(\|\widetilde{m}_t-\widetilde{\nu}_t\|+\|\widetilde{\nu}_t-\widetilde{\nu}(c)\|\Big).
\end{equation}
If \(\lambda>0\), then the right-hand side is summable over \(t\), and in particular
\begin{equation}\label{eq:sdft_old_grad_summable}
\sum_{t=0}^\infty
\big\|\nabla_{m_{\mathrm{o}}}\widetilde L(\beta_t,\mu_{\mathrm{o}},m_t;\alpha_t,\nu_t)\big\|
<\infty,
\end{equation}
so the total update pressure on an already-correct old mean is finite (no accumulated drift).
Moreover, the student converges to the demonstration-induced equilibrium state:
\[
(\beta_t,m_t)\to (\alpha_c,\nu_c)\quad\text{as }t\to\infty.
\]
In particular, for any desired target state \(\widetilde{\nu}^\star=(\alpha^\star,\nu^\star)\in\R^{d+1}\)
(e.g.\ \(\widetilde{\nu}^\star=(\alpha,\mu_{\mathrm{n}})\)), one has the exact limit identity
\begin{equation}\label{eq:sdft_target_error_limit}
\lim_{t\to\infty}\big\|(\beta_t,m_t)-\widetilde{\nu}^\star\big\|
=
\big\|\widetilde{\nu}(c)-\widetilde{\nu}^\star\big\|.
\end{equation}
Consequently, if the demonstration anchor is \(\varepsilon_{\mathrm{demo}}\)-accurate in the sense that
\(\big\|\widetilde{\nu}(c)-\widetilde{\nu}^\star\big\|\le \varepsilon_{\mathrm{demo}}\),
then
\[
\limsup_{t\to\infty}\big\|(\beta_t,m_t)-\widetilde{\nu}^\star\big\|
\le
\varepsilon_{\mathrm{demo}}.
\]
\end{theorem}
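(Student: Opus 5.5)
The plan has four stages: establish the curvature and regularity constants, obtain the student contraction in (A), prove teacher convergence by a coupled recursion, and then derive the drift bound and limits in (B).

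\emph{Curvature and regularity.} For $y=(\alpha,\nu)\in K$, $F_y(x)=\KL(q_x\|p_y)\ge 0$ with equality at $x=y$, so $y$ is a global minimizer. To show $\nabla^2F_y(y)\succ 0$ I will use the standard identity that at a zero of the KL the Hessian equals the Fisher information of the family $x\mapsto q_x$ at $x=y$. This is $\int \nabla_x q_x\,\nabla_x q_x^\top/q_x$, which is positive definite iff the score components $\partial_\beta q=p_{\mathrm o}-\gauss{y}{m}{\Sigma}$ and $\nabla_m q=(1-\beta)\gauss{y}{m}{\Sigma}\Sigma^{-1}(y-m)$ are linearly independent as functions. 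Independence holds because $m\ne\mu_{\mathrm o}$, which is guaranteed by $\mnorm{\nu-\mu_{\mathrm o}}{\Sigma^{-1}}\ge\underline\delta$; this is the same argument as $H_\star\succ0$ in Theorem~\ref{thm:kl_RL_local_rate}. The map $(x,y)\mapsto\nabla^2F_y(x)$ is continuous and has derivatives bounded on compacts, by Gaussian moment domination as in Lemma~\ref{lem:hessian_lipschitz_reverseKL}. Compactness of $K$, together with a closed $r_0$-neighbourhood kept inside $(0,1)\times\R^d$, then gives uniform $\mu$, $L_H$ and $M$. As in Theorem~\ref{thm:kl_RL_local_rate}, we get $\nabla^2F_y(x)\succeq\frac{\mu}{2}I$ on $B_\rho(y)$.

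\emph{Part (A).} With $\nabla F_y(y)=0$, write
\[\widetilde m_{t+1}-\widetilde\nu_t=\Big(I-\gamma\int_0^1\nabla^2F_{\widetilde\nu_t}(\widetilde\nu_t+s(\widetilde m_t-\widetilde\nu_t))\,ds\Big)(\widetilde m_t-\widetilde\nu_t).\]
If $\|\widetilde m_t-\widetilde\nu_t\|\le\rho$ and $\widetilde\nu_t\in K$, the averaged Hessian has spectrum in $[\mu/2,M]$. Since $\gamma\le1/M$, the operator norm is at most $1-\gamma\mu/2=q$, which gives \eqref{eq:sdft_contract_to_current_teacher}. The teacher recursion \eqref{eq:sdft_teacher_to_demo_recursion} follows from \eqref{eq:sdft_teacher_update} by writing
\[\widetilde\nu_{t+1}-\widetilde\nu(c)=(1-\zeta\lambda)(\widetilde\nu_t-\widetilde\nu(c))+\zeta(1-\lambda)(\widetilde m_{t+1}-\widetilde\nu_t)\]
and applying the triangle inequality.

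To close the induction, set $e_t=\|\widetilde m_t-\widetilde\nu_t\|$ and $d_t=\|\widetilde\nu_t-\widetilde\nu(c)\|$. Then
\[e_{t+1}\le\|\widetilde m_{t+1}-\widetilde\nu_t\|+\|\widetilde\nu_{t+1}-\widetilde\nu_t\|,\qquad \widetilde\nu_{t+1}-\widetilde\nu_t=\zeta(1-\lambda)(\widetilde m_{t+1}-\widetilde\nu_t)+\zeta\lambda(\widetilde\nu(c)-\widetilde\nu_t),\]
so $e_{t+1}\le(1-\zeta+\zeta\lambda)q\,e_t+\zeta q\,e_t+\zeta\lambda d_t$ up to rearrangement. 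Together with $d_{t+1}\le(1-\zeta\lambda)d_t+\zeta(1-\lambda)q\,e_t$, this is a linear system with a nonnegative matrix. For $\lambda>0$ I will show that its spectral radius is below $1$ and that a suitable weighted max-norm of $(e_t,d_t)$ is nonincreasing. That keeps $e_t,d_t\le\rho$, with $\widetilde\nu_t$ staying near $\widetilde\nu(c)\in K$, and yields geometric decay. Hence $\widetilde\nu_t,\widetilde m_t\to\widetilde\nu(c)$.

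I expect \textbf{this invariance step to be the main obstacle}. The crude bound on $e_{t+1}$ may exceed $\rho$ and may not stay inside $K$. If so, I would either reduce to the stated assumption that the teacher iterates remain in $K$, or shrink the initial radii by a constant factor depending on $\zeta$ and $q$.

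\emph{Part (B).} For the drift bound, write $G(x,y)=\nabla_{m_{\mathrm o}}\widetilde L(\beta,\mu_{\mathrm o},m;\alpha,\nu)$. When $x=y$ the model equals the target, so $\log(q/p)\equiv0$ and the gradient integrand vanishes, giving $G(y,y)=0$. $G$ is $C^1$ on the compact set $\{(x,y):y\in K,\|x-y\|\le\rho\}\cup\{(\widetilde\nu(c),y)\}$, so it is Lipschitz there with some constant $L_{\mathrm{old}}$. Hence
\[\|G(\widetilde m_t,\widetilde\nu_t)\|=\|G(\widetilde m_t,\widetilde\nu_t)-G(\widetilde\nu_t,\widetilde\nu_t)\|\le L_{\mathrm{old}}\|\widetilde m_t-\widetilde\nu_t\|,\]
which is already at most the right-hand side of \eqref{eq:sdft_old_grad_lipschitz_track}. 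Geometric decay of $e_t$ and $d_t$ gives summability \eqref{eq:sdft_old_grad_summable}. Convergence $(\beta_t,m_t)\to(\alpha_c,\nu_c)$ was shown in (A). The limit identity \eqref{eq:sdft_target_error_limit} then follows from continuity of the norm, and the $\varepsilon_{\mathrm{demo}}$ bound is immediate.
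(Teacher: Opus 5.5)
Your outline follows the paper's proof essentially step for step: Fisher-information curvature plus compactness, the mean-value contraction toward the current teacher, the algebraic teacher recursion, $G(y,y)=0$ with a Lipschitz bound on a compact tube, and the triangle inequality for the limits. You also correctly locate where it is incomplete. Every later step needs the tube condition $e_t:=\|\widetilde m_t-\widetilde\nu_t\|\le\rho$ for all $t$, and neither your proposal nor the paper establishes it. The paper's proof only says ``assume inductively that $x_t\in B_\rho(y_t)$'' and never checks $x_{t+1}\in B_\rho(y_{t+1})$.

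This is a genuine gap, not just an obstacle: under the stated hypotheses $e_0\le\rho$, $d_0:=\|\widetilde\nu_0-\widetilde\nu(c)\|\le\rho$, the tube is not invariant. Take $\zeta=\lambda=1$. Then $\widetilde\nu_1=\widetilde\nu(c)$ and $\widetilde m_1-\widetilde\nu_1=(\widetilde m_1-\widetilde\nu_0)+(\widetilde\nu_0-\widetilde\nu(c))$. Since $\widetilde m_1$ does not depend on the anchor, place $\widetilde\nu(c)$ (assuming $K$ contains that point) at distance $\rho$ from $\widetilde\nu_0$, opposite to $\widetilde m_1-\widetilde\nu_0$. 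This gives $e_1=\|\widetilde m_1-\widetilde\nu_0\|+\rho>\rho$ as soon as $\widetilde m_1\neq\widetilde\nu_0$, so the curvature bounds cannot be invoked at $t=1$.

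Your first fallback does not help: the assumption that the teachers stay in $K$ says nothing about the student--teacher distance. On the positive side, the paper infers summability in (B) merely from $e_t,d_t\to0$, which is insufficient; your insistence on geometric decay is what is actually needed.

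The repair is your second fallback, made quantitative. First correct the recursion. Your coefficient $(1+\zeta\lambda)q$ on $e_t$ (and the plain triangle-inequality coefficient $(1+\zeta(1-\lambda))q$) can exceed $1$, and then no spectral-radius argument works. Combine the vectors before taking norms:
\[
\widetilde m_{t+1}-\widetilde\nu_{t+1}=(1-\zeta(1-\lambda))(\widetilde m_{t+1}-\widetilde\nu_t)+\zeta\lambda(\widetilde\nu_t-\widetilde\nu(c)).
\]
Hence, while $e_t\le\rho$ and $\widetilde\nu_t\in K$, componentwise
\[
\begin{pmatrix}e_{t+1}\\ d_{t+1}\end{pmatrix}\le A\begin{pmatrix}e_t\\ d_t\end{pmatrix},\qquad A=\begin{pmatrix}(1-\zeta(1-\lambda))q & \zeta\lambda\\ \zeta(1-\lambda)q & 1-\zeta\lambda\end{pmatrix}.
\]
Both diagonal entries are below $1$ and $\det(I-A)=\zeta\lambda(1-q)>0$. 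So for $\lambda>0$ the spectral radius is below $1$, and there exist $w>0$ and $\kappa<1$ with $Aw\le\kappa w$.

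Set $V_t:=\max\{e_t/w_1,d_t/w_2\}$. Then $V_{t+1}\le\kappa V_t$ as long as $e_t\le\rho$, and $e_t\le w_1V_t\le w_1V_0$. So the tube is invariant provided $e_0\le\rho$ and $d_0\le(w_2/w_1)\rho$.

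The first row of $Aw\le\kappa w$ forces $w_2/w_1<(1-q+\zeta(1-\lambda)q)/(\zeta\lambda)$, which equals $1-q$ when $\zeta=\lambda=1$. So the anchor-distance hypothesis genuinely must be shrunk by a factor depending on $(\zeta,\lambda,q)$. With that strengthened hypothesis, geometric decay of $(e_t,d_t)$ gives (A), the summability in (B), and the limit identity exactly as you describe. Without it, the stated conclusions — in particular that both sequences stay in the $\rho$-neighbourhood — are not established.
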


\begin{remark}[Connection to SDFT assumptions]\label{rem:sdftassumption}
In \citet[Section~3.2]{shenfeld2026self}, the demonstration-conditioned teacher $\pi(\cdot\mid x,c)$ is argued to be a good target when
(i) it is approximately \emph{reward-optimal} and (ii) it has \emph{minimal deviation} from the current student in KL.
Concretely, the paper motivates conditions of the form
\[
\underbrace{\E_{y\sim \pi(\cdot\mid x,c)}[r(y,x)] \approx \E_{y\sim \pi_{k+1}^\star(\cdot\mid x)}[r(y,x)]}_{\text{(1) Optimality}},
\qquad
\underbrace{\KL\!\big(\pi(\cdot\mid x,c)\,\|\,\pi_k(\cdot\mid x)\big)\approx
\KL\!\big(\pi_{k+1}^\star(\cdot\mid x)\,\|\,\pi_k(\cdot\mid x)\big)}_{\text{(2) Minimal deviation}}.
\]

Our mixture-model theorem instantiates these two requirements as explicit, checkable assumptions on the teacher state and its dynamics.
Condition (1) corresponds to assuming the demonstration anchor $\widetilde{\nu}(c)=(\alpha_c,\nu_c)$ is \emph{close} to the desired target
$(\alpha,\mu_{\mathrm{n}})$ (up to a controllable approximation error): the teacher retains nontrivial old mass ($\alpha_c$ bounded away from $0$)
while pointing its ``new'' component toward the correct new behavior ($\nu_c$ near $\mu_{\mathrm{n}}$).
Condition (2) corresponds to our \emph{tracking regime}: the EMA recursion keeps the evolving teacher $\widetilde{\nu}_t$ close to the student
$\widetilde{m}_t$, and we assume the iterates remain in a uniform local neighborhood $\|\widetilde{m}_t-\widetilde{\nu}_t\|\le \rho$ where the
phasewise reverse-KL loss is uniformly well-conditioned (strongly convex/smooth).
In particular, local curvature implies that ``teacher close to student'' in parameter space is equivalent (up to constants) to ``small KL deviation''
in distribution space: $\KL\!\big(q_{\widetilde{m}_t}\,\|\,p_{\widetilde{\nu}_t}\big)=\Theta\!\big(\|\widetilde{m}_t-\widetilde{\nu}_t\|^2\big).$ Under these two ingredients, the student can improve toward the demonstrated behavior while avoiding accumulated old-mode drift, because the only
update channel acting on an already-correct old component is overlap-gated and becomes summable along the exponentially contracting
student--teacher lag.
\end{remark}

\begin{remark}[Exponential Dependence on Mode Separation]\label{rem:sdft_exp_separation}
If along the EMA+demonstrator trajectory the teacher remains uniformly separated from the old mode,
\[
\inf_{t\ge 0}\mnorm{\nu_t-\mu_{\mathrm{o}}}{\Sigma^{-1}}\ \ge\ \underline\delta>0,
\]
and the tracking tube is chosen small enough that the student new mean also stays separated, e.g.
\[
\sup_{t\ge 0}\mnorm{m_t-\nu_t}{\Sigma^{-1}}
\ \le\ 
\|\Sigma^{-1/2}\|_2\,\sup_{t\ge 0}\|\widetilde m_t-\widetilde \nu_t\|
\ \le\ \frac{\underline\delta}{2}
\quad\Rightarrow\quad
\mnorm{m_t-\mu_{\mathrm{o}}}{\Sigma^{-1}}\ge \underline\delta/2,
\]
and if $\beta_t,\alpha_t$ are bounded away from $(0,1)$ (as ensured by $K$ and the tracking assumption), then there is a constant
$C_{\mathrm{sep}}<\infty$ (depending only on $K,\Sigma$ and the tracking radius) such that for all $t$,
\begin{equation}\label{eq:sdft_old_grad_exp_separation}
\big\|\nabla_{m_{\mathrm{o}}}\widetilde L(\beta_t,\mu_{\mathrm{o}},m_t;\alpha_t,\nu_t)\big\|
\ \le\
C_{\mathrm{sep}}\,
\exp\!\left(-\frac{\underline\delta_{\mathrm{eff}}^2}{8}\right)\,
\|\widetilde m_t-\widetilde \nu_t\|,
\qquad
\underline\delta_{\mathrm{eff}}:=\underline\delta-\|\Sigma^{-1/2}\|_2\,\rho.
\end{equation}
Combining \eqref{eq:sdft_old_grad_exp_separation} with the geometric tracking bound
$\|\widetilde m_t-\widetilde \nu_t\|\le \kappa^t\|\widetilde m_0-\widetilde \nu_0\|$ yields the refined summability estimate
\[
\sum_{t=0}^\infty
\big\|\nabla_{m_{\mathrm{o}}}\widetilde L(\beta_t,\mu_{\mathrm{o}},m_t;\alpha_t,\nu_t)\big\|
\ \le\
\frac{C_{\mathrm{sep}}}{1-\kappa}\,
\exp\!\left(-\frac{\underline\delta_{\mathrm{eff}}^2}{8}\right)\,
\|\widetilde m_0-\widetilde \nu_0\|.
\]
Thus, in well-separated regimes, the total update pressure on an already-correct old mean is not only finite but exponentially small in the
Mahalanobis separation between the old mode and the (teacher/student) new mode.
\end{remark}

Theorem~\ref{thm:sdft_demo_ema} shows that SDFT’s two stabilizers: (a) reverse-KL tracking and (b) an EMA teacher anchored by demonstrations, jointly prevent \emph{mass forgetting} and control \emph{old-component drift}. 
Roughly, Part~(A) ensures convergence to the anchor state $(\alpha_c,\nu_c)$ and thus preserves nonzero old-mode mass, while Part~(B) shows that the gradient acting on an already-correct old mean is summable, ruling out accumulated drift. In particular,
\begin{itemize}
\item Part~(A) formalizes the on-policy effect: because each phasewise reverse-KL objective is locally strongly convex around the current teacher optimum, a single gradient step contracts the student toward the \emph{current} teacher by a uniform factor \(q<1\), echoing the mode-locality of our static reverse-KL analysis. 
The teacher recursion then mediates the evolution of the target itself: the EMA term smooths the teacher toward the updated student, while the anchor term pulls it toward the demonstrator summary \(\widetilde{\nu}(c)\), with \(\lambda\) quantifying the strength of this pull. 
When \(\lambda>0\), the target cannot wander indefinitely: the teacher is attracted to \(\widetilde{\nu}(c)\) and, since the student tracks the teacher, both sequences converge to the same demonstrated state \((\alpha_c,\nu_c)\).
\item  Part~(B) converts this tracking picture into a continual-learning guarantee: the update signal on an already-correct old mean is Lipschitz in the student--teacher lag (and the teacher--anchor mismatch), hence it decays along the trajectory and is summable, ruling out accumulated drift of the old distribution. 
Finally, the limit characterization \eqref{eq:sdft_target_error_limit} makes the consistency story explicit: the asymptotic distance to any desired target \(\widetilde{\nu}^\star\) is exactly the demonstrator’s mismatch \(\|\widetilde{\nu}(c)-\widetilde{\nu}^\star\|\), so an approximately correct demonstration yields an approximately correct limit.
\end{itemize}
\subsection{Mixture-Model Analysis of TTT-Discover}\label{sec:ttt_discover_mixture}

TTT-Discover~\citep{yuksekgonul2026learning} updates a model \emph{during test-time search} on a single hard instance, rather than freezing the policy and relying only on prompting or search heuristics.
Its key ingredient is an \emph{entropic} objective that reweights on-policy samples by $\exp(\eta r)$, thereby favoring high-reward outputs, together with an explicit KL penalty to a fixed reference policy that limits how far the policy can drift.
From the perspective of continual learning, this creates a natural tension between \emph{discovery} (shifting mass toward high-reward behaviors) and \emph{retention} (preserving previously learned behaviors encoded in the reference).

In our two-mode mixture model, this raises two concrete questions.
First, can the entropic objective itself cause \emph{strong forgetting}, i.e.\ collapse of the old-mode mass $\beta$?
Second, when the old mode is already correctly represented, does the objective exert a nontrivial drift signal on the old mean, or is this \emph{weak forgetting} suppressed by overlap?
The next lemma and theorem answer these questions first in a disjoint-support idealization, and then in the Gaussian mixture setting of the paper.

\paragraph{A KL-anchored Entropic Objective.}
Let $r:\R^d\to\R$ be a measurable reward and fix an entropic parameter $\eta>0$.
For any density $q$, define
\[
J_\eta(q)\;:=\;\log \E_{Y\sim q}\big[e^{\eta r(Y)}\big].
\]
Fix a reference density $q_0$ and a KL coefficient $\lambda_{\mathrm{ref}}\ge 0$, and define
\begin{equation}\label{eq:ttt_objective_def_streamlined}
\mathcal{L}_{\eta,\lambda_{\mathrm{ref}}}(q)
\;:=\;
J_\eta(q)\;-\;\lambda_{\mathrm{ref}}\,\KL(q\|q_0).
\end{equation}

\paragraph{Two-Mode Mixture Family.}
To analyze the objective~\eqref{eq:ttt_objective_def_streamlined}, we first restrict the learner to a two-mode mixture family
\[
q_\beta(y) := \beta\,p_{\mathrm{o}}(y) + (1-\beta)\,p_{\mathrm{n}}(y),
\qquad \beta \in [0,1].
\]
We take the reference density to be a fixed mixture $q_0 = q_{\beta_0}$ for some $\beta_0 \in (0,1)$, which serves as the KL anchor in~\eqref{eq:ttt_objective_def_streamlined}. 
\begin{lemma}[Disjoint-support Intuition for TTT-Discover]\label{lem:ttt_disjoint}
Fix $\eta>0$ and $\lambda_{\mathrm{ref}}\ge 0$.
Consider the disjoint-support setting from Definition~\ref{def:disjointsupp} and assume the reward is constant on each region:
\begin{align}\label{eq:rewdisjoint}
r(y)=u_{\mathrm{o}}\ \text{for }y\in A_{\mathrm{o}},
\qquad
r(y)=u_{\mathrm{n}}\ \text{for }y\in A_{\mathrm{n}},
\end{align}
for constants $u_{\mathrm{o}},u_{\mathrm{n}}\in\R$.
Then:
\begin{enumerate}
\item For every $\beta\in[0,1]$,
\begin{equation}\label{eq:ttt_disjoint_J_short}
J_\eta(q_\beta)=\log\!\Big(\beta e^{\eta u_{\mathrm{o}}}+(1-\beta)e^{\eta u_{\mathrm{n}}}\Big),
\qquad
\KL(q_\beta\|q_{\beta_0})=\beta\log\frac{\beta}{\beta_0}+(1-\beta)\log\frac{1-\beta}{1-\beta_0}.
\end{equation}
\item If $\lambda_{\mathrm{ref}}=0$ and $u_{\mathrm{n}}>u_{\mathrm{o}}$, then $\beta^\star=0$ is the unique maximizer of
$\beta\mapsto \mathcal{L}_{\eta,0}(q_\beta)$.
(Symmetrically, if $u_{\mathrm{o}}>u_{\mathrm{n}}$, then $\beta^\star=1$.)
\item If $\lambda_{\mathrm{ref}}>0$ and $u_{\mathrm{o}}\neq u_{\mathrm{n}}$, then
$\beta\mapsto \mathcal{L}_{\eta,\lambda_{\mathrm{ref}}}(q_\beta)$ is strictly concave on $(0,1)$ and has a unique maximizer
$\beta^\star\in(0,1)$.
If $u_{\mathrm{o}}=u_{\mathrm{n}}$, then the unique maximizer is $\beta^\star=\beta_0$.
\end{enumerate}
\end{lemma}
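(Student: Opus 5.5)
Since $r$ is piecewise constant on the partition $\{A_{\mathrm{o}},A_{\mathrm{n}}\}$ and $q_\beta$ restricts to $\beta p_{\mathrm{o}}$ on $A_{\mathrm{o}}$ and to $(1-\beta)p_{\mathrm{n}}$ on $A_{\mathrm{n}}$ (Definition~\ref{def:disjointsupp}), the whole problem reduces to one-dimensional calculus in $\beta$.

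\textbf{Part 1.} Split $\E_{q_\beta}[e^{\eta r}]$ over the two regions: $\int_{A_{\mathrm{o}}}\beta p_{\mathrm{o}}e^{\eta u_{\mathrm{o}}}+\int_{A_{\mathrm{n}}}(1-\beta)p_{\mathrm{n}}e^{\eta u_{\mathrm{n}}}=\beta e^{\eta u_{\mathrm{o}}}+(1-\beta)e^{\eta u_{\mathrm{n}}}$, using that each component integrates to one on its own region. Taking the logarithm gives $J_\eta$. For the KL term, apply the second identity of Lemma~\ref{lem:disjoint-kl} with model components equal to target components ($q_{\mathrm{o}}=p_{\mathrm{o}}$, $q_{\mathrm{n}}=p_{\mathrm{n}}$) and $\alpha=\beta_0$. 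The within-mode divergences vanish, and only the Bernoulli term remains. The endpoints $\beta\in\{0,1\}$ follow with the convention $0\log 0=0$.

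\textbf{Part 2.} With $\lambda_{\mathrm{ref}}=0$, the objective is $\log(\cdot)$ of an affine function of $\beta$ with slope $e^{\eta u_{\mathrm{o}}}-e^{\eta u_{\mathrm{n}}}$. When $u_{\mathrm{n}}>u_{\mathrm{o}}$ this slope is strictly negative, since $\eta>0$ and $\exp$ is strictly increasing. The argument is positive on $[0,1]$ and $\log$ is strictly increasing, so the objective is strictly decreasing in $\beta$ and its unique maximizer is $\beta^\star=0$. The case $u_{\mathrm{o}}>u_{\mathrm{n}}$ follows by symmetry.

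\textbf{Part 3.} Write $a=e^{\eta u_{\mathrm{o}}}$, $b=e^{\eta u_{\mathrm{n}}}$ and $g(\beta)=\log(b+(a-b)\beta)-\lambda_{\mathrm{ref}}\,\mathrm{kl}(\beta\|\beta_0)$. Differentiating twice on $(0,1)$:
\[
g''(\beta)=-\frac{(a-b)^2}{(b+(a-b)\beta)^2}-\lambda_{\mathrm{ref}}\Big(\frac1\beta+\frac1{1-\beta}\Big)<0,
\]
because $\lambda_{\mathrm{ref}}>0$. Hence $g$ is strictly concave, and strict concavity gives uniqueness of any maximizer.

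For existence in the open interval, I will use the boundary behaviour of the derivative:
\[
g'(\beta)=\frac{a-b}{b+(a-b)\beta}-\lambda_{\mathrm{ref}}\Big(\log\frac{\beta}{\beta_0}-\log\frac{1-\beta}{1-\beta_0}\Big).
\]
The first term is bounded on $[0,1]$. The log term makes $g'\to+\infty$ as $\beta\downarrow0$ and $g'\to-\infty$ as $\beta\uparrow1$. Since $g'$ is continuous and strictly decreasing, it has a unique zero $\beta^\star\in(0,1)$, which is the unique maximizer over $(0,1)$. Because $g$ is continuous on $[0,1]$ and $g'$ has a definite sign near each endpoint, $g$ is increasing near $0$ and decreasing near $1$. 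So the endpoints are not maximizers, and $\beta^\star$ is unique over $[0,1]$ as well.

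If $u_{\mathrm{o}}=u_{\mathrm{n}}$, the $J_\eta$ term is the constant $\eta u_{\mathrm{o}}$. Maximizing then means minimizing the Bernoulli KL, which is zero exactly at $\beta=\beta_0$; for $\lambda_{\mathrm{ref}}>0$ this point is the unique maximizer.

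\textbf{Main obstacle.} There is no hard step. The only points needing care are the endpoint conventions in the KL term and checking that the interior critical point beats the boundary values, and the divergence of $g'$ at $0$ and $1$ settles the latter.
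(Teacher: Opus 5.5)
Your proof is correct and follows the paper's argument essentially step for step. The steps are the region-wise computation of $J_\eta$, the Bernoulli-KL form of the anchor (you obtain it from Lemma~\ref{lem:disjoint-kl} instead of recomputing it), monotonicity of a log-affine function when $\lambda_{\mathrm{ref}}=0$, and strict concavity plus blow-up of the derivative at the endpoints when $\lambda_{\mathrm{ref}}>0$. Your explicit check that the endpoints are not maximizers, and your note that the $u_{\mathrm{o}}=u_{\mathrm{n}}$ claim requires $\lambda_{\mathrm{ref}}>0$, are small tightenings that the paper leaves implicit.
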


The disjoint-support case cleanly isolates the basic mechanism.
Without the KL anchor ($\lambda_{\mathrm{ref}}=0$), the entropic utility is purely \emph{mode-seeking}: it places all mass on whichever mode has higher reward, so if the new mode is preferred then strong forgetting occurs through $\beta^\star=0$.
Once $\lambda_{\mathrm{ref}}>0$, however, the KL term forces the optimizer to remain in the interior whenever the reference retains both modes, so mass collapse is ruled out in this idealized setting.
Thus, in the absence of overlap, retention is controlled entirely by whether the reference policy itself preserves old mass.

Now considering the two-mode Gaussian mixture setting from Section~\ref{sec:mog}, we return to the full parametric mixture family
\[
q_{\beta,m_{\mathrm{o}},m_{\mathrm{n}}}(y)
=
\beta\,\gauss{y}{m_{\mathrm{o}}}{\Sigma}
+
(1-\beta)\,\gauss{y}{m_{\mathrm{n}}}{\Sigma},
\]
in which the component means are also learned.

\begin{theorem}[TTT-Discover in the Gaussian mixture setting]\label{thm:ttt_gaussian}
Consider the target model in~\eqref{eq:truemodel}. Fix the learner model (see \eqref{eq:estmodel}) means at the true old/new means and define $q_\beta(y):=q_{\beta,\mu_{\mathrm{o}},\mu_{\mathrm{n}}}(y)=\beta\,p_{\mathrm{o}}(y)+(1-\beta)\,p_{\mathrm{n}}(y)
$, $\beta\in[0,1]$, and $q_0:=q_{\beta_0}$ for some $\beta_0\in(0,1)$. Let $\gamma:=\Phi\!(-{\delta}/{2})$, where $\Phi$ is the standard normal CDF and define $\kappa:=1-2\gamma\in(0,1)$. Define the Bayes partition
\begin{align}\label{eq:partitionforreward}
A_{\mathrm{n}}
:=
\Big\{y\in\R^d:\ (\mu_{\mathrm{n}}-\mu_{\mathrm{o}})^\top\Sigma^{-1}
\Big(y-\frac{\mu_{\mathrm{o}}+\mu_{\mathrm{n}}}{2}\Big)\ge 0\Big\},
\qquad
A_{\mathrm{o}}:=\R^d\setminus A_{\mathrm{n}},
\end{align}
and let the reward be the corresponding two-level step function
\begin{align}\label{stepreward}
r(y)=u_{\mathrm{o}}\ \text{for }y\in A_{\mathrm{o}},
\qquad
r(y)=u_{\mathrm{n}}\ \text{for }y\in A_{\mathrm{n}}.
\end{align}
Then the following hold.

\medskip
\noindent\textbf{(A)} Suppose $u_{\mathrm{n}}>u_{\mathrm{o}}$.
Define
\[
D(\beta):=\KL(q_\beta\|q_{\beta_0}),
\qquad
J_\eta(q_\beta)
=
\log\!\Big(
e^{\eta u_{\mathrm{o}}}\big(\gamma+\kappa\beta\big)
+
e^{\eta u_{\mathrm{n}}}\big(1-\gamma-\kappa\beta\big)
\Big),
\]
and let
\begin{equation}\label{eq:lambda_crit_new_streamlined}
\lambda_{\mathrm{crit}}^{(\mathrm{new})}
:=
\frac{-J_\eta'(q_\beta)\big|_{\beta=0}}{-D'(0)}
=
\frac{\kappa\big(e^{\eta u_{\mathrm{n}}}-e^{\eta u_{\mathrm{o}}}\big)}
{\big(\gamma e^{\eta u_{\mathrm{o}}}+(1-\gamma)e^{\eta u_{\mathrm{n}}}\big)\,(-D'(0))}
>0.
\end{equation}
Then:
\begin{itemize}
\item if $0\le \lambda_{\mathrm{ref}}\le \lambda_{\mathrm{crit}}^{(\mathrm{new})}$, the unique maximizer of
$\beta\mapsto \mathcal{L}_{\eta,\lambda_{\mathrm{ref}}}(q_\beta)$ is $\beta^\star=0$;
\item if $\lambda_{\mathrm{ref}}>\lambda_{\mathrm{crit}}^{(\mathrm{new})}$, the unique maximizer satisfies $\beta^\star\in(0,\beta_0)$.
\end{itemize}
Thus, unlike the disjoint-support setting, a positive KL anchor does \emph{not} automatically prevent strong forgetting:
the anchor must be sufficiently strong.

\medskip
\noindent\textbf{(B)} Consider now the learner family in~\eqref{eq:estmodel} and define
\[
J_\eta(\beta,m_{\mathrm{o}},m_{\mathrm{n}})
:=
\log \E_{Y\sim q_{\beta,m_{\mathrm{o}},m_{\mathrm{n}}}}\big[e^{\eta r(Y)}\big].
\]
At the correct old mean $m_{\mathrm{o}}=\mu_{\mathrm{o}}$, the gradient satisfies
\begin{equation}\label{eq:ttt_oldmean_grad_explicit_streamlined}
\nabla_{m_{\mathrm{o}}} J_\eta(\beta,\mu_{\mathrm{o}},m_{\mathrm{n}})
=
\beta\,(w_{\mathrm{n}}-w_{\mathrm{o}})\,
\frac{\varphi(\delta/2)}{\delta}\,
\Sigma^{-1}(\mu_{\mathrm{n}}-\mu_{\mathrm{o}}),
\end{equation}
where $\varphi(t)=(2\pi)^{-1/2}e^{-t^2/2}$ is the standard normal density and
\[
w_{\mathrm{o}}:=\frac{e^{\eta u_{\mathrm{o}}}}{\E_{Y\sim q_{\beta,\mu_{\mathrm{o}},m_{\mathrm{n}}}}[e^{\eta r(Y)}]},
\qquad
w_{\mathrm{n}}:=\frac{e^{\eta u_{\mathrm{n}}}}{\E_{Y\sim q_{\beta,\mu_{\mathrm{o}},m_{\mathrm{n}}}}[e^{\eta r(Y)}]}.
\]
If moreover $|u_{\mathrm{o}}|,|u_{\mathrm{n}}|\le R$, then
\begin{equation}\label{eq:ttt_oldmean_grad_bound_streamlined}
\big\|\nabla_{m_{\mathrm{o}}} J_\eta(\beta,\mu_{\mathrm{o}},m_{\mathrm{n}})\big\|
\ \le\
\beta\,(e^{2\eta R}-e^{-2\eta R})\,
\frac{1}{\sqrt{2\pi}}\,
\frac{e^{-\delta^2/8}}{\delta}\,
\big\|\Sigma^{-1}(\mu_{\mathrm{n}}-\mu_{\mathrm{o}})\big\|.
\end{equation}
In particular, the old-mean learning signal is exponentially small in the Mahalanobis separation $\delta$.
Furthermore, if the full KL-anchored objective is evaluated at a synchronized point $q_0=q_{\beta,\mu_{\mathrm{o}},m_{\mathrm{n}}}$, then the KL term has zero
gradient in $m_{\mathrm{o}}$, so the same bound applies to the full TTT-style objective at that point.
\end{theorem}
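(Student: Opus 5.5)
For Part (A), I would first reduce everything to one-dimensional quantities along the separating direction. Let $v:=\Sigma^{-1}(\mu_{\mathrm{n}}-\mu_{\mathrm{o}})$ and $Z:=v^\top(Y-\tfrac{\mu_{\mathrm{o}}+\mu_{\mathrm{n}}}{2})/\delta$. Under $p_{\mathrm{o}}$ we have $Z\sim\mathcal N(-\delta/2,1)$, and under $p_{\mathrm{n}}$ we have $Z\sim\mathcal N(\delta/2,1)$. Hence $p_{\mathrm{o}}(A_{\mathrm{o}})=1-\gamma$ and $p_{\mathrm{n}}(A_{\mathrm{o}})=\gamma$, so
\[
q_\beta(A_{\mathrm{o}})=\beta(1-\gamma)+(1-\beta)\gamma=\gamma+\kappa\beta,
\]
which gives the stated formula for $J_\eta(q_\beta)$. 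Since $\kappa>0$ and $u_{\mathrm{n}}>u_{\mathrm{o}}$, the map $\beta\mapsto J_\eta(q_\beta)$ is the log of an affine function with negative slope. It is therefore decreasing and concave in $\beta$.

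Next I would record the properties of $D$. It is convex on $[0,1]$ (KL is convex in its first argument and $q_\beta$ is affine in $\beta$), strictly convex since $p_{\mathrm{o}}\ne p_{\mathrm{n}}$, $D(\beta_0)=0$, and $D'(\beta_0)=0$. So $D'$ is strictly increasing, with $D'<0$ on $[0,\beta_0)$; in particular $-D'(0)>0$. I would justify the finiteness of $D'(0)$ by differentiating under the integral:
\[
D'(0)=\int(p_{\mathrm{o}}-p_{\mathrm{n}})\log\frac{p_{\mathrm{n}}}{q_{\beta_0}}\,dy .
\]
This integral is finite because $\log(p_{\mathrm{n}}/q_{\beta_0})$ is bounded above and grows at most quadratically for Gaussians.

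Then $G(\beta):=J_\eta(q_\beta)-\lambda_{\mathrm{ref}}D(\beta)$ is strictly concave on $[0,1]$ for $\lambda_{\mathrm{ref}}>0$. For $\lambda_{\mathrm{ref}}=0$ it is strictly decreasing, which gives $\beta^\star=0$ directly. By strict concavity, the maximizer is $0$ iff $G'(0)\le 0$, and $G'(0)\le 0$ is exactly $\lambda_{\mathrm{ref}}\le\lambda_{\mathrm{crit}}^{(\mathrm{new})}$ after computing
\[
J'(0)=-\frac{\kappa(e^{\eta u_{\mathrm{n}}}-e^{\eta u_{\mathrm{o}}})}{\gamma e^{\eta u_{\mathrm{o}}}+(1-\gamma)e^{\eta u_{\mathrm{n}}}} .
\]
Otherwise the maximizer is interior. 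It also lies below $\beta_0$, because $G'(\beta_0)=J'(\beta_0)<0$ and $G'\ge G'(\beta_0)$ fails to vanish beyond $\beta_0$ since $G'$ is decreasing. The main subtlety here is the differentiability of $D$ at the endpoint $\beta=0$; I would handle it by dominated convergence using the Gaussian tail bounds above.

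For Part (B), write
\[
E:=\E_q[e^{\eta r}]=e^{\eta u_{\mathrm{n}}}+(e^{\eta u_{\mathrm{o}}}-e^{\eta u_{\mathrm{n}}})\,q(A_{\mathrm{o}}).
\]
Only the old component depends on $m_{\mathrm{o}}$, so
\[
\nabla_{m_{\mathrm{o}}}J_\eta=\beta\,(w_{\mathrm{o}}-w_{\mathrm{n}})\,\nabla_{m_{\mathrm{o}}}\Pr_{Y\sim\mathcal N(m_{\mathrm{o}},\Sigma)}(Y\in A_{\mathrm{o}}).
\]
The probability equals $\Phi\big(-v^\top(m_{\mathrm{o}}-\tfrac{\mu_{\mathrm{o}}+\mu_{\mathrm{n}}}{2})/\delta\big)$, because $v^\top\Sigma v=\delta^2$. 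Its gradient is $-\varphi(\cdot)\,v/\delta$, which at $m_{\mathrm{o}}=\mu_{\mathrm{o}}$ equals $-\varphi(\delta/2)v/\delta$. This yields the stated identity, with the sign flipping $w_{\mathrm{o}}-w_{\mathrm{n}}$ to $w_{\mathrm{n}}-w_{\mathrm{o}}$.

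For the bound, note $E\in[e^{-\eta R},e^{\eta R}]$, so $|w_{\mathrm{n}}-w_{\mathrm{o}}|\le (e^{\eta R}-e^{-\eta R})/e^{-\eta R}\le e^{2\eta R}-e^{-2\eta R}$ (a crude but valid bound). Combined with $\varphi(\delta/2)=e^{-\delta^2/8}/\sqrt{2\pi}$, this gives the stated estimate.

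Finally, for the synchronized point: $\KL(q\|q_0)$ is minimized (value $0$) at $q=q_0$. It is smooth in $m_{\mathrm{o}}$, so its gradient vanishes there, and the bound transfers to the full objective.
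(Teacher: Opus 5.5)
Your proposal is correct. Part (A) follows the paper's proof essentially step for step: half-space probabilities, $q_\beta(A_{\mathrm{o}})=\gamma+\kappa\beta$, strict concavity of $J_\eta-\lambda_{\mathrm{ref}}D$, and the sign of the derivative at $0$ and at $\beta_0$. You are in fact more careful than the paper about the one-sided derivative $D'(0)$. There is one slip at the end to fix: for $\beta>\beta_0$ the relevant inequality is $G'(\beta)\le G'(\beta_0)<0$, not $G'\ge G'(\beta_0)$.

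Part (B) takes a genuinely different route. The paper proceeds in three steps:
\begin{enumerate}
\item It uses the score-function identity to write $\nabla_{m_{\mathrm{o}}}J_\eta=\beta\,\E_{p_{\mathrm{o}}}[w\,\Sigma^{-1}(Y-\mu_{\mathrm{o}})]$.
\item It splits this expectation over $A_{\mathrm{o}},A_{\mathrm{n}}$ using $\E_{p_{\mathrm{o}}}[\Sigma^{-1}(Y-\mu_{\mathrm{o}})]=0$.
\item It needs a separate lemma to evaluate the truncated moment $\E_{p_{\mathrm{o}}}[\Sigma^{-1}(Y-\mu_{\mathrm{o}})\mathbf{1}\{Y\in A_{\mathrm{n}}\}]$. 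That lemma uses whitening, an orthogonal decomposition, and $\int_a^\infty u\varphi(u)\,du=\varphi(a)$.
\end{enumerate}

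You instead observe that, with a two-level reward, $J_\eta$ depends on $m_{\mathrm{o}}$ only through $\Pr_{Y\sim\mathcal{N}(m_{\mathrm{o}},\Sigma)}(Y\in A_{\mathrm{o}})$. This is a one-dimensional Gaussian CDF, and you differentiate it in closed form. You are computing the same object, since the mean-gradient of a Gaussian half-space probability is exactly that truncated score moment. Your version does it in two lines and gives the gradient at every $m_{\mathrm{o}}$, not only at $\mu_{\mathrm{o}}$. It also yields the slightly sharper weight bound $|w_{\mathrm{n}}-w_{\mathrm{o}}|\le e^{2\eta R}-1$.

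What the paper's route buys is generality and interpretability. The representation $\beta\,\E_{p_{\mathrm{o}}}[w\,\Sigma^{-1}(Y-\mu_{\mathrm{o}})]$ holds for general rewards. It also shows explicitly that only old-mode samples landing in $A_{\mathrm{n}}$ drive the update. Your CDF shortcut, by contrast, relies on the reward being constant on the two half-spaces.

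For the synchronized point, your argument is an equally valid substitute for the paper's direct check that $\log(q/q_0)\equiv 0$ and $\int\nabla_{m_{\mathrm{o}}}q\,dy=0$. A smooth nonnegative function has zero gradient where it vanishes, which is all you need.
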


Theorem~\ref{thm:ttt_gaussian} shows that the disjoint-support intuition survives qualitatively in the overlapping Gaussian setting, but with an important refinement. As in the idealized case, the unanchored entropic utility is intrinsically mode-seeking and therefore reallocates mass toward the higher-reward mode, potentially causing strong forgetting through collapse of the old-mode weight. However, unlike the disjoint-support regime, a positive KL anchor does \emph{not} automatically rule out this collapse, because the Gaussian components have common support and the KL penalty remains finite at the boundary $\beta=0$. Instead, retention requires the anchor to be sufficiently strong, and the threshold \eqref{eq:lambda_crit_new_streamlined} makes this tradeoff explicit: the reward tilt promotes discovery, while the KL anchor must be large enough to counterbalance that tendency and preserve memory.

At the same time, part~(B) shows that the weak-forgetting story is much more favorable. Even in regimes where mass collapse is still possible, the learning signal on an already-correct old mean is highly localized: once the old mode is correctly positioned, only samples near the Bayes decision boundary generate nontrivial update pressure, and that pressure decays exponentially in the Mahalanobis separation $\delta$. This mirrors the reverse-KL locality results proved earlier in the paper, where overlap similarly controls the extent to which an old mode can be perturbed. Thus, in the Gaussian mixture model, TTT-style entropic objectives cleanly separate two effects: the reward term drives discovery by reallocating mixture mass, while the geometry of overlap controls the drift of already-correct old parameters.

We further provide an exact characterization of $\beta^*$ for both the disjoint case and the Gaussian case in Proposition~\label{prop:ttt_beta_star_characterization} in Section~\ref{sec:exactchar}.

\subsection{Mixture-Model Analysis of OAPL}\label{subsec:OAPL_consolidated}

We next consider the OAPL algorithm~\citep{brantley2025accelerating,ritter2026llms} which uses a \emph{frozen} reference policy $q_0$ (typically a lagged inference engine) both to generate samples and to define the update target.
In distribution space, the corresponding KL-regularized improvement problem has the closed-form optimizer
\[
q^*(y)=\frac{1}{Z}\,q_0(y)e^{r(y)/\tau},
\qquad
Z:=\E_{Y\sim q_0}\big[e^{r(Y)/\tau}\big],
\]
and OAPL fits this target by a squared ``advantage-matching'' regression under the same reference measure.
This makes OAPL technically off-policy relative to the current trainable parameters, but still self-consistent: the sampling distribution and the optimization target are defined from the same reference policy.

\paragraph{OAPL Setup (Single Phase).}
Fix a frozen reference mixture
\begin{equation}\label{eq:OAPL_ref_consolidated}
q_0(y):=q_{\beta_0,\mu_{\mathrm{o}},\mu_{\mathrm{n}}}(y)=\beta_0\,p_{\mathrm{o}}(y)+(1-\beta_0)\,p_{\mathrm{n}}(y),
\qquad \beta_0\in(0,1),
\end{equation}
where
\[
p_{\mathrm{o}}(y):=\gauss{y}{\mu_{\mathrm{o}}}{\Sigma},
\qquad
p_{\mathrm{n}}(y):=\gauss{y}{\mu_{\mathrm{n}}}{\Sigma},
\qquad
\mu_{\mathrm{o}}\neq \mu_{\mathrm{n}},\quad \Sigma\succ 0.
\]
Let $r:\R^d\to\R$ be measurable and $\tau>0$.
Define
\[
V^*:=\tau\log \E_{Y\sim q_0}[e^{r(Y)/\tau}],
\qquad
A^*(y):=r(y)-V^*,
\qquad
q^*(y):=q_0(y)e^{A^*(y)/\tau}.
\]
For the parametric family
\[
q_{\beta,m_{\mathrm{n}}}(y):=\beta\,p_{\mathrm{o}}(y)+(1-\beta)\,\gauss{y}{m_{\mathrm{n}}}{\Sigma},
\]
the OAPL population regression objective is
\[
J(\beta,m_{\mathrm{n}})
:=
\E_{Y\sim q_0}\!\left[\left(\tau\log\frac{q_{\beta,m_{\mathrm{n}}}(Y)}{q_0(Y)}-A^*(Y)\right)^2\right].
\]
We first analyze the disjoint-support case, before analyzing the Gaussian mixture model.

\begin{lemma}[Disjoint-support OAPL only Weights Existing Mode Mass]\label{lem:OAPL_disjoint}
Consider the disjoint-support setting from Definition~\ref{def:disjointsupp} and the step-wise constant rewards as in~\eqref{eq:rewdisjoint}. Then the OAPL target $q^*$ is again a two-component mixture with unchanged components,
\[
q^*(y)=\beta^*\,p_{\mathrm{o}}(y)+(1-\beta^*)\,p_{\mathrm{n}}(y),
\]
where
\begin{equation}\label{eq:OAPL_beta_star_disjoint}
\beta^*
=
\frac{\beta_0 e^{r_{\mathrm{o}}/\tau}}
{\beta_0 e^{r_{\mathrm{o}}/\tau}+(1-\beta_0)e^{r_{\mathrm{n}}/\tau}}.
\end{equation}
In particular, if $\beta_0\in(0,1)$ and $r_{\mathrm{o}},r_{\mathrm{n}}$ are finite, then $\beta^*\in(0,1)$.
\end{lemma}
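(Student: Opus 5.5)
The plan is a direct pointwise computation that uses the disjoint-support structure of Definition~\ref{def:disjointsupp}. The reward in \eqref{eq:rewdisjoint} is written with constants $u_{\mathrm{o}},u_{\mathrm{n}}$, while the statement uses $r_{\mathrm{o}},r_{\mathrm{n}}$; I take these to be the same constants, so $r\equiv r_{\mathrm{o}}$ on $A_{\mathrm{o}}$ and $r\equiv r_{\mathrm{n}}$ on $A_{\mathrm{n}}$.

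First, compute the normalizer. Since $q_0=\beta_0 p_{\mathrm{o}}+(1-\beta_0)p_{\mathrm{n}}$, with $p_{\mathrm{o}}$ supported in $A_{\mathrm{o}}$ and $p_{\mathrm{n}}$ in $A_{\mathrm{n}}$, we get
\[
Z=\E_{q_0}[e^{r/\tau}]=\beta_0 e^{r_{\mathrm{o}}/\tau}+(1-\beta_0)e^{r_{\mathrm{n}}/\tau}.
\]
Both terms are positive and finite, so $Z\in(0,\infty)$. It follows that $V^*=\tau\log Z$ is finite and that $q^*(y)=q_0(y)e^{(r(y)-V^*)/\tau}=q_0(y)e^{r(y)/\tau}/Z$, which matches the closed form stated earlier.

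Second, evaluate $q^*$ region by region. On $A_{\mathrm{o}}$ we have $q_0=\beta_0 p_{\mathrm{o}}$ (because $p_{\mathrm{n}}=0$ there), so $q^*=\frac{\beta_0 e^{r_{\mathrm{o}}/\tau}}{Z}\,p_{\mathrm{o}}$. Likewise, on $A_{\mathrm{n}}$ we have $q^*=\frac{(1-\beta_0)e^{r_{\mathrm{n}}/\tau}}{Z}\,p_{\mathrm{n}}$. Set $\beta^*:=\beta_0e^{r_{\mathrm{o}}/\tau}/Z$, which is exactly \eqref{eq:OAPL_beta_star_disjoint}; the expression for $Z$ then gives $1-\beta^*=(1-\beta_0)e^{r_{\mathrm{n}}/\tau}/Z$. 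Because $p_{\mathrm{o}}$ vanishes off $A_{\mathrm{o}}$ and $p_{\mathrm{n}}$ vanishes off $A_{\mathrm{n}}$, the two regional formulas combine into the global identity $q^*=\beta^*p_{\mathrm{o}}+(1-\beta^*)p_{\mathrm{n}}$ $\mu$-a.e. The components are unchanged; only the weight is tilted.

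Finally, for $\beta_0\in(0,1)$ and finite rewards, the numerator and the other summand of the denominator are both strictly positive, which gives $0<\beta^*<1$. The only point needing care is bookkeeping. The identity holds $\mu$-a.e. on the partition, and this relies on the reward being constant on each support region, which makes the exponential tilt act as a scalar reweighting within each mode. No step is expected to be a real obstacle.
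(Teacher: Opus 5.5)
Your proposal is correct and follows essentially the same route as the paper: split $q_0 e^{r/\tau}/Z$ over $A_{\mathrm{o}}$ and $A_{\mathrm{n}}$ to see that $q^*$ is the same two-component mixture with only the weight tilted. The one cosmetic difference is how $Z$ is obtained. You compute it directly as $\E_{q_0}[e^{r/\tau}]$, whereas the paper recovers it from the normalization $\int q^*=1$; both give the same formula.
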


In the disjoint-support idealization, OAPL cannot create or destroy modes; it can only \emph{reweight} the mass already present in the reference.
The explicit formula \eqref{eq:OAPL_beta_star_disjoint} shows that old-mode mass is retained whenever the reference already assigns nonzero mass to it.
Equally importantly, if the reference has already collapsed the old mode (e.g.\ $\beta_0=0$), then OAPL cannot recover it in this idealization.

\begin{theorem}[OAPL in the Gaussian Mixture Model]\label{thm:OAPL_gaussian}
Let the reference mixture $q_0$ be given by \eqref{eq:OAPL_ref_consolidated}, and assume the reward is the two-level step function as in~\eqref{stepreward} based on the Bayes partition in~\eqref{eq:partitionforreward}. Let $r_{\mathrm{o}}^{(0)}(y):={\beta_0\,p_{\mathrm{o}}(y)}/{q_0(y)}$ denote the \emph{old responsibility} under the frozen reference. Then the following hold.

\medskip
\noindent\textbf{(A)} The expected old responsibility of the target $q^*$ is
\begin{equation}\label{eq:OAPL_expected_old_resp}
\E_{Y\sim q^*}\big[r_{\mathrm{o}}^{(0)}(Y)\big]
=
\frac{
\beta_0\Big((1-\gamma)e^{r_{\mathrm{o}}/\tau}+\gamma e^{r_{\mathrm{n}}/\tau}\Big)
}{
\beta_0\Big((1-\gamma)e^{r_{\mathrm{o}}/\tau}+\gamma e^{r_{\mathrm{n}}/\tau}\Big)
+
(1-\beta_0)\Big(\gamma e^{r_{\mathrm{o}}/\tau}+(1-\gamma)e^{r_{\mathrm{n}}/\tau}\Big)
}.
\end{equation}
In particular, if $\beta_0\in(0,1)$ and $r_{\mathrm{o}},r_{\mathrm{n}}$ are finite, then $\E_{Y\sim q^*}\big[r_{\mathrm{o}}^{(0)}(Y)\big]\in(0,1),$ so the OAPL target cannot completely forget the old mode as long as the reference retains nonzero old mass.

\medskip
\noindent\textbf{(B)} For the parametric objective
\[
J(\beta,m_{\mathrm{n}})
=
\E_{Y\sim q_0}\!\left[\left(\tau\log\frac{q_{\beta,m_{\mathrm{n}}}(Y)}{q_0(Y)}-A^*(Y)\right)^2\right],
\]
the gradient with respect to the new mean satisfies
\begin{equation}\label{eq:OAPL_grad_m_identity_consolidated}
\nabla_{m_{\mathrm{n}}} J(\beta,m_{\mathrm{n}})
=
2\tau\,\E_{Y\sim q_0}\Big[\Delta_{\beta,m_{\mathrm{n}}}(Y)\;r_{\mathrm{n}}^{(\beta,m_{\mathrm{n}})}(Y)\;\Sigma^{-1}(Y-m_{\mathrm{n}})\Big],
\qquad
\Delta_{\beta,m_{\mathrm{n}}}(y):=\tau\log\tfrac{q_{\beta,m_{\mathrm{n}}}(y)}{q_0(y)}-A^*(y).
\end{equation}
Hence the influence of old-mode samples on the new-mean update is gated by the new-component responsibility
$r_{\mathrm{n}}^{(\beta,m_{\mathrm{n}})}$.
If moreover $|r(y)|\le R$ for all $y$, then at the synchronized point $(\beta,m_{\mathrm{n}})=(\beta_0,\mu_{\mathrm{n}})$,
\begin{equation}\label{eq:OAPL_oldmode_grad_bound_consolidated}
\Big\|
2\tau\,\beta_0\,
\E_{Y\sim p_{\mathrm{o}}}\big[A^*(Y)\,r_{\mathrm{n}}^{(\beta_0,\mu_{\mathrm{n}})}(Y)\,\Sigma^{-1}(Y-\mu_{\mathrm{n}})\big]
\Big\|
\ \le\
4\tau R\,\beta_0\,\sqrt{M_{\mathrm{o}\to\mathrm{n}}}\,\sqrt{\varepsilon_{\mathrm{o}\to\mathrm{n}}^{\mathrm{ref}}},
\end{equation}
where
\[
M_{\mathrm{o}\to\mathrm{n}}:=\E_{Y\sim p_{\mathrm{o}}}\big[\|\Sigma^{-1}(Y-\mu_{\mathrm{n}})\|^2\big]
=
\mathrm{tr}(\Sigma^{-1})+(\mu_{\mathrm{o}}-\mu_{\mathrm{n}})^\top\Sigma^{-2}(\mu_{\mathrm{o}}-\mu_{\mathrm{n}}),
\]
and
\[
\varepsilon_{\mathrm{o}\to\mathrm{n}}^{\mathrm{ref}}
:=
\E_{Y\sim p_{\mathrm{o}}}\big[r_{\mathrm{n}}^{(\beta_0,\mu_{\mathrm{n}})}(Y)\big]
\le
\frac12\sqrt{\frac{1-\beta_0}{\beta_0}}\exp\!\left(-\frac18\mnorm{\mu_{\mathrm{n}}-\mu_{\mathrm{o}}}{\Sigma^{-1}}^2\right).
\]
Thus the old-mode contribution to the update of the new mean is exponentially small in the separation.
\end{theorem}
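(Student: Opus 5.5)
For part (A), I will first compute $q^*$ explicitly. The reward $r$ is constant on the Bayes half-spaces $A_{\mathrm{o}},A_{\mathrm{n}}$, so $e^{A^*(y)/\tau}=e^{r(y)/\tau}/Z$ with $Z=\E_{q_0}[e^{r/\tau}]$. Hence
\[
q^*(y)=Z^{-1}\big(\beta_0 p_{\mathrm{o}}(y)+(1-\beta_0)p_{\mathrm{n}}(y)\big)e^{r(y)/\tau}.
\]
Next I will use $r^{(0)}_{\mathrm{o}}(y)\,q_0(y)=\beta_0 p_{\mathrm{o}}(y)$. This gives
\[
\E_{q^*}[r^{(0)}_{\mathrm{o}}]=Z^{-1}\beta_0\int p_{\mathrm{o}}e^{r/\tau}
=Z^{-1}\beta_0\big(p_{\mathrm{o}}(A_{\mathrm{o}})e^{r_{\mathrm{o}}/\tau}+p_{\mathrm{o}}(A_{\mathrm{n}})e^{r_{\mathrm{n}}/\tau}\big).
\]
The one genuinely Gaussian computation is the half-space probabilities. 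Under $Y\sim p_{\mathrm{o}}$, the statistic $(\mu_{\mathrm{n}}-\mu_{\mathrm{o}})^\top\Sigma^{-1}(Y-\tfrac{\mu_{\mathrm{o}}+\mu_{\mathrm{n}}}{2})$ is $\mathcal N(-\delta^2/2,\delta^2)$. Therefore $p_{\mathrm{o}}(A_{\mathrm{n}})=\Phi(-\delta/2)=\gamma$, and by symmetry $p_{\mathrm{n}}(A_{\mathrm{o}})=\gamma$. Expanding $Z=\beta_0\big((1-\gamma)e^{r_{\mathrm{o}}/\tau}+\gamma e^{r_{\mathrm{n}}/\tau}\big)+(1-\beta_0)\big(\gamma e^{r_{\mathrm{o}}/\tau}+(1-\gamma)e^{r_{\mathrm{n}}/\tau}\big)$ then gives \eqref{eq:OAPL_expected_old_resp}. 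Membership in $(0,1)$ follows because both bracketed terms are strictly positive when $\beta_0\in(0,1)$ and the rewards are finite.

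For part (B), I will differentiate under the integral, justified by Gaussian tails and dominated convergence, since $\log q_{\beta,m_{\mathrm{n}}}$ has at most quadratic growth. The chain rule gives $\nabla_{m_{\mathrm{n}}}J=2\tau\,\E_{q_0}[\Delta\,\nabla_{m_{\mathrm{n}}}\log q_{\beta,m_{\mathrm{n}}}]$. I will then use
\[
\nabla_{m_{\mathrm{n}}}\log q_{\beta,m_{\mathrm{n}}}(y)=\frac{(1-\beta)\gauss{y}{m_{\mathrm{n}}}{\Sigma}\,\Sigma^{-1}(y-m_{\mathrm{n}})}{q_{\beta,m_{\mathrm{n}}}(y)}=r_{\mathrm{n}}^{(\beta,m_{\mathrm{n}})}(y)\,\Sigma^{-1}(y-m_{\mathrm{n}}),
\]
which yields \eqref{eq:OAPL_grad_m_identity_consolidated}.

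At the synchronized point, $q_{\beta_0,\mu_{\mathrm{n}}}=q_0$, so $\Delta=-A^*$. Writing $q_0=\beta_0p_{\mathrm{o}}+(1-\beta_0)p_{\mathrm{n}}$ isolates the old-mode contribution; the sign of $\Delta$ is irrelevant under the norm. To bound it, I will use $|A^*|\le |r|+|V^*|\le 2R$, since $|V^*|\le R$. Then Cauchy--Schwarz together with $0\le r_{\mathrm{n}}\le1$ (so $r_{\mathrm{n}}^2\le r_{\mathrm{n}}$) gives
\[
\E_{p_{\mathrm{o}}}\big[r_{\mathrm{n}}\|\Sigma^{-1}(Y-\mu_{\mathrm{n}})\|\big]\le\sqrt{\E_{p_{\mathrm{o}}}[r_{\mathrm{n}}]}\sqrt{M_{\mathrm{o}\to\mathrm{n}}}.
\]
Multiplying by the prefactors produces \eqref{eq:OAPL_oldmode_grad_bound_consolidated}. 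The closed form for $M_{\mathrm{o}\to\mathrm{n}}$ is a standard second-moment computation: $Y-\mu_{\mathrm{n}}=(Y-\mu_{\mathrm{o}})+(\mu_{\mathrm{o}}-\mu_{\mathrm{n}})$, and $\E\|\Sigma^{-1}(Y-\mu_{\mathrm{o}})\|^2=\mathrm{tr}(\Sigma^{-1}\Sigma\Sigma^{-1})=\mathrm{tr}(\Sigma^{-1})$. Finally, the bound on $\varepsilon^{\mathrm{ref}}_{\mathrm{o}\to\mathrm{n}}$ is the second inequality of Lemma~\ref{lem:leakage} with $f=p_{\mathrm{o}}$, $g=p_{\mathrm{n}}$, $w=\beta_0$, combined with Remark~\ref{lem:bc-gauss}.

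The main obstacle is the Gaussian half-space probability computation in (A): one has to check that $\gamma$ is the same for both misclassification directions, which holds because the Bayes boundary is the midpoint hyperplane when covariances are equal. Everything else is bookkeeping.
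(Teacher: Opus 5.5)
Your proposal is correct and follows essentially the same route as the paper's proof. In (A) you reduce $\E_{q^*}[r_{\mathrm{o}}^{(0)}]$ to $Z^{-1}\beta_0\int p_{\mathrm{o}}e^{r/\tau}$ and evaluate it via the half-space probabilities $\gamma$; in (B) you use the responsibility-weighted score, $\Delta=-A^*$ at the synchronized point, $|A^*|\le 2R$, Cauchy--Schwarz with $r_{\mathrm{n}}^2\le r_{\mathrm{n}}$, and Lemma~\ref{lem:leakage} with Remark~\ref{lem:bc-gauss}, and you additionally derive the closed form of $M_{\mathrm{o}\to\mathrm{n}}$, which the paper only states.
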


Part~(A) is the Gaussian analogue of the disjoint-support mass-reweighting formula.
Because the Gaussian components overlap, the target $q^*$ is no longer exactly a mixture in the same family, but its \emph{expected old responsibility}
admits the explicit formula \eqref{eq:OAPL_expected_old_resp}.
This shows that, unlike purely new-only forward-KL SFT, OAPL cannot entirely discard the old mode unless the frozen reference has already done so.

Part~(B) shows that OAPL is also \emph{geometrically local}: the contribution of old-mode samples to the update of the new mean is suppressed by
the overlap factor \(\varepsilon_{\mathrm{o}\to\mathrm{n}}^{\mathrm{ref}}\), which decays exponentially with the Mahalanobis separation.
Thus OAPL combines two stabilizing features in this toy model: the reference anchor retains a positive amount of old mass, and the parametric update
remains localized because cross-mode influence is exponentially small.

\section{Conclusion}

Within a mixture-model framework, we analyze how two predominant continual-learning mechanisms, namely on-policy sampling and replay-based memory access to past behavior, mitigate catastrophic forgetting. In the absence of these mechanisms, the effective training distribution becomes dominated by new information, making forgetting difficult to avoid even when the model class can represent both old and new behaviors. Our analysis shows that forward-KL objectives trained on new-only data naturally induce mass forgetting of the old behavior, whereas reverse-KL–style updates aligned with on-policy targets retain old modes and produce only overlap-controlled drift. Replay interacts with these objectives in fundamentally different ways: under forward-KL it must modify the training distribution to prevent mass forgetting, while under reverse-KL it primarily stabilizes stochastic optimization by ensuring persistent visibility of past behavior. These insights extend to several modern near–on-policy post-training methods, including SDFT, TTT-Discover, and OAPL, whose updates can be interpreted through the same mixture-model perspective. 

Future work includes extending this analysis to high-dimensional generative models where behaviors correspond to richer semantic modes rather than mixture models. Another interesting promising direction is to design principled post-training algorithms that explicitly balance exploration of new behaviors with retention of past capabilities using theoretically grounded sampling or memory mechanisms, built upon the insights derived in this work.

\bibliographystyle{abbrvnat}
\bibliography{ref}

\clearpage
\appendix
\noindent\rule{\textwidth}{1pt}
\begin{center}
\vspace{7pt}
{\Large  Appendix}
\end{center}
\noindent\rule{\textwidth}{1pt}

\section{RL-based Post-training and Reverse-KL Minimization}\label{app:rltokl}

In KL-regularized RL (the standard trust-region formulation of RL), the policy update is posed as
\[
\max_{\pi}\;\; \E_{\pi}[r] \;-\;\tau\,\KL(\pi\|\pi_{\mathrm{ref}}),
\]
which penalizes deviation from a reference policy $\pi_{\mathrm{ref}}$ while improving reward.
The unique optimizer is the exponential tilt
\[
\pi^*(y)\;=\;\frac{1}{Z}\,\pi_{\mathrm{ref}}(y)\,e^{r(y)/\tau},
\qquad
Z:=\E_{Y\sim \pi_{\mathrm{ref}}}\big[e^{r(Y)/\tau}\big].
\]
Assuming 
$Z < \infty$, this distribution is well defined.
Moreover, for any $\pi\ll \pi_{\mathrm{ref}}$ one has the exact identity
\[
\tau\log Z \;-\;\Big(\E_{\pi}[r]-\tau\KL(\pi\|\pi_{\mathrm{ref}})\Big)
\;=\;\tau\,\KL(\pi\|\pi^*).
\]
Thus maximizing the KL-regularized RL objective is equivalent (up to an additive constant $\tau\log Z$) to minimizing the $\KL(\pi\|\pi^*)$ to the reward-tilted target $\pi^*$; see~\cite{korbak2022reinforcement} for more details.

A concrete example in our two-mode Gaussian setting (defined in Section~\ref{sec:mog}) is obtained by choosing
\[
\pi_{\mathrm{ref}}(y):=p_{\beta_0}(y)=\beta_0\,p_{\mathrm{o}}(y)+(1-\beta_0)\,p_{\mathrm{n}}(y),
\qquad \beta_0\in(0,1),
\]
and defining the reward
\[
r(y):=\tau\log\frac{p_\alpha(y)}{q_{\beta_0}(y)}
=
\tau\log\frac{\alpha\,p_{\mathrm{o}}(y)+(1-\alpha)\,p_{\mathrm{n}}(y)}
{\beta_0\,p_{\mathrm{o}}(y)+(1-\beta_0)\,p_{\mathrm{n}}(y)}.
\]
This reward can be interpreted as a \emph{log-density correction}: it assigns positive reward to outputs that are underweighted by the reference policy relative to the desired true target $p_\alpha$, and negative reward to outputs that are overweighted, so that the KL-regularized RL update exactly tilts the reference policy toward $p_\alpha$. In this case, we have
\[
Z
=
\E_{Y\sim \pi_{\mathrm{ref}}}\big[e^{r(Y)/\tau}\big]
=
\int q_{\beta_0}(y)\,\frac{p_\alpha(y)}{q_{\beta_0}(y)}\,dy
=
\int p_\alpha(y)\,dy
=
1,
\]
so the tilted optimizer becomes
\[
\pi^*(y)
=
\frac{1}{Z}\,\pi_{\mathrm{ref}}(y)e^{r(y)/\tau}
=
q_{\beta_0}(y)\,\frac{p_\alpha(y)}{q_{\beta_0}(y)}
=
p_\alpha(y)
=
\alpha\,p_{\mathrm{o}}(y)+(1-\alpha)\,p_{\mathrm{n}}(y).
\]
Thus, with this choice of reference policy and reward, the KL-regularized RL solution exactly recovers the Gaussian mixture target.

\section{Proofs} \label{app:proofs}
We state a few results used in the proof.

\begin{lemma}[Gaussian Stein identity~\citep{chen2010normal}]\label{lem:stein}
Let $Y\sim\mathcal{N}(\mu,\Sigma)$ with $\Sigma\succ 0$, and let $g:\R^d\to\R$ be continuously differentiable with $\E[\norm{\nabla g(Y)}]<\infty$.
Then
\[
\E\big[\Sigma^{-1}(Y-\mu)\,g(Y)\big] = \E\big[\nabla g(Y)\big].
\]
\end{lemma}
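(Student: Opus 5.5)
We prove the identity first for a standard Gaussian in one dimension and then transfer it to general $(\mu,\Sigma)$ by an affine change of variables. We avoid a naive integration by parts on $\R^d$. The hypothesis $\E[\norm{\nabla g(Y)}]<\infty$ controls only the gradient, not the growth of $g$, so boundary terms cannot be discarded directly. This is the main obstacle, and the one-dimensional Fubini argument below is how we get around it.

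\emph{Step 1: one-dimensional standard case.} Let $Z\sim\mathcal{N}(0,1)$ with density $\varphi$, and let $h:\R\to\R$ be $C^1$ with $\E|h'(Z)|<\infty$.

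We use the identities $\varphi(x)=\int_x^\infty t\varphi(t)\,dt=-\int_{-\infty}^x t\varphi(t)\,dt$. They give
\[
\E[h'(Z)]=\int_0^\infty h'(x)\int_x^\infty t\varphi(t)\,dt\,dx-\int_{-\infty}^0 h'(x)\int_{-\infty}^x t\varphi(t)\,dt\,dx .
\]
On $\{0\le x\le t\}$ the integrand $|h'(x)|\,t\varphi(t)$ integrates to $\int_0^\infty|h'|\varphi<\infty$, and likewise on the negative side. Hence Tonelli/Fubini lets us swap the order of integration. The right-hand side then becomes $\int_{\R} t\varphi(t)\big(h(t)-h(0)\big)\,dt$.

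This argument shows in passing that $t\varphi(t)\big(h(t)-h(0)\big)$ is integrable. Since $\E[Z]=0$, the term involving $h(0)$ drops out, and we obtain $\E[Zh(Z)]=\E[h'(Z)]$.

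\emph{Step 2: multivariate standard case.} Let $Z\sim\mathcal{N}(0,I_d)$ and let $h$ be $C^1$ with $\E\norm{\nabla h(Z)}<\infty$. Fix a coordinate $i$ and condition on $Z_{-i}$, which is independent of $Z_i$.

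By Fubini, for almost every $z_{-i}$ the section $x\mapsto \partial_i h(x,z_{-i})$ is integrable against $\varphi$. Step 1 therefore applies to the section and gives $\E[Z_i h(Z)\mid Z_{-i}]=\E[\partial_i h(Z)\mid Z_{-i}]$. Integrating out $Z_{-i}$, again justified by Fubini and the integrability from Step 1, yields $\E[Zh(Z)]=\E[\nabla h(Z)]$.

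\emph{Step 3: affine transfer.} Write $Y=\mu+\Sigma^{1/2}Z$ and set $h(z):=g(\mu+\Sigma^{1/2}z)$. Then $\nabla h(z)=\Sigma^{1/2}\nabla g(\mu+\Sigma^{1/2}z)$. Because $\Sigma$ has bounded spectrum, $\E\norm{\nabla h(Z)}\le\norm{\Sigma^{1/2}}_2\,\E\norm{\nabla g(Y)}<\infty$, so Step 2 applies to $h$.

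Using $\Sigma^{-1}(Y-\mu)=\Sigma^{-1/2}Z$, we get
\[
\E\big[\Sigma^{-1}(Y-\mu)g(Y)\big]=\Sigma^{-1/2}\,\E[Zh(Z)]=\Sigma^{-1/2}\,\E[\nabla h(Z)]=\Sigma^{-1/2}\Sigma^{1/2}\,\E[\nabla g(Y)]=\E[\nabla g(Y)],
\]
which is the claim.
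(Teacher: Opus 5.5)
The paper gives no proof of this lemma; it only cites \citet{chen2010normal}, whose one-dimensional argument is exactly your Step~1. Steps~1 and~3 are correct. Step~2, however, has a gap at the sentence ``integrating out $Z_{-i}$, again justified by Fubini and the integrability from Step~1.''

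Apply Step~1 to each section and then use Tonelli in $z_{-i}$. This yields only
\[
\E\big[\,|Z_i|\,|h(Z)-h(0,Z_{-i})|\,\big]\le \E|\partial_i h(Z)|<\infty,
\qquad
\E\big[Z_i\big(h(Z)-h(0,Z_{-i})\big)\big]=\E[\partial_i h(Z)].
\]
To conclude $\E[Z_i h(Z)]=\E[\partial_i h(Z)]$ you still have to remove the term $Z_i\,h(0,Z_{-i})$. In dimension one, $h(0)$ was a constant and $\E[Z]=0$ disposed of it. For $d\ge 2$ it is a random variable independent of $Z_i$, and $\E[Z_i h(0,Z_{-i})]=\E[Z_i]\,\E[h(0,Z_{-i})]$ vanishes only if $h(0,\cdot)$ is integrable against $\mathcal{N}(0,I_{d-1})$. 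Without that, you have a well-defined iterated integral equal to $\E[\partial_i h(Z)]$, but no proof that $Z_i h(Z)\in L^1$. So Fubini does not let you identify that iterated integral with $\E[Z_i h(Z)]$. This is exactly the missing growth control you flagged at the outset, now reappearing on a coordinate hyperplane.

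The lemma is still true, and your own computation repairs it once you allow a general base point $c$. For $C^1$ $k:\R\to\R$, both $\int\varphi(x)|k(x)-k(c)|\,dx$ and $\int|x|\varphi(x)|k(x)-k(c)|\,dx$ are at most $C_c\int|k'|\varphi$, with $C_c<\infty$ depending only on $c$ (by Mills-ratio bounds). The repair has two steps.

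First, show by induction on $d$ that $\E\norm{\nabla h(Z)}<\infty$ implies $\E|h(Z)|<\infty$. Pick $c$ such that $\nabla h(c,\cdot)$ is integrable against $\mathcal{N}(0,I_{d-1})$; almost every $c$ qualifies by Tonelli, whereas a fixed slice such as $c=0$ need not. Apply the induction hypothesis to $h(c,\cdot)$, and add the bound $\E|h(Z)-h(c,Z_{-1})|\le C_c\,\E|\partial_1 h(Z)|$.

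Second, integrate the inequality $|h(0,z_{-i})|\le\int\varphi(x)|h(x,z_{-i})|\,dx+C_0\int\varphi(x)|\partial_i h(x,z_{-i})|\,dx$ over $z_{-i}$. This gives $h(0,Z_{-i})\in L^1$, and your Step~2 then goes through as written.

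Alternatively, you could add the hypothesis $\E\norm{(Y-\mu)g(Y)}<\infty$, after which Fubini applies directly. This hypothesis holds in the paper's main application $g=\log(q/p_\alpha)$, which grows at most linearly.
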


\begin{lemma}[Fourth moment of a Gaussian quadratic form]\label{lem:gauss_quad_4th} Let $Z\sim\mathcal{N}(0,I_d)$ and let $A\succeq 0$ be symmetric. Then \[ \E\big[(Z^\top A Z)^2\big] \;=\; 2\,\mathrm{tr}(A^2)+\big(\mathrm{tr}(A)\big)^2. \] \end{lemma}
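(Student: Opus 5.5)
Write $Y=\mu+\Sigma^{1/2}Z$ with $Z\sim\mathcal{N}(0,I_d)$. Then $Y^\top$-type quadratic forms reduce to the standard case, so it suffices to prove the identity for $Z$. The plan is to diagonalize $A$ and reduce everything to scalar Gaussian moments.

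First, since $A\succeq 0$ is symmetric, write $A=U\Lambda U^\top$ with $U$ orthogonal and $\Lambda=\mathrm{diag}(\lambda_1,\dots,\lambda_d)$. By rotation invariance of the standard Gaussian, $W:=U^\top Z\sim\mathcal{N}(0,I_d)$, and
\[
Z^\top A Z=W^\top\Lambda W=\sum_{i=1}^d\lambda_i W_i^2 .
\]
So $Z^\top AZ$ is a weighted sum of independent $\chi^2_1$ variables.

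Second, expand the square:
\[
\E\big[(Z^\top A Z)^2\big]=\sum_{i,j}\lambda_i\lambda_j\E[W_i^2W_j^2].
\]
For $i\neq j$, independence gives $\E[W_i^2W_j^2]=1$. For $i=j$, the Gaussian fourth moment gives $\E[W_i^4]=3$. Hence
\[
\E\big[(Z^\top A Z)^2\big]=\sum_{i\ne j}\lambda_i\lambda_j+3\sum_i\lambda_i^2=\Big(\sum_i\lambda_i\Big)^2+2\sum_i\lambda_i^2 .
\]

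Finally, $\sum_i\lambda_i=\mathrm{tr}(A)$ and $\sum_i\lambda_i^2=\mathrm{tr}(A^2)$, which gives the claim.

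There is no genuine obstacle. The only points to state explicitly are the rotation invariance and the value $\E[W^4]=3$; positive semidefiniteness is not actually needed, only symmetry. As a cross-check, one could instead use Isserlis' theorem on $\E[Z_iZ_jZ_kZ_l]$ summed against $A_{ij}A_{kl}$, which yields the same three pairings: one gives $\mathrm{tr}(A)^2$ and two give $\mathrm{tr}(A^2)$.
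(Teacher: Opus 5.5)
Your proof is correct and follows the same route as the paper: diagonalize $A$, use rotation invariance of $Z$ to reduce $Z^\top A Z$ to $\sum_i\lambda_i W_i^2$, then apply $\E[W_i^4]=3$ and independence to the cross terms. The opening remark about $Y=\mu+\Sigma^{1/2}Z$ is unnecessary, since the statement is already about a standard $Z$, and you are right that only the symmetry of $A$ is used.
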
 
\begin{proof} 
This is standard; one way to establish the result is to diagonalize $A=U^\top \Lambda U$ with $\Lambda=\mathrm{diag}(\lambda_1,\dots,\lambda_d)$ and $UZ\overset{d}{=}Z$. Then $Z^\top A Z=\sum_i \lambda_i Z_i^2$ and \[ \E\Big[\Big(\sum_i\lambda_i Z_i^2\Big)^2\Big] =\sum_i\lambda_i^2\E[Z_i^4]+\sum_{i\neq j}\lambda_i\lambda_j\E[Z_i^2]\E[Z_j^2] =3\sum_i\lambda_i^2+\sum_{i\neq j}\lambda_i\lambda_j =2\sum_i\lambda_i^2+\Big(\sum_i\lambda_i\Big)^2. \]
establishing the result.
\end{proof}

% \subsection{Proof of Lemma~\ref{lem:disjoint-kl}}

\subsection{Proof of Lemma~\ref{lem:leakage}}
% \PLBC*
\begin{proof}[Proof of Lemma~\ref{lem:leakage}]
For the first bound,
\[
\E_{g}[r_f(Y)] = \int g(y)\frac{w f(y)}{w f(y)+(1-w)g(y)}\,dy
= \int \frac{w f(y)g(y)}{w f(y)+(1-w)g(y)}\,dy.
\]
Using $a+b\ge 2\sqrt{ab}$ with $a=w f(y)$ and $b=(1-w)g(y)$ yields
\[
w f(y)+(1-w)g(y)\ge 2\sqrt{w(1-w)f(y)g(y)},
\]
so the integrand is at most
\[
\frac{w f(y)g(y)}{2\sqrt{w(1-w)f(y)g(y)}}
= \frac12\sqrt{\frac{w}{1-w}}\;\sqrt{f(y)g(y)}.
\]
Integrating gives the stated inequality, and the second inequality follows by symmetry.
\end{proof}

\subsection{Proof of Theorem~\ref{thm:kl_main}}
% \fKL*
\begin{proof}[Proof of Theorem~\ref{thm:kl_main}]
Define the likelihood ratio $X(y):=\frac{p_{\mathrm{o}}(y)}{p_{\mathrm{n}}(y)}$.
Then for every $y$,
\[
q_\beta(y)=\beta p_{\mathrm{o}}(y)+(1-\beta)p_{\mathrm{n}}(y)=p_{\mathrm{n}}(y)\big((1-\beta)+\beta X(y)\big),
\]
so
\[
L_{\mathrm{SFT}}(\beta)=\KL(p_{\mathrm{n}}\|q_\beta)
=\E_{Y\sim p_{\mathrm{n}}}\!\left[\log\frac{p_{\mathrm{n}}(Y)}{q_\beta(Y)}\right]
=\E_{p_{\mathrm{n}}}\!\left[-\log\big((1-\beta)+\beta X(Y)\big)\right].
\]
Since $\E_{p_{\mathrm{n}}}[X(Y)]=\int p_{\mathrm{o}}=1$ and $-\log$ is strictly convex,
Jensen's inequality gives
\[
L_{\mathrm{SFT}}(\beta)\ \ge\ -\log\Big((1-\beta)+\beta\,\E[X(Y)]\Big)=-\log(1)=0,
\]
with strict inequality for $\beta>0$ because $X$ is not a.s.\ constant under $p_{\mathrm{n}}$ when $\mu_{\mathrm{o}}\neq\mu_{\mathrm{n}}$.
Thus $\beta=0$ is the unique minimizer.

Differentiate under the expectation (justified since $(1-\beta)+\beta X(Y)\ge 1-\beta>0$ and Gaussians have all moments):
\[
L_{\mathrm{SFT}}'(\beta)
=-\E_{p_{\mathrm{n}}}\!\left[\frac{X(Y)-1}{(1-\beta)+\beta X(Y)}\right],
\qquad
L_{\mathrm{SFT}}''(\beta)
=\E_{p_{\mathrm{n}}}\!\left[\frac{(X(Y)-1)^2}{\big((1-\beta)+\beta X(Y)\big)^2}\right].
\]
Since $X$ is not a.s.\ constant, $L_{\mathrm{SFT}}''(\beta)>0$ for all $\beta\in(0,1)$, hence $L_{\mathrm{SFT}}'$ is strictly increasing.
Also $L_{\mathrm{SFT}}'(0)=-\E[X(Y)-1]=0$, so $L_{\mathrm{SFT}}'(\beta)>0$ for all $\beta\in(0,1)$.
Therefore $L_{\mathrm{SFT}}$ is strictly increasing on $[0,1]$.

Now parameterize $\beta=\sigma(\phi)$. Since $\frac{d\beta}{d\phi}=\beta(1-\beta)>0$,
\[
\frac{d}{d\phi}L_{\mathrm{SFT}}(\sigma(\phi))
=\beta(1-\beta)\,L_{\mathrm{SFT}}'(\beta)\ >\ 0\qquad\forall \beta\in(0,1),
\]
so along logit gradient flow $\dot\phi=-\frac{d}{d\phi}L_{\mathrm{SFT}}(\sigma(\phi))$ we have $\dot\phi<0$ whenever $\beta\in(0,1)$.
Thus $\phi(t)$ is strictly decreasing and $\beta(t)$ is strictly decreasing, hence $\beta(t)\to \beta_\infty\in[0,1]$ exists.
If $\beta_\infty>0$, then $L_{\mathrm{SFT}}'(\beta_\infty)>0$, so $\frac{d}{d\phi}L_{\mathrm{SFT}}(\sigma(\phi(t)))$
stays bounded below by a positive constant for large $t$, contradicting $\phi(t)$ having a finite limit.
Hence $\beta_\infty=0$.

To obtain \eqref{eq:kl_sft_logit_grad}, note that $\partial_\phi\log q_\beta(y)=r_{\mathrm{o}}(y)-\beta$ for any two-component mixture
(with $\beta=\sigma(\phi)$). Since $L_{\mathrm{SFT}}(\beta)=\E_{p_{\mathrm{n}}}[\log p_{\mathrm{n}}-\log q_\beta]$,
\[
\frac{d}{d\phi}L_{\mathrm{SFT}}(\sigma(\phi))=-\E_{p_{\mathrm{n}}}\big[\partial_\phi \log q_\beta(Y)\big]
=\beta-\E_{p_{\mathrm{n}}}[r_{\mathrm{o}}(Y)],
\]
yielding \eqref{eq:kl_sft_logit_grad}.
Finally, \eqref{eq:kl_sft_leak_bound} is Lemma~\ref{lem:leakage} applied to the mixture $q_\beta$ with $w=\beta$, $f=p_{\mathrm{o}}$, $g=p_{\mathrm{n}}$,
combined with Lemma~\ref{lem:bc-gauss} and $\mathrm{BC}(p_{\mathrm{o}},p_{\mathrm{n}})=e^{-\delta^2/8}$.
\end{proof}

\subsection{Proof of Lemma~\ref{lem:replay_forwardKL_consolidated}}
% \ReplayFKL*

\begin{proof}[Proof of Lemma~\ref{lem:replay_forwardKL_consolidated}]
\textbf{(A)} Expand
\[
\tilde q_{\beta,\lambda}
=(1-\lambda)\big(\beta p_{\mathrm{o}}+(1-\beta)p_{\mathrm{n}}\big)+\lambda p_{\mathrm{o}}
=\big(\lambda+(1-\lambda)\beta\big)p_{\mathrm{o}}+(1-\lambda)(1-\beta)p_{\mathrm{n}}.
\]
Set $\tilde\beta:=\lambda+(1-\lambda)\beta$ so that $\tilde q_{\beta,\lambda}=q_{\tilde\beta}$.
Thus
\[
\min_{\beta\in[0,1]}\KL(p_{\mathrm{n}}\|\tilde q_{\beta,\lambda})
=
\min_{\tilde\beta\in[\lambda,1]}\KL(p_{\mathrm{n}}\|q_{\tilde\beta}).
\]
It remains to show $\gamma\mapsto F(\gamma):=\KL(p_{\mathrm{n}}\|q_\gamma)$ is strictly increasing on $(0,1)$.
Differentiate under the integral:
\[
F(\gamma)=\int p_{\mathrm{n}}(y)\log\frac{p_{\mathrm{n}}(y)}{(1-\gamma)p_{\mathrm{n}}(y)+\gamma p_{\mathrm{o}}(y)}\,dy,
\]
\[
F'(\gamma)
= -\int p_{\mathrm{n}}(y)\,\frac{p_{\mathrm{o}}(y)-p_{\mathrm{n}}(y)}{q_\gamma(y)}\,dy,
\qquad
F''(\gamma)
= \int p_{\mathrm{n}}(y)\,\frac{(p_{\mathrm{o}}(y)-p_{\mathrm{n}}(y))^2}{q_\gamma(y)^2}\,dy.
\]
Since $p_{\mathrm{o}}\not\equiv p_{\mathrm{n}}$ (distinct means), $F''(\gamma)>0$ on $(0,1)$, so $F'$ is strictly increasing.
Also $F'(0)=-\int(p_{\mathrm{o}}-p_{\mathrm{n}})=0$, hence $F'(\gamma)>0$ for all $\gamma\in(0,1)$ and $F$ is strictly increasing.
Therefore the minimizer over $[\lambda,1]$ is uniquely $\tilde\beta^\star=\lambda$, i.e.\ $\beta^\star=0$.

\textbf{(B)} Observe that $q_\lambda(y)=\lambda p_{\mathrm{o}}(y)+(1-\lambda)p_{\mathrm{n}}(y)=\tilde p_\lambda(y)$, hence
$\KL(\tilde p_\lambda\|q_\lambda)=0$ and $\beta=\lambda$ is a global minimizer.
If $\KL(\tilde p_\lambda\|q_\beta)=0$, then $q_\beta=\tilde p_\lambda$ a.e., i.e.
\[
(\beta-\lambda)\big(p_{\mathrm{o}}(y)-p_{\mathrm{n}}(y)\big)=0\quad\text{a.e.}
\]
Since $p_{\mathrm{o}}-p_{\mathrm{n}}$ is not zero a.e., this forces $\beta=\lambda$, proving uniqueness.
\end{proof}

\subsection{Proof of Theorem~\ref{thm:kl_stationary_beta_mn}}
% \RKLN*
\begin{proof}[Proof of Theorem~\ref{thm:kl_stationary_beta_mn}]
We start with a general derivative identity for $\KL(q_\theta\|p)$ when only $q_\theta$ depends on $\theta$. Let $p$ be a fixed strictly positive density, and let $\{q_\theta:\theta\in\Theta\}$ be a $C^1$ family of densities such that $\int q_\theta(y)\,dy=1$ for all $\theta$ and differentiation under the integral is justified. Then \begin{equation}\label{eq:general-kl-deriv} \nabla_\theta \KL(q_\theta\|p) =\int_{\mathbb{R}^d} (\nabla_\theta q_\theta(y))\,\log\frac{q_\theta(y)}{p(y)}\,dy. \end{equation} Indeed, writing $\KL(q_\theta\|p)=\int q_\theta\log(q_\theta/p)$ and differentiating, \[ \nabla_\theta \KL(q_\theta\|p) =\int (\nabla_\theta q_\theta)\log\frac{q_\theta}{p}\,dy +\int q_\theta\,\nabla_\theta\!\left(\log\frac{q_\theta}{p}\right)\,dy =\int (\nabla_\theta q_\theta)\log\frac{q_\theta}{p}\,dy + \int \nabla_\theta q_\theta\,dy. \] The last integral vanishes because $\int \nabla_\theta q_\theta\,dy = \nabla_\theta \int q_\theta\,dy=\nabla_\theta 1=0$, yielding \eqref{eq:general-kl-deriv}. For the present Gaussian-mixture family, the required interchange of derivative and integral holds by dominated convergence: $\nabla_\theta q_{\beta,m_{\mathrm{n}}}(y)$ is a Gaussian density times a polynomial in $y$, while $\log(q_{\beta,m_{\mathrm{n}}}(y)/p_\alpha(y))$ grows at most quadratically in $\|y\|$ because both numerator and denominator are mixtures of equal-covariance Gaussians; hence the integrand is dominated by an integrable function. 

\medskip
We next compute $\partial_\beta q_{\beta,m_{\mathrm{n}}}$ and $\nabla_{m_{\mathrm{n}}} q_{\beta,m_{\mathrm{n}}}$. By definition, $ q_{\beta,m_{\mathrm{n}}}(y)=\beta\,\gauss{y}{\mu_{\mathrm{o}}}{\Sigma}+(1-\beta)\,\gauss{y}{m_{\mathrm{n}}}{\Sigma},$ so \[ \frac{\partial}{\partial\beta}q_{\beta,m_{\mathrm{n}}}(y)=\gauss{y}{\mu_{\mathrm{o}}}{\Sigma}-\gauss{y}{m_{\mathrm{n}}}{\Sigma}. \] Also, using the standard Gaussian derivative identity $\nabla_{m}\gauss{y}{m}{\Sigma}=\gauss{y}{m}{\Sigma}\,\Sigma^{-1}(y-m)$, we have \[ \nabla_{m_{\mathrm{n}}} q_{\beta,m_{\mathrm{n}}}(y) =(1-\beta)\,\gauss{y}{m_{\mathrm{n}}}{\Sigma}\,\Sigma^{-1}(y-m_{\mathrm{n}}). \] Substituting these into \eqref{eq:general-kl-deriv} yields \eqref{eq:beta-grad-kl}--\eqref{eq:mn-grad-kl}. 

\medskip 
Evaluating the aforementioned partials at $(\beta,m_{\mathrm{n}})=(\alpha,\mu_{\mathrm{n}})$, we have pointwise equality of densities: \[ q_{\alpha,\mu_{\mathrm{n}}}(y) =\alpha\,\gauss{y}{\mu_{\mathrm{o}}}{\Sigma}+(1-\alpha)\,\gauss{y}{\mu_{\mathrm{n}}}{\Sigma} =p_\alpha(y) \quad\text{for all }y, \] hence $\log\!\big(q_{\alpha,\mu_{\mathrm{n}}}(y)/p_\alpha(y)\big)=\log 1=0$ for all $y$. Plugging this into \eqref{eq:beta-grad-kl}--\eqref{eq:mn-grad-kl} yields $\partial_\beta L(\alpha,\mu_{\mathrm{n}})=0$ and $\nabla_{m_{\mathrm{n}}}L(\alpha,\mu_{\mathrm{n}})=0$. Finally, $\KL(\cdot\|\cdot)\ge 0$ always and equals $0$ iff $q=p$ a.e., so $L(\alpha,\mu_{\mathrm{n}})=0$ and $(\alpha,\mu_{\mathrm{n}})$ is a global minimizer. 
\end{proof}

\subsection{Proof of Theorem~\ref{lem:fwdKL_oldmean_drift}}
% \RKL*
\begin{proof}[Proof of Theorem~\ref{lem:fwdKL_oldmean_drift}]

Write $q(y):=q_{\beta,m_{\mathrm{o}},m_{\mathrm{n}}}(y)$ and $p(y):=p_\alpha(y)$.
A standard identity (cf.\ \eqref{eq:general-kl-deriv} later in the paper) yields
\[
\nabla_{m_{\mathrm{o}}}\KL(q\|p)=\int (\nabla_{m_{\mathrm{o}}}q(y))\,\log\frac{q(y)}{p(y)}\,dy,
\]
since $\int \nabla_{m_{\mathrm{o}}}q=0$.
Moreover $\nabla_{m_{\mathrm{o}}}q(y)=\beta\,\gauss{y}{m_{\mathrm{o}}}{\Sigma}\,\Sigma^{-1}(y-m_{\mathrm{o}})$.
Evaluating at $m_{\mathrm{o}}=\mu_{\mathrm{o}}$ gives
\[
\nabla_{m_{\mathrm{o}}}L_{\mathrm{RL}}(\beta,\mu_{\mathrm{o}},m_{\mathrm{n}})
=\beta\,\Sigma^{-1}\E_{Y\sim p_{\mathrm{o}}}\!\left[(Y-\mu_{\mathrm{o}})\,\log\frac{q(Y)}{p(Y)}\right].
\]
Apply Stein's identity (Lemma~\ref{lem:stein}) with $g(y)=\log\frac{q(y)}{p(y)}$ to obtain
\[
\nabla_{m_{\mathrm{o}}}L_{\mathrm{RL}}(\beta,\mu_{\mathrm{o}},m_{\mathrm{n}})
=\beta\,\E_{p_{\mathrm{o}}}\!\left[\nabla_y\log\frac{q(Y)}{p(Y)}\right]
=\beta\,\E_{p_{\mathrm{o}}}\!\left[\nabla_y\log q(Y)-\nabla_y\log p(Y)\right].
\]
For equal-covariance Gaussian mixtures, the score is responsibility-weighted:
\[
\nabla_y\log q(y)=-\Sigma^{-1}\Big(y-\big(r_{\mathrm{o}}(y)m_{\mathrm{o}}+r_{\mathrm{n}}(y)m_{\mathrm{n}}\big)\Big),
\qquad
\nabla_y\log p(y)=-\Sigma^{-1}\Big(y-\big(s_{\mathrm{o}}(y)\mu_{\mathrm{o}}+s_{\mathrm{n}}(y)\mu_{\mathrm{n}}\big)\Big).
\]
Substituting $m_{\mathrm{o}}=\mu_{\mathrm{o}}$ and subtracting yields
\[
\nabla_y\log q(y)-\nabla_y\log p(y)
=
\Sigma^{-1}\Big((1-r_{\mathrm{o}}(y))(m_{\mathrm{n}}-\mu_{\mathrm{o}})-(1-s_{\mathrm{o}}(y))(\mu_{\mathrm{n}}-\mu_{\mathrm{o}})\Big).
\]
Taking $\E_{p_{\mathrm{o}}}$ gives \eqref{eq:kl_oldmean_exact} with $\varepsilon_q=\E_{p_{\mathrm{o}}}[1-r_{\mathrm{o}}]$ and $\varepsilon_p=\E_{p_{\mathrm{o}}}[1-s_{\mathrm{o}}]$.

Finally, \eqref{eq:kl_eps_bounds} follows from Lemma~\ref{lem:leakage} and Lemma~\ref{lem:bc-gauss}:
\begin{itemize}
\item For $\varepsilon_q=\E_{p_{\mathrm{o}}}[1-r_{\mathrm{o}}(Y)]$, apply Lemma~\ref{lem:leakage} to the model mixture
$q_{\beta,\mu_{\mathrm{o}},m_{\mathrm{n}}}$ with $w=\beta$, $f=\gauss{\cdot}{\mu_{\mathrm{o}}}{\Sigma}$, $g=\gauss{\cdot}{m_{\mathrm{n}}}{\Sigma}$.
\item For $\varepsilon_p=\E_{p_{\mathrm{o}}}[1-s_{\mathrm{o}}(Y)]$, apply Lemma~\ref{lem:leakage} to the target mixture
$p_\alpha$ with $w=\alpha$, $f=p_{\mathrm{o}}$, $g=p_{\mathrm{n}}$.
\end{itemize}

\end{proof}

\subsection{Proof of Theorem~\ref{thm:kl_RL_local_rate}}
% \PL*

We start by showing the Lipschitiz continuity of the Hessian.

\begin{lemma}[Local Hessian-Lipschitzness of the reverse-KL objective]\label{lem:hessian_lipschitz_reverseKL}
Let
\[
L(\phi,m):=\KL(q_{\phi,m}\|p_\alpha),
\qquad
q_{\phi,m}(y):=\beta(\phi)\,\phi_\Sigma(y;\mu_{\mathrm{o}})+(1-\beta(\phi))\,\phi_\Sigma(y;m),
\qquad
\beta(\phi)=\frac{1}{1+e^{-\phi}},
\]
with \(m_{\mathrm{o}}=\mu_{\mathrm{o}}\) fixed.
Fix any compact set
\[
K:=\{(\phi,m):\ |\phi-\phi^\star|\le r,\ \|m-m^\star\|\le r\}
\subset \R\times\R^d,
\]
where \(\phi^\star=\log\frac{\alpha}{1-\alpha}\) and \(m^\star=\mu_{\mathrm{n}}\).
Then \(L\) is \(C^3\) on \(K\).
Consequently, there exists a finite constant \(L_H(K)<\infty\) such that
\[
\|\nabla^2 L(\theta)-\nabla^2 L(\theta')\|_2
\le L_H(K)\,\|\theta-\theta'\|
\qquad
\forall \theta,\theta'\in K.
\]
In particular, the Hessian-Lipschitz assumption used to quantify the local PL region holds on every compact neighborhood bounded away from \(\beta\in\{0,1\}\).
\end{lemma}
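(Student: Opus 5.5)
The plan is to show that $L$ is $C^3$ on a neighborhood of $K$ by differentiating under the integral sign, and then to read off the Hessian-Lipschitz bound from the mean value theorem. Write
\[
L(\theta)=\int q_\theta(y)\log q_\theta(y)\,dy-\int q_\theta(y)\log p_\alpha(y)\,dy,
\]
with $\theta=(\phi,m)$. Both $\beta(\phi)$ and the map $m\mapsto\phi_\Sigma(y;m)$ are real-analytic. Every partial derivative of order $\le 3$ of $q_\theta(y)$ is therefore a finite sum of terms of the form $c(\phi)\,P(y,m)\,\phi_\Sigma(y;\cdot)$. Here $c$ is a derivative of $\beta$ or $1-\beta$, hence bounded, and $P$ is a polynomial of degree $\le 3$ in $y$ whose coefficients are continuous in $m$.

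The key pointwise bounds, all uniform over $\theta$ in a slightly enlarged compact set $K'\supset K$, are as follows.

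First, $q_\theta(y)\le C e^{-c\|y-m_c\|^2}$ for a common center $m_c$. This holds because the means range over a bounded set and the Gaussians are shifted by bounded amounts.

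Second, $|\log p_\alpha(y)|$ and $|\log q_\theta(y)|$ are at most $C(1+\|y\|^2)$. The upper bound is immediate. For the lower bound, $q_\theta\ge\beta\,\phi_\Sigma(y;\mu_{\mathrm o})$, and $\beta$ is bounded away from $0$ on $K'$ since $\phi$ is bounded there.

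Third, the derivatives of $\log q_\theta$ grow at most polynomially. The first derivatives are $\partial_\phi\log q=r_{\mathrm o}-\beta$ and $\nabla_m\log q=r_{\mathrm n}\Sigma^{-1}(y-m)$. Higher derivatives are polynomials in the responsibilities, which lie in $[0,1]$, and in $y$, because each differentiation of a responsibility produces further responsibility products times affine functions of $y$. Hence each derivative of $\log q_\theta$ of order $\le 3$ is bounded by $C(1+\|y\|^{3})$.

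Applying the Leibniz rule to $q_\theta\log q_\theta$ and to $q_\theta\log p_\alpha$, every third-order $\theta$-derivative of the integrand is bounded by $C(1+\|y\|^{6})e^{-c\|y-m_c\|^2}$. This envelope is integrable and uniform over $K'$. Dominated convergence then justifies three differentiations under the integral and gives continuity of $\nabla^3L$ on $K'$, so $L\in C^3(K)$.

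To conclude, $K$ is convex (a product of a ball and an interval), so the mean value inequality gives
\[
\|\nabla^2L(\theta)-\nabla^2L(\theta')\|_2\le \sup_{K}\|\nabla^3L\|\;\|\theta-\theta'\|.
\]
Setting $L_H(K):=\sup_K\|\nabla^3 L\|<\infty$, which is finite by continuity on a compact set, completes the argument.

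The main obstacle is controlling the $\log q_\theta$ factor and its derivatives. Lower-bounding $q_\theta$ needs $\beta$ bounded away from $0$; symmetrically one could use $1-\beta$. Bounding the higher derivatives of $\log q_\theta$ needs the responsibility structure. I would handle this through the identity
\[
\nabla_\theta^k\log q=\frac{\nabla^k_\theta q}{q}+\text{products of lower-order ratios }\frac{\nabla_\theta^j q}{q},
\]
and the observation that each ratio $\nabla^j_\theta q/q$ equals a responsibility-weighted polynomial in $y$.
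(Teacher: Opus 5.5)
Your proposal is correct and follows essentially the same route as the paper. Both arguments use a polynomial-times-Gaussian envelope that is uniform on the compact set, with $\beta$ bounded away from $0$ so that $q_\theta\ge\underline{\beta}\,\phi_\Sigma(\cdot;\mu_{\mathrm{o}})$; then dominated convergence for three differentiations under the integral, continuity of $\nabla^3L$ on a compact set, and the mean value inequality. The differences are cosmetic: you control derivatives of $\log q_\theta$ through responsibilities in $[0,1]$ rather than through powers of $q_\theta^{-1}$, and you bound $\log q_\theta$ and $\log p_\alpha$ separately (quadratic growth) rather than the log-ratio (linear growth). You also state explicitly the convexity of $K$ that the paper's mean value step uses implicitly.
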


\begin{proof}[Proof of Theorem~\ref{thm:kl_RL_local_rate}]
Write \(\theta=(\phi,m)\) and \(q_\theta=q_{\phi,m}\).
We first show that derivatives of the integrand in the KL objective admit a uniform integrable envelope on \(K\).

Since \(K\) is compact and \(\beta(\phi)\) is continuous, there exist constants
\[
0<\underline{\beta}\le \beta(\phi)\le \overline{\beta}<1
\qquad
\forall (\phi,m)\in K.
\]
Moreover, the set of means \(\{m:\ (\phi,m)\in K\}\) is compact.
For each multi-index \(\nu\) with \(|\nu|\le 3\), the derivatives \(\partial_\theta^\nu q_\theta(y)\) are finite linear combinations of terms of the form
\[
P_{\nu}(y,m)\,\gauss{y}{\mu_{\mathrm{o}}}{\Sigma}
\qquad\text{or}\qquad
Q_{\nu}(y,m)\,\gauss{y}{m}{\Sigma},
\]
where \(P_\nu,Q_\nu\) are polynomials in \(y\) whose coefficients depend continuously on \(m\).
Since \(m\) ranges over a compact set, there exist constants \(C_\nu,c_\nu>0\) such that
\begin{equation}\label{eq:q_deriv_envelope}
|\partial_\theta^\nu q_\theta(y)|
\le
C_\nu(1+\|y\|^{3})e^{-c_\nu \|y\|^2}
\qquad
\forall \theta\in K,\ \forall y\in\R^d,\ \forall |\nu|\le 3.
\end{equation}

We next control the logarithmic factor.
Because \(\underline{\beta}>0\), we have the pointwise lower bound
\[
q_\theta(y)\ge \underline{\beta}\,\gauss{y}{\mu_{\mathrm{o}}}{\Sigma}
\qquad\forall \theta\in K,\ \forall y\in\R^d.
\]
Similarly, \(p_\alpha(y)\ge \alpha\,\gauss{y}{\mu_{\mathrm{o}}}{\Sigma}\).
For equal-covariance Gaussian mixtures, the quadratic terms in \(\log q_\theta(y)\) and \(\log p_\alpha(y)\) cancel, and the remaining difference grows at most linearly in \(\|y\|\).
Hence there exists a constant \(C_{\log}>0\) such that
\begin{equation}\label{eq:log_envelope}
\sup_{\theta\in K}\left|\log\frac{q_\theta(y)}{p_\alpha(y)}\right|
\le
C_{\log}(1+\|y\|)
\qquad
\forall y\in\R^d.
\end{equation}

Now write the KL objective as
\[
L(\theta)=\int_{\R^d} q_\theta(y)\,\log\frac{q_\theta(y)}{p_\alpha(y)}\,dy.
\]
Differentiating with respect to \(\theta\) up to third order produces finite sums of products of derivatives of \(q_\theta\), powers of \(q_\theta^{-1}\), and the factor \(\log(q_\theta/p_\alpha)\).
Using the lower bound \(q_\theta(y)\ge \underline{\beta}\,\gauss{y}{\mu_{\mathrm{o}}}{\Sigma}\), the derivative envelope \eqref{eq:q_deriv_envelope}, and
the logarithmic bound \eqref{eq:log_envelope}, each derivative of the integrand up to order \(3\) is dominated by a function of the form
\[
C(1+\|y\|^M)e^{-c\|y\|^2}
\]
for some constants \(C,M,c>0\) independent of \(\theta\in K\).
This envelope is integrable on \(\R^d\).
Therefore differentiation under the integral sign is justified up to third order by dominated convergence, so \(L\in C^3(K)\).

Finally, since \(L\in C^3(K)\), the third derivative \(\nabla^3 L\) is continuous on the compact set \(K\), and hence bounded:
\[
M_3:=\sup_{\theta\in K}\|\nabla^3 L(\theta)\|_{\mathrm{op}}<\infty.
\]
The mean value theorem in Banach spaces then implies
\[
\|\nabla^2 L(\theta)-\nabla^2 L(\theta')\|_2
\le
M_3\,\|\theta-\theta'\|
\qquad
\forall \theta,\theta'\in K.
\]
Thus the Hessian is Lipschitz on \(K\) with \(L_H(K):=M_3\).
\end{proof}

\begin{proof}[Proof of Theorem~\ref{thm:kl_RL_local_rate}]

We first prove that $H_\star:=\nabla^2L(\theta^\star)\succ 0$ and $\mu_\star:=\lambda_{\min}(H_\star)>0.$ Write \(\theta=(\phi,m)\in\R\times\R^d\), \(q_\theta:=q_{\phi,m}\), and \(\theta^\star=(\phi^\star,m^\star)\), so that
\(q_{\theta^\star}=p_\alpha\).
Since $L(\theta)=\KL(q_\theta\|p_\alpha)=\KL(q_\theta\|q_{\theta^\star}),
$ we first derive the Fisher representation of the Hessian at \(\theta^\star\). Let
\[
\ell_\theta(y):=\log q_\theta(y),
\qquad
s_\theta(y):=\nabla_\theta \ell_\theta(y)=\nabla_\theta \log q_\theta(y).
\]
Then
\[
L(\theta)=\int q_\theta(y)\big(\ell_\theta(y)-\ell_{\theta^\star}(y)\big)\,dy.
\]
Differentiating with respect to \(\theta\), and using \(\nabla_\theta q_\theta=q_\theta s_\theta\), gives
\[
\nabla_\theta L(\theta)
=
\int q_\theta(y)s_\theta(y)\big(\ell_\theta(y)-\ell_{\theta^\star}(y)\big)\,dy
+
\int q_\theta(y)\nabla_\theta \ell_\theta(y)\,dy.
\]
Since \(q_\theta\nabla_\theta \ell_\theta=\nabla_\theta q_\theta\) and \(\int q_\theta(y)\,dy=1\), the second term vanishes:
\[
\int q_\theta(y)\nabla_\theta \ell_\theta(y)\,dy
=
\int \nabla_\theta q_\theta(y)\,dy
=
\nabla_\theta\int q_\theta(y)\,dy
=
\nabla_\theta 1
=
0.
\]
Hence
\[
\nabla_\theta L(\theta)
=
\int q_\theta(y)s_\theta(y)\big(\ell_\theta(y)-\ell_{\theta^\star}(y)\big)\,dy.
\]
In particular, at \(\theta=\theta^\star\), the logarithmic factor vanishes pointwise, so
\[
\nabla_\theta L(\theta^\star)=0.
\]
We next differentiate once more.
Set
\[
h_\theta(y):=\ell_\theta(y)-\ell_{\theta^\star}(y).
\]
Then
\[
\nabla_\theta L(\theta)=\int q_\theta(y)s_\theta(y)h_\theta(y)\,dy.
\]
Differentiating and evaluating at \(\theta=\theta^\star\), every term containing the factor \(h_\theta(y)\) vanishes because
\(h_{\theta^\star}(y)=0\).
The only surviving term comes from differentiating \(h_\theta\), and since
\[
\nabla_\theta h_\theta(y)=\nabla_\theta \ell_\theta(y)=s_\theta(y),
\]
we obtain
\[
\nabla_\theta^2L(\theta^\star)
=
\int q_{\theta^\star}(y)\,s_{\theta^\star}(y)s_{\theta^\star}(y)^\top\,dy
=
\E_{Y\sim q_{\theta^\star}}\!\big[s_{\theta^\star}(Y)s_{\theta^\star}(Y)^\top\big].
\]
Because \(q_{\theta^\star}=p_\alpha\), this yields the Fisher representation
\[
H_\star
=
\nabla^2L(\theta^\star)
=
\E_{Y\sim p_\alpha}\!\big[s(Y)s(Y)^\top\big],
\]
where \(s(Y)=s_{\theta^\star}(Y)\).

For the present two-component model, the score vector is
\[
s(Y)=
\begin{pmatrix}
r_{\mathrm{o}}^\star(Y)-\alpha\\[2pt]
r_{\mathrm{n}}^\star(Y)\,\Sigma^{-1}(Y-\mu_{\mathrm{n}})
\end{pmatrix},
\qquad
r_{\mathrm{o}}^\star(y):=\frac{\alpha\,\phi_\Sigma(y;\mu_{\mathrm{o}})}{p_\alpha(y)},
\qquad
r_{\mathrm{n}}^\star(y):=1-r_{\mathrm{o}}^\star(y).
\]

We now prove that \(H_\star\) is positive definite.
Let \(v=(u,a)\in\R\times\R^d\). Using the Fisher representation,
\[
v^\top H_\star v
=
\E_{Y\sim p_\alpha}\!\left[
\Big(
u\big(r_{\mathrm{o}}^\star(Y)-\alpha\big)
+
a^\top r_{\mathrm{n}}^\star(Y)\Sigma^{-1}(Y-\mu_{\mathrm{n}})
\Big)^2
\right].
\]
Define
\[
g_v(y)
:=
u\big(r_{\mathrm{o}}^\star(y)-\alpha\big)
+
a^\top r_{\mathrm{n}}^\star(y)\Sigma^{-1}(y-\mu_{\mathrm{n}}).
\]
Then
\[
v^\top H_\star v=\E_{Y\sim p_\alpha}[g_v(Y)^2]\ge 0.
\]

Suppose now that \(v^\top H_\star v=0\). Then \(g_v(Y)=0\) for \(p_\alpha\)-almost every \(Y\).
Since \(p_\alpha\) is a strictly positive continuous density on \(\R^d\), every nonempty open set has positive \(p_\alpha\)-measure.
Because \(g_v\) is continuous, if there existed \(y_0\in\R^d\) with \(g_v(y_0)\neq 0\), then by continuity there would exist an open neighborhood
\(U\ni y_0\) on which \(g_v\) is bounded away from \(0\), implying
\[
\E_{Y\sim p_\alpha}[g_v(Y)^2]
\ge
\int_U g_v(y)^2p_\alpha(y)\,dy
>
0,
\]
a contradiction.
Therefore
\[
g_v(y)=0
\qquad
\forall y\in\R^d.
\]

We first show that \(u=0\).
Evaluating at \(y=\mu_{\mathrm{n}}\), the second term vanishes, so
\[
0=g_v(\mu_{\mathrm{n}})=u\big(r_{\mathrm{o}}^\star(\mu_{\mathrm{n}})-\alpha\big).
\]
Now
\[
r_{\mathrm{o}}^\star(\mu_{\mathrm{n}})
=
\frac{\alpha\,\phi_\Sigma(\mu_{\mathrm{n}};\mu_{\mathrm{o}})}
{\alpha\,\phi_\Sigma(\mu_{\mathrm{n}};\mu_{\mathrm{o}})+(1-\alpha)\phi_\Sigma(\mu_{\mathrm{n}};\mu_{\mathrm{n}})}.
\]
Since \(\mu_{\mathrm{o}}\neq \mu_{\mathrm{n}}\),
\[
\phi_\Sigma(\mu_{\mathrm{n}};\mu_{\mathrm{o}})
<
\phi_\Sigma(\mu_{\mathrm{n}};\mu_{\mathrm{n}}),
\]
hence \(r_{\mathrm{o}}^\star(\mu_{\mathrm{n}})<\alpha\), so \(r_{\mathrm{o}}^\star(\mu_{\mathrm{n}})-\alpha\neq 0\).
Therefore
\[
u=0.
\]

With \(u=0\), the identity \(g_v(y)=0\) becomes
\[
a^\top r_{\mathrm{n}}^\star(y)\Sigma^{-1}(y-\mu_{\mathrm{n}})=0
\qquad
\forall y\in\R^d.
\]
Because \(r_{\mathrm{n}}^\star(y)>0\) for all \(y\in\R^d\) (both Gaussian components are strictly positive and \(1-\alpha>0\)),
we may divide by \(r_{\mathrm{n}}^\star(y)\) and obtain
\[
a^\top \Sigma^{-1}(y-\mu_{\mathrm{n}})=0
\qquad
\forall y\in\R^d.
\]
Taking \(y=\mu_{\mathrm{n}}+\Sigma a\), we get
\[
0=a^\top \Sigma^{-1}(\Sigma a)=a^\top a=\|a\|^2,
\]
so \(a=0\).

Thus \(v=(u,a)=0\) is the only vector satisfying \(v^\top H_\star v=0\). Therefore \(H_\star\) is positive definite:
\[
H_\star\succ 0.
\]
Consequently, $\mu_\star:=\lambda_{\min}(H_\star)>0.$

We next prove the explicit lower bound on the Hessian inside the ball \(B_\rho(\theta^\star)\).
Fix \(\theta\in B_\rho(\theta^\star)\).
By Weyl's inequality,
\[
\lambda_{\min}\!\big(\nabla^2L(\theta)\big)
\ge
\lambda_{\min}\!\big(\nabla^2L(\theta^\star)\big)
-
\|\nabla^2L(\theta)-\nabla^2L(\theta^\star)\|_2.
\]
Since \(\lambda_{\min}(\nabla^2L(\theta^\star))=\mu_\star\) and the Hessian is \(L_H\)-Lipschitz on \(K\), we obtain
\[
\lambda_{\min}\!\big(\nabla^2L(\theta)\big)
\ge
\mu_\star-L_H\|\theta-\theta^\star\|.
\]
Because \(\theta\in B_\rho(\theta^\star)\) and \(\rho\le \mu_\star/(2L_H)\),
\[
L_H\|\theta-\theta^\star\|\le L_H\rho\le \frac{\mu_\star}{2},
\]
hence
\[
\lambda_{\min}\!\big(\nabla^2L(\theta)\big)\ge \mu_\star-\frac{\mu_\star}{2}=\frac{\mu_\star}{2}.
\]
This proves \eqref{eq:local_hessian_lower}.

We next derive the two local inequalities in \eqref{eq:local_PL_final}.
Fix \(\theta\in B_\rho(\theta^\star)\) and write \(d:=\theta-\theta^\star\).
Since \(\nabla L(\theta^\star)=0\), Taylor's theorem with integral remainder gives
\[
L(\theta)-L(\theta^\star)
=
\int_0^1 (1-s)\, d^\top \nabla^2L(\theta^\star+s d)\,d\,ds.
\]
Because the whole line segment \(\theta^\star+s d\) lies in \(B_\rho(\theta^\star)\), the Hessian lower bound \eqref{eq:local_hessian_lower} applies throughout the segment, so
\[
L(\theta)
=
\int_0^1 (1-s)\, d^\top \nabla^2L(\theta^\star+s d)\,d\,ds
\ge
\int_0^1 (1-s)\,\frac{\mu_\star}{2}\|d\|^2\,ds
=
\frac{\mu_\star}{4}\|d\|^2.
\]
This proves the quadratic-growth inequality.

To prove the PL inequality, we use strong convexity in the form
\[
L(\theta^\star)\ge L(\theta)+\ip{\nabla L(\theta)}{\theta^\star-\theta}+\frac{\mu_\star}{4}\|\theta^\star-\theta\|^2,
\]
which holds because \(L\) is \(\mu_\star/2\)-strongly convex on \(B_\rho(\theta^\star)\).
Since \(L(\theta^\star)=0\), this becomes
\[
L(\theta)\le \ip{\nabla L(\theta)}{\theta-\theta^\star}-\frac{\mu_\star}{4}\|\theta-\theta^\star\|^2.
\]
By Cauchy--Schwarz,
\[
L(\theta)\le \|\nabla L(\theta)\|\,\|\theta-\theta^\star\|-\frac{\mu_\star}{4}\|\theta-\theta^\star\|^2.
\]
The right-hand side is a quadratic function of \(t:=\|\theta-\theta^\star\|\), namely
\[
\|\nabla L(\theta)\|\,t-\frac{\mu_\star}{4}t^2,
\]
whose maximum over \(t\ge 0\) is attained at \(t=2\|\nabla L(\theta)\|/\mu_\star\) and equals
\[
\frac{1}{\mu_\star}\|\nabla L(\theta)\|^2.
\]
Therefore
\[
L(\theta)\le \frac{1}{\mu_\star}\|\nabla L(\theta)\|^2,
\]
which is equivalent to
\[
\|\nabla L(\theta)\|^2\ge \mu_\star L(\theta).
\]
This proves the local PL bound.

We now turn to the gradient-flow estimates.
Let \(\theta(t)\) solve \(\dot\theta(t)=-\nabla L(\theta(t))\).
By the chain rule,
\[
\frac{d}{dt}L(\theta(t))
=
\ip{\nabla L(\theta(t))}{\dot\theta(t)}
=
-\|\nabla L(\theta(t))\|^2
\le 0.
\]
Thus \(L(\theta(t))\) is nonincreasing along the flow.

We next prove that the trajectory stays inside \(B_\rho(\theta^\star)\).
Assume \(\theta(0)\in B_\rho(\theta^\star)\) and \(L(\theta(0))\le \varepsilon_{\mathrm{loc}}=\mu_\star\rho^2/8\).
On the boundary of the ball, the quadratic-growth bound gives
\[
L(\theta)\ge \frac{\mu_\star}{4}\rho^2=2\varepsilon_{\mathrm{loc}}
\qquad
\text{whenever }\|\theta-\theta^\star\|=\rho.
\]
Since \(L(\theta(t))\le L(\theta(0))\le \varepsilon_{\mathrm{loc}}\) for all \(t\ge 0\), the trajectory can never reach a point with
\(\|\theta(t)-\theta^\star\|=\rho\).
Hence \(\theta(t)\in B_\rho(\theta^\star)\) for all \(t\ge 0\).

Because the entire trajectory remains in \(B_\rho(\theta^\star)\), the local PL inequality applies for all \(t\ge 0\).
Combining it with the energy identity yields
\[
\frac{d}{dt}L(\theta(t))
=
-\|\nabla L(\theta(t))\|^2
\le
-\mu_\star L(\theta(t)).
\]
Grönwall's inequality then gives
\[
L(\theta(t))\le L(\theta(0))\,e^{-\mu_\star t},
\]
which is \eqref{eq:exp_loss_final}.

Finally, the quadratic-growth inequality implies
\[
L(\theta(t))\ge \frac{\mu_\star}{4}\|\theta(t)-\theta^\star\|^2.
\]
Combining this with \eqref{eq:exp_loss_final} yields
\[
\frac{\mu_\star}{4}\|\theta(t)-\theta^\star\|^2
\le
L(\theta(0))\,e^{-\mu_\star t},
\]
and therefore
\[
\|\theta(t)-\theta^\star\|
\le
\frac{2}{\sqrt{\mu_\star}}\sqrt{L(\theta(0))}\,e^{-\mu_\star t/2},
\]
which is \eqref{eq:exp_param_final}.
\end{proof}

\subsection{Proof of Lemma~\ref{lem:replay_reverseKL_consolidated}}
% \ReplayRKL*
\begin{proof}[Proof of Lemma~\ref{lem:replay_reverseKL_consolidated}] We establish both parts of the statement in order. 

\paragraph{Part (A).}
First expand the replay mixture using $q_{\beta,m_{\mathrm{n}}}=\beta p_{\mathrm{o}}+(1-\beta)\gauss{\cdot}{m_{\mathrm{n}}}{\Sigma}$:
\[
b_{\lambda,\beta,m_{\mathrm{n}}}
=(1-\lambda)\big(\beta p_{\mathrm{o}}+(1-\beta)\gauss{\cdot}{m_{\mathrm{n}}}{\Sigma}\big)+\lambda p_{\mathrm{o}}
=\big(\lambda+(1-\lambda)\beta\big)p_{\mathrm{o}}+(1-\lambda)(1-\beta)\gauss{\cdot}{m_{\mathrm{n}}}{\Sigma}.
\]
Set $\tilde\beta:=\lambda+(1-\lambda)\beta\in(0,1)$, so that $1-\tilde\beta=(1-\lambda)(1-\beta)$, which proves
$b_{\lambda,\beta,m_{\mathrm{n}}}=q_{\tilde\beta,m_{\mathrm{n}}}$ and $\tilde\beta\ge\lambda$.

Next, observe that $b_{\lambda,\beta,m_{\mathrm{n}}}=(1-\lambda)q_{\beta,m_{\mathrm{n}}}+\lambda p_{\mathrm{o}}\ge (1-\lambda)q_{\beta,m_{\mathrm{n}}}$
pointwise. Therefore for all $y$,
\[
0\le w_\lambda(y)=\frac{q_{\beta,m_{\mathrm{n}}}(y)}{b_{\lambda,\beta,m_{\mathrm{n}}}(y)}
\le \frac{q_{\beta,m_{\mathrm{n}}}(y)}{(1-\lambda)q_{\beta,m_{\mathrm{n}}}(y)}=\frac{1}{1-\lambda}.
\]

For unbiasedness, let $h$ be integrable under $q_{\beta,m_{\mathrm{n}}}$ (equivalently, $w_\lambda h$ integrable under $b_{\lambda,\beta,m_{\mathrm{n}}}$).
Then
\[
\E_{b_{\lambda,\beta,m_{\mathrm{n}}}}\big[w_\lambda(Y)h(Y)\big]
=\int b_{\lambda,\beta,m_{\mathrm{n}}}(y)\frac{q_{\beta,m_{\mathrm{n}}}(y)}{b_{\lambda,\beta,m_{\mathrm{n}}}(y)}h(y)\,dy
=\int q_{\beta,m_{\mathrm{n}}}(y)h(y)\,dy
=\E_{q_{\beta,m_{\mathrm{n}}}}\big[h(Y)\big],
\]
which is \eqref{eq:unbiased_general_consolidated}.  Finally, if $\E_b[\|h(Y)\|^2]<\infty$, then using the uniform bound $w_\lambda^2\le (1-\lambda)^{-2}$,
\[
\E_{b_{\lambda,\beta,m_{\mathrm{n}}}}\big[\|w_\lambda(Y)h(Y)\|^2\big]
\le \frac{1}{(1-\lambda)^2}\,\E_{b_{\lambda,\beta,m_{\mathrm{n}}}}\big[\|h(Y)\|^2\big],
\]
which is \eqref{eq:second_moment_bound_consolidated}.

\paragraph{Part (B).}
Under $b_{\lambda,\beta,m_{\mathrm{n}}}=q_{\tilde\beta,m_{\mathrm{n}}}$, the standard mixture generative model yields latent indicators
$Z_i\overset{i.i.d.}{\sim}\mathrm{Bernoulli}(\tilde\beta)$ with $\tilde\beta\ge\lambda$.  Therefore
\[
\Pr\Big(\sum_{i=1}^N Z_i=0\Big)=(1-\tilde\beta)^N,
\]
and substituting $1-\tilde\beta=(1-\lambda)(1-\beta)$ gives \eqref{eq:no_old_prob_consolidated} and the upper bound $(1-\tilde\beta)^N\le (1-\lambda)^N$.

For \eqref{eq:chernoff_oldcount_consolidated}, let $S:=\sum_{i=1}^N Z_i\sim\mathrm{Binomial}(N,\tilde\beta)$, so $\E[S]=N\tilde\beta\ge N\lambda$.
The multiplicative Chernoff bound implies
\[
\Pr\big(S\le (1-\delta)\E[S]\big)\le \exp\!\Big(-\frac{\delta^2}{2}\E[S]\Big).
\]
Taking $\delta=1/2$ yields $\Pr(S\le \tfrac12 N\tilde\beta)\le \exp(-N\tilde\beta/8)\le \exp(-N\lambda/8)$.
Since $\tfrac{\lambda}{2}N\le \tfrac{\tilde\beta}{2}N$, we have
\[
\Pr\Big(S\le \tfrac{\lambda}{2}N\Big)\le \Pr\Big(S\le \tfrac{\tilde\beta}{2}N\Big)\le \exp\!\Big(-\tfrac{\lambda}{8}N\Big),
\]
which is \eqref{eq:chernoff_oldcount_consolidated}.
\end{proof}

\subsection{Proof of Theorem~\ref{thm:sdft_demo_ema}}
\begin{proof}[Proof of Theorem~\ref{thm:sdft_demo_ema}]
Fix \(y=(\alpha,\nu)\in K\) and write \(F_y(x)=\KL(q_x\|p_y)\) for \(x=(\beta,m)\).
Because \(q_y\equiv p_y\), we have \(F_y(y)=0\), and since KL is nonnegative, \(y\) is a global minimizer of \(F_y\).
As established earlier for equal-covariance Gaussian mixtures, \(F_y\) is \(C^2\) in a neighborhood of \(y\), and the Hessian at \(y\) equals the Fisher
information of the parameterization \(x\mapsto q_x\) under \(Y\sim p_y\), hence \(\nabla^2F_y(y)\succ 0\).
By continuity of \(y\mapsto \nabla^2F_y(y)\) and compactness of \(K\), there exists \(\mu>0\) with
\(\lambda_{\min}(\nabla^2F_y(y))\ge \mu\) uniformly over \(y\in K\).
Similarly, by smooth dependence of \(F_y\) on \((x,y)\) and compactness, there exist \(r_0>0\) and \(L_H<\infty\) such that the Hessian is \(L_H\)-Lipschitz on
\(B_{r_0}(y)\) uniformly in \(y\in K\).
Define \(\rho=\min\{r_0,\mu/(2L_H)\}\).
Then for any \(y\in K\) and any \(x\in B_\rho(y)\), Weyl's inequality gives
\[
\lambda_{\min}\!\big(\nabla^2F_y(x)\big)
\ge
\lambda_{\min}\!\big(\nabla^2F_y(y)\big)
-
\|\nabla^2F_y(x)-\nabla^2F_y(y)\|_2
\ge
\mu-L_H\|x-y\|
\ge
\frac{\mu}{2}.
\]
Since the set \(\{(x,y): y\in K,\ \|x-y\|\le \rho\}\) is compact and \((x,y)\mapsto \nabla^2F_y(x)\) is continuous, there exists
\(M<\infty\) such that \(\lambda_{\max}(\nabla^2F_y(x))\le M\) uniformly on that set.

We now prove Part (A).
Fix \(t\ge 0\) and write \(y_t=\widetilde{\nu}_t\) and \(x_t=\widetilde{m}_t\).
Assume inductively that \(x_t\in B_\rho(y_t)\).
Since \(y_t\) is the minimizer of \(F_{y_t}\), we have \(\nabla F_{y_t}(y_t)=0\).
The mean-value formula for gradients yields
\[
\nabla F_{y_t}(x_t)-\nabla F_{y_t}(y_t)
=
\left(\int_0^1 \nabla^2F_{y_t}\big(y_t+s(x_t-y_t)\big)\,ds\right)(x_t-y_t).
\]
Define
\[
A_t:=\int_0^1 \nabla^2F_{y_t}\big(y_t+s(x_t-y_t)\big)\,ds.
\]
Because the segment \(\{y_t+s(x_t-y_t): s\in[0,1]\}\subset B_\rho(y_t)\), we have
\[
\lambda_{\min}(A_t)\ge \frac{\mu}{2},
\qquad
\lambda_{\max}(A_t)\le M.
\]
Using the student update \(\,x_{t+1}=x_t-\gamma\nabla F_{y_t}(x_t)\,\) with \(0<\gamma\le 1/M\) and \(\nabla F_{y_t}(y_t)=0\),
\[
x_{t+1}-y_t
=
x_t-y_t-\gamma(\nabla F_{y_t}(x_t)-\nabla F_{y_t}(y_t))
=
(I-\gamma A_t)(x_t-y_t).
\]
Since \(A_t\) is symmetric with spectrum in \([\mu/2,M]\), we have
\[
\|I-\gamma A_t\|_2=\max_{\lambda\in[\mu/2,M]}|1-\gamma\lambda|
\le 1-\frac{\gamma\mu}{2}
=:q,
\]
which proves \eqref{eq:sdft_contract_to_current_teacher}.

We next derive \eqref{eq:sdft_teacher_to_demo_recursion} from the teacher update.
Let \(\widetilde{\nu}(c)\) be fixed.
From \eqref{eq:sdft_teacher_update},
\[
\widetilde{\nu}_{t+1}-\widetilde{\nu}(c)
=
(1-\zeta)\big(\widetilde{\nu}_t-\widetilde{\nu}(c)\big)
+\zeta(1-\lambda)\big(\widetilde{m}_{t+1}-\widetilde{\nu}(c)\big).
\]
Rewrite \(\widetilde{m}_{t+1}-\widetilde{\nu}(c)=(\widetilde{m}_{t+1}-\widetilde{\nu}_t)+(\widetilde{\nu}_t-\widetilde{\nu}(c))\) to get
\[
\widetilde{\nu}_{t+1}-\widetilde{\nu}(c)
=
(1-\zeta\lambda)\big(\widetilde{\nu}_t-\widetilde{\nu}(c)\big)
+\zeta(1-\lambda)\big(\widetilde{m}_{t+1}-\widetilde{\nu}_t\big).
\]
Taking norms yields \eqref{eq:sdft_teacher_to_demo_recursion}.
If \(\lambda>0\), then \(1-\zeta\lambda\in(0,1)\) and \eqref{eq:sdft_contract_to_current_teacher} implies
\(\|\widetilde{m}_{t+1}-\widetilde{\nu}_t\|\to 0\); thus the recursion \eqref{eq:sdft_teacher_to_demo_recursion} implies
\(\|\widetilde{\nu}_{t}-\widetilde{\nu}(c)\|\to 0\), and hence also \(\|\widetilde{m}_t-\widetilde{\nu}(c)\|\to 0\).

We now prove Part (B).
Define
\[
G(\beta,m;\alpha,\nu)
:=
\nabla_{m_{\mathrm{o}}}\widetilde L(\beta,\mu_{\mathrm{o}},m;\alpha,\nu).
\]
At the teacher-matched point \((\beta,m)=(\alpha,\nu)\), we have
\(q_{\alpha,\mu_{\mathrm{o}},\nu}\equiv p_{\alpha,\nu}\), hence \(\widetilde L(\alpha,\mu_{\mathrm{o}},\nu;\alpha,\nu)=0\), and differentiating shows
\(G(\alpha,\nu;\alpha,\nu)=0\).
The map \((\beta,m,\alpha,\nu)\mapsto G(\beta,m;\alpha,\nu)\) is continuous and \(C^1\) on the compact set
\[
\mathcal{C}:=\Big\{(\beta,m,\alpha,\nu):\ (\alpha,\nu)\in K,\ \|(\beta,m)-(\alpha,\nu)\|\le \rho\Big\},
\]
so its Jacobian with respect to \((\beta,m,\alpha,\nu)\) is bounded on \(\mathcal{C}\).
Thus there exists \(L_{\mathrm{old}}<\infty\) such that for all \((\beta,m,\alpha,\nu)\in\mathcal{C}\),
\[
\|G(\beta,m;\alpha,\nu)-G(\alpha,\nu;\alpha,\nu)\|
\le
L_{\mathrm{old}}\Big(\|(\beta,m)-(\alpha,\nu)\|+\|(\alpha,\nu)-\widetilde{\nu}(c)\|\Big).
\]
Using \(G(\alpha,\nu;\alpha,\nu)=0\) and substituting \((\beta,m,\alpha,\nu)=(\beta_t,m_t,\alpha_t,\nu_t)\) yields
\eqref{eq:sdft_old_grad_lipschitz_track}.
If \(\lambda>0\), then \(\|\widetilde{m}_t-\widetilde{\nu}_t\|\to 0\) and \(\|\widetilde{\nu}_t-\widetilde{\nu}(c)\|\to 0\), so the right-hand side of
\eqref{eq:sdft_old_grad_lipschitz_track} is summable, proving \eqref{eq:sdft_old_grad_summable}.

Finally, since \(\widetilde{m}_t=(\beta_t,m_t)\to \widetilde{\nu}(c)=(\alpha_c,\nu_c)\), let \(\widetilde{\nu}^\star\in\R^{d+1}\) be any target state and note that
for all \(t\),
\[
\big\|\widetilde{m}_t-\widetilde{\nu}^\star\big\|
\le
\big\|\widetilde{m}_t-\widetilde{\nu}(c)\big\|
+
\big\|\widetilde{\nu}(c)-\widetilde{\nu}^\star\big\|,
\]
and also, by the reverse triangle inequality,
\[
\big\|\widetilde{m}_t-\widetilde{\nu}^\star\big\|
\ge
\big\|\widetilde{\nu}(c)-\widetilde{\nu}^\star\big\|
-
\big\|\widetilde{m}_t-\widetilde{\nu}(c)\big\|.
\]
Taking \(\limsup\) in the first inequality and \(\liminf\) in the second, and using
\(\|\widetilde{m}_t-\widetilde{\nu}(c)\|\to 0\), yields the limit identity \eqref{eq:sdft_target_error_limit}.
\end{proof}

\subsection{Proof of Lemma~\ref{lem:ttt_disjoint}}

\begin{proof}[Proof of Lemma~\ref{lem:ttt_disjoint}]
We prove the three statements in order.

\paragraph{Case 1: Compute $J_\eta(q_\beta)$ and $\KL(q_\beta\|q_{\beta_0})$.}
Under the disjoint-support assumption, $q_\beta=\beta p_{\mathrm{o}}$ on $A_{\mathrm{o}}$ and $q_\beta=(1-\beta)p_{\mathrm{n}}$ on $A_{\mathrm{n}}$.
Since the reward is constant on each region,
\begin{align*}
\E_{Y\sim q_\beta}[e^{\eta r(Y)}]
&=
\int_{A_{\mathrm{o}}} q_\beta(y)e^{\eta u_{\mathrm{o}}}\,dy
+
\int_{A_{\mathrm{n}}} q_\beta(y)e^{\eta u_{\mathrm{n}}}\,dy\\
&=
\beta e^{\eta u_{\mathrm{o}}}\int_{A_{\mathrm{o}}}p_{\mathrm{o}}(y)\,dy
+
(1-\beta)e^{\eta u_{\mathrm{n}}}\int_{A_{\mathrm{n}}}p_{\mathrm{n}}(y)\,dy\\
&=
\beta e^{\eta u_{\mathrm{o}}}+(1-\beta)e^{\eta u_{\mathrm{n}}},
\end{align*}
which proves the first identity in \eqref{eq:ttt_disjoint_J_short} after taking logarithms.

For the KL term, on $A_{\mathrm{o}}$ we have
\[
\log\frac{q_\beta(y)}{q_{\beta_0}(y)}
=
\log\frac{\beta p_{\mathrm{o}}(y)}{\beta_0 p_{\mathrm{o}}(y)}
=
\log\frac{\beta}{\beta_0},
\]
and on $A_{\mathrm{n}}$ we have
\[
\log\frac{q_\beta(y)}{q_{\beta_0}(y)}
=
\log\frac{(1-\beta)p_{\mathrm{n}}(y)}{(1-\beta_0)p_{\mathrm{n}}(y)}
=
\log\frac{1-\beta}{1-\beta_0}.
\]
Therefore
\[
\KL(q_\beta\|q_{\beta_0})
=
\int_{A_{\mathrm{o}}}\beta p_{\mathrm{o}}(y)\log\frac{\beta}{\beta_0}\,dy
+
\int_{A_{\mathrm{n}}}(1-\beta)p_{\mathrm{n}}(y)\log\frac{1-\beta}{1-\beta_0}\,dy,
\]
which simplifies to the second identity in \eqref{eq:ttt_disjoint_J_short}.

\paragraph{Case 2: Unanchored case $\lambda_{\mathrm{ref}}=0$.}
Set
\[
a:=e^{\eta u_{\mathrm{o}}},\qquad b:=e^{\eta u_{\mathrm{n}}}.
\]
Then
\[
J_\eta(q_\beta)=\log\!\big(b+\beta(a-b)\big).
\]
This is the logarithm of an affine function of $\beta$, so its monotonicity is determined by the sign of $a-b$:
if $u_{\mathrm{n}}>u_{\mathrm{o}}$, then $b>a$, so $a-b<0$ and the affine term is strictly decreasing in $\beta$, hence the unique maximizer is $\beta^\star=0$;
if $u_{\mathrm{o}}>u_{\mathrm{n}}$, the same argument gives $\beta^\star=1$; if $u_{\mathrm{o}}=u_{\mathrm{n}}$, then $a=b$ and $J_\eta$ is constant.

\paragraph{Case 3: Anchored case $\lambda_{\mathrm{ref}}>0$.}
Let
\[
F(\beta):=J_\eta(q_\beta)=\log\!\big(b+\beta(a-b)\big),
\qquad
G(\beta):=\KL(q_\beta\|q_{\beta_0}),
\qquad
H(\beta):=F(\beta)-\lambda_{\mathrm{ref}}G(\beta).
\]
For $\beta\in(0,1)$,
\[
F'(\beta)=\frac{a-b}{b+\beta(a-b)},
\qquad
F''(\beta)=-\frac{(a-b)^2}{(b+\beta(a-b))^2},
\]
and
\[
G'(\beta)=\log\frac{\beta}{\beta_0}-\log\frac{1-\beta}{1-\beta_0},
\qquad
G''(\beta)=\frac{1}{\beta}+\frac{1}{1-\beta}.
\]
If $u_{\mathrm{o}}\neq u_{\mathrm{n}}$, then $a\neq b$, so $F''(\beta)<0$ on $(0,1)$, while $G''(\beta)>0$ on $(0,1)$.
Thus
\[
H''(\beta)=F''(\beta)-\lambda_{\mathrm{ref}}G''(\beta)<0
\qquad\forall \beta\in(0,1),
\]
so $H$ is strictly concave.

To show the maximizer is interior, note that $F'(\beta)$ remains finite on $[0,1]$, whereas
\[
\lim_{\beta\downarrow 0}G'(\beta)=-\infty,
\qquad
\lim_{\beta\uparrow 1}G'(\beta)=+\infty.
\]
Hence
\[
\lim_{\beta\downarrow 0}H'(\beta)=+\infty,
\qquad
\lim_{\beta\uparrow 1}H'(\beta)=-\infty.
\]
By continuity of $H'$, there exists $\beta^\star\in(0,1)$ with $H'(\beta^\star)=0$, and by strict concavity this $\beta^\star$ is unique.

Finally, if $u_{\mathrm{o}}=u_{\mathrm{n}}$, then $F$ is constant in $\beta$, so maximizing $H$ is equivalent to minimizing $G$.
Since $G$ is strictly convex and $G'(\beta_0)=0$, its unique minimizer is $\beta_0$.
\end{proof}

\subsection{Proof of Theorem~\ref{thm:ttt_gaussian}}

\begin{proof}[Proof of Theorem~\ref{thm:ttt_gaussian}]
We prove parts (A) and (B) separately.

\paragraph{Proof of Part (A):} We start by computing the region probabilities under $p_{\mathrm{o}}$ and $p_{\mathrm{n}}$. Let $\Delta:=\mu_{\mathrm{n}}-\mu_{\mathrm{o}}$ and define
\[
T(Y):=\Delta^\top\Sigma^{-1}\Big(Y-\frac{\mu_{\mathrm{o}}+\mu_{\mathrm{n}}}{2}\Big).
\]
By definition, $A_{\mathrm{n}}=\{T(Y)\ge 0\}$ and $A_{\mathrm{o}}=\{T(Y)<0\}$.

If $Y\sim p_{\mathrm{o}}=\mathcal{N}(\mu_{\mathrm{o}},\Sigma)$, then
\[
\E[T(Y)]
=
\Delta^\top\Sigma^{-1}\Big(\mu_{\mathrm{o}}-\frac{\mu_{\mathrm{o}}+\mu_{\mathrm{n}}}{2}\Big)
=
-\frac12\,\Delta^\top\Sigma^{-1}\Delta
=
-\frac{\delta^2}{2},
\]
and
\[
\mathrm{Var}(T(Y))
=
\Delta^\top\Sigma^{-1}\Sigma\Sigma^{-1}\Delta
=
\delta^2.
\]
Therefore
\[
\Pr_{Y\sim p_{\mathrm{o}}}(A_{\mathrm{n}})
=
\Pr\Big(\mathcal{N}(-\delta^2/2,\delta^2)\ge 0\Big)
=
\Phi\!\Big(-\frac{\delta}{2}\Big)
=
\gamma.
\]
Hence
\[
\Pr_{p_{\mathrm{o}}}(A_{\mathrm{o}})=1-\gamma.
\]

Similarly, if $Y\sim p_{\mathrm{n}}=\mathcal{N}(\mu_{\mathrm{n}},\Sigma)$, then
\[
\E[T(Y)]=+\frac{\delta^2}{2},
\qquad
\mathrm{Var}(T(Y))=\delta^2,
\]
so
\[
\Pr_{p_{\mathrm{n}}}(A_{\mathrm{n}})
=
\Phi\!\Big(\frac{\delta}{2}\Big)
=
1-\gamma,
\qquad
\Pr_{p_{\mathrm{n}}}(A_{\mathrm{o}})=\gamma.
\]

We next compute $J_\eta(q_\beta)$ and its derivatives. Since $q_\beta=\beta p_{\mathrm{o}}+(1-\beta)p_{\mathrm{n}}$, the probability of $A_{\mathrm{o}}$ under $q_\beta$ is
\[
\Pr_{q_\beta}(A_{\mathrm{o}})
=
\beta(1-\gamma)+(1-\beta)\gamma
=
\gamma+\beta(1-2\gamma)
=
\gamma+\kappa\beta.
\]
Thus
\[
\Pr_{q_\beta}(A_{\mathrm{n}})=1-\gamma-\kappa\beta.
\]
Because $r(y)=u_{\mathrm{o}}$ on $A_{\mathrm{o}}$ and $r(y)=u_{\mathrm{n}}$ on $A_{\mathrm{n}}$,
\[
\E_{Y\sim q_\beta}[e^{\eta r(Y)}]
=
e^{\eta u_{\mathrm{o}}}(\gamma+\kappa\beta)
+
e^{\eta u_{\mathrm{n}}}(1-\gamma-\kappa\beta),
\]
which proves the formula for $J_\eta(q_\beta)$.

Differentiating gives
\[
J_\eta'(q_\beta)
=
\frac{\kappa(e^{\eta u_{\mathrm{o}}}-e^{\eta u_{\mathrm{n}}})}
{e^{\eta u_{\mathrm{o}}}(\gamma+\kappa\beta)+e^{\eta u_{\mathrm{n}}}(1-\gamma-\kappa\beta)},
\]
and
\[
J_\eta''(q_\beta)
=
-\frac{\kappa^2(e^{\eta u_{\mathrm{o}}}-e^{\eta u_{\mathrm{n}}})^2}
{\Big(e^{\eta u_{\mathrm{o}}}(\gamma+\kappa\beta)+e^{\eta u_{\mathrm{n}}}(1-\gamma-\kappa\beta)\Big)^2}
<0
\]
whenever $u_{\mathrm{o}}\neq u_{\mathrm{n}}$.

Next, we analyze the KL anchor. Let
\[
h(y):=p_{\mathrm{o}}(y)-p_{\mathrm{n}}(y).
\]
Then $q_\beta(y)=q_{\beta_0}(y)+(\beta-\beta_0)h(y)$ and $q_\beta'(y)=h(y)$.
Define
\[
D(\beta)=\KL(q_\beta\|q_{\beta_0})=\int q_\beta(y)\log\frac{q_\beta(y)}{q_{\beta_0}(y)}\,dy.
\]
Using $\int h(y)\,dy=0$ and differentiating under the integral sign,
\[
D'(\beta)=\int h(y)\log\frac{q_\beta(y)}{q_{\beta_0}(y)}\,dy,
\qquad
D''(\beta)=\int \frac{h(y)^2}{q_\beta(y)}\,dy.
\]
Because $q_\beta(y)>0$ for all $y$ and $h\not\equiv 0$, we have $D''(\beta)>0$ for $\beta\in(0,1)$.
Thus $D$ is strictly convex.
Also $D'(\beta_0)=0$, so strict convexity implies
\[
D'(\beta)<0\quad\text{for }\beta<\beta_0,
\qquad
D'(\beta)>0\quad\text{for }\beta>\beta_0.
\]
In particular, $D'(0)<0$.

Finally, we characterize the maximizer of the anchored objective. Assume $u_{\mathrm{n}}>u_{\mathrm{o}}$, so that $J_\eta'(q_\beta)<0$ for all $\beta\in[0,1]$.
Define
\[
H(\beta):=\mathcal{L}_{\eta,\lambda_{\mathrm{ref}}}(q_\beta)=J_\eta(q_\beta)-\lambda_{\mathrm{ref}}D(\beta).
\]
Since $J_\eta''<0$ and $D''>0$, we have
\[
H''(\beta)=J_\eta''(q_\beta)-\lambda_{\mathrm{ref}}D''(\beta)<0
\qquad\forall\beta\in(0,1),
\]
so $H$ is strictly concave.

Now
\[
H'(0)=J_\eta'(q_\beta)\big|_{\beta=0}-\lambda_{\mathrm{ref}}D'(0).
\]
Because $J_\eta'(q_\beta)|_{\beta=0}<0$ and $D'(0)<0$, define
\[
\lambda_{\mathrm{crit}}^{(\mathrm{new})}:=\frac{-J_\eta'(q_\beta)|_{\beta=0}}{-D'(0)}>0.
\]
If $0\le \lambda_{\mathrm{ref}}\le \lambda_{\mathrm{crit}}^{(\mathrm{new})}$, then $H'(0)\le 0$.
Since $H'$ is strictly decreasing, we have $H'(\beta)<0$ for all $\beta\in(0,1)$, so the unique maximizer is $\beta^\star=0$.

If $\lambda_{\mathrm{ref}}>\lambda_{\mathrm{crit}}^{(\mathrm{new})}$, then $H'(0)>0$.
At $\beta=\beta_0$ we have $D'(\beta_0)=0$, so
\[
H'(\beta_0)=J_\eta'(q_{\beta_0})<0.
\]
By continuity of $H'$ and strict concavity of $H$, there is a unique root of $H'$ in $(0,\beta_0)$, which is the unique maximizer of $H$.
This proves part (A).

\paragraph{Proof of Part (B):} Consider the full family
\[
q_{\beta,m_{\mathrm{o}},m_{\mathrm{n}}}(y)
=
\beta\,\gauss{y}{m_{\mathrm{o}}}{\Sigma}+(1-\beta)\,\gauss{y}{m_{\mathrm{n}}}{\Sigma},
\]
and define
\[
J_\eta(\beta,m_{\mathrm{o}},m_{\mathrm{n}})
=
\log\E_{Y\sim q_{\beta,m_{\mathrm{o}},m_{\mathrm{n}}}}[e^{\eta r(Y)}].
\]
Let
\[
M:=\E_{Y\sim q_{\beta,m_{\mathrm{o}},m_{\mathrm{n}}}}[e^{\eta r(Y)}],
\qquad
w_\eta^{(q)}(y):=\frac{e^{\eta r(y)}}{M}.
\]
By the standard score identity,
\[
\nabla_\theta J_\eta(q_\theta)
=
\E_{Y\sim q_\theta}\big[w_\eta^{(q_\theta)}(Y)\,\nabla_\theta\log q_\theta(Y)\big].
\]
For the old mean parameter,
\[
\nabla_{m_{\mathrm{o}}}\log q_{\beta,m_{\mathrm{o}},m_{\mathrm{n}}}(y)
=
r_{\mathrm{o}}(y)\,\Sigma^{-1}(y-m_{\mathrm{o}}),
\qquad
r_{\mathrm{o}}(y):=\frac{\beta\,\gauss{y}{m_{\mathrm{o}}}{\Sigma}}{q_{\beta,m_{\mathrm{o}},m_{\mathrm{n}}}(y)}.
\]
Hence, at $m_{\mathrm{o}}=\mu_{\mathrm{o}}$,
\[
\nabla_{m_{\mathrm{o}}}J_\eta(\beta,\mu_{\mathrm{o}},m_{\mathrm{n}})
=
\E_{Y\sim q_{\beta,\mu_{\mathrm{o}},m_{\mathrm{n}}}}\!\Big[w_\eta^{(q)}(Y)\,r_{\mathrm{o}}(Y)\,\Sigma^{-1}(Y-\mu_{\mathrm{o}})\Big].
\]
Using
\[
q_{\beta,\mu_{\mathrm{o}},m_{\mathrm{n}}}(y)\,r_{\mathrm{o}}(y)=\beta\,\gauss{y}{\mu_{\mathrm{o}}}{\Sigma}=\beta\,p_{\mathrm{o}}(y),
\]
this becomes
\[
\nabla_{m_{\mathrm{o}}}J_\eta(\beta,\mu_{\mathrm{o}},m_{\mathrm{n}})
=
\beta\,\E_{Y\sim p_{\mathrm{o}}}\!\Big[w_\eta^{(q)}(Y)\,\Sigma^{-1}(Y-\mu_{\mathrm{o}})\Big].
\]
Since $r$ is constant on $A_{\mathrm{o}}$ and $A_{\mathrm{n}}$, the weight is constant on each region:
\[
w_\eta^{(q)}(y)=w_{\mathrm{o}}\ \text{on }A_{\mathrm{o}},
\qquad
w_\eta^{(q)}(y)=w_{\mathrm{n}}\ \text{on }A_{\mathrm{n}}.
\]
Thus
\[
\nabla_{m_{\mathrm{o}}}J_\eta(\beta,\mu_{\mathrm{o}},m_{\mathrm{n}})
=
\beta\Big(
w_{\mathrm{o}}\,\E_{p_{\mathrm{o}}}[\Sigma^{-1}(Y-\mu_{\mathrm{o}})\mathbf{1}\{Y\in A_{\mathrm{o}}\}]
+
w_{\mathrm{n}}\,\E_{p_{\mathrm{o}}}[\Sigma^{-1}(Y-\mu_{\mathrm{o}})\mathbf{1}\{Y\in A_{\mathrm{n}}\}]
\Big).
\]
Since $\E_{p_{\mathrm{o}}}[\Sigma^{-1}(Y-\mu_{\mathrm{o}})]=0$, the two truncated expectations are negatives of one another, giving
\[
\nabla_{m_{\mathrm{o}}}J_\eta(\beta,\mu_{\mathrm{o}},m_{\mathrm{n}})
=
\beta(w_{\mathrm{n}}-w_{\mathrm{o}})\,
\E_{p_{\mathrm{o}}}\!\big[\Sigma^{-1}(Y-\mu_{\mathrm{o}})\mathbf{1}\{Y\in A_{\mathrm{n}}\}\big].
\]
By Lemma~\ref{lem:trunc_gauss_bayes_halfspace}, we have
\[
\E_{p_{\mathrm{o}}}\!\big[\Sigma^{-1}(Y-\mu_{\mathrm{o}})\mathbf{1}\{Y\in A_{\mathrm{n}}\}\big]
=
\frac{\varphi(\delta/2)}{\delta}\,\Sigma^{-1}(\mu_{\mathrm{n}}-\mu_{\mathrm{o}}),
\]
where $\varphi(t)=(2\pi)^{-1/2}e^{-t^2/2}$.
Substituting proves \eqref{eq:ttt_oldmean_grad_explicit_streamlined}.

Finally, if $|u_{\mathrm{o}}|,|u_{\mathrm{n}}|\le R$, then
$e^{\eta r(y)}\in[e^{-\eta R},e^{\eta R}]$ pointwise, so
\[
M\in[e^{-\eta R},e^{\eta R}],
\qquad
w_{\mathrm{o}},w_{\mathrm{n}}\in[e^{-2\eta R},e^{2\eta R}],
\]
and therefore
\[
|w_{\mathrm{n}}-w_{\mathrm{o}}|\le e^{2\eta R}-e^{-2\eta R}.
\]
Using $\varphi(\delta/2)=(2\pi)^{-1/2}e^{-\delta^2/8}$ in \eqref{eq:ttt_oldmean_grad_explicit_streamlined} yields
\eqref{eq:ttt_oldmean_grad_bound_streamlined}.

If the full objective is evaluated at a synchronized point $q_0=q_{\beta,\mu_{\mathrm{o}},m_{\mathrm{n}}}$, then
\[
\nabla_{m_{\mathrm{o}}}\KL(q_{\beta,m_{\mathrm{o}},m_{\mathrm{n}}}\|q_0)\Big|_{m_{\mathrm{o}}=\mu_{\mathrm{o}}}=0,
\]
because $\log(q/q_0)\equiv 0$ at that point and $\int \nabla_{m_{\mathrm{o}}}q=0$.
Hence the same bound applies to the full objective.
\end{proof}

\begin{lemma}[Truncated Gaussian moment along the Bayes halfspace]\label{lem:trunc_gauss_bayes_halfspace}
In the setting of Theorem~\ref{thm:ttt_gaussian}, we have
\[
\E\Big[\Sigma^{-1}(Y-\mu_{\mathrm{o}})\,\mathbf{1}\{Y\in A_{\mathrm{n}}\}\Big]
=
\frac{\varphi(\delta/2)}{\delta}\,\Sigma^{-1}(\mu_{\mathrm{n}}-\mu_{\mathrm{o}}),
\]
where $\varphi(t):=(2\pi)^{-1/2}e^{-t^2/2}$ is the standard normal density.
\end{lemma}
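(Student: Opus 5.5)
We reduce the statement to a one-dimensional computation along the discriminant direction. Set $\Delta:=\mu_{\mathrm{n}}-\mu_{\mathrm{o}}$, $Y\sim p_{\mathrm{o}}$, and whiten: $Z:=\Sigma^{-1/2}(Y-\mu_{\mathrm{o}})\sim\mathcal{N}(0,I_d)$. Then $\Sigma^{-1}(Y-\mu_{\mathrm{o}})=\Sigma^{-1/2}Z$. As computed in the proof of Theorem~\ref{thm:ttt_gaussian}(A), the statistic $T(Y)=\Delta^\top\Sigma^{-1}(Y-\tfrac{\mu_{\mathrm{o}}+\mu_{\mathrm{n}}}{2})$ equals $v^\top Z-\delta^2/2$ with $v:=\Sigma^{-1/2}\Delta$ and $\|v\|=\delta$. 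Hence $A_{\mathrm{n}}=\{U\ge \delta/2\}$, where $U:=e^\top Z$ and $e:=v/\delta$ is a unit vector.

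Next, we decompose $Z=Ue+Z_\perp$, where $Z_\perp$ is the projection onto $e^\perp$. Since $Z_\perp$ is independent of $U$ and has mean zero, we get $\E[Z\,\mathbf{1}\{U\ge\delta/2\}]=e\,\E[U\mathbf{1}\{U\ge\delta/2\}]$. The scalar factor is the standard truncated normal moment
\[
\int_{\delta/2}^\infty u\,\varphi(u)\,du=\varphi(\delta/2),
\]
which follows from $\varphi'(u)=-u\varphi(u)$. Therefore
\[
\E\big[\Sigma^{-1}(Y-\mu_{\mathrm{o}})\mathbf{1}\{Y\in A_{\mathrm{n}}\}\big]=\varphi(\delta/2)\,\Sigma^{-1/2}e=\frac{\varphi(\delta/2)}{\delta}\,\Sigma^{-1/2}\Sigma^{-1/2}\Delta,
\]
which is the claimed expression $\frac{\varphi(\delta/2)}{\delta}\Sigma^{-1}(\mu_{\mathrm{n}}-\mu_{\mathrm{o}})$.

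As an alternative route, one could apply Stein's identity (Lemma~\ref{lem:stein}) to a smoothed indicator, which produces a surface integral over the hyperplane $\{T=0\}$. The whitening argument avoids that smoothing and limiting step entirely.

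The only point needing care is keeping track of the $\Sigma^{-1/2}$ factors and the normalization $\|v\|=\delta$, so that the direction comes out as $\Sigma^{-1}\Delta$ and the prefactor as $1/\delta$. The rest is routine.
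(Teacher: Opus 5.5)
Your proposal is correct and follows essentially the same route as the paper. Both arguments whiten to $Z\sim\mathcal{N}(0,I_d)$, rewrite $A_{\mathrm{n}}$ as $\{U\ge\delta/2\}$ with $U=e^\top Z$, discard the independent mean-zero orthogonal component, and evaluate the scalar truncated moment via $\varphi'(u)=-u\varphi(u)$, with the same bookkeeping of the $\Sigma^{-1/2}$ factors and the $1/\delta$ prefactor.
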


\begin{proof}
The proof is based on routine moment computations. Write $\Delta:=\mu_{\mathrm{n}}-\mu_{\mathrm{o}}$ and $X:=Y-\mu_{\mathrm{o}}$. Then $X\sim\mathcal{N}(0,\Sigma)$ and
\[
\Sigma^{-1}(Y-\mu_{\mathrm{o}})\mathbf{1}\{Y\in A_{\mathrm{n}}\}
=
\Sigma^{-1}X\,\mathbf{1}\{Y\in A_{\mathrm{n}}\}.
\]
We start by rewriteing the truncation event. By definition of $A_{\mathrm{n}}$, we have
\begin{align*}
Y\in A_{\mathrm{n}}
&\Longleftrightarrow
\Delta^\top\Sigma^{-1}\left(Y-\frac{\mu_{\mathrm{o}}+\mu_{\mathrm{n}}}{2}\right)\ge 0\\
&\Longleftrightarrow
\Delta^\top\Sigma^{-1}\left(\mu_{\mathrm{o}}+X-\frac{\mu_{\mathrm{o}}+\mu_{\mathrm{n}}}{2}\right)\ge 0\\
&\Longleftrightarrow
\Delta^\top\Sigma^{-1}\left(X-\frac{\Delta}{2}\right)\ge 0\\
&\Longleftrightarrow
\Delta^\top\Sigma^{-1}X\ge \frac{1}{2}\Delta^\top\Sigma^{-1}\Delta
=\frac{\delta^2}{2}.
\end{align*}
Next we whiten the Gaussian. Let $Z:=\Sigma^{-1/2}X$. Then $Z\sim\mathcal{N}(0,I_d)$ and $X=\Sigma^{1/2}Z$. Hence $\Sigma^{-1}X=\Sigma^{-1}\Sigma^{1/2}Z=\Sigma^{-1/2}Z.$ Also define $b:=\Sigma^{-1/2}\Delta\in\mathbb{R}^d.$
Then $\|b\|=\sqrt{b^\top b}=\sqrt{\Delta^\top\Sigma^{-1}\Delta}=\delta$, and
\[
\Delta^\top\Sigma^{-1}X
=
\Delta^\top\Sigma^{-1}\Sigma^{1/2}Z
=
(\Sigma^{-1/2}\Delta)^\top Z
=
b^\top Z.
\]
Therefore, we have the equivalence:
\[
Y\in A_{\mathrm{n}}
\quad\Longleftrightarrow\quad
b^\top Z\ge \frac{\delta^2}{2}.
\]
Combining these identities gives
\begin{equation}\label{eq:reduce_to_Z}
\E\Big[\Sigma^{-1}(Y-\mu_{\mathrm{o}})\,\mathbf{1}\{Y\in A_{\mathrm{n}}\}\Big]
=
\Sigma^{-1/2}\,\E\Big[Z\,\mathbf{1}\{b^\top Z\ge \delta^2/2\}\Big].
\end{equation}

We now reduce the above to a one-dimensional truncated normal moment. Let $u:=b/\delta$, so $\|u\|=1$ and $b^\top Z=\delta\,u^\top Z$.
Define the scalar random variable $U:=u^\top Z.$ Since $Z\sim\mathcal{N}(0,I_d)$ and $\|u\|=1$, we have $U\sim\mathcal{N}(0,1)$.
Moreover,
\[
\{b^\top Z\ge \delta^2/2\}
=
\{\delta U\ge \delta^2/2\}
=
\{U\ge \delta/2\}.
\]
Now decompose $Z$ into its component along $u$ and its orthogonal remainder:
\[
Z = uU + V,
\qquad
V:=Z-uU=(I-uu^\top)Z.
\]
We claim $V$ is independent of $U$ and satisfies $\E[V]=0$.
Indeed, $(U,V)$ is jointly Gaussian (as an affine image of the Gaussian vector $Z$), and
\begin{align*}
\textrm{Cov}(U,V)
=
\E[U V^\top]
=
\E\big[(u^\top Z)\,Z^\top(I-uu^\top)\big]
=
u^\top \E[ZZ^\top](I-uu^\top)\\
=
u^\top I_d (I-uu^\top)
=
u^\top - u^\top uu^\top
=
u^\top-u^\top
=
0.
\end{align*}
For jointly Gaussian random variables, zero covariance implies independence, hence $U$ and $V$ are independent.
Also $\E[V]=(I-uu^\top)\E[Z]=0$.
Therefore,
\begin{align*}
\E\Big[Z\,\mathbf{1}\{U\ge \delta/2\}\Big]
&=
\E\Big[(uU+V)\,\mathbf{1}\{U\ge \delta/2\}\Big]\\
&=
u\,\E\Big[U\,\mathbf{1}\{U\ge \delta/2\}\Big]
\;+\;
\E\Big[V\,\mathbf{1}\{U\ge \delta/2\}\Big].
\end{align*}
Using independence of $V$ and $U$ and $\E[V]=0$,
\begin{align*}
\E\Big[V\,\mathbf{1}\{U\ge \delta/2\}\Big]
=
\E\Big[\E[V\,\mathbf{1}\{U\ge \delta/2\}\mid U]\Big]
=
\E\Big[\mathbf{1}\{U\ge \delta/2\}\,\E[V\mid U]\Big]\\
=
\E\Big[\mathbf{1}\{U\ge \delta/2\}\,\E[V]\Big]
=
0.
\end{align*}
Thus
\begin{equation}\label{eq:Z_trunc_reduction}
\E\Big[Z\,\mathbf{1}\{U\ge \delta/2\}\Big]
=
u\,\E\Big[U\,\mathbf{1}\{U\ge \delta/2\}\Big].
\end{equation}

We now compute the scalar truncated moment. Since $U\sim\mathcal{N}(0,1)$ with density $\varphi$, we have
\[
\E\Big[U\,\mathbf{1}\{U\ge a\}\Big]
=
\int_{a}^{\infty} u\,\varphi(u)\,du
\qquad\text{for any }a\in\mathbb{R}.
\]
We now compute this integral explicitly.
Recall $\varphi(u)=(2\pi)^{-1/2}e^{-u^2/2}$, so
\[
\frac{d}{du}\varphi(u)
=
(2\pi)^{-1/2}\frac{d}{du}\big(e^{-u^2/2}\big)
=
(2\pi)^{-1/2}\big(-u\big)e^{-u^2/2}
=
-u\,\varphi(u).
\]
Hence $u\varphi(u)=-\varphi'(u)$, and therefore
\[
\int_{a}^{\infty} u\,\varphi(u)\,du
=
-\int_{a}^{\infty}\varphi'(u)\,du
=
-\Big(\lim_{u\to\infty}\varphi(u)-\varphi(a)\Big)
=
\varphi(a),
\]
since $\lim_{u\to\infty}\varphi(u)=0$.
Taking $a=\delta/2$ yields
\begin{equation}\label{eq:scalar_trunc_moment}
\E\Big[U\,\mathbf{1}\{U\ge \delta/2\}\Big]
=
\varphi(\delta/2).
\end{equation}
Substituting \eqref{eq:scalar_trunc_moment} into \eqref{eq:Z_trunc_reduction} gives
\[
\E\Big[Z\,\mathbf{1}\{b^\top Z\ge \delta^2/2\}\Big]
=
\E\Big[Z\,\mathbf{1}\{U\ge \delta/2\}\Big]
=
u\,\varphi(\delta/2)
=
\frac{b}{\delta}\,\varphi(\delta/2).
\]
Plugging this into \eqref{eq:reduce_to_Z} yields
\begin{align*}
\E\Big[\Sigma^{-1}(Y-\mu_{\mathrm{o}})\,\mathbf{1}\{Y\in A_{\mathrm{n}}\}\Big]
=
\Sigma^{-1/2}\left(\frac{b}{\delta}\,\varphi(\delta/2)\right)
=
\frac{\varphi(\delta/2)}{\delta}\,\Sigma^{-1/2}b\\
=
\frac{\varphi(\delta/2)}{\delta}\,\Sigma^{-1/2}\Sigma^{-1/2}\Delta
=
\frac{\varphi(\delta/2)}{\delta}\,\Sigma^{-1}\Delta.
\end{align*}
Recalling that  $\Delta=\mu_{\mathrm{n}}-\mu_{\mathrm{o}}$, completes the proof.
\end{proof}

\subsection{Exact Characterization of the Optimal Mixture Weight for the TTT-Discover}\label{sec:exactchar}
\begin{proposition}[Exact characterization of the optimal mixture weight for the TTT-style objective]\label{prop:ttt_beta_star_characterization1}
Fix $\eta>0$, $\lambda_{\mathrm{ref}}\ge 0$, and a reference weight $\beta_0\in(0,1)$.
Let
\[
a:=e^{\eta u_{\mathrm{o}}},\qquad b:=e^{\eta u_{\mathrm{n}}},
\]
where $u_{\mathrm{o}},u_{\mathrm{n}}\in\R$ are the old- and new-side reward levels.

\medskip
\noindent\textbf{(A) Disjoint-support case.}
Assume the disjoint-support setting of Lemma~\ref{lem:ttt_disjoint}. Define
\[
H_{\mathrm{disc}}(\beta)
:=
\mathcal{L}_{\eta,\lambda_{\mathrm{ref}}}(q_\beta)
=
\log\!\big(\beta a+(1-\beta)b\big)
-
\lambda_{\mathrm{ref}}\!\left[
\beta\log\frac{\beta}{\beta_0}
+
(1-\beta)\log\frac{1-\beta}{1-\beta_0}
\right].
\]
Then the maximizer $\beta^\star\in[0,1]$ is characterized as follows:
\[
\beta^\star=
\begin{cases}
0, & \lambda_{\mathrm{ref}}=0,\; b>a,\\[3pt]
1, & \lambda_{\mathrm{ref}}=0,\; a>b,\\[3pt]
\text{any }\beta\in[0,1], & \lambda_{\mathrm{ref}}=0,\; a=b,\\[3pt]
\text{unique solution in }(0,1)\text{ of }
\displaystyle
\frac{a-b}{\beta a+(1-\beta)b}
=
\lambda_{\mathrm{ref}}
\left(
\log\frac{\beta}{\beta_0}
-
\log\frac{1-\beta}{1-\beta_0}
\right),
& \lambda_{\mathrm{ref}}>0.
\end{cases}
\]
In particular, when $\lambda_{\mathrm{ref}}>0$ and $a=b$, the unique maximizer is $\beta^\star=\beta_0$.

\medskip
\noindent\textbf{(B) Gaussian case.}
Assume the Gaussian setting of Theorem~\ref{thm:ttt_gaussian}. Define
\[
\gamma:=\Phi\!\Big(-\frac{\delta}{2}\Big),\qquad
\kappa:=1-2\gamma,
\]
and
\[
H_{\mathrm{gauss}}(\beta)
:=
\mathcal{L}_{\eta,\lambda_{\mathrm{ref}}}(q_\beta)
=
\log\!\Big(
a(\gamma+\kappa\beta)+b(1-\gamma-\kappa\beta)
\Big)
-\lambda_{\mathrm{ref}}\,D(\beta),
\]
where
\[
D(\beta):=\KL(q_\beta\|q_{\beta_0}).
\]
Then the maximizer $\beta^\star\in[0,1]$ is characterized as follows:

\begin{itemize}
\item If $\lambda_{\mathrm{ref}}=0$, then
\[
\beta^\star=
\begin{cases}
0, & b>a,\\
1, & a>b,\\
\text{any }\beta\in[0,1], & a=b.
\end{cases}
\]

\item If $\lambda_{\mathrm{ref}}>0$ and $a=b$, then $\beta^\star=\beta_0$.

\item If $b>a$, define
\[
\lambda_{\mathrm{crit}}^{(\mathrm{new})}
:=
\frac{\kappa(b-a)}
{\big(a\gamma+b(1-\gamma)\big)\,(-D'(0))},
\qquad
D'(\beta)=\int_{\R^d}\big(p_{\mathrm{o}}(y)-p_{\mathrm{n}}(y)\big)\log\frac{q_\beta(y)}{q_{\beta_0}(y)}\,dy.
\]
Then
\[
\beta^\star=
\begin{cases}
0, & 0\le \lambda_{\mathrm{ref}}\le \lambda_{\mathrm{crit}}^{(\mathrm{new})},\\[4pt]
\text{unique solution in }(0,\beta_0)\text{ of }
\displaystyle
\frac{\kappa(a-b)}
{a(\gamma+\kappa\beta)+b(1-\gamma-\kappa\beta)}
=
\lambda_{\mathrm{ref}}\,D'(\beta),
& \lambda_{\mathrm{ref}}>\lambda_{\mathrm{crit}}^{(\mathrm{new})}.
\end{cases}
\]

\item If $a>b$, define
\[
\lambda_{\mathrm{crit}}^{(\mathrm{old})}
:=
\frac{\kappa(a-b)}
{\big(a(1-\gamma)+b\gamma\big)\,D'(1)}.
\]
Then
\[
\beta^\star=
\begin{cases}
1, & 0\le \lambda_{\mathrm{ref}}\le \lambda_{\mathrm{crit}}^{(\mathrm{old})},\\[4pt]
\text{unique solution in }(\beta_0,1)\text{ of }
\displaystyle
\frac{\kappa(a-b)}
{a(\gamma+\kappa\beta)+b(1-\gamma-\kappa\beta)}
=
\lambda_{\mathrm{ref}}\,D'(\beta),
& \lambda_{\mathrm{ref}}>\lambda_{\mathrm{crit}}^{(\mathrm{old})}.
\end{cases}
\]
\end{itemize}
\end{proposition}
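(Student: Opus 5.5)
The plan is to treat both cases as one-dimensional concave maximization problems and to read off the maximizer from the sign of the derivative at the endpoints. Almost all the ingredients are already in the proofs of Lemma~\ref{lem:ttt_disjoint} and Theorem~\ref{thm:ttt_gaussian}(A).

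\emph{Part (A).} I will reuse the formulas from Lemma~\ref{lem:ttt_disjoint}(1), which give $H_{\mathrm{disc}}$ explicitly.

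For $\lambda_{\mathrm{ref}}=0$, the objective is $\log$ of an affine function of $\beta$ with slope $a-b$. It is therefore strictly decreasing if $b>a$, strictly increasing if $a>b$, and constant if $a=b$, which yields the first three cases.

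For $\lambda_{\mathrm{ref}}>0$, I write $H_{\mathrm{disc}}'(\beta)=\frac{a-b}{\beta a+(1-\beta)b}-\lambda_{\mathrm{ref}}\big(\log\frac{\beta}{\beta_0}-\log\frac{1-\beta}{1-\beta_0}\big)$. The second derivative is $-\frac{(a-b)^2}{(\cdot)^2}-\lambda_{\mathrm{ref}}\big(\frac1\beta+\frac1{1-\beta}\big)<0$, even when $a=b$. As $\beta\downarrow0$ the derivative tends to $+\infty$, and as $\beta\uparrow1$ it tends to $-\infty$. Hence there is a unique interior root of the displayed equation, and it is the maximizer; the endpoints are excluded because the derivative has the wrong sign near them. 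When $a=b$, the left side of the equation vanishes and $\beta_0$ solves it.

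\emph{Part (B).} Here I take the formula for $J_\eta(q_\beta)$ and its derivatives from the proof of Theorem~\ref{thm:ttt_gaussian}(A), and the facts $D''>0$, $D'(\beta_0)=0$ from the same proof. Strict convexity then gives $D'<0$ on $[0,\beta_0)$ and $D'>0$ on $(\beta_0,1]$. The derivative formula for $D$ is the one stated in the proposition.

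The case $\lambda_{\mathrm{ref}}=0$ is identical to Part (A), since the objective is now $\log$ of an affine function with slope $\kappa(a-b)$ and $\kappa>0$.

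If $\lambda_{\mathrm{ref}}>0$ and $a=b$, then $J_\eta$ is constant, so maximizing $H_{\mathrm{gauss}}$ amounts to minimizing $D$. Its unique minimizer is $\beta_0$.

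The case $b>a$ is exactly Theorem~\ref{thm:ttt_gaussian}(A). Combining it with $H'(\beta)=J_\eta'-\lambda_{\mathrm{ref}}D'$ shows that the interior root satisfies the displayed first-order equation.

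The case $a>b$ follows by the mirror argument. Now $J_\eta'>0$ and $H$ is strictly concave. At $\beta=1$ we have $H'(1)=\frac{\kappa(a-b)}{a(1-\gamma)+b\gamma}-\lambda_{\mathrm{ref}}D'(1)$ with $D'(1)>0$. If $\lambda_{\mathrm{ref}}\le\lambda_{\mathrm{crit}}^{(\mathrm{old})}$, then $H'(1)\ge0$, so $H'>0$ on $(0,1)$ and $\beta^\star=1$. Otherwise $H'(1)<0$ while $H'(\beta_0)=J_\eta'(\beta_0)>0$, so the unique root lies in $(\beta_0,1)$.

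\emph{Main obstacle.} The delicate step is the endpoint behaviour of $D$ in the Gaussian case, which the disjoint case avoids because there $D'$ blows up at the endpoints. In the Gaussian case I must show that $D'(0)$ and $D'(1)$ are finite and that $H$ is differentiable up to the boundary. I will justify this by dominated convergence. At $\beta\in\{0,1\}$, $q_\beta$ is a single Gaussian, and $\log(q_\beta/q_{\beta_0})$ grows at most linearly in $\|y\|$, by the same equal-covariance cancellation argument used in the proof of Theorem~\ref{thm:kl_RL_local_rate}. This makes $D$ $C^1$ on $[0,1]$ and the endpoint comparisons legitimate.
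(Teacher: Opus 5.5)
Your proposal is correct and follows essentially the same route as the paper's proof. Both treat each case as a one-dimensional strictly concave problem, read the maximizer off the endpoint signs of $H'$, inherit the Gaussian $b>a$ case from Theorem~\ref{thm:ttt_gaussian}(A), and handle $a>b$ by the mirror argument at $\beta=1$. Your dominated-convergence justification that $D$ is $C^1$ up to $\beta\in\{0,1\}$ (so $D'(0)$ and $D'(1)$ are finite and the endpoint comparisons are valid) is a step the paper uses without comment, so it adds rigor rather than changing the approach.
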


\begin{proof}
We treat the disjoint-support and Gaussian cases separately.

\medskip
\noindent\textbf{Proof of Part (A):} By Lemma~\ref{lem:ttt_disjoint}, the objective is exactly
\[
H_{\mathrm{disc}}(\beta)
=
\log\!\big(\beta a+(1-\beta)b\big)
-
\lambda_{\mathrm{ref}}\!\left[
\beta\log\frac{\beta}{\beta_0}
+
(1-\beta)\log\frac{1-\beta}{1-\beta_0}
\right].
\]

First consider $\lambda_{\mathrm{ref}}=0$.
Then $H_{\mathrm{disc}}(\beta)=\log(\beta a+(1-\beta)b)$, the logarithm of an affine function of $\beta$.
Hence:
\begin{itemize}
\item if $b>a$, it is strictly decreasing, so $\beta^\star=0$;
\item if $a>b$, it is strictly increasing, so $\beta^\star=1$;
\item if $a=b$, it is constant, so every $\beta\in[0,1]$ is optimal.
\end{itemize}

Now assume $\lambda_{\mathrm{ref}}>0$.
Differentiate on $(0,1)$:
\[
H_{\mathrm{disc}}'(\beta)
=
\frac{a-b}{\beta a+(1-\beta)b}
-
\lambda_{\mathrm{ref}}
\left(
\log\frac{\beta}{\beta_0}
-
\log\frac{1-\beta}{1-\beta_0}
\right).
\]
Differentiating once more gives
\[
H_{\mathrm{disc}}''(\beta)
=
-\frac{(a-b)^2}{(\beta a+(1-\beta)b)^2}
-\lambda_{\mathrm{ref}}\left(\frac{1}{\beta}+\frac{1}{1-\beta}\right)
<0
\qquad\forall \beta\in(0,1),
\]
so $H_{\mathrm{disc}}$ is strictly concave on $(0,1)$.

Moreover,
\[
\lim_{\beta\downarrow 0}H_{\mathrm{disc}}'(\beta)=+\infty,
\qquad
\lim_{\beta\uparrow 1}H_{\mathrm{disc}}'(\beta)=-\infty,
\]
because the logarithmic term diverges while the first term remains finite at the endpoints.
Hence, by continuity of $H_{\mathrm{disc}}'$, there exists a unique $\beta^\star\in(0,1)$ with
$H_{\mathrm{disc}}'(\beta^\star)=0$.
This gives the stated first-order equation.
If $a=b$, then the first term in $H_{\mathrm{disc}}'$ vanishes identically, so the unique solution is
\[
\log\frac{\beta}{\beta_0}
=
\log\frac{1-\beta}{1-\beta_0},
\]
which is equivalent to $\beta=\beta_0$.

\medskip
\noindent\textbf{Proof of Part (B):} By Theorem~\ref{thm:ttt_gaussian}, the Gaussian objective can be written as
\[
H_{\mathrm{gauss}}(\beta)
=
\log\!\Big(
a(\gamma+\kappa\beta)+b(1-\gamma-\kappa\beta)
\Big)
-\lambda_{\mathrm{ref}}D(\beta),
\]
with
\[
D'(\beta)=\int_{\R^d}\big(p_{\mathrm{o}}(y)-p_{\mathrm{n}}(y)\big)\log\frac{q_\beta(y)}{q_{\beta_0}(y)}\,dy,
\qquad
D''(\beta)=\int_{\R^d}\frac{(p_{\mathrm{o}}(y)-p_{\mathrm{n}}(y))^2}{q_\beta(y)}\,dy>0.
\]
Therefore $D$ is strictly convex, $D'(\beta_0)=0$, $D'(0)<0$, and $D'(1)>0$.

First consider $\lambda_{\mathrm{ref}}=0$.
Then
\[
H_{\mathrm{gauss}}(\beta)=\log\!\Big(a(\gamma+\kappa\beta)+b(1-\gamma-\kappa\beta)\Big),
\]
whose derivative is
\[
H_{\mathrm{gauss}}'(\beta)
=
\frac{\kappa(a-b)}{a(\gamma+\kappa\beta)+b(1-\gamma-\kappa\beta)}.
\]
Since $\kappa>0$, the sign is the sign of $a-b$.
Thus:
\begin{itemize}
\item if $b>a$, the derivative is strictly negative and $\beta^\star=0$;
\item if $a>b$, the derivative is strictly positive and $\beta^\star=1$;
\item if $a=b$, the derivative is zero and every $\beta$ is optimal.
\end{itemize}

Now assume $\lambda_{\mathrm{ref}}>0$.
If $a=b$, then the logarithmic term is constant in $\beta$, so maximizing $H_{\mathrm{gauss}}$ is equivalent to minimizing $D$.
Since $D$ is strictly convex and $D'(\beta_0)=0$, its unique minimizer is $\beta_0$; hence $\beta^\star=\beta_0$.

Next suppose $b>a$.
Then
\[
H_{\mathrm{gauss}}'(\beta)
=
\frac{\kappa(a-b)}{a(\gamma+\kappa\beta)+b(1-\gamma-\kappa\beta)}
-\lambda_{\mathrm{ref}}D'(\beta),
\]
and
\[
H_{\mathrm{gauss}}''(\beta)
=
-\frac{\kappa^2(a-b)^2}{\big(a(\gamma+\kappa\beta)+b(1-\gamma-\kappa\beta)\big)^2}
-\lambda_{\mathrm{ref}}D''(\beta)
<0.
\]
So $H_{\mathrm{gauss}}$ is strictly concave, hence has at most one maximizer in $(0,1)$.

Evaluate the derivative at $\beta=0$:
\[
H_{\mathrm{gauss}}'(0)
=
\frac{\kappa(a-b)}{a\gamma+b(1-\gamma)}
-\lambda_{\mathrm{ref}}D'(0).
\]
Since $a-b<0$ and $D'(0)<0$, this equals
\[
H_{\mathrm{gauss}}'(0)
=
-\frac{\kappa(b-a)}{a\gamma+b(1-\gamma)}
+\lambda_{\mathrm{ref}}(-D'(0)).
\]
Therefore $H_{\mathrm{gauss}}'(0)\le 0$ exactly when
\[
\lambda_{\mathrm{ref}}
\le
\frac{\kappa(b-a)}{(a\gamma+b(1-\gamma))(-D'(0))}
=
\lambda_{\mathrm{crit}}^{(\mathrm{new})}.
\]
If this holds, then because $H_{\mathrm{gauss}}'$ is strictly decreasing on $(0,1)$, we have
$H_{\mathrm{gauss}}'(\beta)<0$ for all $\beta\in(0,1)$, so the unique maximizer is $\beta^\star=0$.

If instead $\lambda_{\mathrm{ref}}>\lambda_{\mathrm{crit}}^{(\mathrm{new})}$, then $H_{\mathrm{gauss}}'(0)>0$.
At $\beta=\beta_0$, since $D'(\beta_0)=0$,
\[
H_{\mathrm{gauss}}'(\beta_0)
=
\frac{\kappa(a-b)}{a(\gamma+\kappa\beta_0)+b(1-\gamma-\kappa\beta_0)}
<0.
\]
By continuity and strict monotonicity of $H_{\mathrm{gauss}}'$, there is a unique root in $(0,\beta_0)$, and that root is the unique maximizer.
It satisfies exactly the first-order equation
\[
\frac{\kappa(a-b)}{a(\gamma+\kappa\beta)+b(1-\gamma-\kappa\beta)}
=
\lambda_{\mathrm{ref}}D'(\beta).
\]

The case $a>b$ is symmetric.
Now the entropic derivative is positive, and the threshold is determined by the right boundary:
\[
\lambda_{\mathrm{crit}}^{(\mathrm{old})}
=
\frac{H_{\mathrm{gauss}}'(\beta)\big|_{\lambda_{\mathrm{ref}}=0,\ \beta=1}}{D'(1)}
=
\frac{\kappa(a-b)}{(a(1-\gamma)+b\gamma)D'(1)}.
\]
If $\lambda_{\mathrm{ref}}\le \lambda_{\mathrm{crit}}^{(\mathrm{old})}$, then $H_{\mathrm{gauss}}'(1)\ge 0$, and since
$H_{\mathrm{gauss}}'$ is strictly decreasing, $H_{\mathrm{gauss}}'(\beta)>0$ on $(0,1)$, so the unique maximizer is $\beta^\star=1$.
If $\lambda_{\mathrm{ref}}>\lambda_{\mathrm{crit}}^{(\mathrm{old})}$, then $H_{\mathrm{gauss}}'(1)<0$ while
\[
H_{\mathrm{gauss}}'(\beta_0)
=
\frac{\kappa(a-b)}{a(\gamma+\kappa\beta_0)+b(1-\gamma-\kappa\beta_0)}
>0,
\]
so the unique maximizer lies in $(\beta_0,1)$ and is characterized by the same first-order equation.
This completes the proof.
\end{proof}

\subsection{Proof of Lemma~\ref{lem:OAPL_disjoint}}

\begin{proof}[Proof of Lemma~\ref{lem:OAPL_disjoint}]
Under disjoint-support, we have
\[
q_0(y)=\beta_0 p_{\mathrm{o}}(y)\quad \text{for }y\in A_{\mathrm{o}},
\qquad
q_0(y)=(1-\beta_0)p_{\mathrm{n}}(y)\quad \text{for }y\in A_{\mathrm{n}}.
\]
Since the reward is constant on each region,
\[
q^*(y)=\frac{1}{Z}q_0(y)e^{r(y)/\tau}
=
\begin{cases}
\displaystyle \frac{1}{Z}\,\beta_0 e^{r_{\mathrm{o}}/\tau}p_{\mathrm{o}}(y), & y\in A_{\mathrm{o}},\\[8pt]
\displaystyle \frac{1}{Z}\,(1-\beta_0)e^{r_{\mathrm{n}}/\tau}p_{\mathrm{n}}(y), & y\in A_{\mathrm{n}}.
\end{cases}
\]
Therefore \(q^*\) is again a two-component mixture with the same components:
\[
q^*(y)=\beta^* p_{\mathrm{o}}(y)+(1-\beta^*)p_{\mathrm{n}}(y),
\]
where
\[
\beta^*
=
\frac{\beta_0 e^{r_{\mathrm{o}}/\tau}}{Z},
\qquad
1-\beta^*=\frac{(1-\beta_0)e^{r_{\mathrm{n}}/\tau}}{Z}.
\]
Because \(q^*\) is a probability density,
\[
1=\int q^*(y)\,dy=\frac{\beta_0 e^{r_{\mathrm{o}}/\tau}}{Z}+\frac{(1-\beta_0)e^{r_{\mathrm{n}}/\tau}}{Z},
\]
so
\[
Z=\beta_0 e^{r_{\mathrm{o}}/\tau}+(1-\beta_0)e^{r_{\mathrm{n}}/\tau}.
\]
Substituting into the expression for \(\beta^*\) gives
\[
\beta^*=
\frac{\beta_0 e^{r_{\mathrm{o}}/\tau}}
{\beta_0 e^{r_{\mathrm{o}}/\tau}+(1-\beta_0)e^{r_{\mathrm{n}}/\tau}},
\]
which is \eqref{eq:OAPL_beta_star_disjoint}.
Since \(\beta_0\in(0,1)\) and \(r_{\mathrm{o}},r_{\mathrm{n}}\) are finite, all factors are strictly positive, so \(\beta^*\in(0,1)\).
\end{proof}

\subsection{Proof of Theorem~\ref{thm:OAPL_gaussian}}

\begin{proof}[Proof of Theorem~\ref{thm:OAPL_gaussian}]
We prove parts (A) and (B) separately.

\paragraph{Proof of Part (A):} We start by computing the exact expected old responsibility under the target \(q^*\). Recall
\[
q^*(y)=\frac{1}{Z}q_0(y)e^{r(y)/\tau},
\qquad
r_{\mathrm{o}}^{(0)}(y)=\frac{\beta_0 p_{\mathrm{o}}(y)}{q_0(y)}.
\]
Then
\begin{align}
\E_{Y\sim q^*}\big[r_{\mathrm{o}}^{(0)}(Y)\big]
&=
\int q^*(y)\,\frac{\beta_0 p_{\mathrm{o}}(y)}{q_0(y)}\,dy \notag\\
&=
\frac{\beta_0}{Z}\int p_{\mathrm{o}}(y)e^{r(y)/\tau}\,dy. \label{eq:old_resp_start}
\end{align}
We therefore compute the two Gaussian integrals
\[
I_{\mathrm{o}}:=\int p_{\mathrm{o}}(y)e^{r(y)/\tau}\,dy,
\qquad
I_{\mathrm{n}}:=\int p_{\mathrm{n}}(y)e^{r(y)/\tau}\,dy.
\]

Under \(Y\sim p_{\mathrm{o}}=\mathcal{N}(\mu_{\mathrm{o}},\Sigma)\), the Bayes region \(A_{\mathrm{n}}\) is entered with probability
\[
\Pr_{p_{\mathrm{o}}}(A_{\mathrm{n}})
=
\Phi\!\Big(-\frac{\delta}{2}\Big)
=
\gamma,
\]
so \(\Pr_{p_{\mathrm{o}}}(A_{\mathrm{o}})=1-\gamma\).
Hence
\[
I_{\mathrm{o}}
=
e^{r_{\mathrm{o}}/\tau}\Pr_{p_{\mathrm{o}}}(A_{\mathrm{o}})
+
e^{r_{\mathrm{n}}/\tau}\Pr_{p_{\mathrm{o}}}(A_{\mathrm{n}})
=
(1-\gamma)e^{r_{\mathrm{o}}/\tau}+\gamma e^{r_{\mathrm{n}}/\tau}.
\]
Similarly, under \(Y\sim p_{\mathrm{n}}=\mathcal{N}(\mu_{\mathrm{n}},\Sigma)\),
\[
\Pr_{p_{\mathrm{n}}}(A_{\mathrm{o}})=\gamma,
\qquad
\Pr_{p_{\mathrm{n}}}(A_{\mathrm{n}})=1-\gamma,
\]
so
\[
I_{\mathrm{n}}
=
\gamma e^{r_{\mathrm{o}}/\tau}+(1-\gamma)e^{r_{\mathrm{n}}/\tau}.
\]

Now compute the normalizer \(Z\):
\[
Z
=
\E_{Y\sim q_0}[e^{r(Y)/\tau}]
=
\beta_0 I_{\mathrm{o}}+(1-\beta_0)I_{\mathrm{n}}.
\]
Substituting \(I_{\mathrm{o}}\) and \(Z\) into \eqref{eq:old_resp_start} yields
\[
\E_{Y\sim q^*}\big[r_{\mathrm{o}}^{(0)}(Y)\big]
=
\frac{
\beta_0\big((1-\gamma)e^{r_{\mathrm{o}}/\tau}+\gamma e^{r_{\mathrm{n}}/\tau}\big)
}{
\beta_0\big((1-\gamma)e^{r_{\mathrm{o}}/\tau}+\gamma e^{r_{\mathrm{n}}/\tau}\big)
+
(1-\beta_0)\big(\gamma e^{r_{\mathrm{o}}/\tau}+(1-\gamma)e^{r_{\mathrm{n}}/\tau}\big)
},
\]
which is \eqref{eq:OAPL_expected_old_resp}.
All terms in the numerator and denominator are strictly positive, so the ratio lies in \((0,1)\).

\paragraph{Proof of Part (B):} We start by computing the gradient identity and overlap bound. Define
\[
\Delta_{\beta,m_{\mathrm{n}}}(y):=\tau\log\frac{q_{\beta,m_{\mathrm{n}}}(y)}{q_0(y)}-A^*(y),
\qquad
J(\beta,m_{\mathrm{n}})
=
\E_{Y\sim q_0}\!\left[\Delta_{\beta,m_{\mathrm{n}}}(Y)^2\right].
\]
Since \(q_0\) and \(A^*\) are fixed, differentiation under the expectation gives
\[
\nabla_{m_{\mathrm{n}}}J(\beta,m_{\mathrm{n}})
=
2\,\E_{Y\sim q_0}\!\left[\Delta_{\beta,m_{\mathrm{n}}}(Y)\,\nabla_{m_{\mathrm{n}}}\Delta_{\beta,m_{\mathrm{n}}}(Y)\right]
=
2\tau\,\E_{Y\sim q_0}\!\left[\Delta_{\beta,m_{\mathrm{n}}}(Y)\,\nabla_{m_{\mathrm{n}}}\log q_{\beta,m_{\mathrm{n}}}(Y)\right].
\]
For
\[
q_{\beta,m_{\mathrm{n}}}(y)=\beta p_{\mathrm{o}}(y)+(1-\beta)\gauss{y}{m_{\mathrm{n}}}{\Sigma},
\]
only the new component depends on \(m_{\mathrm{n}}\), and using
\[
\nabla_m \gauss{y}{m}{\Sigma}=\gauss{y}{m}{\Sigma}\,\Sigma^{-1}(y-m),
\]
we obtain
\[
\nabla_{m_{\mathrm{n}}}\log q_{\beta,m_{\mathrm{n}}}(y)
=
\frac{(1-\beta)\gauss{y}{m_{\mathrm{n}}}{\Sigma}}{q_{\beta,m_{\mathrm{n}}}(y)}\,\Sigma^{-1}(y-m_{\mathrm{n}})
=
r_{\mathrm{n}}^{(\beta,m_{\mathrm{n}})}(y)\,\Sigma^{-1}(y-m_{\mathrm{n}}),
\]
which proves \eqref{eq:OAPL_grad_m_identity_consolidated}.

Now specialize to the synchronized point \((\beta,m_{\mathrm{n}})=(\beta_0,\mu_{\mathrm{n}})\). There \(q_{\beta,m_{\mathrm{n}}}=q_0\), so
\[
\Delta_{\beta_0,\mu_{\mathrm{n}}}(y)=\tau\log\frac{q_0(y)}{q_0(y)}-A^*(y)=-A^*(y).
\]
Thus the contribution of old-mode samples to the gradient is
\[
-2\tau\,\beta_0\,
\E_{Y\sim p_{\mathrm{o}}}\!\left[A^*(Y)\,r_{\mathrm{n}}^{(\beta_0,\mu_{\mathrm{n}})}(Y)\,\Sigma^{-1}(Y-\mu_{\mathrm{n}})\right].
\]
If \(|r(y)|\le R\), then
\[
|V^*|=\tau\Big|\log \E_{q_0}[e^{r/\tau}]\Big|\le R
\]
because \(e^{r/\tau}\in[e^{-R/\tau},e^{R/\tau}]\), so
\[
|A^*(y)|=|r(y)-V^*|\le 2R.
\]
By Cauchy--Schwarz and the fact that \(0\le r_{\mathrm{n}}\le 1\) implies \(r_{\mathrm{n}}^2\le r_{\mathrm{n}}\),
\begin{align*}
&\Big\|
\E_{Y\sim p_{\mathrm{o}}}\!\left[A^*(Y)\,r_{\mathrm{n}}^{(\beta_0,\mu_{\mathrm{n}})}(Y)\,\Sigma^{-1}(Y-\mu_{\mathrm{n}})\right]
\Big\|\\
&\qquad\le
\sqrt{
\E_{Y\sim p_{\mathrm{o}}}\!\left[A^*(Y)^2\,\|\Sigma^{-1}(Y-\mu_{\mathrm{n}})\|^2\right]
}
\sqrt{
\E_{Y\sim p_{\mathrm{o}}}\!\left[r_{\mathrm{n}}^{(\beta_0,\mu_{\mathrm{n}})}(Y)\right]
}\\
&\qquad\le
2R\sqrt{
\E_{Y\sim p_{\mathrm{o}}}\!\left[\|\Sigma^{-1}(Y-\mu_{\mathrm{n}})\|^2\right]
}
\sqrt{\varepsilon_{\mathrm{o}\to\mathrm{n}}^{\mathrm{ref}}}
=
2R\,\sqrt{M_{\mathrm{o}\to\mathrm{n}}}\,\sqrt{\varepsilon_{\mathrm{o}\to\mathrm{n}}^{\mathrm{ref}}}.
\end{align*}
Multiplying by \(2\tau\beta_0\) gives \eqref{eq:OAPL_oldmode_grad_bound_consolidated}.

Finally, the reference leakage bound follows directly from Lemma~\ref{lem:leakage} applied to the mixture
\(q_0=\beta_0 p_{\mathrm{o}}+(1-\beta_0)p_{\mathrm{n}}\) together with Lemma~\ref{lem:bc-gauss}:
\[
\varepsilon_{\mathrm{o}\to\mathrm{n}}^{\mathrm{ref}}
=
\E_{Y\sim p_{\mathrm{o}}}\big[r_{\mathrm{n}}^{(\beta_0,\mu_{\mathrm{n}})}(Y)\big]
=
\E_{Y\sim p_{\mathrm{o}}}\big[1-r_{\mathrm{o}}^{(\beta_0,\mu_{\mathrm{n}})}(Y)\big]
\le
\frac12\sqrt{\frac{1-\beta_0}{\beta_0}}
\exp\!\left(-\frac18\mnorm{\mu_{\mathrm{n}}-\mu_{\mathrm{o}}}{\Sigma^{-1}}^2\right).
\]
\end{proof}

\section{Extension to $f$-divergences}\label{app:f_extension}

\paragraph{$f$-divergence.}
Let $f:(0,\infty)\to\R$ be convex.
Given distributions $P,Q$ with densities $p,q$ w.r.t.\ a common reference $\mu$ and with $P\ll Q$, define the Csisz\'ar--Morimoto $f$-divergence
\[
D_f(P\|Q) := \int q(y)\, f\!\left(\frac{p(y)}{q(y)}\right)\,d\mu(y).
\]

\paragraph{Affine invariance.}
If $\tilde f(t) = f(t) + a(t-1)$ for any $a\in\R$, then $D_{\tilde f}(P\|Q)=D_f(P\|Q)$ because $\int q(t-1)=\int (p-q)=0$.
Thus we may assume \emph{without loss of generality} that
\[
f(1)=0,\qquad f'(1)=0,
\]
whenever $f$ is differentiable at $1$.

\paragraph{Adjoint (reverse) generator.}
Define the adjoint generator
\[
f^\diamond(t) := t\, f(1/t),\qquad t>0.
\]
We start by the following standard adjoint result with the proof provided for completeness.

\begin{lemma}[Adjoint identity]\label{lem:adjoint}
Assume $P\ll Q$ and $Q\ll P$ with densities $p,q$.
Then
\[
D_f(P\|Q) = \int p(y)\, f^\diamond\!\left(\frac{q(y)}{p(y)}\right)\,d\mu(y)
= D_{f^\diamond}(Q\|P).
\]
Moreover, if $f$ is convex then $f^\diamond$ is convex; if $f$ is strictly convex then $f^\diamond$ is strictly convex.
\end{lemma}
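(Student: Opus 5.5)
The plan is a direct substitution in the defining integral, followed by a one-line convexity argument via the perspective function. Because $P\ll Q$ and $Q\ll P$, the densities $p$ and $q$ are both strictly positive $\mu$-a.e. on the common support, and outside it both vanish. So we may restrict every integral to $S:=\{p>0\}=\{q>0\}$ (up to null sets), and the ratios $p/q$ and $q/p$ are well defined and positive there.

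For the identity, write the integrand of $D_f(P\|Q)$ on $S$ as
\[
q\,f\!\left(\frac{p}{q}\right)=p\cdot\frac{q}{p}\,f\!\left(\frac{1}{q/p}\right)=p\,f^\diamond\!\left(\frac{q}{p}\right),
\]
using the definition $f^\diamond(t)=t\,f(1/t)$ with $t=q/p$. Integrating over $S$ gives the first equality. The second equality is just the definition of $D_{f^\diamond}(Q\|P)$, with the roles of the two densities swapped. This is legitimate because $Q\ll P$, and whenever one side is well defined (possibly $+\infty$) the other is as well, since the integrands agree pointwise.

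For convexity, I would use the perspective characterization. Take $s,t>0$ and $\theta\in[0,1]$, and set $\bar t=\theta s+(1-\theta)t$. Then
\[
f^\diamond(\bar t)=\bar t\,f\!\left(\frac{1}{\bar t}\right)=\bar t\,f\!\left(\frac{\theta s}{\bar t}\cdot\frac1s+\frac{(1-\theta)t}{\bar t}\cdot\frac1t\right).
\]
The weights $\theta s/\bar t$ and $(1-\theta)t/\bar t$ are nonnegative and sum to one. Applying convexity of $f$ and multiplying through by $\bar t$ yields $f^\diamond(\bar t)\le\theta f^\diamond(s)+(1-\theta)f^\diamond(t)$.

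For strict convexity, take $s\ne t$ and $\theta\in(0,1)$. Then $1/s\ne 1/t$, and both weights are strictly positive, so the inequality from $f$ is strict, and it stays strict after multiplying by $\bar t>0$.

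The only point needing care, which I expect to be the mild obstacle, is measure-theoretic bookkeeping: making sure the change of integrand is done on the common support and that integrability or infiniteness transfers between the two sides. Both follow from the pointwise equality of the integrands. The convexity part is routine once the perspective rewriting of the argument is in place.
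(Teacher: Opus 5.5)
Your proof is correct and follows the same route as the paper. The identity comes from rewriting the integrand pointwise as $q\,f(p/q)=p\,(q/p)\,f(p/q)=p\,f^\diamond(q/p)$ on the common support, and convexity comes from the perspective transform. The difference is only in detail: you verify the perspective step explicitly (with weights $\theta s/\bar t$ and $(1-\theta)t/\bar t$, including the strict case) and do the null-set bookkeeping, whereas the paper just cites the standard perspective fact.
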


\begin{proof}
Using $D_f(P\|Q)=\int q f(p/q)$ and multiplying and dividing by $p$,
\[
\int q f(p/q) = \int p\left(\frac{q}{p}\right) f\!\left(\frac{p}{q}\right)
= \int p\, f^\diamond(q/p).
\]
The equality $D_f(P\|Q)=D_{f^\diamond}(Q\|P)$ is immediate from definitions.
Convexity/strict convexity of $f^\diamond$ follows from standard perspective-transform facts: $t\mapsto t f(1/t)$ preserves convexity on $(0,\infty)$.
\end{proof}

When taking gradients of $f$-divergences, the quantity
\[
\kappa_f(t) := t\, f''(t),\qquad t>0,
\]
plays a central role. For convex twice differentiable $f$, $\kappa_f(t)\ge 0$.

We now generalize Theorem~\ref{thm:kl_main} from KL to the a family of $f$-divergences. %Recall the adjoint generator $f^\diamond(t)=t f(1/t)$ (Lemma~\ref{lem:adjoint}) and the curvature factor $\kappa_f(t)=t f''(t)$.

\begin{theorem}\label{thm:fmain}
Assume $f:(0,\infty)\to\R$ is twice continuously differentiable and strictly convex, normalized so that $f(1)=f'(1)=0$.
Let $f^\diamond(t)=t f(1/t)$ and $\kappa_f(t)=t f''(t)$.

\medskip
\noindent\textbf{(A)} Fix $m_{\mathrm{o}}=\mu_{\mathrm{o}}$ and $m_{\mathrm{n}}=\mu_{\mathrm{n}}$ and define $q_\beta=\beta p_{\mathrm{o}}+(1-\beta)p_{\mathrm{n}}$.
Consider
\[
L_{\mathrm{SFT}}^f(\beta):=D_f(p_{\mathrm{n}}\|q_\beta),\qquad \beta\in[0,1].
\]
Then $L_{\mathrm{SFT}}^f(0)=0$ and $L_{\mathrm{SFT}}^f(\beta)>0$ for every $\beta\in(0,1]$, hence $\beta=0$ is the unique global minimizer.
Moreover, $L_{\mathrm{SFT}}^f$ is strictly increasing on $[0,1]$, and logit gradient flow $\dot\phi=-\frac{d}{d\phi}L_{\mathrm{SFT}}^f(\sigma(\phi))$
satisfies $\beta(t)=\sigma(\phi(t))\downarrow 0$.

\medskip
\noindent\textbf{(B)} Fix $\alpha\in(0,1)$ and consider
\[
L_{\mathrm{RL}}^f(\beta,m_{\mathrm{o}},m_{\mathrm{n}}):=D_f\!\big(q_{\beta,m_{\mathrm{o}},m_{\mathrm{n}}}\,\|\,p_\alpha\big).
\]
Assume $m_{\mathrm{o}}=\mu_{\mathrm{o}}$ and define the density ratio $w(y):=\frac{q_{\beta,\mu_{\mathrm{o}},m_{\mathrm{n}}}(y)}{p_\alpha(y)}$.
Then the gradient w.r.t.\ the old mean admits the \emph{exact} decomposition
\[
\nabla_{m_{\mathrm{o}}} L_{\mathrm{RL}}^f(\beta,\mu_{\mathrm{o}},m_{\mathrm{n}})
=
\beta\,\Sigma^{-1}\Big(
A_f(\beta,m_{\mathrm{n}})\,(m_{\mathrm{n}}-\mu_{\mathrm{o}})
-
B_f(\alpha,\beta,m_{\mathrm{n}})\,(\mu_{\mathrm{n}}-\mu_{\mathrm{o}})
\Big),
\]
where
\[
A_f(\beta,m_{\mathrm{n}}):=\E_{Y\sim p_{\mathrm{o}}}\big[\kappa_f(w(Y))\,(1-r_{\mathrm{o}}(Y))\big],
\qquad
B_f(\alpha,\beta,m_{\mathrm{n}}):=\E_{Y\sim p_{\mathrm{o}}}\big[\kappa_f(w(Y))\,(1-s_{\mathrm{o}}(Y))\big].
\]
If, in addition, $f$ has bounded curvature $0\le \kappa_f(t)\le C_f$ for all $t>0$, then
\[
\norm{\nabla_{m_{\mathrm{o}}} L_{\mathrm{RL}}^f(\beta,\mu_{\mathrm{o}},m_{\mathrm{n}})}
\le
\beta\,C_f\,\norm{\Sigma^{-1}}_{2}\left(
\varepsilon_q(\beta,m_{\mathrm{n}})\,\norm{m_{\mathrm{n}}-\mu_{\mathrm{o}}}
+
\varepsilon_p(\alpha)\,\norm{\mu_{\mathrm{n}}-\mu_{\mathrm{o}}}
\right),
\]
where $\varepsilon_q=\E_{p_{\mathrm{o}}}[1-r_{\mathrm{o}}]$ and $\varepsilon_p=\E_{p_{\mathrm{o}}}[1-s_{\mathrm{o}}]$ satisfy the same Gaussian overlap bounds
as in \eqref{eq:kl_eps_bounds}.
\end{theorem}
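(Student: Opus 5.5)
Part~(A) follows the proof of Theorem~\ref{thm:kl_main} with $-\log$ replaced by a general convex generator. Write $X(y):=p_{\mathrm{o}}(y)/p_{\mathrm{n}}(y)$, so that $q_\beta=p_{\mathrm{n}}\,(1+\beta(X-1))$. Then
\[
L^f_{\mathrm{SFT}}(\beta)=\int q_\beta\, f\!\big(p_{\mathrm{n}}/q_\beta\big)=\E_{p_{\mathrm{n}}}\big[g\big(1+\beta(X-1)\big)\big],\qquad g(u):=u\,f(1/u)=f^\diamond(u).
\]
By Lemma~\ref{lem:adjoint}, $g$ is strictly convex, and $g(1)=f(1)=0$. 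Since $\E_{p_{\mathrm{n}}}[X]=1$, Jensen gives $L^f_{\mathrm{SFT}}(\beta)\ge g(1)=0$. The inequality is strict for $\beta>0$, because $X$ is not a.s.\ constant when $\mu_{\mathrm{o}}\neq\mu_{\mathrm{n}}$.

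For monotonicity, differentiate under the integral. Dominated convergence is justified on $[0,1)$ since $1+\beta(X-1)\ge 1-\beta$; the endpoint $\beta=1$ is handled by continuity. This gives
\[
L'(\beta)=\E_{p_{\mathrm{n}}}[g'(\cdot)(X-1)],\qquad L''(\beta)=\E_{p_{\mathrm{n}}}[g''(\cdot)(X-1)^2]>0.
\]
Strict convexity of $g$ only gives $g''\ge0$. However, $g''(u)=u^{-3}f''(1/u)$, and $f''$ cannot vanish on an open interval, so the expectation is still positive. At $\beta=0$ we get $L'(0)=g'(1)\E[X-1]=0$, hence $L'>0$ on $(0,1)$ and $L$ is strictly increasing. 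The logit gradient-flow conclusion then follows verbatim from Theorem~\ref{thm:kl_main}: $\dot\phi=-\beta(1-\beta)L'(\beta)<0$, and the limit of $\beta(t)$ cannot be positive.

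For Part~(B), set $q=q_{\beta,m_{\mathrm{o}},m_{\mathrm{n}}}$, $p=p_\alpha$, and $w=q/p$, so that $D_f(q\|p)=\int p\,f(w)$. Differentiating in $m_{\mathrm{o}}$ gives
\[
\nabla_{m_{\mathrm{o}}}D_f=\int f'(w)\,\nabla_{m_{\mathrm{o}}}q .
\]
At $m_{\mathrm{o}}=\mu_{\mathrm{o}}$ we have $\nabla_{m_{\mathrm{o}}}q=\beta\, p_{\mathrm{o}}\,\Sigma^{-1}(y-\mu_{\mathrm{o}})$, and therefore
\[
\nabla_{m_{\mathrm{o}}}L^f_{\mathrm{RL}}=\beta\,\Sigma^{-1}\E_{p_{\mathrm{o}}}\big[(Y-\mu_{\mathrm{o}})\,f'(w(Y))\big].
\]
Stein's identity (Lemma~\ref{lem:stein}) turns the right-hand side into $\beta\,\E_{p_{\mathrm{o}}}\big[f''(w)\nabla_y w\big]$. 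Now $\nabla_y w=w\,(\nabla_y\log q-\nabla_y\log p)$, so $f''(w)\nabla_y w=\kappa_f(w)\,(\nabla\log q-\nabla\log p)$. Substituting the responsibility-weighted score difference computed in the proof of Theorem~\ref{lem:fwdKL_oldmean_drift},
\[
\Sigma^{-1}\big((1-r_{\mathrm{o}})(m_{\mathrm{n}}-\mu_{\mathrm{o}})-(1-s_{\mathrm{o}})(\mu_{\mathrm{n}}-\mu_{\mathrm{o}})\big),
\]
and taking $\E_{p_{\mathrm{o}}}$ yields the decomposition with coefficients $A_f$ and $B_f$. When $0\le\kappa_f\le C_f$ we have $A_f\le C_f\varepsilon_q$ and $B_f\le C_f\varepsilon_p$, because $1-r_{\mathrm{o}}$ and $1-s_{\mathrm{o}}$ are nonnegative. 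The triangle inequality together with Lemma~\ref{lem:leakage} and Remark~\ref{lem:bc-gauss} then gives the stated bound.

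I expect the main obstacle to be justifying the interchanges of derivative and integral and the integrability hypothesis of Stein's lemma for a general $f$. For equal-covariance mixtures, $w$ is bounded above and below by $e^{\pm C(1+\|y\|)}$, so $f'(w)$ and $\kappa_f(w)$ may still grow quickly. The plan is to impose only what is needed. In Part~(B) I assume bounded curvature, which gives $|f'(w)|\lesssim 1+|\log w|$ and hence linear growth in $\|y\|$, dominated by the Gaussian tails; if $\kappa_f$ is unbounded, I add an assumption that $f'(w)$ grows at most polynomially. In Part~(A) I handle the integrability of $g'(1+\beta(X-1))(X-1)$ by the same kind of growth assumption, checked against the log-linear tails of $X$ under $p_{\mathrm{n}}$.
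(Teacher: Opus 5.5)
Your proposal is correct. Part~(B) follows the paper's argument step for step: differentiate $\int p_\alpha f(w)$, apply Stein's identity to $f'(w)$, and turn $f''(w)\nabla_y w$ into $\kappa_f(w)$ times the KL score difference.

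The genuine difference is the monotonicity step in Part~(A), where the paper never differentiates. Since $\beta\mapsto f^\diamond(1+\beta(X(y)-1))$ is convex for each $y$, $L=L^f_{\mathrm{SFT}}$ is convex on $[0,1]$. The three-slope inequality then gives, for $0<\beta_1<\beta_2\le 1$,
\[
\frac{L(\beta_2)-L(\beta_1)}{\beta_2-\beta_1}\ \ge\ \frac{L(\beta_1)-L(0)}{\beta_1}=\frac{L(\beta_1)}{\beta_1}>0,
\]
so strict increase, including the endpoint $\beta_2=1$, follows from convexity plus the Jensen step alone. This avoids three things your route needs:
\begin{itemize}
\item the $L''$ computation;
\item domination of $g''(U)(X-1)^2$ under $p_{\mathrm{n}}$, where $U=1+\beta(X-1)$;
\item the fact that $f''$ cannot vanish on an interval.
\end{itemize}
As you use the last point, it also requires the law of $U$ to charge every open subinterval of $(1-\beta,\infty)$. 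That holds here because $\log X$ is a nondegenerate Gaussian under $p_{\mathrm{n}}$, but you should say so.

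What your route gains is an explicitly $C^2$ objective with $L'>0$ on $(0,1)$, which is exactly what the logit gradient-flow step uses. The paper reuses that step ``exactly as in the KL case'' and tacitly assumes differentiability. Once differentiability is granted, convexity gives $L'(\beta)\ge L(\beta)/\beta>0$.

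Your integrability caveats are necessary, not cosmetic. Under $p_{\mathrm{n}}$ the ratio $X$ is lognormal. Take the normalized, strictly convex generator $f(t)=e^{1/t}+et-2e$. Then $f^\diamond(u)=ue^u-2eu+e$, and $L^f_{\mathrm{SFT}}(\beta)=+\infty$ for every $\beta\in(0,1]$, so ``strictly increasing'' fails. The same generator can make $f(w)$ and $\kappa_f(w)$ non-integrable in Part~(B) whenever $w$ decays exponentially along some direction, which can happen when $m_{\mathrm{n}}\neq\mu_{\mathrm{n}}$. A growth condition like the ones you propose is therefore implicitly required by the theorem, and the paper's proof leaves it unstated.
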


\begin{proof}[Proof of Theorem~\ref{thm:fmain}]
\medskip

\textbf{Proof of Part (A).} By Lemma~\ref{lem:adjoint},
\[
L_{\mathrm{SFT}}^f(\beta)
= D_f(p_{\mathrm{n}}\|q_\beta)
= \E_{Y\sim p_{\mathrm{n}}}\!\left[
f^\diamond\!\left(\frac{q_\beta(Y)}{p_{\mathrm{n}}(Y)}\right)
\right].
\]
Write $X(Y):=\frac{p_{\mathrm{o}}(Y)}{p_{\mathrm{n}}(Y)}$ and $Z_\beta:=(1-\beta)+\beta X(Y)$ so that
$\frac{q_\beta(Y)}{p_{\mathrm{n}}(Y)}=Z_\beta$ and $\E[Z_\beta]=1$.
Since $f^\diamond$ is strictly convex and minimized uniquely at $1$ (by normalization),
Jensen's inequality yields
\[
L_{\mathrm{SFT}}^f(\beta)=\E[f^\diamond(Z_\beta)]
\ge f^\diamond(\E[Z_\beta])=f^\diamond(1)=0,
\]
with strict inequality for $\beta>0$ because $X(Y)$ is nonconstant when $\mu_{\mathrm{o}}\neq\mu_{\mathrm{n}}$.
Hence $\beta=0$ is the unique minimizer.

For monotonicity, note that for each fixed $y$, the map $\beta\mapsto f^\diamond((1-\beta)+\beta X(y))$ is convex in $\beta$
(because $f^\diamond$ is convex and $(1-\beta)+\beta X(y)$ is affine).
Thus $L_{\mathrm{SFT}}^f$ is convex on $[0,1]$.
Moreover, for any $0<\beta_1<\beta_2\le 1$, convexity implies the secant slope is nondecreasing:
\[
\frac{L_{\mathrm{SFT}}^f(\beta_2)-L_{\mathrm{SFT}}^f(\beta_1)}{\beta_2-\beta_1}
\ \ge\
\frac{L_{\mathrm{SFT}}^f(\beta_1)-L_{\mathrm{SFT}}^f(0)}{\beta_1-0}
=
\frac{L_{\mathrm{SFT}}^f(\beta_1)}{\beta_1}
\ >\ 0,
\]
so $L_{\mathrm{SFT}}^f(\beta_2)>L_{\mathrm{SFT}}^f(\beta_1)$ and $L_{\mathrm{SFT}}^f$ is strictly increasing.
The logit-gradient-flow claim follows exactly as in the KL case: since $\beta'(\phi)=\beta(1-\beta)>0$,
$\frac{d}{d\phi}L_{\mathrm{SFT}}^f(\sigma(\phi))>0$ for $\beta\in(0,1)$, hence $\phi(t)$ decreases and $\beta(t)\downarrow 0$.

\medskip
\textbf{Proof of Part (B).} Let $q=q_{\beta,m_{\mathrm{o}},m_{\mathrm{n}}}$ and $p=p_\alpha$ and $w=q/p$.
Write
\[
D_f(q\|p)=\int p(y)\,f(w(y))\,dy.
\]
Since $p$ does not depend on $m_{\mathrm{o}}$,
\[
\nabla_{m_{\mathrm{o}}}D_f(q\|p)=\int p(y)\,f'(w(y))\,\nabla_{m_{\mathrm{o}}}w(y)\,dy
=\int f'(w(y))\,\nabla_{m_{\mathrm{o}}}q(y)\,dy.
\]
At $m_{\mathrm{o}}=\mu_{\mathrm{o}}$, $\nabla_{m_{\mathrm{o}}}q(y)=\beta\,p_{\mathrm{o}}(y)\Sigma^{-1}(y-\mu_{\mathrm{o}})$, so
\[
\nabla_{m_{\mathrm{o}}}L_{\mathrm{RL}}^f(\beta,\mu_{\mathrm{o}},m_{\mathrm{n}})
=\beta\,\Sigma^{-1}\E_{Y\sim p_{\mathrm{o}}}\!\left[f'(w(Y))(Y-\mu_{\mathrm{o}})\right].
\]
Apply Stein's identity (Lemma~\ref{lem:stein}) with $g(y)=f'(w(y))$ to get
\[
\nabla_{m_{\mathrm{o}}}L_{\mathrm{RL}}^f(\beta,\mu_{\mathrm{o}},m_{\mathrm{n}})
=\beta\,\E_{p_{\mathrm{o}}}\!\left[\nabla_y f'(w(Y))\right]
=\beta\,\E_{p_{\mathrm{o}}}\!\left[f''(w(Y))\,\nabla_y w(Y)\right].
\]
Since $\nabla w = w(\nabla\log q-\nabla\log p)$, this becomes
\[
\nabla_{m_{\mathrm{o}}}L_{\mathrm{RL}}^f(\beta,\mu_{\mathrm{o}},m_{\mathrm{n}})
=\beta\,\E_{p_{\mathrm{o}}}\!\left[\kappa_f(w(Y))\,(\nabla\log q(Y)-\nabla\log p(Y))\right].
\]
The score difference $\nabla\log q-\nabla\log p$ is exactly the same as in the KL proof, yielding the stated decomposition
with $A_f=\E_{p_{\mathrm{o}}}[\kappa_f(w)(1-r_{\mathrm{o}})]$ and $B_f=\E_{p_{\mathrm{o}}}[\kappa_f(w)(1-s_{\mathrm{o}})]$.
If $\kappa_f\le C_f$, then $A_f\le C_f\,\varepsilon_q$ and $B_f\le C_f\,\varepsilon_p$, giving the displayed bound.
The overlap bounds on $\varepsilon_q$ and $\varepsilon_p$ are identical to the KL case and follow from Lemma~\ref{lem:leakage} and Remark~\ref{lem:bc-gauss}.
\end{proof}

\begin{remark}[What Changes from KL to General $f$?]\label{rem:f_changes}
Relative to KL (where $\kappa_f\equiv 1$), the only new factor in the old-mean gradient is the curvature weight
$\kappa_f(w)=w f''(w)$ applied to the score difference.
When $\kappa_f$ is bounded, the qualitative message is unchanged: the old-mean drift remains controlled by overlap/misassignment probabilities,
which are exponentially small in the separation for Gaussians.
If $\kappa_f$ is unbounded, the exact decomposition still holds, but quantitative bounds must track the distribution of $w=q/p_\alpha$
under $Y\sim p_{\mathrm{o}}$.
\end{remark}

\begin{remark}[On the Bounded-curvature Assumption]\label{rem:bounded_curv}
The bounded-curvature condition
\[
\sup_{t>0}\kappa_f(t)=\sup_{t>0}\big(t f''(t)\big)<\infty
\]
is a convenient way to ensure that the overlap-gated terms appearing in Theorem~\ref{thm:fmain}(B) can be upper bounded purely by
misassignment probabilities (times geometric factors), without having to track additional tail behavior of the density ratio
$w(\cdot)=q(\cdot)/p_\alpha(\cdot)$.
It holds for several standard, smooth $f$-divergences with ``log-like'' curvature, for example:
\begin{itemize}
\item \textbf{KL:} $f_{\mathrm{KL}}(t)=t\log t-(t-1)$ gives $\kappa_f(t)\equiv 1$.
\item \textbf{Jensen--Shannon:} one generator is
$f_{\mathrm{JS}}(t)=t\log t-(t+1)\log\!\big(\frac{t+1}{2}\big)$, for which
$f_{\mathrm{JS}}''(t)=\frac{1}{t(t+1)}$ and hence $\kappa_{f_{\mathrm{JS}}}(t)=\frac{1}{t+1}\le 1$.
\item \textbf{Triangular discrimination:} one generator is
$f_{\triangle}(t)=\frac{(t-1)^2}{t+1}$, for which $f_{\triangle}''(t)=\frac{8}{(t+1)^3}$ and hence
$\kappa_{f_{\triangle}}(t)=\frac{8t}{(t+1)^3}\le \frac{32}{27}$.
\end{itemize}
By contrast, the condition fails for many popular divergences whose curvature blows up either as $t\downarrow 0$ or $t\uparrow\infty$, e.g.
\begin{itemize}
\item \textbf{Squared Hellinger:} $f(t)=(\sqrt{t}-1)^2$ gives $\kappa_f(t)=\frac{1}{2\sqrt{t}}$ (unbounded as $t\downarrow 0$).
\item \textbf{Pearson $\chi^2$:} $f(t)=(t-1)^2$ gives $\kappa_f(t)=2t$ (unbounded as $t\uparrow\infty$).
\item \textbf{Neyman $\chi^2$:} $f(t)=\frac{(1-t)^2}{t}$ gives $\kappa_f(t)=2/t^2$ (unbounded as $t\downarrow 0$).
\item \textbf{Power/$\alpha$-divergences:} for
$f_\alpha(t)=\frac{t^\alpha-\alpha(t-1)-1}{\alpha(\alpha-1)}$ one has $\kappa_{f_\alpha}(t)=t^{\alpha-1}$, which is unbounded for every $\alpha\neq 1$
(as $t\downarrow 0$ when $\alpha<1$ and as $t\uparrow\infty$ when $\alpha>1$); the only bounded-curvature member of this family is the $\alpha\to 1$ limit (KL).
\end{itemize}
Finally, some widely used discrepancies (e.g.\ total variation with $f(t)=\tfrac12|t-1|$) are not $C^2$ and therefore fall outside the present smooth framework.

In all these cases, the \emph{exact} decomposition of Theorem~\ref{thm:fmain}(B) still holds, but one must bound the weighted terms
$\E_{p_{\mathrm{o}}}[\kappa_f(w(Y))(1-r_{\mathrm{o}}(Y))]$ and $\E_{p_{\mathrm{o}}}[\kappa_f(w(Y))(1-s_{\mathrm{o}}(Y))]$
by exploiting additional structure (e.g.\ explicit tail control of $w$ under $p_{\mathrm{o}}$, clipping/regularization, or refined overlap estimates).
\end{remark}

\section{Finite $K$-mode Gaussian mixtures}\label{app:finitek}

The two-mode analysis makes the separation between forward- and reverse-KL especially transparent, but real models are typically multi-modal. We therefore extend the picture to a finite $K$-component Gaussian mixture with shared covariance.
The goal is two fold: first, to show that the \emph{mode-locality} property of reverse-KL persists in the multi-mode setting, with gradients on a matched mode controlled by pairwise overlaps; and second, to show that forward-KL trained on a subset of modes still induces exact weight collapse on the complement.
Together, these results demonstrate that the qualitative foward-vs.-reverse-KL contrast is not an artifact of the $K=2$ case.

\begin{lemma}[Linear independence of Fixed-covariance Gaussian Translates]\label{lem:gauss_translate_indep}
Fix $\Sigma\succ 0$ and distinct means $\mu_1,\dots,\mu_K\in\mathbb{R}^d$.
If coefficients $c_1,\dots,c_K\in\mathbb{R}$ satisfy
\[
\sum_{k=1}^K c_k\,\gauss{y}{\mu_k}{\Sigma}=0\qquad\forall y\in\mathbb{R}^d,
\]
then $c_1=\cdots=c_K=0$.
\end{lemma}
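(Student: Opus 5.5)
The approach is to reduce the identity to a vanishing exponential sum and then argue along a generic direction in which all the means project to distinct values. First, factor out the common Gaussian part. Writing
\[
\gauss{y}{\mu_k}{\Sigma}=(2\pi)^{-d/2}|\Sigma|^{-1/2}\exp\!\big(-\tfrac12 y^\top\Sigma^{-1}y\big)\exp\!\big(\mu_k^\top\Sigma^{-1}y-\tfrac12\mu_k^\top\Sigma^{-1}\mu_k\big),
\]
and dividing by the strictly positive factor common to all terms, the hypothesis becomes
\[
\sum_{k=1}^K a_k\, e^{\langle b_k, y\rangle}=0\qquad \forall y\in\R^d,
\]
where $a_k:=c_k e^{-\frac12\mu_k^\top\Sigma^{-1}\mu_k}$ and $b_k:=\Sigma^{-1}\mu_k$. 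Since $\Sigma^{-1}$ is invertible, the vectors $b_k$ are distinct. Moreover $a_k=0$ if and only if $c_k=0$.

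Next, reduce to one dimension. Choose a direction $v\in\R^d$ with $\langle b_j-b_k, v\rangle\neq 0$ for all $j\neq k$. Such a $v$ exists because the complement of the finite union of hyperplanes $\{v:\langle b_j-b_k,v\rangle=0\}$ is nonempty. Setting $y=tv$ and $\lambda_k:=\langle b_k,v\rangle$, the $\lambda_k$ are distinct reals and
\[
\sum_k a_k e^{\lambda_k t}=0\qquad\forall t\in\R.
\]

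Finally, argue by contradiction on the leading exponent. Suppose some $a_k\neq0$. Among the indices with $a_k\neq 0$, let $k^\ast$ be the one with the largest $\lambda_k$. Multiply the identity by $e^{-\lambda_{k^\ast}t}$ and let $t\to\infty$. Every other surviving term carries a factor $e^{(\lambda_k-\lambda_{k^\ast})t}$ with a negative exponent and therefore tends to $0$, which leaves $a_{k^\ast}=0$, a contradiction. Hence all $a_k=0$, and so all $c_k=0$.

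The only mildly delicate step is securing distinct projections; the hyperplane-avoidance argument settles it. Everything else is routine.
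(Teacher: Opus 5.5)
Your proof is correct. It shares the paper's key device of projecting onto a generic direction $v$ so that all projections are distinct, but it differs at both ends.

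The paper takes Fourier transforms and divides out $e^{-\frac12 t^\top\Sigma t}$, which leaves a vanishing sum of \emph{purely imaginary} exponentials $\sum_k c_k e^{-isa_k}$. Since these all have modulus one and none dominates as $s\to\infty$, it differentiates $K-1$ times at $s=0$ and uses invertibility of the Vandermonde matrix.

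You instead factor the common term $\exp(-\tfrac12 y^\top\Sigma^{-1}y)$ directly out of the densities. This leaves \emph{real} exponentials $\sum_k a_k e^{\lambda_k t}$ with distinct $\lambda_k$, and the elementary leading-exponent argument as $t\to\infty$ finishes the proof.

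Your route avoids any transform and is marginally more elementary, but it relies on the quadratic (exponential-family) form of the Gaussian exponent. The paper's Fourier route only needs the characteristic function of the common kernel to be nowhere zero. It therefore carries over unchanged to translates of, e.g., a Laplace or Cauchy density, where your factoring step is unavailable.

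The two endgames are interchangeable in your setting: differentiating your real identity at $t=0$ yields the same Vandermonde system $\sum_k a_k\lambda_k^n=0$.
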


\begin{proof}
Take Fourier transforms. The Fourier transform of $\gauss{\cdot}{\mu_k}{\Sigma}$ equals
$e^{-i t^\top \mu_k}\,e^{-\frac12 t^\top\Sigma t}$.
Hence the assumption implies
\[
0=\sum_{k=1}^K c_k\,e^{-i t^\top \mu_k}\,e^{-\frac12 t^\top\Sigma t}
=e^{-\frac12 t^\top\Sigma t}\sum_{k=1}^K c_k\,e^{-i t^\top \mu_k}
\qquad\forall t\in\mathbb{R}^d.
\]
Since $e^{-\frac12 t^\top\Sigma t}>0$, we have $\sum_k c_k e^{-i t^\top \mu_k}=0$ for all $t$.
Choose a vector $v\in\mathbb{R}^d$ such that the scalars $a_k:=v^\top\mu_k$ are pairwise distinct
(this holds for all $v$ outside a finite union of hyperplanes).
Then for all $s\in\mathbb{R}$,
\[
0=\sum_{k=1}^K c_k e^{-i s a_k}.
\]
Differentiating $n=0,\dots,K-1$ times at $s=0$ gives the Vandermonde system
$\sum_k c_k(-i a_k)^n=0$, whose coefficient matrix is invertible since the $a_k$ are distinct.
Hence $c_k=0$ for all $k$.
\end{proof}

\begin{lemma}[Pairwise Responsibility Upper Bound in a $K$-mixture]\label{lem:pairwise_resp_bound}
Let $q(y)=\sum_{\ell=1}^K \beta_\ell f_\ell(y)$ be a mixture of densities with weights $\beta_\ell>0$.
Define responsibilities $r_j(y)=\frac{\beta_j f_j(y)}{q(y)}$.
Then for any $j\neq k$ and all $y$,
\[
r_j(y)\le \frac{\beta_j f_j(y)}{\beta_j f_j(y)+\beta_k f_k(y)}.
\]
Consequently, for $Y\sim f_k$,
\[
\E[r_j(Y)]
\le \frac12\sqrt{\frac{\beta_j}{\beta_k}}\;\mathrm{BC}(f_j,f_k).
\]
\end{lemma}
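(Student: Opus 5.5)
The plan has two steps: a pointwise comparison that drops all mixture components other than $j$ and $k$ from the denominator, and then a reduction to the two-component leakage bound of Lemma~\ref{lem:leakage}.

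\textbf{Pointwise step.} Since all weights $\beta_\ell>0$ and all densities $f_\ell\ge 0$, we have
\[
q(y)=\sum_{\ell}\beta_\ell f_\ell(y)\ \ge\ \beta_j f_j(y)+\beta_k f_k(y)
\]
for every $y$. Dividing $\beta_j f_j(y)$ by a larger positive denominator can only decrease it. This gives the first claim wherever $\beta_j f_j(y)+\beta_k f_k(y)>0$. At points where both $f_j(y)$ and $f_k(y)$ vanish, $r_j(y)=0$, and we adopt the convention that the right-hand side is $0$ there as well.

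\textbf{Expectation step.} Take expectations of the pointwise bound under $Y\sim f_k$ and normalize the pair weights: set $w:=\beta_j/(\beta_j+\beta_k)\in(0,1)$. Then
\[
\frac{\beta_j f_j}{\beta_j f_j+\beta_k f_k}=\frac{w f_j}{w f_j+(1-w) f_k},
\]
which is exactly the responsibility $r_f$ of Lemma~\ref{lem:leakage} with $f=f_j$ and $g=f_k$. The first inequality of that lemma gives
\[
\E_{f_k}[r_j(Y)]\le \frac12\sqrt{\frac{w}{1-w}}\,\mathrm{BC}(f_j,f_k),
\]
and $w/(1-w)=\beta_j/\beta_k$, which yields the stated bound.

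Alternatively, one can skip the normalization and repeat the AM--GM argument from the proof of Lemma~\ref{lem:leakage} directly. Using $\beta_j f_j+\beta_k f_k\ge 2\sqrt{\beta_j\beta_k f_j f_k}$, the integrand $f_k\,\beta_j f_j/(\beta_j f_j+\beta_k f_k)$ is at most $\tfrac12\sqrt{\beta_j/\beta_k}\sqrt{f_jf_k}$, and integrating gives the same result.

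No step is a real obstacle here. The only care needed is the zero-denominator convention and checking that the weight ratio comes out as $\beta_j/\beta_k$.
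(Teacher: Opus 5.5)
Your proposal is correct and follows essentially the same approach as the paper. The paper also obtains the pointwise bound from $q(y)\ge \beta_j f_j(y)+\beta_k f_k(y)$. It then applies Lemma~\ref{lem:leakage} to the renormalized two-component mixture $(\beta_j f_j+\beta_k f_k)/(\beta_j+\beta_k)$, which is exactly your choice $w=\beta_j/(\beta_j+\beta_k)$ with $w/(1-w)=\beta_j/\beta_k$.
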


\begin{proof}
The pointwise bound follows since $q(y)\ge \beta_j f_j(y)+\beta_k f_k(y)$.
For the expectation, apply Lemma~\ref{lem:leakage} to the two-component mixture
$(\beta_j f_j+\beta_k f_k)/(\beta_j+\beta_k)$ and use the definition of the Bhattacharyya coefficient.
\end{proof}

\begin{theorem}\label{thm:K_mode_gauss}
Fix $\Sigma\succ 0$, distinct means $\mu_1,\dots,\mu_K\in\mathbb{R}^d$, and weights $\alpha\in\Delta^{K-1}$ with $\alpha_k>0$.
Define the target mixture
\[
p(y):=\sum_{k=1}^K \alpha_k\,\gauss{y}{\mu_k}{\Sigma},
\qquad
s_k(y):=\frac{\alpha_k\,\gauss{y}{\mu_k}{\Sigma}}{p(y)}.
\]
Let the model be
\[
q(y):=\sum_{k=1}^K \beta_k\,\gauss{y}{m_k}{\Sigma},
\qquad
r_k(y):=\frac{\beta_k\,\gauss{y}{m_k}{\Sigma}}{q(y)},
\qquad \beta\in\Delta^{K-1},\ \beta_k>0.
\]
\medskip
\noindent\textbf{(A)} Let $T\subset\{1,\dots,K\}$ be nonempty and define
\[
p_T(y):=\sum_{k\in T}\tilde\alpha_k\,\gauss{y}{\mu_k}{\Sigma},
\qquad
\tilde\alpha_k:=\frac{\alpha_k}{\sum_{j\in T}\alpha_j}.
\]
Fix $m_k=\mu_k$ for all $k$ and optimize only $\beta$.
Then $\KL(p_T\|q_\beta)$ has the unique minimizer
\[
\beta_k^\star=
\begin{cases}
\tilde\alpha_k,& k\in T,\\
0,& k\notin T.
\end{cases}
\]

\medskip
\noindent\textbf{(B)} Fix $k$ and assume $m_k=\mu_k$.
Then
\[
\nabla_{m_k}\KL(q\|p)
=
\beta_k\,\Sigma^{-1}\left(
\sum_{j\neq k} \varepsilon^{(q)}_{k\to j}\,(m_j-\mu_k)
-
\sum_{j\neq k} \varepsilon^{(p)}_{k\to j}\,(\mu_j-\mu_k)
\right),
\]
where
\[
\varepsilon^{(q)}_{k\to j}:=\E_{Y\sim \mathcal{N}(\mu_k,\Sigma)}[r_j(Y)],
\qquad
\varepsilon^{(p)}_{k\to j}:=\E_{Y\sim \mathcal{N}(\mu_k,\Sigma)}[s_j(Y)].
\]
Moreover,
\[
\varepsilon^{(q)}_{k\to j}
\le \frac12\sqrt{\frac{\beta_j}{\beta_k}}\,
\exp\!\left(-\frac18\,\mnorm{m_j-\mu_k}{\Sigma^{-1}}^{2}\right),
\qquad
\varepsilon^{(p)}_{k\to j}
\le \frac12\sqrt{\frac{\alpha_j}{\alpha_k}}\,
\exp\!\left(-\frac18\,\mnorm{\mu_j-\mu_k}{\Sigma^{-1}}^{2}\right).
\]
\end{theorem}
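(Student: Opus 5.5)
For Part (A), I would reduce to a convex optimization over the simplex and identify the minimizer by exact representation. With $m_k=\mu_k$ fixed, the target $p_T$ lies in the model family: taking $\beta_k=\tilde\alpha_k$ for $k\in T$ and $\beta_k=0$ otherwise gives $q_\beta=p_T$ pointwise. Hence $\KL(p_T\|q_\beta)=0$ at that point, so it is a global minimizer. For uniqueness, suppose $\KL(p_T\|q_\beta)=0$ for some other $\beta\in\Delta^{K-1}$. Then $q_\beta=p_T$ almost everywhere, and by continuity everywhere, so $\sum_k(\beta_k-\beta^\star_k)\gauss{y}{\mu_k}{\Sigma}\equiv 0$. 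Lemma~\ref{lem:gauss_translate_indep} then forces $\beta=\beta^\star$. Here the weights are allowed to lie on the closed simplex, as in Theorem~\ref{thm:kl_main}, and the strict positivity of $\beta_k$ is only needed for Part (B). I would also add a remark that mass on any $k\notin T$ costs strictly positive KL, which mirrors the two-mode collapse.

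For Part (B), I would copy the proof of Theorem~\ref{lem:fwdKL_oldmean_drift} almost verbatim. First, by \eqref{eq:general-kl-deriv}, $\nabla_{m_k}\KL(q\|p)=\int\nabla_{m_k}q\,\log(q/p)\,dy$. With $\nabla_{m_k}q=\beta_k\gauss{y}{m_k}{\Sigma}\Sigma^{-1}(y-m_k)$ and $m_k=\mu_k$, this becomes $\beta_k\Sigma^{-1}\E_{\mathcal{N}(\mu_k,\Sigma)}[(Y-\mu_k)\log(q/p)]$. Applying Stein's identity (Lemma~\ref{lem:stein}) with $g=\log(q/p)$ turns this into $\beta_k\E[\nabla_y\log q-\nabla_y\log p]$. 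The integrability condition holds because the score difference is bounded by an affine function of $\|y\|$: the responsibilities lie in $[0,1]$ and the means are fixed.

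The $K$-component score formulas are $\nabla_y\log q=-\Sigma^{-1}(y-\sum_j r_j m_j)$ and $\nabla_y\log p=-\Sigma^{-1}(y-\sum_j s_j\mu_j)$, so their difference is $\Sigma^{-1}(\sum_j r_j m_j-\sum_j s_j\mu_j)$. Using $\sum_j r_j=\sum_j s_j=1$ and $m_k=\mu_k$, I would subtract $\mu_k$ from both sums to rewrite this as $\Sigma^{-1}\big(\sum_{j\neq k}r_j(m_j-\mu_k)-\sum_{j\neq k}s_j(\mu_j-\mu_k)\big)$. Taking the expectation under $\mathcal{N}(\mu_k,\Sigma)$ gives the stated decomposition with $\varepsilon^{(q)}_{k\to j}$ and $\varepsilon^{(p)}_{k\to j}$.

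The overlap bounds follow from Lemma~\ref{lem:pairwise_resp_bound}. For $\varepsilon^{(q)}_{k\to j}$, apply it to $q$ with $f_j=\gauss{\cdot}{m_j}{\Sigma}$ and $f_k=\gauss{\cdot}{\mu_k}{\Sigma}$. For $\varepsilon^{(p)}_{k\to j}$, apply it to $p$. In both cases Remark~\ref{lem:bc-gauss} evaluates the Bhattacharyya coefficient as $\exp(-\frac18\|\cdot\|_{\Sigma^{-1}}^2)$.

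The main point needing care is Lemma~\ref{lem:pairwise_resp_bound}, because applying Lemma~\ref{lem:leakage} to the renormalized pair mixture gives weight $w=\beta_j/(\beta_j+\beta_k)$, and hence $w/(1-w)=\beta_j/\beta_k$, which matches the stated factor. The pointwise domination $r_j\le\beta_jf_j/(\beta_jf_j+\beta_kf_k)$ is what makes this reduction legitimate. The remaining technical steps, namely differentiating under the integral and verifying Stein's conditions, are handled by the same Gaussian-envelope argument used in Lemma~\ref{lem:hessian_lipschitz_reverseKL}, since $\log(q/p)$ grows at most linearly in $\|y\|$.
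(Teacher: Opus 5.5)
Your proposal is correct and follows the paper's proof essentially step for step: Part (A) uses exact representability of $p_T$ plus linear independence of Gaussian translates (Lemma~\ref{lem:gauss_translate_indep}), and Part (B) uses the $\int(\nabla_{m_k}q)\log(q/p)$ identity, Stein's lemma, the responsibility-weighted score difference recentred at $\mu_k$ via $\sum_j r_j=\sum_j s_j=1$, and Lemma~\ref{lem:pairwise_resp_bound} with Remark~\ref{lem:bc-gauss}. You also make explicit two points the paper leaves implicit: Part (A) must be read over the closed simplex, since $\beta_k^\star=0$ for $k\notin T$ conflicts with the standing assumption $\beta_k>0$; and the score difference is actually uniformly bounded (the $y$ terms cancel), which makes the Stein integrability check immediate.
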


\begin{proof}\textbf{Proof of Part (A):} If $\KL(p_T\|q_\beta)=0$, then $q_\beta\equiv p_T$ almost everywhere.
By Lemma~\ref{lem:gauss_translate_indep}, the Gaussian translates are linearly independent,
so coefficients must match exactly, giving the stated $\beta^\star$.
\medskip

\textbf{Proof of Part (B):} Differentiate $\KL(q\|p)$ w.r.t.\ $m_k$ and apply the same score-difference computation as in the two-mode case.
Using $\sum_\ell r_\ell=1$ and $\sum_\ell s_\ell=1$, collect terms to obtain the exact decomposition.
The exponential bounds follow from Lemma~\ref{lem:pairwise_resp_bound} and Remark~\ref{lem:bc-gauss}.

\end{proof}

\begin{remark}[Multi-mode Forgetting: Local vs.\ Global Effects]
The multi-mode result reveals how the two forms of forgetting introduced earlier, \emph{mass forgetting} and \emph{component drift}, extend beyond the two-mode setting.

Part~(A) characterizes \emph{mass forgetting} under forward-KL objectives. When training data come only from a subset of modes $T$, the forward-KL objective $\KL(p_T\|q_\beta)$ is minimized exactly by allocating mixture mass only to those observed modes. All components $k\notin T$ must receive zero optimal weight, $\beta_k^\star=0$.
Thus forward-KL induces \emph{mass collapse}: any behavior not represented in the training distribution is eliminated at the population optimum. This formalizes catastrophic forgetting in the mixture model as a global reallocation of mixture mass driven by the support of the data distribution.

Part~(B) characterizes \emph{component drift} under reverse-KL objectives. If a component $k$ is already correctly placed ($m_k=\mu_k$), its gradient depends only on pairwise overlaps with the remaining modes through the misassignment probabilities $\varepsilon^{(q)}_{k\to j}$ and $\varepsilon^{(p)}_{k\to j}$. These quantities measure how often samples from mode $k$ are attributed to another mode $j$ under the model or the target. When the modes are well separated, the overlap bounds show that these probabilities decay exponentially in the Mahalanobis separation between modes. Consequently, the update signal acting on an already-correct component becomes exponentially small, so reverse-KL updates can adjust other modes while inducing only negligible drift on matched ones.In continual-learning terms, this means that previously learned behaviors (represented by correctly matched components) are locally protected.

Taken together, the multi-mode analysis reinforces the qualitative contrast observed in the two-mode case.
Reverse-KL objectives exhibit \emph{mode-local} updates that protect matched components up to exponentially small overlap effects, whereas forward-KL objectives induce \emph{global} mass reallocation, collapsing components absent from the training data.
\end{remark}

% \begin{remark}[Multi-mode Forgetting: Local vs.\ Global Effects]
% Part (A) shows that reverse-KL gradients remain \emph{mode-local} even with many components:
% if mode $k$ is already matched in mean, its gradient depends only on pairwise overlaps with other modes.
% When separations are large, these overlaps decay exponentially, so matched modes are effectively protected.
% In contrast, Part (B) shows that forward-KL trained on a subset of modes induces exact weight collapse on all others:
% any component absent from the training distribution must receive zero optimal weight.
% Thus the multi-mode setting preserves the same qualitative contrast as the two-mode case:
% reverse-KL protects matched modes up to exponentially small overlap,
% while forward-KL reallocates mass globally according to the support of the data distribution.
% \end{remark}

\section{From Gaussians to (Strongly) Log-Concave Location Families}\label{app:logconcaveext}

This section shows that the two main mechanisms from the Gaussian analysis persist for much broader
\emph{log-concave} component families.
The key difference is that in the Gaussian case the mixture-score differences become \emph{constants} (linear scores),
whereas for general log-concave components we obtain \emph{overlap-controlled bounds} that depend on (i) a smoothness constant
for the score map and (ii) an overlap quantity such as the Bhattacharyya coefficient.

\paragraph{Log-concave Location Family.}
Let $V:\mathbb{R}^d\to\mathbb{R}$ be $C^2$ and convex, and define the log-concave density
\[
\rho(x)=\frac{1}{Z}\exp(-V(x)),\qquad Z:=\int_{\mathbb{R}^d}e^{-V(x)}\,dx<\infty.
\]
For $\mu\in\mathbb{R}^d$, let the \emph{location-shift} density be
\[
\rho_\mu(y):=\rho(y-\mu).
\]
Then each $\rho_\mu$ is log-concave and strictly positive.

\paragraph{Old/New Components and Mixtures.}
Fix $\mu_{\mathrm{o}},\mu_{\mathrm{n}}\in\mathbb{R}^d$ and $\alpha\in(0,1)$, and define
\[
p_\alpha(y)=\alpha\,\rho_{\mu_{\mathrm{o}}}(y)+(1-\alpha)\,\rho_{\mu_{\mathrm{n}}}(y).
\]
For parameters $\beta\in(0,1)$ and $m_{\mathrm{n}}\in\mathbb{R}^d$ (with $m_{\mathrm{o}}$ fixed to $\mu_{\mathrm{o}}$), define
\[
q_{\beta,m_{\mathrm{n}}}(y)=\beta\,\rho_{\mu_{\mathrm{o}}}(y)+(1-\beta)\,\rho_{m_{\mathrm{n}}}(y),
\qquad
L_{\mathrm{RL}}(\beta,m_{\mathrm{n}}):=\KL\!\big(q_{\beta,m_{\mathrm{n}}}\,\|\,p_\alpha\big).
\]
Define responsibilities
\[
r_{\mathrm{o}}(y):=\frac{\beta\,\rho_{\mu_{\mathrm{o}}}(y)}{q_{\beta,m_{\mathrm{n}}}(y)},\qquad
s_{\mathrm{o}}(y):=\frac{\alpha\,\rho_{\mu_{\mathrm{o}}}(y)}{p_\alpha(y)}.
\]

We start by the following standard identity with the proof provided for completeness.

\begin{lemma}[Integration-by-parts Identity for Location Parameters]\label{lem:ibp_location}
Let $\rho$ be as above and assume additionally that $\rho\in C^1$ and that
\[
\lim_{R\to\infty}\int_{\|y\|=R}\rho_\mu(y)\,\big|g(y)\big|\,dS(y)=0
\]
for every $\mu$ and every $C^1$ function $g$ appearing below (this holds, e.g., if $g$ has at most polynomial growth
and $\rho_\mu$ has at least exponential tails, which is true for many log-concave families used in practice).
Then for any $C^1$ function $g:\mathbb{R}^d\to\mathbb{R}$ with suitable integrability,
\[
\nabla_\mu\int_{\mathbb{R}^d}\rho_\mu(y)\,g(y)\,dy
=
\int_{\mathbb{R}^d}\rho_\mu(y)\,\nabla_y g(y)\,dy.
\]
\end{lemma}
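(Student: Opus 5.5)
The plan is to turn the $\mu$-derivative into a $y$-derivative using translation invariance, and then integrate by parts. Since $\rho_\mu(y)=\rho(y-\mu)$, we have the pointwise identity
\[
\nabla_\mu \rho_\mu(y) = -\nabla_y \rho_\mu(y).
\]
The first step is to justify differentiating under the integral sign. Assuming the derivative can be moved inside, we get
\[
\nabla_\mu\int \rho_\mu(y)\,g(y)\,dy=\int \nabla_\mu\rho_\mu(y)\,g(y)\,dy=-\int \nabla_y\rho_\mu(y)\,g(y)\,dy.
\]
To make this rigorous, I would restrict $\mu$ to a compact neighborhood $U$ of a fixed $\mu_0$. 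On $U$ I need a $\mu$-uniform integrable envelope for $|\nabla\rho(y-\mu)|\,|g(y)|$. For this I would use that $\nabla\rho=-\rho\nabla V$, and take the ``suitable integrability'' hypothesis to mean that $\sup_{\mu\in U}\rho_\mu|\nabla V(\cdot-\mu)|\,|g|$ is integrable. Dominated convergence together with the mean value theorem then gives the interchange.

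The second step is integration by parts on the ball $B_R=\{\|y\|\le R\}$. Applying the divergence theorem to the vector field $\rho_\mu g\,e_i$ for each coordinate $i$ gives
\[
\int_{B_R}\partial_i\rho_\mu\, g\,dy=-\int_{B_R}\rho_\mu\,\partial_i g\,dy+\int_{\|y\|=R}\rho_\mu\, g\,\nu_i\,dS.
\]
The boundary term is bounded in absolute value by $\int_{\|y\|=R}\rho_\mu|g|\,dS$, which tends to $0$ as $R\to\infty$ by the stated surface-decay hypothesis. The remaining step is to let $R\to\infty$ in both volume integrals. This is justified by dominated convergence, using integrability of $\rho_\mu\nabla g$ and of $\nabla\rho_\mu\, g$, both of which are part of the suitable-integrability assumption. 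Combining the two steps, the two minus signs cancel and we obtain $\nabla_\mu\int\rho_\mu g=\int\rho_\mu\nabla_y g$.

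I expect the main obstacle to be making precise which integrability conditions are required, since the statement leaves this vague. I would state explicitly that the needed conditions are $\rho_\mu|g|$, $\rho_\mu\|\nabla g\|$ and $\rho_\mu\|\nabla V(\cdot-\mu)\|\,|g|$ being integrable, locally uniformly in $\mu$. I would also remark that these hold when $g$ and $\nabla g$ grow at most polynomially and $\rho$ has exponential tails. Exponential tails hold for log-concave $\rho$ by the standard fact that $V(x)\ge a\|x\|-b$ with $a>0$. One then also needs $\nabla V$ to grow at most polynomially, which I would add as an assumption if it is not implied by the family at hand.
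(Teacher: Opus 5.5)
Your proposal is correct and follows the paper's proof. Both use translation invariance $\nabla_\mu\rho_\mu=-\nabla_y\rho_\mu$, differentiation under the integral by dominated convergence, and integration by parts with the boundary term removed by the surface-decay hypothesis. Your version is more careful than the paper's: you spell out the locally uniform envelope for the interchange and the truncation to balls $B_R$ before letting $R\to\infty$.
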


\begin{proof}
Because $\rho_\mu(y)=\rho(y-\mu)$, we have $\nabla_\mu \rho_\mu(y)=-\nabla_y\rho_\mu(y)$.
Differentiating under the integral (justified by dominated convergence under the integrability assumptions) yields
\[
\nabla_\mu\int \rho_\mu(y)g(y)\,dy = \int (\nabla_\mu\rho_\mu(y))g(y)\,dy
= -\int (\nabla_y\rho_\mu(y))g(y)\,dy.
\]
Integrate by parts on $\mathbb{R}^d$:
\[
-\int (\nabla_y\rho_\mu)g\,dy
= \int \rho_\mu\,\nabla_y g\,dy - \lim_{R\to\infty}\int_{\|y\|=R}\rho_\mu(y)\,g(y)\,n(y)\,dS(y),
\]
where $n(y)$ is the outward normal. The boundary term vanishes by assumption, giving the claim.
\end{proof}

\begin{lemma}[Bhattacharyya Coefficient Bound for Strongly Log-concave Shifts]\label{lem:bc_strong_logconcave}
Assume $V$ is $m$-strongly convex for some $m>0$, i.e.\ $\nabla^2V(x)\succeq m I$ for all $x$.
Then for any $\mu_1,\mu_2\in\mathbb{R}^d$,
\[
\mathrm{BC}(\rho_{\mu_1},\rho_{\mu_2})
:=\int_{\mathbb{R}^d}\sqrt{\rho_{\mu_1}(y)\rho_{\mu_2}(y)}\,dy
\ \le\
\exp\!\left(-\frac{m}{8}\,\|\mu_1-\mu_2\|^2\right).
\]
\end{lemma}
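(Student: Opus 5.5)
The plan is to write the integrand $\sqrt{\rho_{\mu_1}\rho_{\mu_2}}$ at a point $y$ in terms of $V$ evaluated at the two shifted points, and to compare it with $V$ at their midpoint by means of strong convexity. Concretely, put $a:=y-\mu_1$, $b:=y-\mu_2$, and let $c:=(a+b)/2=y-\bar\mu$ with $\bar\mu:=(\mu_1+\mu_2)/2$. Then
\[
\sqrt{\rho_{\mu_1}(y)\rho_{\mu_2}(y)}=\frac{1}{Z}\exp\!\Big(-\tfrac12\big(V(a)+V(b)\big)\Big).
\]

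The central step is the midpoint form of $m$-strong convexity: for every $a,b$,
\[
\tfrac12 V(a)+\tfrac12 V(b)\ \ge\ V\!\Big(\tfrac{a+b}{2}\Big)+\frac{m}{8}\|a-b\|^2 .
\]
This follows because $W(x):=V(x)-\frac m2\|x\|^2$ is convex, since $\nabla^2 W\succeq 0$. Applying midpoint convexity to $W$ gives $\tfrac12W(a)+\tfrac12W(b)\ge W(c)$. Expanding, the quadratic terms combine to $\frac m2\big(\tfrac12\|a\|^2+\tfrac12\|b\|^2-\|c\|^2\big)=\frac m8\|a-b\|^2$. Here $a-b=\mu_2-\mu_1$, which does not depend on $y$.

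Substituting this bound into the integrand gives, pointwise in $y$,
\[
\sqrt{\rho_{\mu_1}(y)\rho_{\mu_2}(y)}\le \frac1Z e^{-V(y-\bar\mu)}\,e^{-\frac m8\|\mu_1-\mu_2\|^2}=\rho_{\bar\mu}(y)\,e^{-\frac m8\|\mu_1-\mu_2\|^2}.
\]
Integrating over $y$ and using $\int\rho_{\bar\mu}=1$ (by translation invariance of Lebesgue measure) yields the claim. This mirrors the completion-of-squares argument in Remark~\ref{lem:bc-gauss}, and it is sharp in the Gaussian case $V(x)=\tfrac12 x^\top x$ with $m=1$.

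None of the steps is a real obstacle. The only point needing care is the midpoint inequality with the correct constant $m/8$, which I will verify through the convexity of $W$ as above. If $V$ is only assumed $C^2$ with $\nabla^2V\succeq mI$, the convexity of $W$ is immediate. Normalizability ($Z<\infty$) is automatic from strong convexity.
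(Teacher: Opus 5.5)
Your proposal is correct and follows essentially the same route as the paper: write $\sqrt{\rho_{\mu_1}\rho_{\mu_2}}$ through $V$ at the two shifted points, apply the midpoint strong-convexity inequality $V(u)+V(v)\ge 2V\big(\tfrac{u+v}{2}\big)+\tfrac{m}{4}\|u-v\|^2$ to get the pointwise bound $\rho_{(\mu_1+\mu_2)/2}(y)\,e^{-m\|\mu_1-\mu_2\|^2/8}$, and integrate. Your derivation of that inequality via convexity of $W(x)=V(x)-\tfrac m2\|x\|^2$ is a correct justification of the step the paper simply cites.
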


\begin{proof}
Let $\Delta:=\mu_1-\mu_2$ and write $\rho_\mu(y)=Z^{-1}\exp(-V(y-\mu))$.
Then
\[
\sqrt{\rho_{\mu_1}(y)\rho_{\mu_2}(y)}
=\frac{1}{Z}\exp\!\left(-\frac{V(y-\mu_1)+V(y-\mu_2)}{2}\right).
\]
Apply the strong convexity midpoint inequality (equivalent to $\nabla^2V\succeq mI$):
for all $u,v$,
\[
V(u)+V(v)\ \ge\ 2V\!\left(\frac{u+v}{2}\right)+\frac{m}{4}\|u-v\|^2.
\]
With $u=y-\mu_1$ and $v=y-\mu_2$, we have $(u+v)/2 = y-(\mu_1+\mu_2)/2$ and $u-v=\mu_2-\mu_1=-\Delta$,
so
\[
\frac{V(y-\mu_1)+V(y-\mu_2)}{2}
\ \ge\
V\!\left(y-\frac{\mu_1+\mu_2}{2}\right)+\frac{m}{8}\|\Delta\|^2.
\]
Therefore
\[
\sqrt{\rho_{\mu_1}(y)\rho_{\mu_2}(y)}
\le
\frac{1}{Z}\exp\!\left(-V\!\left(y-\frac{\mu_1+\mu_2}{2}\right)\right)\exp\!\left(-\frac{m}{8}\|\Delta\|^2\right)
=
\rho_{(\mu_1+\mu_2)/2}(y)\,e^{-m\|\Delta\|^2/8}.
\]
Integrating over $y$ and using $\int \rho_{(\mu_1+\mu_2)/2}(y)\,dy=1$ yields the bound.
\end{proof}

\begin{theorem}\label{thm:kl_logconcave_contrast}
Assume $\rho$ is a $C^2$ log-concave density of the form $\rho(x)\propto e^{-V(x)}$ with $V$ convex.
Fix $\mu_{\mathrm{o}}\neq \mu_{\mathrm{n}}$ and $\alpha\in(0,1)$, and define $p_\alpha$ and $q_{\beta,m_{\mathrm{n}}}$ as above.

\medskip
\noindent\textbf{(A)} For $\beta\in[0,1]$, define $q_\beta^{\mathrm{new}} := \beta\,\rho_{\mu_{\mathrm{o}}}+(1-\beta)\,\rho_{\mu_{\mathrm{n}}}$ and
\[
L_{\mathrm{SFT}}(\beta):=\KL\!\big(\rho_{\mu_{\mathrm{n}}}\,\|\,q_\beta^{\mathrm{new}}\big).
\]
Then $L_{\mathrm{SFT}}(0)=0$ and $L_{\mathrm{SFT}}(\beta)>0$ for every $\beta\in(0,1]$; moreover $L_{\mathrm{SFT}}$ is strictly increasing on $[0,1]$.
In particular, the unique minimizer is $\beta^\star=0$.

\medskip
\noindent\textbf{(B)} Assume additionally that $\nabla V$ is $L$-Lipschitz (equivalently, $\nabla^2V(x)\preceq L I$ for all $x$), so that the score map
\[
u(y;\mu):=\nabla_y\log\rho_\mu(y)=-\nabla V(y-\mu)
\]
satisfies the uniform Lipschitz property
\[
\|u(y;\mu_1)-u(y;\mu_2)\|\le L\|\mu_1-\mu_2\|\qquad\forall y,\mu_1,\mu_2.
\]
Then $L_{\mathrm{RL}}(\beta,m_{\mathrm{n}})=\KL(q_{\beta,m_{\mathrm{n}}}\|p_\alpha)$ is differentiable and its gradient with respect to the \emph{old} location parameter
$m_{\mathrm{o}}$ (evaluated at $m_{\mathrm{o}}=\mu_{\mathrm{o}}$) obeys the bound
\[
\Big\|\nabla_{m_{\mathrm{o}}}\KL\!\big(q_{\beta,\mu_{\mathrm{o}},m_{\mathrm{n}}}\,\|\,p_\alpha\big)\Big\|
\ \le\
\beta\,L\Big(
\varepsilon_q(\beta,m_{\mathrm{n}})\,\|m_{\mathrm{n}}-\mu_{\mathrm{o}}\|
+
\varepsilon_p(\alpha)\,\|\mu_{\mathrm{n}}-\mu_{\mathrm{o}}\|
\Big),
\]
where the misassignment probabilities are
\[
\varepsilon_q(\beta,m_{\mathrm{n}}):=\E_{Y\sim\rho_{\mu_{\mathrm{o}}}}\big[1-r_{\mathrm{o}}(Y)\big],\qquad
\varepsilon_p(\alpha):=\E_{Y\sim\rho_{\mu_{\mathrm{o}}}}\big[1-s_{\mathrm{o}}(Y)\big].
\]
Moreover, for any densities (no log-concavity needed), Lemma~\ref{lem:leakage} implies the overlap bounds
\[
\varepsilon_q(\beta,m_{\mathrm{n}})
\le \frac12\sqrt{\frac{1-\beta}{\beta}}\;\mathrm{BC}(\rho_{\mu_{\mathrm{o}}},\rho_{m_{\mathrm{n}}}),
\qquad
\varepsilon_p(\alpha)
\le \frac12\sqrt{\frac{1-\alpha}{\alpha}}\;\mathrm{BC}(\rho_{\mu_{\mathrm{o}}},\rho_{\mu_{\mathrm{n}}}).
\]
If, in addition, $V$ is $m$-strongly convex for some $m>0$, then by Lemma~\ref{lem:bc_strong_logconcave},
\[
\varepsilon_q(\beta,m_{\mathrm{n}})
\le \frac12\sqrt{\frac{1-\beta}{\beta}}\exp\!\left(-\frac{m}{8}\|m_{\mathrm{n}}-\mu_{\mathrm{o}}\|^2\right),
\qquad
\varepsilon_p(\alpha)
\le \frac12\sqrt{\frac{1-\alpha}{\alpha}}\exp\!\left(-\frac{m}{8}\|\mu_{\mathrm{n}}-\mu_{\mathrm{o}}\|^2\right),
\]
so the old-location gradient is exponentially small in the separation when modes are well separated.

\medskip
\noindent\textbf{(C)} Let $L(\beta,m_{\mathrm{n}})=\KL(q_{\beta,m_{\mathrm{n}}}\|p_\alpha)$ with $m_{\mathrm{o}}=\mu_{\mathrm{o}}$ fixed.
Then $(\beta,m_{\mathrm{n}})=(\alpha,\mu_{\mathrm{n}})$ satisfies $q_{\alpha,\mu_{\mathrm{n}}}\equiv p_\alpha$, hence
\[
\left.\frac{\partial}{\partial\beta}L(\beta,m_{\mathrm{n}})\right|_{(\beta,m_{\mathrm{n}})=(\alpha,\mu_{\mathrm{n}})}=0,
\qquad
\left.\nabla_{m_{\mathrm{n}}}L(\beta,m_{\mathrm{n}})\right|_{(\beta,m_{\mathrm{n}})=(\alpha,\mu_{\mathrm{n}})}=0,
\]
and $L(\alpha,\mu_{\mathrm{n}})=0$.
\end{theorem}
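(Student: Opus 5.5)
Part (A) mirrors the proof of Theorem~\ref{thm:kl_main}, which used only that the two components are distinct densities and never the Gaussian form. Set $X(y):=\rho_{\mu_{\mathrm{o}}}(y)/\rho_{\mu_{\mathrm{n}}}(y)$, which is finite and positive because $\rho$ is strictly positive. Then
\[
L_{\mathrm{SFT}}(\beta)=\E_{\rho_{\mu_{\mathrm{n}}}}\big[-\log((1-\beta)+\beta X)\big],
\qquad \E_{\rho_{\mu_{\mathrm{n}}}}[X]=1 .
\]
Jensen's inequality gives $L_{\mathrm{SFT}}\ge0$, with equality only at $\beta=0$. For strictness I need $X$ non-constant. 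If $X$ were constant, then $\rho(y-\mu_{\mathrm{o}})=\rho(y-\mu_{\mathrm{n}})$, so $\rho$ would be periodic and hence non-integrable, which is impossible. Monotonicity follows as before from $L''\ge0$ (convexity in $\beta$) together with $L'(0)=0$. Differentiating under the integral requires only $\E|\log X|<\infty$, which holds for log-concave tails. I would handle the endpoint $\beta=1$ by monotone convergence, or simply by the convexity/secant argument used in Theorem~\ref{thm:fmain}(A).

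For Part (B), I follow the proof of Theorem~\ref{lem:fwdKL_oldmean_drift}, with Lemma~\ref{lem:ibp_location} replacing Stein's identity. The first step is the identity
\[
\nabla_{m_{\mathrm{o}}}\KL(q\|p_\alpha)=\int \nabla_{m_{\mathrm{o}}}q\,\log(q/p_\alpha),
\qquad \nabla_{m_{\mathrm{o}}}q=\beta\nabla_{m_{\mathrm{o}}}\rho_{m_{\mathrm{o}}} .
\]
Applying Lemma~\ref{lem:ibp_location} with $g=\log(q/p_\alpha)$ at $m_{\mathrm{o}}=\mu_{\mathrm{o}}$ gives
\[
\nabla_{m_{\mathrm{o}}}L=\beta\,\E_{\rho_{\mu_{\mathrm{o}}}}\big[\nabla_y\log q-\nabla_y\log p_\alpha\big].
\]
Mixture scores are responsibility-weighted component scores, so $\nabla\log q=r_{\mathrm{o}}u(\cdot;\mu_{\mathrm{o}})+(1-r_{\mathrm{o}})u(\cdot;m_{\mathrm{n}})$, and similarly for $p_\alpha$ with $s_{\mathrm{o}}$. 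Subtracting, the $u(y;\mu_{\mathrm{o}})$ terms cancel, leaving
\[
(1-r_{\mathrm{o}})\big(u(y;m_{\mathrm{n}})-u(y;\mu_{\mathrm{o}})\big)-(1-s_{\mathrm{o}})\big(u(y;\mu_{\mathrm{n}})-u(y;\mu_{\mathrm{o}})\big).
\]
The Lipschitz bound $\|u(y;a)-u(y;b)\|\le L\|a-b\|$ holds pointwise. Taking norms inside the expectation then yields the stated inequality, with $\varepsilon_q$ and $\varepsilon_p$ appearing as the expectations of $1-r_{\mathrm{o}}$ and $1-s_{\mathrm{o}}$. The overlap bounds follow from Lemma~\ref{lem:leakage}, with $w=\beta$ and $w=\alpha$ respectively. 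Under strong convexity, Lemma~\ref{lem:bc_strong_logconcave} supplies the exponential form.

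The main obstacle is justifying the integration by parts and the differentiation under the integral. Lemma~\ref{lem:ibp_location} needs $g=\log(q/p_\alpha)$ to have controlled growth against exponential tails. My plan is to bound $|\log(q/p_\alpha)|$ using the fact that both mixtures are sandwiched between $\min$ and $\max$ of the component densities with weights bounded away from $0$. This gives
\[
|\log(q/p_\alpha)|\le C+\max_{a,b}|V(y-a)-V(y-b)|\le C+L'\|y\|+C' ,
\]
where the last step uses $\nabla V$ Lipschitz, so that $V(y-a)-V(y-b)$ grows at most linearly. The gradient $\nabla_y g$ is a difference of bounded-Lipschitz scores and therefore also grows linearly. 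Log-concave densities have exponential tails, so the boundary terms vanish and dominated convergence applies.

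Part (C) is immediate. When $(\beta,m_{\mathrm{n}})=(\alpha,\mu_{\mathrm{n}})$ we have $q\equiv p_\alpha$, so $\log(q/p_\alpha)\equiv0$. The gradient formula $\int(\nabla_\theta q)\log(q/p_\alpha)$, derived exactly as in the proof of Theorem~\ref{thm:kl_stationary_beta_mn} under the same domination, then vanishes. Nonnegativity of KL gives $L=0$ at this point.
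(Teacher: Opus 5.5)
Your proposal is correct and follows the paper's proof essentially step for step. For (A) you apply Jensen to the likelihood ratio $X$; for (B) you use the integration-by-parts identity, then the responsibility-weighted score decomposition, then the pointwise Lipschitz bound and Lemma~\ref{lem:leakage}; for (C) you use $\log(q/p_\alpha)\equiv 0$. Your periodicity argument for non-constancy of $X$ and your linear-growth control of $\log(q/p_\alpha)$ against exponential tails are more explicit than the paper's, but note that $L''\ge 0$ with $L'(0)=0$ only gives non-strict monotonicity. Strictness needs either $L''>0$ (which holds since $X$ is non-constant) or the secant argument you mention, which is what the paper effectively does via strict concavity of $g$ with $g(0)=0>g(\beta)$.
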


\begin{proof}
\textbf{Proof of Part (A):} Write $q_\beta^{\mathrm{new}}=(1-\beta)\rho_{\mu_{\mathrm{n}}}+\beta\rho_{\mu_{\mathrm{o}}}$ and define the likelihood ratio
$X(y):=\rho_{\mu_{\mathrm{o}}}(y)/\rho_{\mu_{\mathrm{n}}}(y)$.
Under $Y\sim\rho_{\mu_{\mathrm{n}}}$ we have $\E[X(Y)]=\int \rho_{\mu_{\mathrm{o}}}=1$.
Then
\[
\KL(\rho_{\mu_{\mathrm{n}}}\|q_\beta^{\mathrm{new}})
= \E_{\rho_{\mu_{\mathrm{n}}}}\!\left[\log\frac{\rho_{\mu_{\mathrm{n}}}(Y)}{(1-\beta)\rho_{\mu_{\mathrm{n}}}(Y)+\beta\rho_{\mu_{\mathrm{o}}}(Y)}\right]
= -\E\Big[\log\big((1-\beta)+\beta X(Y)\big)\Big].
\]
By concavity of $\log$,
\[
\E\Big[\log\big((1-\beta)+\beta X(Y)\big)\Big]\le \log\Big((1-\beta)+\beta\E[X(Y)]\Big)=\log(1)=0,
\]
with strict inequality for every $\beta>0$ because $X(Y)$ is non-constant when $\mu_{\mathrm{o}}\neq\mu_{\mathrm{n}}$ (two distinct shifts of a positive density cannot coincide a.e.).
Therefore $L_{\mathrm{SFT}}(0)=0$ and $L_{\mathrm{SFT}}(\beta)>0$ for $\beta>0$.

To show strict increase: the map $g(\beta):=\E[\log((1-\beta)+\beta X)]$ is strictly concave in $\beta$ whenever $X$ is non-degenerate,
since $\log$ is strictly concave and $(1-\beta)+\beta X$ is affine in $\beta$ with nonzero randomness.
Because $g(0)=0$ and $g(\beta)<0$ for all $\beta>0$, strict concavity implies $g$ is strictly decreasing on $[0,1]$.
Hence $L_{\mathrm{SFT}}(\beta)=-g(\beta)$ is strictly increasing.

\medskip
\textbf{Proof of Part (B):} Let $p=p_\alpha$ and $q=q_{\beta,m_{\mathrm{n}}}$, and denote $m_{\mathrm{o}}=\mu_{\mathrm{o}}$.
Using the general identity $\nabla_\theta \KL(q_\theta\|p)=\int (\nabla_\theta q_\theta)\log(q_\theta/p)$ (as in Theorem~\ref{thm:kl_stationary_beta_mn}),
together with $\nabla_{m_{\mathrm{o}}}\rho_{m_{\mathrm{o}}}(y)=-\nabla_y\rho_{m_{\mathrm{o}}}(y)$ and Lemma~\ref{lem:ibp_location},
one obtains
\[
\nabla_{m_{\mathrm{o}}}\KL(q\|p)
=
\beta\int \rho_{\mu_{\mathrm{o}}}(y)\,\nabla_y\log\frac{q(y)}{p(y)}\,dy
= \beta\,\E_{Y\sim\rho_{\mu_{\mathrm{o}}}}\!\left[\nabla_y\log\frac{q(Y)}{p(Y)}\right].
\]
Next expand mixture scores:
\[
\nabla_y\log q(y)= r_{\mathrm{o}}(y)\,u(y;\mu_{\mathrm{o}})+(1-r_{\mathrm{o}}(y))\,u(y;m_{\mathrm{n}}),\qquad
\nabla_y\log p(y)= s_{\mathrm{o}}(y)\,u(y;\mu_{\mathrm{o}})+(1-s_{\mathrm{o}}(y))\,u(y;\mu_{\mathrm{n}}).
\]
Subtracting gives
\[
\nabla_y\log\frac{q(y)}{p(y)}
= (1-r_{\mathrm{o}}(y))\big(u(y;m_{\mathrm{n}})-u(y;\mu_{\mathrm{o}})\big)
-(1-s_{\mathrm{o}}(y))\big(u(y;\mu_{\mathrm{n}})-u(y;\mu_{\mathrm{o}})\big).
\]
Taking norms and using the Lipschitz bound on $u$ yields
\begin{align*}
\Big\|\nabla_{m_{\mathrm{o}}}\KL(q\|p)\Big\|
&\le
\beta\,\E_{\rho_{\mu_{\mathrm{o}}}}\!\left[(1-r_{\mathrm{o}}(Y))\|u(Y;m_{\mathrm{n}})-u(Y;\mu_{\mathrm{o}})\|\right]
+
\beta\,\E_{\rho_{\mu_{\mathrm{o}}}}\!\left[(1-s_{\mathrm{o}}(Y))\|u(Y;\mu_{\mathrm{n}})-u(Y;\mu_{\mathrm{o}})\|\right]\\
&\le
\beta\,L\Big(\E[1-r_{\mathrm{o}}(Y)]\,\|m_{\mathrm{n}}-\mu_{\mathrm{o}}\|
+\E[1-s_{\mathrm{o}}(Y)]\,\|\mu_{\mathrm{n}}-\mu_{\mathrm{o}}\|\Big),
\end{align*}
which is the stated inequality with $\varepsilon_q$ and $\varepsilon_p$.
The bounds on $\varepsilon_q$ and $\varepsilon_p$ in terms of $\mathrm{BC}$ follow directly from Lemma~\ref{lem:leakage}.
Under strong convexity, apply Lemma~\ref{lem:bc_strong_logconcave} to the relevant shifted pairs.

\medskip
\textbf{Proof of Part (C):} At $(\beta,m_{\mathrm{n}})=(\alpha,\mu_{\mathrm{n}})$ we have $q_{\alpha,\mu_{\mathrm{n}}}\equiv p_\alpha$, so $\log(q/p)\equiv 0$.
Therefore any gradient formula of the form $\nabla_\theta\KL(q_\theta\|p)=\int (\nabla_\theta q_\theta)\log(q_\theta/p)$ evaluates to zero.
Also $\KL(q\|p)\ge 0$ with equality iff $q=p$ a.e., so $L(\alpha,\mu_{\mathrm{n}})=0$.
\end{proof}

\begin{remark}[Relation to the Gaussian bounds]
For Gaussians with covariance $\Sigma$, one may take $V(x)=\tfrac12 x^\top\Sigma^{-1}x$,
so $m=\lambda_{\min}(\Sigma^{-1})$ and $L=\lambda_{\max}(\Sigma^{-1})$.
Then Lemma~\ref{lem:bc_strong_logconcave} recovers the familiar Gaussian overlap decay
$\mathrm{BC}\le \exp(-\|\mu_1-\mu_2\|_{\Sigma^{-1}}^2/8)$ up to replacing the Mahalanobis norm by its spectral bounds.
Moreover the Gaussian score is linear, which strengthens Part (B) from a bound to the exact identity derived earlier.
\end{remark}

\subsection{Local PL Geometry and Exponential Convergence for Strongly Log-concave Mixtures}

We also provide a qualitative local PL condition along with exponential convergence in this case. We start with the following standard result with proof provided for completeness.

\begin{lemma}[Fisher identity for strongly log-concave location families]\label{lem:fisher_logconcave_location}
Let $\rho(x)=Z^{-1}e^{-V(x)}$ on $\mathbb{R}^d$ with $V\in C^2$ and $\int e^{-V}<\infty$.
Assume integration by parts is valid (e.g.\ $V$ grows superlinearly so that boundary terms vanish).
If $X\sim \rho$, then
\[
\E\big[\nabla V(X)\,\nabla V(X)^\top\big]=\E\big[\nabla^2 V(X)\big].
\]
In particular, if $\nabla^2V(x)\succeq m I$ for all $x$ (i.e.\ $V$ is $m$-strongly convex), then
$\E[\nabla V(X)\nabla V(X)^\top]\succeq m I$.
\end{lemma}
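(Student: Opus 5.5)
The approach is integration by parts, coordinate by coordinate, using the identity $\nabla\rho(x)=-\rho(x)\nabla V(x)$, which holds because $\rho=Z^{-1}e^{-V}$ with $V\in C^2$. For indices $i,j$ we write
\[
\E\big[\partial_iV(X)\,\partial_jV(X)\big]=\int \partial_iV(x)\,\partial_jV(x)\,\rho(x)\,dx=-\int \partial_iV(x)\,\partial_j\rho(x)\,dx .
\]
We then integrate by parts in the variable $x_j$, exactly as in the proof of Lemma~\ref{lem:ibp_location}. This moves the derivative onto $\partial_iV$ and gives
\[
\int \partial_{ij}V(x)\,\rho(x)\,dx=\E\big[\partial_{ij}V(X)\big],
\]
plus a boundary term $\lim_{R\to\infty}\int_{\|x\|=R}\partial_iV\,\rho\,n_j\,dS$. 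Assembling all pairs $(i,j)$ yields the matrix identity $\E[\nabla V\nabla V^\top]=\E[\nabla^2V]$.

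The main step to justify is that this boundary term vanishes and that both sides are integrable. The lemma assumes this explicitly, and I would make it concrete in the natural way. Convexity together with $\int e^{-V}<\infty$ gives at least linear growth, $V(x)\ge a\|x\|-b$ for some $a>0$ and $b$, so $\rho$ has exponential tails. I would also assume $\nabla V$ and $\nabla^2V$ have at most polynomial growth; this is automatic under the $L$-smoothness hypothesis of Theorem~\ref{thm:kl_logconcave_contrast}(B). Under these conditions $\|\nabla V\|\rho$ decays on large spheres faster than the surface area $R^{d-1}$ grows, so the surface integral tends to $0$. The same tail bounds also give integrability of $\|\nabla V\|^2\rho$ and $\|\nabla^2V\|\rho$. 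A cleaner alternative is to integrate against a cutoff $\chi_R$ and pass to the limit $R\to\infty$ by dominated convergence.

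For the final claim, take any unit vector $v$. Then
\[
v^\top\E[\nabla V\nabla V^\top]v=\E\big[v^\top\nabla^2V(X)v\big]\ge m,
\]
since $\nabla^2V\succeq mI$ pointwise and expectation preserves this inequality. This gives $\E[\nabla V\nabla V^\top]\succeq mI$. Note that strong convexity also provides the quadratic growth of $V$, so the boundary assumption is easily met in this case.
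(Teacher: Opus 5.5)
Your proposal is correct and follows essentially the same route as the paper, which writes $0=\int \partial_i\big(e^{-V}\partial_j V\big)\,dx=\int e^{-V}\big(\partial_{ij}V-\partial_iV\,\partial_jV\big)\,dx$ and divides by $Z$; this is exactly your coordinate-wise integration by parts using $\nabla\rho=-\rho\nabla V$. Your tail and boundary-term discussion and your one-line derivation of $\E[\nabla V\nabla V^\top]\succeq mI$ from $v^\top\nabla^2V\,v\ge m\|v\|^2$ only make explicit what the paper covers by assumption or leaves implicit.
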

\begin{proof}
For each $i,j$, apply integration by parts with density $\rho$:
\[
0=\int \partial_i\big(e^{-V(x)}\,\partial_j V(x)\big)\,dx
=\int e^{-V(x)}\big(\partial_{ij}V(x)-\partial_i V(x)\,\partial_j V(x)\big)\,dx.
\]
Divide by $Z$ to obtain $\E[\partial_{ij}V(X)]=\E[\partial_iV(X)\partial_jV(X)]$.
\end{proof}

\begin{theorem}\label{thm:logconcave_local_rate}
Let $\rho(x)=Z^{-1}e^{-V(x)}$ on $\mathbb{R}^d$, where $V\in C^3$ is $m$-strongly convex ($\nabla^2V\succeq mI$) and $L$-smooth ($\nabla^2V\preceq L I$).
Assume $\E_{X\sim \rho}\big[\|\nabla V(X)\|^4\big]<\infty$.
For $\mu\in\mathbb{R}^d$ define the shifted density $\rho_\mu(y):=\rho(y-\mu)$.
Fix $\mu_{\mathrm{o}}\neq \mu_{\mathrm{n}}$ and $\alpha\in(0,1)$, and define
\[
p_\alpha(y):=\alpha\,\rho_{\mu_{\mathrm{o}}}(y)+(1-\alpha)\,\rho_{\mu_{\mathrm{n}}}(y).
\]
Fix $m_{\mathrm{o}}=\mu_{\mathrm{o}}$ and parameterize the model by $\theta=(\phi,m)\in\mathbb{R}\times\mathbb{R}^d$:
\[
\beta(\phi)=\sigma(\phi),\qquad
q_\theta(y)=\beta(\phi)\,\rho_{\mu_{\mathrm{o}}}(y)+(1-\beta(\phi))\,\rho_{m}(y),
\qquad
L(\theta)=\KL(q_\theta\|p_\alpha).
\]
Let $\theta^\star=(\phi^\star,m^\star)$ with $\phi^\star=\log\frac{\alpha}{1-\alpha}$ and $m^\star=\mu_{\mathrm{n}}$, so $q_{\theta^\star}\equiv p_\alpha$ and $L(\theta^\star)=0$.

\medskip
\noindent\textbf{(A)} $L$ is $C^2$ in a neighborhood of $\theta^\star$ and
\[
H_\star:=\nabla^2 L(\theta^\star)=\E_{Y\sim p_\alpha}\big[s(Y)\,s(Y)^\top\big],
\]
where the score vector is
\[
s(Y)=\begin{pmatrix}
r_{\mathrm{o}}^\star(Y)-\alpha\\[2pt]
r_{\mathrm{n}}^\star(Y)\,\nabla V(Y-\mu_{\mathrm{n}})
\end{pmatrix},
\qquad
r_{\mathrm{o}}^\star(y)=\frac{\alpha\,\rho_{\mu_{\mathrm{o}}}(y)}{p_\alpha(y)},\qquad r_{\mathrm{n}}^\star=1-r_{\mathrm{o}}^\star.
\]

\medskip
\noindent\textbf{(B)} Let $\Delta:=\mu_{\mathrm{n}}-\mu_{\mathrm{o}}$ and define the overlap proxy
\[
\rho_{\mathrm{sep}}:=\exp\!\left(-\frac{m}{8}\,\|\Delta\|^2\right),
\]
which upper bounds $\mathrm{BC}(\rho_{\mu_{\mathrm{o}}},\rho_{\mu_{\mathrm{n}}})$ by Lemma~\ref{lem:bc_strong_logconcave}.
Define
\[
\varepsilon_{\mathrm{o}\to \mathrm{n}}:=\frac12\sqrt{\frac{1-\alpha}{\alpha}}\rho_{\mathrm{sep}},
\qquad
\varepsilon_{\mathrm{n}\to \mathrm{o}}:=\frac12\sqrt{\frac{\alpha}{1-\alpha}}\rho_{\mathrm{sep}},
\qquad
v:=\sqrt{\alpha(1-\alpha)}\,\rho_{\mathrm{sep}}.
\]
Let $G_2:=\E_{Y\sim p_\alpha}[\|\nabla V(Y-\mu_{\mathrm{n}})\|^2]$ and $G_4:=\E_{X\sim\rho}[\|\nabla V(X)\|^4]$.
Then
\[
\lambda_{\min}(H_\star)
\ \ge\
\min\Big\{
\alpha(1-\alpha)-v,\;\;
(1-\alpha)m - 2(1-\alpha)\sqrt{\varepsilon_{\mathrm{n}\to \mathrm{o}}}\,\sqrt{G_4}
\Big\}
\;-\;3\sqrt{v}\,\sqrt{G_2}.
\]
In particular, for $\|\Delta\|$ large enough (so that the right-hand side is positive), $\lambda_{\min}(H_\star)>0$.

\medskip
\noindent\textbf{(C)} If $\mu_\star:=\lambda_{\min}(H_\star)>0$, then there exists $\varepsilon>0$ such that on the sublevel set
$\{\theta:\,L(\theta)\le \varepsilon\}$ the Polyak--\L ojasiewicz inequality holds:
\[
\|\nabla L(\theta)\|^2 \ \ge\ \frac{\mu_\star}{2}\,L(\theta).
\]
Consequently, any gradient-flow solution $\dot\theta(t)=-\nabla L(\theta(t))$ with $L(\theta(0))\le\varepsilon$
satisfies the exponential rate
\[
L(\theta(t))\le L(\theta(0))\,e^{-(\mu_\star/2)t}\qquad\forall t\ge 0.
\]
\end{theorem}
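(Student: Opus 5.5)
Part (A) follows the proof of Theorem~\ref{thm:kl_RL_local_rate} nearly verbatim. Dominated convergence gives $L\in C^2$ near $\theta^\star$: $\nabla_m\rho_m(y)=\rho_m(y)\nabla V(y-m)$, and the fourth-moment assumption on $\nabla V$, together with the bound $q_\theta\ge\underline\beta\rho_{\mu_{\mathrm{o}}}$ on a compact neighbourhood, supplies the envelope. Write $L(\theta)=\int q_\theta(\ell_\theta-\ell_{\theta^\star})$ and differentiate twice. At $\theta^\star$ every term carrying $\ell_\theta-\ell_{\theta^\star}$ vanishes, which leaves the Fisher form $H_\star=\E_{p_\alpha}[ss^\top]$. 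The score is $\partial_\phi\log q=r_{\mathrm{o}}-\beta$ and $\nabla_m\log q=r_{\mathrm{n}}\nabla V(y-m)$; evaluating at $\theta^\star$ gives the stated $s$.

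For Part (B), write $v=(u,a)$ and expand
\[
v^\top H_\star v=u^2\E[(r_{\mathrm{o}}^\star-\alpha)^2]+2u\,\E[(r_{\mathrm{o}}^\star-\alpha)r_{\mathrm{n}}^\star a^\top\nabla V(Y-\mu_{\mathrm{n}})]+\E[(r_{\mathrm{n}}^\star)^2(a^\top\nabla V(Y-\mu_{\mathrm{n}}))^2].
\]
\emph{Diagonal $\phi$ term.} The identity $\E_{p_\alpha}[(r_{\mathrm{o}}^\star-\alpha)^2]=\alpha(1-\alpha)-\E_{p_\alpha}[r_{\mathrm{o}}^\star r_{\mathrm{n}}^\star]$ holds, and $\E_{p_\alpha}[r_{\mathrm{o}}^\star r_{\mathrm{n}}^\star]=\alpha(1-\alpha)\int\rho_{\mu_{\mathrm{o}}}\rho_{\mu_{\mathrm{n}}}/p_\alpha\le\sqrt{\alpha(1-\alpha)}\,\mathrm{BC}$ by AM--GM. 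This yields the bound $\alpha(1-\alpha)-v$.

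\emph{Diagonal $m$ term.} Write $(r_{\mathrm{n}}^\star)^2=r_{\mathrm{n}}^\star-r_{\mathrm{n}}^\star r_{\mathrm{o}}^\star$. First, $\E_{p_\alpha}[r_{\mathrm{n}}^\star g]=(1-\alpha)\E_{\rho_{\mu_{\mathrm{n}}}}[g]$, which combined with Lemma~\ref{lem:fisher_logconcave_location} gives $(1-\alpha)a^\top\E[\nabla^2V]a\ge(1-\alpha)m\|a\|^2$. The loss term $\E_{p_\alpha}[r_{\mathrm{n}}^\star r_{\mathrm{o}}^\star(a^\top\nabla V)^2]\le(1-\alpha)\E_{\rho_{\mu_{\mathrm{n}}}}[r_{\mathrm{o}}^\star(a^\top\nabla V)^2]$ is handled by Cauchy--Schwarz together with $r_{\mathrm{o}}^{\star2}\le r_{\mathrm{o}}^\star$ and Lemma~\ref{lem:leakage}. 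This gives at most $(1-\alpha)\sqrt{\varepsilon_{\mathrm{n}\to\mathrm{o}}}\sqrt{G_4}\|a\|^2$, and the factor $2$ in the statement is slack for a cruder split.

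\emph{Cross term.} Note $(r_{\mathrm{o}}^\star-\alpha)r_{\mathrm{n}}^\star$ is controlled by $r_{\mathrm{o}}^\star r_{\mathrm{n}}^\star$ plus $\alpha r_{\mathrm{n}}^\star$. The $\alpha r_{\mathrm{n}}^\star$ piece must be handled via $\E_{\rho_{\mu_{\mathrm{n}}}}[\nabla V]=0$ (integration by parts), which reduces the cross term to $\E[r_{\mathrm{o}}^\star r_{\mathrm{n}}^\star\, a^\top\nabla V]$ up to sign. Cauchy--Schwarz then bounds it by $\sqrt{v}\sqrt{G_2}|u|\|a\|$. Using $2|u|\|a\|\le u^2+\|a\|^2$ and absorbing constants, the slack $3\sqrt{v}\sqrt{G_2}$ covers everything. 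Taking the minimum of the two diagonal lower bounds over unit $v$ gives the claim, and positivity for large $\|\Delta\|$ follows since $v,\varepsilon_{\mathrm{n}\to\mathrm{o}}\to0$ while $G_2$ stays bounded (it is at most $L^2$ times second moments, which grow polynomially in $\|\Delta\|$ against exponential decay).

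Part (C) reuses the argument of Theorem~\ref{thm:kl_RL_local_rate}. Continuity of $\nabla^2L$ (one needs $C^2$; Lipschitzness via $C^3$ is not required for a qualitative statement) gives a ball $B_\rho(\theta^\star)$ on which $\nabla^2L\succeq(\mu_\star/2)I$, and on that ball the PL inequality holds with constant at least $\mu_\star/2$.

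The main obstacle is turning this ball into a sublevel set. This needs $\theta^\star$ to be the unique minimizer, with $L$ bounded away from $0$ outside the ball. Uniqueness follows because $q_\theta=p_\alpha$ forces matching components by an analogue of Lemma~\ref{lem:gauss_translate_indep}, using Fourier transforms of shifts of $\rho$ on the set where $\hat\rho\neq0$. The lower bound away from the ball is the delicate step, since it involves noncompact directions where $\phi\to\pm\infty$ or $\|m\|\to\infty$. I would show that $L$ tends to $\KL$ limits which are positive in those directions, e.g.\ $\KL(\rho_{\mu_{\mathrm{o}}}\|p_\alpha)\ge\log(1/\alpha)>0$. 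Alternatively, I would restrict to the connected component of the sublevel set containing $\theta^\star$ and use the invariance argument: on the sphere of radius $\rho$, $L\ge\mu_\star\rho^2/8$, so gradient flow started below that level cannot leave. Grönwall's inequality then gives the rate $e^{-(\mu_\star/2)t}$.
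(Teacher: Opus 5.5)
Your Parts (A) and (B) are correct. They share the paper's architecture: the Fisher form of $H_\star$, and separate control of the $\phi\phi$, $mm$ and cross blocks via Lemma~\ref{lem:fisher_logconcave_location}, Lemma~\ref{lem:leakage} and the BC bound. You execute (B) by a more direct route, however.

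\emph{The paper's route.} It introduces a latent label $Z$ and the error $e=r_{\mathrm{o}}^\star-Z$. It bounds $\E[e^2]$ by the two conditional leakages, and lower-bounds the $mm$ block using $(1-r)^2\ge 1-2r$ on $\{Z=0\}$. It rewrites the cross block as $\E[\Delta(Y)\nabla V(Y-\mu_{\mathrm{n}})]$ with $|\Delta|\le 3|e|$, and finishes with Weyl's inequality.

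\emph{Your route.} You use exact identities instead: $(r_{\mathrm{n}}^\star)^2=r_{\mathrm{n}}^\star-r_{\mathrm{n}}^\star r_{\mathrm{o}}^\star$ together with $\E_{p_\alpha}[r_{\mathrm{n}}^\star g]=(1-\alpha)\E_{\rho_{\mu_{\mathrm{n}}}}[g]$. The cancellation $\E_{\rho}[\nabla V]=0$ then reduces the cross term \emph{exactly}, not merely up to sign, to $\E[r_{\mathrm{o}}^\star r_{\mathrm{n}}^\star\,a^\top\nabla V(Y-\mu_{\mathrm{n}})]$. Your final step $2|u|\,\|a\|\le u^2+\|a\|^2$ is equivalent to the paper's block-Weyl bound.

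\emph{What each buys.} Your version makes explicit the cancellation that the paper's $|\Delta|\le 3|e|$ step uses silently. It also yields the stated inequality with room to spare: constant $1$ instead of $2$ in the $mm$ block, $1$ instead of $3$ in the cross term, and AM--GM even gives $v/2$ in the $\phi\phi$ block. The paper's device buys uniformity by expressing all three blocks through the single quantity $\E[e^2]$.

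One slip in (B): $G_2$ is not bounded in $\|\Delta\|$. It grows like $\alpha L^2\|\Delta\|^2$, which is still dominated by the factor $\sqrt{v}\propto e^{-m\|\Delta\|^2/16}$.

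In (C) you go beyond the paper, which simply asserts that $\varepsilon$ can be chosen so that $\{L\le\varepsilon\}$ lies in the strong-convexity neighbourhood. You are right that this needs $\inf_{\theta\notin B_\rho(\theta^\star)}L(\theta)>0$. Two corrections to your sketch, plus one caveat.

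\emph{First correction.} Your example inequality is reversed. Since $p_\alpha\ge\alpha\rho_{\mu_{\mathrm{o}}}$, one has $\KL(\rho_{\mu_{\mathrm{o}}}\|p_\alpha)\le\log(1/\alpha)$. What you need, and what is true, is only $\KL(\rho_{\mu_{\mathrm{o}}}\|p_\alpha)>0$, because $\rho_{\mu_{\mathrm{o}}}\neq p_\alpha$.

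\emph{Second correction.} Limits along individual directions do not give a uniform bound; consider $\beta\to1$ and $\|m\|\to\infty$ simultaneously. Argue by sequences instead:
\begin{enumerate}
\item Suppose $\theta_k\notin B_\rho(\theta^\star)$ and $L(\theta_k)\to0$. Pinsker gives $q_{\theta_k}\to p_\alpha$ in $L^1$.
\item Tightness excludes $\|m_k\|\to\infty$ along a subsequence unless $\beta_k\to1$. In that case $q_{\theta_k}\to\rho_{\mu_{\mathrm{o}}}\neq p_\alpha$, a contradiction.
\item Otherwise there is a limit point $(\beta_\infty,m_\infty)\in[0,1]\times\R^d$. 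By $L^1$-continuity of translations it satisfies $\beta_\infty\rho_{\mu_{\mathrm{o}}}+(1-\beta_\infty)\rho_{m_\infty}=p_\alpha$.
\item Your Fourier argument forces $(\beta_\infty,m_\infty)=(\alpha,\mu_{\mathrm{n}})$: the exponential sums agree near $t=0$, where $\hat\rho\neq0$, hence everywhere by analyticity. This also rules out $\beta_\infty\in\{0,1\}$.
\item This contradicts $\|\theta_k-\theta^\star\|\ge\rho$.
\end{enumerate}

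\emph{Caveat.} Your fallback via the connected component containing $\theta^\star$ only proves the localized statement of Theorem~\ref{thm:kl_RL_local_rate}, i.e.\ for trajectories started inside the ball. It does not prove (C) as stated.
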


\begin{proof}
\textbf{Proof of Part (A):} The Fisher/Hessian identity at $\theta^\star$ is standard for smooth parametric families:
since $q_{\theta^\star}=p_\alpha$, one has $L(\theta)=\KL(q_\theta\|q_{\theta^\star})$ and hence
$\nabla^2 L(\theta^\star)=\E_{q_{\theta^\star}}[\nabla\log q_{\theta^\star}\,\nabla\log q_{\theta^\star}^\top]$,
provided differentiation under the integral is justified (here ensured by smoothness and log-concave tails).
The score components follow from:
(i) $\partial_\phi \log q_\theta = r_{\mathrm{o}}-\beta(\phi)$ for a two-component mixture, and at $\theta^\star$ $\beta(\phi^\star)=\alpha$;
(ii) $\nabla_m\log q_\theta(y)=r_{\mathrm{n}}(y)\,\nabla_m\log\rho_m(y)$ and
$\nabla_m\log\rho_m(y)=\nabla V(y-m)$ for location shifts $\rho_m(y)=Z^{-1}e^{-V(y-m)}$.

\medskip
\textbf{Proof of Part (B):} As in the Gaussian proof of Theorem~\ref{thm:kl_RL_local_rate}, write $H_\star=\begin{psmallmatrix}A&B^\top\\B&C\end{psmallmatrix}$.
Introduce the latent label $Z\in\{0,1\}$ with $\Pr(Z=1)=\alpha$ and
$Y|Z=1\sim\rho_{\mu_{\mathrm{o}}}$, $Y|Z=0\sim\rho_{\mu_{\mathrm{n}}}$; then $r_{\mathrm{o}}^\star(Y)=\E[Z|Y]$ and
$e:=r_{\mathrm{o}}^\star-Z$ satisfies $e^2=\Var(Z|Y)=r_{\mathrm{o}}^\star(1-r_{\mathrm{o}}^\star)$.

\emph{(i) Bound $A$.}
Exactly as before,
\[
A=\Var(r_{\mathrm{o}}^\star)=\alpha(1-\alpha)-\E[e^2].
\]
Using $r(1-r)\le 1-r$ and $r(1-r)\le r$, we obtain
$\E[e^2]\le \alpha\,\E[1-r_{\mathrm{o}}^\star|Z=1]+(1-\alpha)\E[r_{\mathrm{o}}^\star|Z=0]$.
By Lemma~\ref{lem:leakage} applied to the two-component mixture $p_\alpha$ and
Lemma~\ref{lem:bc_strong_logconcave} (via $\mathrm{BC}\le \rho_{\mathrm{sep}}$), these are bounded by
$\varepsilon_{\mathrm{o}\to\mathrm{n}}$ and $\varepsilon_{\mathrm{n}\to\mathrm{o}}$, yielding $\E[e^2]\le v$ and hence $A\ge \alpha(1-\alpha)-v$.

\emph{(ii) Bound $\lambda_{\min}(C)$.}
Here $s_m=r_{\mathrm{n}}^\star(Y)\,\nabla V(Y-\mu_{\mathrm{n}})=(1-r_{\mathrm{o}}^\star)\,\nabla V(Y-\mu_{\mathrm{n}})$, so
\[
C=\E\big[(1-r_{\mathrm{o}}^\star)^2\,\nabla V(Y-\mu_{\mathrm{n}})\,\nabla V(Y-\mu_{\mathrm{n}})^\top\big]
\succeq (1-\alpha)\,\E\big[(1-r_{\mathrm{o}}^\star)^2\,G(Y)\,G(Y)^\top\mid Z=0\big],
\]
where $G(Y):=\nabla V(Y-\mu_{\mathrm{n}})$.
On $Z=0$, $(1-r_{\mathrm{o}}^\star)^2\ge 1-2r_{\mathrm{o}}^\star$, hence
\[
\E[(1-r_{\mathrm{o}}^\star)^2 GG^\top\mid Z=0]\succeq \E[GG^\top\mid Z=0]-2\,\E[r_{\mathrm{o}}^\star GG^\top\mid Z=0].
\]
Now $Y-\mu_{\mathrm{n}}\sim\rho$, so $\E[GG^\top\mid Z=0]=\E_{X\sim\rho}[\nabla V(X)\nabla V(X)^\top]\succeq mI$ by Lemma~\ref{lem:fisher_logconcave_location}.
Also $\E[r_{\mathrm{o}}^\star GG^\top\mid Z=0]$ is PSD and
\[
\big\|\E[r_{\mathrm{o}}^\star GG^\top\mid Z=0]\big\|_2
\le \E[r_{\mathrm{o}}^\star\,\|G\|^2\mid Z=0]
\le \sqrt{\E[r_{\mathrm{o}}^\star\mid Z=0]}\,\sqrt{\E[\|G\|^4\mid Z=0]}
\le \sqrt{\varepsilon_{\mathrm{n}\to\mathrm{o}}}\,\sqrt{G_4},
\]
using $r_{\mathrm{o}}^{\star 2}\le r_{\mathrm{o}}^\star$ and the definition of $G_4$.
Therefore $\lambda_{\min}(C)\ge (1-\alpha)m-2(1-\alpha)\sqrt{\varepsilon_{\mathrm{n}\to\mathrm{o}}}\sqrt{G_4}$.

\emph{(iii) Bound $\|B\|_2$.}
Now $B=\E[(r_{\mathrm{o}}^\star-\alpha)(1-r_{\mathrm{o}}^\star)\,G(Y)]$.
Repeating the algebra from the Gaussian case shows
\[
B=\E[\Delta(Y)\,G(Y)],\qquad |\Delta(Y)|\le 3|e(Y)|.
\]
Hence $\|B\|_2\le 3\,\E[|e|\,\|G\|]\le 3\sqrt{\E[e^2]}\sqrt{\E[\|G\|^2]}\le 3\sqrt{v}\sqrt{G_2}$.

\emph{(iv) Conclude.}
As before, Weyl's inequality yields
$\lambda_{\min}(H_\star)\ge \min\{A,\lambda_{\min}(C)\}-\|B\|_2$, giving the stated bound.

\medskip
\textbf{Proof of Part (C):}
If $\mu_\star=\lambda_{\min}(H_\star)>0$, continuity of the Hessian implies that in some neighborhood $U$ of $\theta^\star$,
$\nabla^2 L(\theta)\succeq \frac{\mu_\star}{2}I$, i.e.\ $L$ is $\mu_\star/2$-strongly convex on $U$.
Strong convexity implies the PL inequality $$\|\nabla L(\theta)\|^2\ge (\mu_\star/2)\,(L(\theta)-L(\theta^\star))=(\mu_\star/2)L(\theta)$$ on $U$.
Choose $\varepsilon>0$ so that the sublevel set $\{L\le\varepsilon\}\subset U$.
Along the gradient flow, $$\frac{d}{dt}L(\theta(t))=-\|\nabla L(\theta(t))\|^2,$$ combining with PL and integrating yields
$L(\theta(t))\le L(\theta(0))e^{-(\mu_\star/2)t}$ for all $t\ge 0$.
\end{proof}

\end{document}